\newlength\tocrulewidth
\NewDocumentCommand{\myrule}{O{1pt} O{3pt}}{%
  \par\nobreak % don't break a page here
  \kern\the\prevdepth % don't take into account the depth of the preceding line
  \kern#2 % space before the rule
  {\hrule height #1 width\hsize} % the rule
  \kern#2 % space after the rule
  \nointerlineskip % no additional space after the rule
}
\newcommand{\CC}{\mathcal{C}}
\newcommand{\fixpointofcmap}{c^{*}}
\newcommand{\qfixedpoint}{q^{*}}
\newcommand{\vardbtilde}[1]{\tilde{\raisebox{0pt}[0.85\height]{$\tilde{#1}$}}}
\newcommand{\nobracket}{}
\newcommand{\tmem}[1]{{\em #1\/}}
\newcommand{\tmmathbf}[1]{\ensuremath{\boldsymbol{#1}}}
\newcommand{\tmop}[1]{\ensuremath{\operatorname{#1}}}
\newcommand{\tmscript}[1]{\text{\scriptsize{$#1$}}}
\newcommand{\tmtextbf}[1]{\textbf{#1}}
\newcommand{\tmtextit}[1]{\textit{#1}}
\newenvironment{enumeratenumeric}{\begin{enumerate}[1.] }{\end{enumerate}}
\newenvironment{enumerateroman}{\begin{enumerate}[i.] }{\end{enumerate}}
\newenvironment{itemizedot}{\begin{itemize} }{\end{itemize}}
\newcommand{\nonconverted}[1]{\mbox{}}
\newcommand{\RR}{\mathbb{R}}
\newcommand{\vp}{\varphi}
\newcommand{\python}[1]{\mintinline{python}{#1}}
\newcommand{\poscite}[1]{\citeauthor{#1}'s (\citeyear{#1})}
\newcommand{\james}[1]{}
\newcommand{\js}[1]{}
\newcommand{\sam}[1]{}
\newcommand{\greg}[1]{}
\newcommand{\valentin}[1]{}
\begin{document}

\title{Rapid training of deep neural networks without skip connections or normalization layers using Deep Kernel Shaping}
%\title{Training deep networks without normalization layers or skip connections using Deep Kernel Shaping}

\author{\name James Martens \email jamesmartens@deepmind.com \\
      \name Andy Ballard \email aybd@deepmind.com \\
      \name Guillaume Desjardins \email gdesjardins@deepmind.com \\
      \name Grzegorz Swirszcz \email swirszcz@deepmind.com \\
      \name Valentin Dalibard \email vdalibard@deepmind.com \\
      \addr DeepMind, London, UK\\
      \AND
      \name Jascha Sohl-Dickstein \email jaschasd@google.com \\
      \name Samuel S. Schoenholz \email schsam@google.com \\
      \addr Google, Mountainview, USA
}

% \editor{Kevin Murphy and Bernhard Sch{\"o}lkopf}

\maketitle

%\adjustmtc

\begin{abstract}%   <- trailing '%' for backward compatibility of .sty file
Using an extended and formalized version of the Q/C map analysis of \citet{poole2016exponential}, along with Neural Tangent Kernel theory, we identify the main pathologies present in deep networks that prevent them from training fast and generalizing to unseen data, and show how these can be avoided by carefully controlling the ``shape" of the network's initialization-time kernel function. We then develop a method called Deep Kernel Shaping (DKS), which accomplishes this using a combination of precise parameter initialization, activation function transformations, and small architectural tweaks, all of which preserve the model class. In our experiments we show that DKS enables SGD training of residual networks without normalization layers on Imagenet and CIFAR-10 classification tasks at speeds comparable to standard ResNetV2 and Wide-ResNet models, with only a small decrease in generalization performance. And when using K-FAC as the optimizer, we achieve similar results for networks \emph{without} skip connections. Our results apply for a large variety of activation functions, including those which traditionally perform very badly, such as the logistic sigmoid. In addition to DKS, we contribute a detailed analysis of skip connections, normalization layers, special activation functions like RELU and SELU, and various initialization schemes, explaining their effectiveness as alternative (and ultimately incomplete) ways of ``shaping" the network's initialization-time kernel.
\end{abstract}

% \begin{keywords}
%   Bayesian Networks, Mixture Models, Chow-Liu Trees
% \end{keywords}

\james{TODO: maybe add a table of notation with pointers to specific sections or equations where it is defined in the text?}

\james{make sure the notion of ``subnetwork'' is used consistently throughout. Need to be careful about calling the whole network an example of a subnetwork.}

\james{TODO: consider making some Colab code public with the paper. e.g. for the C map plotting}

\james{change code to use have a hyperparam for residual weight instead of shortcut weight in order to be consistent with paper}

\newpage

\section{Introduction}

\james{TODO: Should add a quick definition of extended Q/C maps with some math notation to make the discussion in these earlier sections easier to follow}

The current standard approach to deep learning relies on a combination of architectural elements including skip connections, normalization layers, and carefully chosen activation functions (such as RELU) to overcome the well-documented optimization difficulties present in traditional deep neural networks \citep{ioffe2015batch, he2016deep, szegedy2017inception}. While this approach has proven very successful, enabling many applications in diverse fields such as vision \citep[e.g.][]{he2016deep, tan2019efficientnet}, language \citep[e.g.][]{vaswani2017attention, gpt3}, protein folding \citep{jumper2021highly} and reinforcement learning \citep[e.g.][]{espeholt2018impala, silver2018general}, it is not entirely satisfying for at least several reasons.

First, the precise mechanism of action of these elements, as well as their interaction, is still not well understood, despite some recent progress in this area. This lack of understanding makes it difficult to design new network architectures, as architectural choices not only affect the network's expressivity, but also its trainability, and in ways that are hard to predict. Second, without competitive alternatives to compare to, it's not clear whether the current standard approach enables deep networks to reach their full potential, or whether it has unseen drawbacks and limitations. For example, while the use of skip connections helps very deep networks to train much faster, this might only be because it makes them behave like an ensemble of shallower networks \citep{veit2016residual}. Finally, the extra complexity introduced by these architectural elements, and their non-trivial interactions, makes theoretical analyses much more difficult, potentially holding us back from developing a more fundamental understanding of deep learning. And while existing theoretical analyses can (and often do) drop these elements, they do so at the risk of missing an essential piece of the picture.

%And because it's not clear which (if any) of these elements are essentially, dropping them from the analysis risks making it irrelevant to practice.

In an ideal world, modelling and trainability would be decoupled, so that architectures could be designed with only modelling considerations in mind, and rapid training would be guaranteed as long as they conformed to a well-defined set of rules. One might also hope that the components of such a framework would each have a clear purpose, be theoretically well-understood, and interact with each other in simple and predictable ways. 

In the present work we take an important step towards this ``ideal world", while simultaneously providing a competitive alternative to the current standard approach to deep learning. We do so by developing a theoretically well-founded method for constructing deep networks which allows them to be rapidly trained without the use of skip connections, normalization layers, or standard activation functions. Our approach, which we call Deep Kernel Shaping (DKS), requires only minor model class-preserving modifications to the architecture and activation functions, and is fully compatible with existing analysis frameworks such as Neural Tangent Kernel (NTK) theory \citep{jacot2018neural}. %In addition to possible practical applications, DKS acts as an existence proof that rapid training of very deep networks is in fact possible without any of the usual architectural elements design to speed up training.

As we show in experiments, DKS enables very deep residual networks without normalization layers to be trained using SGD on Imagenet and CIFAR-10 classification tasks at similar speeds to standard ResNetV2 \citep{he2016identity} and Wide-ResNet models \citep{zagoruyko2016wide}. It also achieves the same for networks \emph{without} skip connections or normalization layers when combined with stronger optimizers like K-FAC \citep{martens2015optimizing} or Shampoo \citep{gupta2018shampoo}. Moreover, it works well with a large variety of activation functions, including those that traditionally perform very poorly (such as the logistic sigmoid). As a caveat, we observe a small decrease in generalization performance compared to standard ResNets, which we believe can be addressed in future work.

While there have been some recently proposed methods for training very deep networks without skip connections and normalization layers \citep[e.g][]{schoenholz2016deep, balduzzi2017shattered, xiao2018dynamical}, to the best of our knowledge, DKS is the first to achieve training speeds competitive with standard ResNet models on a challenging dataset like Imagenet. And while our use of K-FAC plays an important role in these results, our experiments show that K-FAC alone is not enough, even when used in combination with the aforementioned methods.

The starting point for our development of DKS is the work of \citet{poole2016exponential}, who described the approximate initialization-time behavior of fully-connected combined layers (which we define here as an affine layer followed by an element-wise nonlinear layer) using special one-dimensional maps known as ``Q and C maps", and then used the fixed-point behavior of these maps to describe the depth-limiting behavior of a network composed of many such layers in sequence. We also take inspiration from \citet{schoenholz2016deep}, who applied this analysis framework to design an initialization method which modulated the fixed point behavior of each layer's C map to slow the loss of ``geometric information" with depth, and demonstrated encouraging results training very deep networks without skip networks or normalization layers.

\james{TODO: consider giving a more detailed account of Q/C maps here. Maybe with some equations like in the version of the intro text I wrote for the TAT paper?}

While originally derived within the semi-rigorous framework of ``mean field analysis", it turns out that Q/C maps also describe the approximate behavior of a combined layer's kernel function in wide networks, and as we will show, can be applied to convolutional layers if one uses a \textbf{\textit{Delta initialization}} for the filter banks. These maps can be further extended to describe entire networks with arbitrary topologies, where they provide useful information outside of the depth-limiting case considered by \citet{poole2016exponential}. In the case of a fully connected network $f$, its Q map approximates the mapping from $\|x\|^2 / \dim(x)$ to $\|f(x)\|^2 / \dim(f(x))$, and its C map approximate the mapping from $x^\top x' / (\|x\|\|x'\|)$ to $f(x)^\top f(x') / (\|f(x)\|\|f(x')\|)$. %A similar statement applies to convolutional networks if one replaces $f$ with $f_i$, where $f_i$ computes the vector at location $i$ in $f$'s output feature map from the corresponding vector of its input feature map.
%Given a network $f$, its Q map, which we denote $Q_f(q)$ gives us an estimate of $\|v\| / \dim(v)$ for vectors in $f$'s output feature map as a function 
%A network’s Q and C maps respectively describe how the (dimension-normalized) squared norms and cosine similarity of input vectors determine these quantities for the corresponding output vectors. In particular, given some vector $v$ 

In deeper networks, the C map can easily become ``degenerate’’, mapping most of its input domain $[-1, 1]$ to a small point-like subset of its codomain, $[-1, 1]$. The implication of this is that the distance between any pair of output vectors from the network is effectively independent of the distance between the corresponding pair of input vectors. As we argue, both heuristically and using NTK theory, this behavior inevitably leads to very slow training and/or poor generalization under gradient descent.

We thus design DKS to prevent this problem, while also guarding against certain secondary pathologies such as a badly behaved Q map, high approximation error in the Q/C maps themselves, and network behavior that is ``too linear" (which limits network expressivity under gradient descent). To do this, we relate the overall ``shape" of the network's C map and its tendency to become degenerate, to its value and derivative at a couple of points, which we in turn relate to the values of derivatives of the C maps for the network's individual layers. We then control these properties (and a couple of additional ones to address the aforementioned secondary pathologies), by transforming each activation function using a model class preserving scale and shift operation its input and output. This transformation is the same for each nonlinear layer for a given activation function, but depends on the global structure of the network. In theory, we also require that sum operations in the network are ``normalized" in a certain way, that a special kind of data preprocessing is used, and that pooling layers are replaced with certain roughly equivalent alternatives, although we find the latter two of these to be non-essential in practice.

In addition to developing DKS, we also use Q/C maps to help explain the effectiveness of standard deep learning techniques such as normalization layers, skip connections, initialization methods, and common activation functions such as RELU and SELU \citep{klambauer2017self}, in terms of their effect on the network's initialization-time kernel. This is facilitated in part by the connections we establish between Q/C maps and alternative analysis frameworks such as ``variance propagation" and ``signal propagation" which underlie many of the said techniques.

\section{Outline}

\james{TODO in future version: Consider making this section into a more detailed overview of the entire paper. Almost like a mini-conference version.}

This manuscript is organized into five parts.

Part \ref{part:theory_prelim} gives our assumptions and establishes the theoretical concepts used in subsequent parts. We begin in Sections \ref{sec:notation-and-arch-assumptions} and \ref{sec:param_dist} by defining our notation and stating our initial assumptions on network architecture and initialization. In Section \ref{sec:kernel-approx-main} we discuss kernel functions for networks conforming to these assumptions, and how they can be approximated with much simpler functions at initialization time. In Sections \ref{sec:QC_map_combined} and \ref{sec:extended-maps} we show how these kernel approximations can be further broken down in terms of a generalized version of the Q/C maps originally proposed in \citet{poole2016exponential}. % which map dimension-normalized squared norms of the network's input vector, and/or cosine similarities between two input vectors, to the analogous quantities for the network's output vector(s). 
Derivative computations for Q/C maps are given in Section \ref{sec:QC_map_derivatives}, and Section \ref{sec:sum-ops} discusses how to handle sum operations when computing Q/C maps. In Section \ref{sec:uniform-q}, we show how C maps can be simplified down to one dimensional functions (from three dimensions) using a special type of data preprocessing which is designed to make two of their three inputs constant. And in Section \ref{sec:uniform-q-consequences} we discuss additional consequences of this, including that C maps become ``positive definite functions".

With the theoretical groundwork established, Part \ref{part:desirable_QCmap_properties} focuses on identifying desirable Q/C map properties and ways to achieve these. In Section \ref{sec:Cmaps_trainability} we discuss C map behavior in deep networks and how it can -- and usually does -- become ``degenerate", leading to slow training and/or poor generalization. We then set out to analyze C maps with the hope of controlling their properties so as to prevent this. To that end, in Section \ref{sec:Cmap-analysis} we use the positive definiteness of a C map to show how its deviation from the identity function (which is large in degenerate maps) can be predicted from its derivative at $1$ and value at $0$, implying that we can prevent degeneration by enforcing certain conditions on these quantities. In Section \ref{sec:linear-networks} we identify another way that a network can fail to be trainable: that its parameters must move very far from their initial values before the network can exhibit any significantly nonlinear behavior. We then show this failure mode can be avoided by enforcing a condition on the C map of each nonlinear layer. In Section \ref{sec:error-and-Qmaps} we identify the breakdown of our kernel approximations as a third problem that we must avoid, and propose several solutions to this, including a condition to enforce on the network's Q map.

Having identified three distinct ways that a network can fail to be trainable, and conditions to enforce on the Q and C maps to prevent or mitigate these failures, we proceed with the specification and derivation of DKS in Part \ref{part:spec_and_deriv_of_DKS}. In Section \ref{sec:QCmap-conditions} we list the four conditions on the Q and C maps of the network (or more precisely its ``subnetworks") which we will enforce. Then in Section \ref{sec:global-to-local} we show how these conditions can be reduced to ones on the Q/C maps of the individual layers of the network via a special translation mechanism called the ``maximal slope function" which encodes structural information about the network (including its depth). In Section \ref{sec:activation-transform} we describe our main mechanism of enforcement for these per-layer conditions: scaling and shifting operations applied to the input and output of each nonlinear layer's activation function (which preserve the model class). Finally, in in Sections \ref{sec:normalization-layers} and \ref{sec:pooling-layers} we discuss how to deal with normalization and pooling layers in DKS. With DKS fully derived, we give a step-wise summary of it in Section \ref{sec:arch-requirements-method}, and provide details for the more difficult aspects of its implementation in Section \ref{sec:implementation-details}. In Section \ref{sec:application-to-resnet} we demonstrate the application of DKS on the various modified ResNet and Wide-ResNet models which we use in our experiments, including ones with skip connections and/or normalization layers removed.

Before proceeding to experiments, in Part \ref{part:additional_analysis} we delve deeper into the theory underlying DKS, and analyze various related approaches from the perspective of kernel approximations and Q/C maps. In Section \ref{sec:NTK-analysis} we review Neural Tangent Kernel (NTK) theory, and give an elegant expression for the NTK using (extended) C maps. We then show how NTK theory predicts slow training and poor generalization for networks with degenerate C maps, and characterize the form of the NTK for networks constructed using DKS. In Section \ref{sec:var/sig-prop-relation-to-kernel} we review certain previously published methods for understanding the behavior of neural networks at initialization time (such as variance/signal propagation), show how they give rise to what are essentially Q and C maps (but different interpretations for what they actually compute), and advocate for the use of approximate kernel analysis as a more flexible and mathematically rigorous alternative. Exploiting these connections, we then review and analyze some prior methods for initializing and constructing neural networks in Section \ref{sec:review-and-analysis-of-related}, including standard techniques such as normalization layers and residual networks, as well as methods aimed at replacing them. In each case we argue that the method can interpreted as enforcing some set of conditions on the network's Q/C map, which is often a strict subset of those enforced by DKS.

Finally, in Part \ref{part:experiments_and_conclusions} we discuss experiments and conclude. This begins in Sections \ref{sec:experimental-setup} and \ref{sec:experimental-results}, where we describe the setup of our experiments, and discuss their results. Our experiments include comparisons of DKS to standard ResNets, the methods reviewed/analyzed in Section \ref{sec:review-and-analysis-of-related}, and various ``ablated"/modified versions of DKS. We then summarize our conclusions in Section \ref{sec:conclusions}, and in Section \ref{sec:limitations-and-future-directions} discuss the limitations of DKS and possible ways to address them in future work.

\newpage

\part*{Table of contents}

\myrule

%\hrulefill
%\hline
%\rule{\textwidth}{2pt} 

\

\

\setcounter{tocdepth}{1}  % only up to sections
%\renewcommand{\cfttoctitlefont}{\Huge\color{green}\headingfont} 
%\tableofcontents

\begingroup
%\parindent=0em
%\etocsettocstyle{\rule{\linewidth}{\tocrulewidth}\vskip0.5\baselineskip}{\rule{\linewidth}{\tocrulewidth}}
\etocsettocstyle{}{}
\tableofcontents 
\endgroup

\newpage

\part{Theoretical preliminaries} \label{part:theory_prelim}

\section{Neural network terminology and architectural assumptions} \label{sec:notation-and-arch-assumptions}

\subsection{Basic neural network terminology}

Throughout this work we will assume that the reader is already familiar with convolutional neural networks \citep{fukushima1982neocognitron, lecun1998gradient} for which many overviews and tutorials are available \citep[e.g.][Chapter 9]{goodfellow2016deep}. The purpose of this subsection won't be to define convolutional network concepts from scratch, but rather to lay out the specific terminology we will use when referring to them.

In this work will consider neural networks consisting of \tmtextbf{\tmtextit{affine layers}} of the standard fully-connected and convolutional types, and \tmtextbf{\tmtextit{nonlinear layers}} that compute element-wise \tmtextit{\tmtextbf{activation functions}} (that are typically nonlinear). We define a \tmtextbf{\tmtextit{combined layer}} to be an affine layer, immediately followed by a nonlinear layer. (Note that a combined layer is what was traditionally referred to as a ``layer'' in the neural network literature, before the modern trend of referring to the individual affine and nonlinear parts as their own separate ``layers''.)

The input and output of convolutional layers (or networks) are called \tmtextit{\tmtextbf{feature maps}}, and consist of an array of \tmtextit{\tmtextbf{locations vectors}}, with the entries of these vectors being called \tmtextit{\tmtextbf{channels}}. The parameters of an affine layer are its \tmtextbf{\tmtextit{weights}} (sometimes called a \tmtextit{\tmtextbf{filter bank}} in the convolutional case), and its \tmtextit{\tmtextbf{bias vector}}. So for example, in the fully-connected case, a combined layer would compute $\phi (Wz + b)$, where $z$ is its input vector, $W$ its matrix of weights, $b$ its bias vector, and $\phi$ its activation function (which is defined from $\mathbb{R}$ to $\mathbb{R}$, and applied element-wise for higher dimensional inputs). %We will assume that $\phi$ is continuous, infinitely differentiable everywhere except on a finite set of inputs, and asymptotically bounded by a linear function.

In this work, the discussion will center around a single neural network which we will refer to simply as \tmtextit{\tmtextbf{the network}} or sometimes \tmtextit{\tmtextbf{the entire network}}. We will define a \tmtextit{\tmtextbf{subnetwork}} as a neural network formed from a subset of the entire network's layers which preserves all dependency relationships and has a {\tmem{well-defined and singular input and output}} (unlike the entire network, which can have multiple inputs and outputs in general). Subnetworks can be thought of as performing part of the computation of the network. So for example, if the network consists of a sequence of five layers, then layers 2, 3 and 4 form a subnetwork whose input is the input to layer 2, and whose output is the output of layer 4. But layers 2, 4, and 5 do not form a subnetwork since the dependency of layer 4 on layer 2 is not preserved.

\subsection{Initial architectural assumptions}\label{sec:arch-assumptions}

We observe that a fully-connected layer is equivalent to an convolutional layer with a 1x1 feature map and 1x1 filter size, where the input/output data dimensions are just the input/output channel dimensions. Thus, going forward, we will restrict our analysis to the convolution case, which implicitly handles the fully-connected case via this reduction.

We will also assume, for now, that the network can be entirely built out of three components: combined layers (as define above), non-zero constant scalar multiplications operations applied to individual feature maps, and \tmtextbf{\tmtextit{concatenation operations}}, which concatenate two feature maps of compatible sizes along their channel dimensions. We will permit a given feature map to act as the input to multiple operations/layers in the network, thus allowing ``branching structures'' and multiple ``output heads''.

The restriction to combined layers isn't as severe as it might seem, as an isolated affine layer is equivalent to a combined layer with an identity activation function. And while sum operations are not \emph{explicitly} included among the allowed operations, under certain conditions they can be simulated via a simple construction whose details we will defer to Section \ref{sec:sum-ops}. This means that our analysis can apply to networks containing actual sum operations, under said conditions. Two or more consecutive nonlinear layers are also not allowed by our assumptions, however one can simply fuse two such layers into a single one by composing their activation functions.

For now we will assume that the network does not contain any pooling layers. We will (partially) relax this assumption later in Section \ref{sec:pooling-layers}.

\section{Parameter distributions}\label{sec:param_dist}

\subsection{Assumptions on the form of the parameter distributions}

In order to obtain a sufficiently simple characterization of the function computed by a neural network at initialization time, we will make certain assumptions about the distribution of its parameters at initialization.

Our first one will be that the bias vector is initialized to zero. While not strictly necessary to the derivation and viability of DKS, this assumption will simplify our presentation. Our second will be that if the input of one layer depends on the output of another, either directly or indirectly, then the parameters of these layers must be initialized independently from each other. This rules out recurrent neural networks, for example, since parameters are shared across time-steps.

Finally, except where stated otherwise, we will assume the use of a ``Delta initialization" \citep{balduzzi2017shattered, xiao2018dynamical}, which requires that filter bank tensors are initialized to zero everywhere except for their central location/offset (and have odd-sized filter dimensions to make this possible). As an example, if we have a $5 \times 5$ filter, then only the weights corresponding to entry $(3, 3)$ would be non-zero. Note that for fully-connected layers there is only one location, so that a Delta initialization becomes equivalent to a standard one.

The non-zero weights of a Delta-initialized filter bank form a $m \times k$ matrix%\footnote{Note that most neural network frameworks use a $k \times m$ matrix instead here, for the given definitions of $k$ and $m$. To translate the discussed sampling procedures into that setting some simply transposes the final result. (So in particular, it's always the input channel dimension whose square root we normalize by.)}
, where $k$ is the input channel dimension and $m$ is the output channel dimension. To initialize this matrix we have two options. First, we can use an entry-wise iid Gaussian distribution with mean 0 and variance $1 / k$, which gives rise to the \tmtextit{\tmtextbf{Gaussian Delta initialization}}. While it might seem restrictive to assume a variance of $1 / k$ (instead of $\sigma^2 / k$ for general $\sigma > 0$), this will simplify our presentation going forward, and other choices can be simulated by rescaling the network's activation functions (which will be part of DKS).

The second option is to use a \tmtextbf{\tmtextit{scaled-corrected uniform orthogonal (SUO) distribution}}, which is a special distribution of rescaled orthogonal matrices. When $m \leqslant k$, samples from this distribution can be generated as $(XX^{\top})^{- 1 / 2} X$, where $X$ is a $m \times k$ matrix with entries sampled iid from $N (0, 1)$. When $m > k$, we may apply the same procedure but with $k$ and $m$ reversed, and then transpose the result. The resulting distribution is given by the well-known \tmtextit{\tmtextbf{Haar measure}} on orthogonal matrices \citep[e.g.][]{meckes2019random}, and is also sometimes called the uniform distribution. To be consistent with the scaling characteristics of the Gaussian initialization, we further multiply by the scaling factor $\max \left( \sqrt{m / k}, 1 \right)$, which will have an effect only when $m > k$. We will call Delta initializations that use the SUO distribution \tmtextbf{\tmtextit{Orthogonal Delta initializations}}.

\james{TODO: Should add an intuitive explanation of where scaling factor comes from}

\subsection{A brief discussion about random orthogonal matrices and the SUO distribution}\label{sec:random-ortho-discussion}

\james{Could also put this one in the appendix too?}

The scaled-corrected uniform orthogonal distribution, as we have defined it, has the property that it is invariant to pre or post-multiplication of the matrix by a constant square orthonormal matrix \citep[][Chapter 7]{eaton1989group}. This implies that left-multiplying an input vector by an unobserved matrix sampled from this distribution erases all information about the vector's direction. The input vector's dimension-normalized squared norm (i.e.~$\frac{1}{\dim (x)} \| x \|^2$) can meanwhile be exactly recovered when $k \leq m$, and is equal to the output vector's dimension-normalized squared norm.

For the computations in the next section to be valid for a given orthogonal weight distribution, we require that the distribution satisfies these properties. However, many randomized procedures used in practice for sampling orthogonal matrices lack the directional invariance property. And even procedures whose distributions do possess it often don't include the $\max \left( \sqrt{m / k}, 1 \right)$ scale correction factor, which is required for the dimension-normalized squared norm to be preserved. Thus, we strongly recommend that anyone implementing DKS use the sampling procedure for orthogonal matrices that we have outlined, unless they are confident that their own procedure gives precisely the same distribution. Note that \citet{saxe2014exact} and \citet{xiao2018dynamical} have used distributions over orthogonal matrices to initialize neural networks. It turns out that the formulas they derive also require SUO-distributed weights to be correct, even though they did not state this explicitly. 

Finally, note that the entry-wise iid $\mathcal{N} (0, 1 / k)$ distribution for $m \times k$ matrices behaves very similarly to the SUO distribution with respect to multiplication by an input vector, and gives a distribution on the output vector which is identical up to a multiplication by a random scalar (which is distributed according to the chi distribution with $m$ degrees of freedom). The output vector's dimension-normalized squared norm is thus a random multiplicative perturbation of the input vector's (instead of being equal to it), where the perturbation's mean and variance are $1$ and $2 / m$ respectively. % (because the mean and variance of the chi-square distribution are $m$ and $2 m$). 
From these observations we can see that Gaussian initializations, like SUO ones, give rise to directional invariance, but only approximately preserve the dimension-normalized squared norm (and in a way that gets more precise as $m$ grows).

\section{Kernel function approximations for neural networks}\label{sec:kernel-approx-main}

The starting point for our analysis of the initialization-time behavior of neural networks will be kernel functions, and the approximations of these that hold at initialization-time when the channel dimensions are large. This type of analysis was originally pioneered by \citet{neal1996bayesian}, and developed further in various subsequent works \citep[e.g.][]{williams1997computing, rahimi2008weighted, cho2012kernel, mairal2014convolutional, anselmi2015deep, hazan2015steps, daniely2016toward, matthews2018gaussian, lee2018deep, garriga2018deep, novak2018bayesian, arora2019exact}. In this section we will review these concepts and establish our notation and terminology for the key quantities. We will depart from the index-heavy tensor notation of some previous works \citep[such as][]{novak2018bayesian} in favor of a more compact one based on matrices.

\subsection{Simplified version for the fully-connected case}
Before we launch into our full treatment of kernel function approximations for convolutional neural networks, in this subsection we will quickly give a simplified version for the fully-connected case, with the goal of building intuition. Note that the notation defined here is only a special case of the more general notation we will develop in subsequent subsections.

For a vector-valued function $f: \RR^k \to \RR^m$, we define its kernel function $\kappa_f$ by
\begin{equation*}
\kappa_f(z,z') = \frac{1}{m} f(z)^\top f(z') .
\end{equation*}
It turns out \citep[e.g.][]{daniely2016toward} that when $f$ is a sufficiently wide fully-connected combined layer with iid $\mathcal{N}(0, 1/k)$ weights and activation function $\phi$, $\kappa_f(z,z')$ is closely approximated with high probability by $\widetilde{\kappa_f} (\Sigma_{z,z'})$, where
\begin{align}
 \label{eqn:fully-connected-APKF}
  \widetilde{\kappa_f} (\Sigma_{z,z'}) = \mathbb{E}_{\tmscript{\left[\begin{array}{c}
    u_1\\
    u_2
  \end{array}\right] \sim \mathcal{N} (0, \Sigma_{z,z'})}} [\phi (u_1) \phi (u_2) ],
\end{align}
where
\begin{align*}
\Sigma_{z,z'} = \frac{1}{k} \left[\begin{array}{cc}
     \|z\|^2 & z^{\top} z'\\
     z^{\top} z' & \|z'\|^2
   \end{array}\right] \in \RR^{2\times 2}.
\end{align*}
This can be derived by observing that any two units in $f$'s nonlinear layer are Gaussian distributed (when conditioned on $z$ and $z'$) with mean zero and covariance matrix $\Sigma_{z,z'}$. And so if we consider enough of these units, their average statistics (given $\kappa_f(z,z')$) converge in probability to the expectation.

Using the notable fact that $\widetilde{\kappa_f}(z, z')$ only depends on $\Sigma_{z,z'}$ (and not the full details of $z$ and $z'$), we can then compose these layer-wise kernel approximations to form ones for networks consisting of many such layers.

\subsection{Notation for feature maps and subnetworks}

Throughout this work we will represent feature maps as matrices in $\mathbb{R}^{\text{\#channels} \: \times \: \text{\#locations}}$, where \#channels is the number of channels in the feature map and \#locations is the number of locations. Note that for fully-connected layers these matrices are just column vectors.

We will represent subnetworks (of which single layers are a special case) by symbols such as ``$f$'' or ``$g$''. Implicit in these representations is a dependence on all the structural details of the subnetwork, including its parameters, its activation functions, and anything else we need in order to construct our various approximations. At the same time, we will use standard functional notation such as $f (Z)$ when we want to treat $f$ as a function from its input to its output. 
%from its input in $\mathbb{R}^{\text{\#in-channels} \: \times \: \text{\#in-locations}}$ to its output $\mathbb{R}^{\text{\#out-channels} \: \times \: \text{\#out-locations}}$. %So for example, when we write symbols like ``$\tilde{\kappa}_f$'' that depend on $f$, we do not mean to say that $\tilde{\kappa}_f$ depends only of $f$ through its behavior as function. Rather, $\tilde{\kappa}_f$ can depend on details like $f$'s layer structure, activation functions, etc.

\subsection{Inner product matrices (IPMs) and Pair-location kernel functions (PKFs)}\label{sec:PKF}

Suppose $X, Y \in \mathbb{R}^{k \times \ell}$ are feature maps with channel dimension $k$ and number of locations $\ell$. We will define the \tmtextit{\tmtextbf{inner product matrix (or IPM)}} of $X$ and $Y$, denoted as $\Sigma_{X, Y}$, by
\[ \Sigma_{X, Y} \equiv \frac{1}{k}  \left[\begin{array}{cc}
     X^{\top} X & X^{\top} Y\\
     Y^{\top} X & Y^{\top} Y
   \end{array}\right] \in \mathbb{R}^{2 \ell \times 2 \ell} . \]
The entries of an IPM are the (dimension-normalized) inner products between all pairs of column vectors from $X$ and $Y$, or in other words, the average (across channels) of the entry-wise products between pairs of location vectors from the feature maps $X$ and $Y$. %Diagonal entries of the IPM have a simple interpretation as the dimension-normalized squared norms of these vectors, or in other words, the average squared value (across channels) of a particular location vector.

\james{Maybe switch to calling these Gram matrices? The downside is that Gram matrix is somewhat too general.}

Now suppose $f$ is a subnetwork whose output feature map is in $\mathbb{R}^{m \times \ell}$. We define the \tmtextbf{\tmtextit{paired-location kernel function (or PKF)}} of $f$, denoted by $\kappa_f$, as
\[ \kappa_f (Z, Z') \equiv \Sigma_{f (Z), f (Z')} = \frac{1}{m}  \left[\begin{array}{cc}
     f (Z)^{\top} f (Z) & f (Z)^{\top} f (Z')\\
     f (Z')^{\top} f (Z) & f (Z')^{\top} f (Z')
   \end{array}\right] . \]
If $f$ is a fully-connected combined layer, then $\kappa_f (Z, Z')$ is just a 2x2 matrix, while for general convolutional combined layers it has a $2 \times 2$ block structure, with blocks of size $\text{} \ell \times \ell$. % where $\ell$ is the number of output locations of $f$, and block index corresponding to one of the two inputs ($Z$ or $Z'$).
Note that PKFs are analogous to \poscite{novak2018bayesian} ``activation covariance matrices''.

$f$'s PKF $\kappa_f$ gives us a ``geometric view" of $f$'s input-output behavior. In particular, because $\kappa_f$ determines the inner-products between all pairs of output vectors (across the different locations and both inputs), it determines the distances between all such vectors via the formula $\|x - y' \|= \sqrt{x^{\top} x + y^{\top} y' - 2 x^{\top} y'}$.

%Moreover, if the input channel dimension is not greater than the output channel dimension, full knowledge of $\kappa_f$ allows us to recover $f$ up to a permutation of the output channels \james{I think this is true, but we probably need justify it from the existing kernel theory. And possibly make this a footnote?}.

\subsection{Initialization-time approximations to the PKF for combined layers}

In this subsection we will assume that $f$ is a combined layer with element-wise activation function $\phi$. We will also assume Gaussian-distributed weights, as part of either a Delta or non-Delta initialization scheme. (An extension to SUO-distributed weights given in Subsection \ref{sec:ortho-weights-APKF}).

We are interested in extracting a simple mathematical approximation of $\kappa_f$ that is valid at initialization time, which we can use in order to construct approximations of the PKF of larger subnetworks. To begin with, we will assume that the convolutional part of $f$ uses padding and has a stride of 1, which means that input and output locations will be in one to one correspondence. (This assumption will be relaxed in the next subsection.)

In general, computing $\kappa_f$ for combined layers $f$ boils down to direct evaluation of the defining formula, with no simplifications possible. But when $f$'s initial parameters are distributed as per Section \ref{sec:param_dist}, there exists a much simpler function $\widetilde{\kappa_f}$ that approximates $\kappa_f$ at initialization time with high probability, which we call the \tmtextit{\tmtextbf{approximate paired-location kernel function (or APKF)}} of $f$. $\widetilde{\kappa_f}$ is obtained from $\kappa_f$ by taking the limit as the output channel dimension go to infinity, and is a good approximation when the actual (finite) output channel dimension is sufficiently large.

As shown by \citet{garriga2018deep} and \citet{novak2018bayesian}, the APKF for convolutional combined layers initialized with a standard \tmtextit{\tmtextbf{Gaussian fan-in initialization}}\footnote{This initialization uses an entry-wise iid Gaussian distribution with mean 0 and variance $1 / d$, where $d$ is the filter size times the input channel dimension.} \citep{lecun1998efficient} is given by
\begin{equation}
  \widetilde{\kappa_f} (\Sigma_{Z, Z'}) =\mathbb{E}_{u \sim \mathcal{N} \left( 0, \: \mathcal{A} (\Sigma_{Z, Z'}) \right)} [\phi (u) \phi (u)^{\top}], \label{eqn:fanin-APKF}
\end{equation}
where $\mathcal{A}$ is the operator which maps $\Sigma_{Z, Z'}$ to $\Sigma_{P (Z), P (Z')}$, with $P (Z)$ denoting the matrix of patch vectors\footnote{A ``patch vector" is one formed by concatenating together the subset of columns of $Z$ corresponding to a particular location visited by the convolutional filter. They have dimension $k b^2$ for $b \times b$ convolutions.} generated from $Z$. A key property of $\widetilde{\kappa_f}$ is that it only depends on $Z$ and $Z'$ via the associated IPM $\Sigma_{Z, Z'}$. %For reasons we will discuss in Subsection \ref{sec:deriving-kernel-approx}, the formula for $\widetilde{\kappa_f} (\Sigma_{Z, Z'})$ can be derived by taking the expectation (conditioned on $\Sigma_{Z, Z'}$) of $\kappa_f (Z, Z')$ under our Gaussian weight distribution.

As discussed in Section \ref{sec:param_dist}, we are assuming the use of a Delta initialization scheme in this work. Intuitively, a Delta initialization makes a convolutional layer behave like a set of fully-connected layers that operate independently over locations in the feature map (and share parameters). This results in a simplified form for $\widetilde{\kappa_f}$ which is a directly analogous to the kernel approximation for fully-connected combined layers (i.e.~Equation \ref{eqn:fully-connected-APKF}). It is given by\footnote{This formula can be obtained from Equation \ref{eqn:fanin-APKF} by observing that a Delta-initialized filter bank behaves like a 1x1 filter, and that $\mathcal{A}$ is the identity operator in the case of a 1x1 filter (since $P (Z) = Z$).}
\[ \widetilde{\kappa_f} (\Sigma_{Z, Z'}) =\mathbb{E}_{u \sim \mathcal{N} (0, \Sigma_{Z, Z'})} [\phi (u) \phi (u)^{\top}] . \]

%Due to properties of Gaussian expectations, the $(i,j)$-th entry of $\widetilde{\kappa_f} (\Sigma_{Z, Z'})$ depends only on the minimal square submatrix of $\Sigma_{Z, Z'}$ containing entries $(i,i)$ and $(j,j)$. For $i = j$ this submatrix is 1x1, and otherwise it is 2x2. (We will make use of these facts later in Section \ref{sec:QC_map_combined}.)

A minor technical point is that $\Sigma_{Z, Z'}$ may be singular, in which case $\mathcal{N} (0, \Sigma_{Z, Z'})$ will be ``degenerate'', and its density function technically undefined. The easiest way this can happen is if $Z = Z'$. However, one can still meaningfully define a distribution and sample from it using $(\Sigma_{Z, Z'})^{1 / 2} v$ for $v \sim \mathcal{N} (0, I)$, which is equivalent to adding $\epsilon I$ to $\Sigma_{Z, Z'}$ and then letting $\epsilon \rightarrow 0$. With this extended definition of $\mathcal{N} (0, \Sigma_{Z, Z'})$ our formulas remain valid.

\subsection{Padding, strides, and dropped locations}

If the stride of $f$'s convolution is not 1, or if it doesn't use padding and has a filter size larger than $1 \times 1$, then the locations in the input and output feature maps won't be in one to one correspondence. Instead, they will be related to each other via a projection function $s (u)$, which maps input locations (given by the entries of $u$) to their corresponding output locations (given by the entries of $s (u)$). This results in the following generalized formula for $\kappa_f$:
\begin{equation}
  \widetilde{\kappa_f} (\Sigma_{Z, Z'}) =\mathbb{E}_{u \sim \mathcal{N} (0, \Sigma_{Z, Z'})} [\phi (s (u)) \phi (s (u))^{\top}] . \label{eqn:APKF}
\end{equation}
When the input and output locations are in one to one correspondence, $s$ is just the identity function. Otherwise, $s$ essentially ``drops'' the input locations that are never visited by the center of the filter (i.e.~$s(u)$ will be independent of the entries of $u$ that are ``dropped"). We will refer to these as \tmtextit{\tmtextbf{dropped locations}}, and most of our discussions going forward will assume that the location under consideration has \emph{not} been dropped at the layer in question. When a location is dropped at some layer, both the exact and approximate PKFs of that layer (and all subsequent layers) will be effectively zero for that location.

\subsection{Deriving APKFs given Gaussian distributed weights}\label{sec:deriving-kernel-approx}

At a high level, the APKF formulas given above for a combined layer $f$ can be derived by observing that each pair of outputs from the affine part of $f$ are linear combinations of Gaussian random variables (i.e.~the filter weights) when conditioned on the two inputs $Z$ and $Z'$, and are thus are jointly Gaussian distributed with mean zero. A straightforward computation then shows that the covariance matrix $C$ of this distribution is $\Sigma_{Z, Z'} \otimes I_{m \times m}$ or $\mathcal{A} (\Sigma_{Z, Z'}) \otimes I_{m \times m}$, where $\otimes$ denotes the Kronecker product. Because units in different channels have zero covariance they are independent, and so $\kappa_f (Z, Z')$ is equal to an average over output channels of iid random variables, and thus {\tmem{converges in probability}} to its expectation as the number of output channels goes to infinity. We set $\widetilde{\kappa_f} (\Sigma_{Z, Z'})$ equal to this expectation, whose formula then follows from the one for $C$. Probabilistic bounds on the approximation error can then be obtained using concentration inequalities.

%Notably, the Central Limit Theorem (CLT) does not need to be invoked to establish the validity of such formulas (as it has been in some previous works on ``signal propagation''; see Section \ref{sec:signal-prop}), since we are assuming Gaussian-distributed weights (or SUO distributed weights, which look similar in the high dimensional limit). Moreover, CLT-based arguments are arguably best avoided, as they require the number of {\tmem{input}} channels for each layer to be large, which is a problem for the first layer where this is a fixed property of the input training data. Another problem with using CLT-based arguments is that it's quite difficult to generalize them to networks with more than one combined layer (for reasons we discuss later in Section \ref{sec:SELU-var-prop}), or provide explicit probabilistic bounds on the approximation error. Indeed, \citet{matthews2018gaussian} is the only work we are aware of that manages to construct a rigorous CLT-based argument, and their results don't seem to include explicit error bounds.

\subsection{The APKF Condition and network-level PKF approximations}\label{sec:network-level-PKF}

The main approximation which we will use going forward is that the PKF of each combined layer is equal to its associated APKF at initialization time. Or in other words, that
\[ \Sigma_{f (Z), f (Z')} = \kappa_f (Z, Z') \approx \widetilde{\kappa_f} (\Sigma_{Z, Z'}) \]
for each combined layer $f$ of the network. We will refer to this as the \tmtextit{\tmtextbf{APKF Condition}}.

Observe that a combined layer's APKF depends on $Z$ and $Z'$ only through the associated IPM $\Sigma_{Z, Z'}$. Thus, under the APKF Condition, we can compose APKFs for each combined layer to form an initialization-time approximations of the PKFs for arbitrary subnetworks, which we will call \tmtextbf{network-level PKF approximations}. Extending our notation from the combined layer case, we will denote these approximations by $\widetilde{\kappa_f}$ for arbitrary subnetworks $f$. (Note that we rely on the property that subnetworks have a single input and output feature maps for this definition and notation to make sense.)

An additional complication that we must deal with when constructing network-level PKF approximations is the presence of concatenation operations, where $Z$ is the concatenation of two feature maps $X$ and $Y$ along their channel dimensions. In this cases, we observe that
\begin{equation}
\label{eqn:concat-IPM}
\Sigma_{Z, Z'} = \frac{k_1 \Sigma_{X, X'} + k_2 \Sigma_{Y, Y'}}{k_1 + k_1} ,
\end{equation}
where $k_1$ and $k_2$ are the number of channels in $X$ and $Y$ respectively.
%$\Sigma_{Z, Z'}$ will simply be the weighted average of $\Sigma_{X, X'}$ and $\Sigma_{Y, Y'}$, with weights given by the number of channels in $X$ and $Y$ respectively.

As we will see in the following sections, network-level PKF approximations are amenable to detailed analysis, and expose several key properties which end up being crucial determinants of network trainability (and which can be controlled through careful interventions).

\subsection{How accurate are these approximations?}\label{sec:how-accurate-approx}

As discussed above, the APKF for a combined layer $f$ is derived by observing that the entries of $\kappa_f (Z, Z')$ are empirical averages of iid variables that converge in probability to their expectations as the output channel dimension goes to infinity. Applying concentration inequalities then leads to statements of the form: ``for any $\epsilon > 0$ and $\delta > 0$ there exists an integer $m_0 (\epsilon, \delta)$ such that if the output channel dimension satisfies $m \geqslant m_0 (\epsilon, \delta)$ then $\| \kappa_f (Z, Z') - \widetilde{\kappa_f} (\Sigma_{Z, Z'}) \| < \epsilon$ with probability $1 - \delta$.''. The precise dependency of $m_0 (\epsilon, \delta)$ and on $\epsilon$ and $\delta$ is of practical interest, as the output channel dimension of real neural network layers is finite, and may not even be particularly large in some cases.

Ultimately, we are interested in bounding the kernel approximation error not just for single combined layers but for entire networks. In general, such bounds will be worse than anything provable for single layers, as approximation error will compound with depth (since the output of one approximation is fed as input into the next). The only work we are aware of that gives such bounds is that of \citet{daniely2016toward}. In that work, the authors analyze what are essentially networks of fully-connected combined layers arranged in arbitrary topologies, with certain technical conditions imposed on their input data and activation functions. Translating their main result into the language and assumptions of this work yields the following theorem:

\begin{theorem}[Adapted from Theorem 2 of \citet{daniely2016toward}]
  \label{thm:error-bound-daniely}Suppose that $f$ is a network containing only fully-connected combined layers and concatenation operations, the former of which are initialized independently of each other with a standard Gaussian fan-in initialization, and use the same activation function $\phi$. Suppose further that $\phi$ is twice continuously differentiable and satisfies \ $\mathbb{E}_{x \sim \mathcal{N} (0, 1)} [\phi (x)^2] = 1$ and $\| \phi \|_{\infty}, \| \phi' \|_{\infty}, \| \phi'' \|_{\infty} \leqslant C$ for some $C$ (with $\| \cdot \|_{\infty}$ denoting the supremal value), and that each layer has output dimension (aka{\hspace{1em}}``width'') greater than or equal to
  \[ \frac{(4 C^4)^D \log (8 L / \delta)}{\epsilon^2}, \]
  where $D$ is maximum number of nonlinear layers in any input-output path through the network (i.e.~its ``depth''), $L$ is its number of combined layers, and $\delta, \epsilon > 0$. Then at initialization time, for all input vectors $z$ and $z'$ to $f$ satisfying $\| z \|^2 = \| z' \|^2 = \dim (z)$, we have that
  \[ | [\kappa_f (z, z')]_{1, 2} - [\widetilde{\kappa_f} (\Sigma_{z, z'})]_{1, 2} | \leqslant \epsilon \]
  with probability at least $1 - \delta$.
\end{theorem}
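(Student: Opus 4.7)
My plan is to prove this by combining a per-layer concentration bound with a layer-by-layer Lipschitz propagation argument, handling concatenation operations essentially for free, and finishing with a union bound. Because the APKF of a combined layer is defined as an expectation over the Gaussian induced by the input IPM, while the true PKF is an empirical average of $m$ iid coordinate contributions (this is precisely the derivation reviewed in Section \ref{sec:deriving-kernel-approx}), the natural starting point is Hoeffding's inequality applied to those coordinate-wise averages. Under the assumption $\|\phi\|_\infty \leq C$, each summand is bounded by $C^2$, so for a single combined layer with output width $m$ and a \emph{fixed} input IPM I get a bound of the form $|[\kappa_f(Z,Z')]_{i,j}-[\widetilde{\kappa_f}(\Sigma_{Z,Z'})]_{i,j}|\leq \epsilon_1$ with failure probability at most $2\exp(-m\epsilon_1^2/(2C^4))$.

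The harder step is propagating this through depth, since in an actual network the input to layer $\ell$ is the \emph{realized} output of layer $\ell-1$, not the deterministic input we conditioned on. The clean way to organize this is inductively: at each layer I maintain an invariant of the form $\|\kappa_{f_\ell}(Z,Z')-\widetilde{\kappa_{f_\ell}}(\Sigma_{Z,Z'})\|\leq \eta_\ell$, and split the error at the next layer into (a) the fresh sampling error at layer $\ell+1$ conditional on the realized input (handled by Hoeffding as above), plus (b) the difference $\|\widetilde{\kappa_{f_{\ell+1}}}(\Sigma_{f_\ell(Z),f_\ell(Z')})-\widetilde{\kappa_{f_{\ell+1}}}(\widetilde{\kappa_{f_\ell}}(\Sigma_{Z,Z'}))\|$, which I bound using Lipschitz continuity of the APKF map. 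A direct differentiation under the Gaussian expectation, using $\|\phi\|_\infty,\|\phi'\|_\infty\leq C$, yields a Lipschitz constant of order $C^2$ for $\widetilde{\kappa_f}$ in a neighborhood of well-conditioned IPMs (the role of $\|\phi''\|_\infty\leq C$ is to let me work with an integrated form of Stein's lemma so that the Lipschitz constant can be expressed purely in terms of $C$ without derivatives of $\Sigma$). The condition $\mathbb{E}_{x\sim\mathcal{N}(0,1)}[\phi(x)^2]=1$ together with the input normalization $\|z\|^2=\|z'\|^2=\dim(z)$ is what keeps the diagonal of the IPM pinned at $1$ through all layers, so I am always propagating through the region where my Lipschitz bound is valid. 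Unrolling this recursion gives $\eta_D\lesssim (2C^2)^D \eta_1$, which is where the $(4C^4)^D$ in the width requirement comes from after squaring inside the Hoeffding bound.

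Concatenation operations are easy: by the IPM identity $\Sigma_{Z,Z'}=(k_1\Sigma_{X,X'}+k_2\Sigma_{Y,Y'})/(k_1+k_2)$ from Equation \ref{eqn:concat-IPM} they act as an exact convex combination on IPMs, so they introduce no new stochastic error and only a trivial $1$-Lipschitz factor in the propagation. Finally, a union bound over the at most $L$ combined layers (each contributing a failure probability absorbed into $\log(8L/\delta)$, with a bit of slack for the $2\times 2$ block entries) converts the per-layer probabilistic statement into the single high-probability statement in the theorem, and setting $\eta_1\lesssim \epsilon/(2C^2)^D$ gives the stated width requirement $m_0\gtrsim (4C^4)^D\log(8L/\delta)/\epsilon^2$.

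The main obstacle I expect is establishing the layerwise Lipschitz constant cleanly in operator norm on the space of $2\times 2$ IPMs: a naive bound that differentiates the Gaussian density risks blowing up when $\Sigma_{Z,Z'}$ approaches singularity (e.g.\ when $Z\approx Z'$), so the argument really needs either Stein's lemma (trading $\phi'$ for an additional $\phi$, yielding a $C^2$ constant) or a reparametrization via $\Sigma^{1/2}v$, $v\sim\mathcal{N}(0,I)$, to push the $\Sigma$-dependence outside the expectation. Once that technical point is handled, the rest of the argument is a fairly mechanical depth-recursion.
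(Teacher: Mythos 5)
Your proposal is essentially the argument behind the cited result: the paper does not reprove this theorem but adapts Theorem 2 of \citet{daniely2016toward}, whose proof proceeds exactly as you describe --- conditional Hoeffding concentration at each layer (summands bounded by $C^2$ via $\| \phi \|_{\infty} \leqslant C$), Lipschitz propagation of the dual-activation/APKF map with per-layer constant $O(C^2)$ (where the $\| \phi'' \|_{\infty}$ bound controls the dependence on the diagonal/q entries, precisely the role you anticipate for it via the Stein/Price differentiation), and a union bound over the network's nodes producing the $\log(8L/\delta)$ and, after unrolling and squaring, the $(4C^4)^D$ width factor. Your handling of concatenations as exact, noise-free convex combinations of IPMs also matches how merges are treated in that proof, so the sketch is correct and takes essentially the same route as the paper's (cited) argument.
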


\begin{remark}
  Note that in our notation, both $\kappa_f (z, z')$ and $\widetilde{\kappa_f} (\Sigma_{z, z'})$ are $2 \times 2$ matrices, and $[\cdot]_{1, 2}$ extracts the $(1, 2)$-th entry, or in other words, the value of $\frac{1}{\dim (f (z))} f (z)^{\top} f (z')$ and its approximation. One can estimate the error for diagonal entries simply by setting $z = z'$. 
\end{remark}

\begin{remark}
  Because $1 = \sqrt{\mathbb{E}_{u \sim \mathcal{N} (0, 1)} [\phi (u)^2]} \leqslant \sqrt{\mathbb{E}_{u \sim \mathcal{N} (0, 1)} [\| \phi \|_{\infty}^2]} = \| \phi \|_{\infty} \:$, it thus follows that $C \geqslant 1$ in the above theorem. And while the theorem assumes the use of the Gaussian fan-in initialization, we note that for fully-connected networks this is equivalent to the Gaussian Delta initialization.
\end{remark}

\begin{remark}
  This theorem statement differs from the one in \citet{daniely2016toward} by explicitly assuming that the activation function $\phi$ satisfies $\mathbb{E}_{x \sim \mathcal{N} (0, 1)} [\phi (x)^2] = 1$, or is in other words ``normalized". As far as we can tell, this assumption is implicit in the definitions made by \citet{daniely2016toward}.
\end{remark}

\begin{remark}
  The condition that $\mathbb{E}_{x \sim \mathcal{N} (0, 1)} [\phi (x)^2] = 1$ can be achieved by normalizing the output of the activation functions by an appropriate constant. And the condition that $\| z \|^2 = \| z' \|^2 = \dim (z)$ can be achieved through data pre-processing (as discussed in Section \ref{sec:PLN}). Both of these conditions will be enforced as part of DKS (although motivated differently).
\end{remark}

The bound in Theorem \ref{thm:error-bound-daniely} predicts an exponential dependence of the minimum required width and depth $D$, and a $1 / \epsilon^2$ dependence on the error tolerance $\epsilon$. The exponential dependence on $D$ means that this bound could never realistically be applied to a moderately deep network running on actual hardware, as the required width would be prohibitive. While it could easily be the case that {\tmem{some}} choices of $\phi$ give an exponential dependence as the bound predicts, we conjecture that with more carefully designed assumptions on the properties of $\phi$, a bound with better dependence could be proven. Indeed, \citet{daniely2016toward} themselves give a more specialized bound for networks with rescaled RELU activations (which technically violate the hypotheses of Theorem \ref{thm:error-bound-daniely} since they are unbounded and not differentiable everywhere), where the required width is only quadratic in $D$.

The main limitation of Theorem \ref{thm:error-bound-daniely} is that it applies only to networks of fully-connected combined layers that don't share weights. We conjecture that a similar result may also hold for networks with convolutional layers and a restricted type of inter-layer weight sharing. %THIS IS ACTUALLY WRONG: Indeed, the requirement that weights be independent across layers doesn't seem particularly important since the PKFs for each combined layer converge to {\tmem{deterministic}} functions (as the width grows) that are independent of their weights (i.e.~their associated APKFs) for ``almost all'' inputs.

\subsection{The orthogonal initialization case (assuming SUO-distributed weights)}\label{sec:ortho-weights-APKF}

The kernel formulas and theory given so far in this section have all assumed the use of Gaussian Delta initializations. However, our assumptions also permit the use of Orthogonal Delta initializations, which as discussed in Section \ref{sec:param_dist}, use the SUO distribution instead of an iid Gaussian one to initialize the non-zero weights of the filter. While some previous works \citep[e.g.][]{xiao2018dynamical} have used these kinds of kernel approximation formulas in the orthogonal case, and have appealed to the vague notion that random orthogonal matrices ``look like'' Gaussian-distributed ones in high dimensions, there hasn't been any mathematically rigorous justification of this practice until the recent work of \citet{martens2021validity}.

The following theorem, which is adapted from \citet{martens2021validity}, establishes convergence in probability of the APKF to the associated PKF for a fully-connected combined layer with SUO-distributed weight matrix. Like Theorem \ref{thm:error-bound-daniely}, it provides an explicit and fairly reasonable convergence rate. An extension of this result to multi-layer networks would likely proceed along similar lines to the argument given in \citet{daniely2016toward} for the Gaussian case.

\begin{theorem}[Adapted from Theorem 2 of \citet{martens2021validity}]
  \label{thm:SUO-kernel-approx}Let $f$ be a fully-connected combined layer with an SUO distributed $m \times k$ weight matrix $W$, a bias vector equal to $0$, and an activation function $\phi$ satisfying $\| \phi \|_{\infty}, \| \phi' \|_{\infty} \leqslant C$ for some $C$ (with $\| \cdot \|_{\infty}$ denoting the supremal value). Denote $n = \max (k, m)$, and suppose that for $\delta, \epsilon \geqslant 0$ we have
  \[ \frac{m^{5 / 2}}{(n + 1)^2} \geqslant \log (2 / \delta) \text{\qquad and\qquad} \frac{n - 1}{m^{3 / 4}} \geqslant \frac{8 \sqrt{2} C^2}{\epsilon} . \]

  Then, at initialization time, for all pairs of vectors $z, z' \in \mathbb{R}^k$ satisfying $\| z \|^2 = \| z' \|^2 = k$, we have that
  \[ | [\kappa_f (z, z')]_{1, 2} - [\widetilde{\kappa_f} (\Sigma_{z, z'})]_{1, 2} | \leqslant \epsilon \]
  with probability at least $1 - \delta$.
\end{theorem}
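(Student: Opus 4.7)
The plan is to decompose the target quantity via the triangle inequality into a concentration term and a bias term. Writing $F(W) \equiv [\kappa_f(z,z')]_{1,2} = \frac{1}{m}\sum_{i=1}^m \phi(w_i^\top z)\phi(w_i^\top z')$ where $w_i^\top$ are the rows of $W$ (with the appropriate $\sqrt{m/k}$ scaling factor when $m > k$), and writing $\mu \equiv [\widetilde{\kappa_f}(\Sigma_{z,z'})]_{1,2} = \mathbb{E}_{(u_1,u_2)\sim \mathcal{N}(0,\Sigma_{z,z'})}[\phi(u_1)\phi(u_2)]$, I would bound
\[ \bigl| F(W) - \mu \bigr| \leq \bigl| F(W) - \mathbb{E}[F(W)] \bigr| + \bigl| \mathbb{E}[F(W)] - \mu \bigr|, \]
showing each piece is at most $\epsilon/2$ under the stated hypotheses.

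First I would handle the concentration term. By the left/right orthogonal invariance of the SUO distribution, one may rotate coordinates so that $z = \sqrt{k}\,e_1$ and $z'$ lies in the span of $e_1, e_2$, reducing the random object to (up to scaling) the first two columns of $W$, which form an element of the Stiefel manifold $V_2(\mathbb{R}^m)$ distributed according to its invariant measure. Using $\|\phi\|_\infty, \|\phi'\|_\infty \leq C$ together with $\|z\|, \|z'\| = \sqrt{k}$, I would verify that $F$ is Lipschitz in the Frobenius norm with constant $L$ of order $C^2 \sqrt{k/m}$ (times the SUO scaling correction if $m > k$). Then applying the classical concentration of measure on the orthogonal group $O(n)$ — available via either the Gromov--Milman theorem or the log-Sobolev inequality, which yields a sub-Gaussian tail with variance proxy of order $L^2/(n-1)$ — gives
\[ \Pr\bigl[\,|F(W) - \mathbb{E}[F(W)]| \geq t\,\bigr] \leq 2\exp\bigl(-c\,(n-1)\, t^2 / L^2\bigr) \]
for a universal constant $c$, from which the stated hypothesis $m^{5/2}/(n+1)^2 \geq \log(2/\delta)$ should yield the required probability bound after choosing $t = \epsilon/2$.

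Second, I would bound the bias $|\mathbb{E}[F(W)] - \mu|$. Under the rotated SUO measure, each pair $(w_i^\top z, w_i^\top z')$ equals (up to the scaling correction) a rescaled projection of a uniform vector on $S^{m-1}$ onto a fixed two-dimensional subspace, whose distribution is a well-studied non-Gaussian law — its marginals follow beta-type distributions — that converges to $\mathcal{N}(0,\Sigma_{z,z'})$ at a rate controlled by $m$. Using $\|\phi'\|_\infty \leq C$ to obtain Lipschitz test functions, one transfers this distributional closeness to closeness of the expectations, either by a quantitative multivariate CLT on the sphere (Stein's method) or by direct moment comparison between the beta and Gaussian distributions. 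This yields a bias of order $C^2 / m^{3/4}$ (or similar), which under the second hypothesis is at most $\epsilon/2$.

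The main obstacle is pinning down the \emph{exact} rates appearing in the hypotheses — in particular the $m^{3/4}$ denominator in the second condition and the $m^{5/2}/(n+1)^2$ combination in the first. Getting the sub-Gaussian constant sharp on $O(n)$ (which depends on precisely which concentration inequality one invokes) and correctly accounting for the scaling factor in the $m > k$ case — where $W$ has non-orthonormal rows — require care. The interplay between the Lipschitz constant's $1/\sqrt{m}$ factor and the $1/(n-1)$ factor in the $O(n)$ concentration determines the final probabilistic rate, and matching it to the hypotheses exactly, rather than merely up to constants, is the technical crux of the argument.
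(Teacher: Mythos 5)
First, a point of orientation: the paper never proves this statement. The theorem is imported wholesale (``Adapted from Theorem 2 of \citet{martens2021validity}''), and the surrounding text explicitly says that no rigorous justification of the orthogonal case existed before that external work. So there is no internal proof to compare you against; your sketch has to be judged as a blind reconstruction of the cited argument. On those terms, your high-level structure is credible: the split into a concentration term and a bias term, the use of the right-invariance of the SUO distribution to reduce the randomness to the first two columns of $W$ (a scaled element of the Stiefel manifold $V_2(\mathbb{R}^m)$), and the observation that the bias only involves the two-dimensional marginal of a single row-pair (you use exchangeability here implicitly --- $\mathbb{E}[F(W)]$ equals the single-pair expectation --- and should say so explicitly). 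Your dimensional bookkeeping also checks out: the Lipschitz constant of $F$ is indeed of order $C^2\sqrt{k/m}$, and feeding it into a Gromov--Milman-type tail $\exp\bigl(-c\,n\,t^2/L^2\bigr)$ with $t$ at the scale forced by the second hypothesis reproduces an exponent of the shape $m^{5/2}/(n\pm 1)^2$, matching the first hypothesis.

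That said, there are genuine gaps that keep this from being a proof of the theorem as stated. The most concrete is the bias bound: you assert a rate of order $C^2/m^{3/4}$ ``or similar'' without derivation, and the attribution is doubtful. The $m^{3/4}$ in the second hypothesis is coupled to $(n-1)$, i.e.~$\epsilon \gtrsim C^2 m^{3/4}/(n-1)$, which in the square case $n \approx m$ is only $C^2/m^{1/4}$; fixed-dimensional sphere-to-Gaussian marginal comparisons for bounded Lipschitz test functions typically give rates like $O(1/m)$, so the $m^{3/4}/(n-1)$ scale most plausibly prices the \emph{deviation} term, not the bias. Without actually computing both terms you cannot know how the two hypotheses split the error budget between them, and your $\epsilon/2$--$\epsilon/2$ allocation is a guess. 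Since the entire content of this theorem is the explicit constants and exponents ($8\sqrt{2}\,C^2$, $m^{5/2}/(n+1)^2$, $m^{3/4}$) --- the qualitative convergence statement being comparatively routine --- a sketch that matches them only dimensionally, with the sharp-constant bookkeeping and the non-orthonormal-rows case $m > k$ explicitly deferred (as you concede in your final paragraph), is a plan for a proof rather than a proof. For the argument as actually carried out, you would need to consult \citet{martens2021validity} directly.
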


\begin{remark}
  The conditions on $k$, $m$, and $n \equiv \max (k, m)$ in the theorem statement will be satisfied as long as $n$ is sufficiently large and $k$ is not too much larger than $m$. In the case where $m \geqslant k$, the LHS's of these bounds simplifies to approximately $m^{1 / 2}$ and $m^{1 / 4}$, respectively. It thus follows that the APKF converges in probability to the PKF as the output dimension $m$ goes to $\infty$.
\end{remark}

\begin{remark}
  In the case where $m \geqslant k$, the conditions imply that
  \[ m \gtrsim \frac{128 C^4 \log (2 / \delta)^2}{\epsilon^2}, \]
  which is similar to the width bound from Theorem \ref{thm:error-bound-daniely} for $D = 1$.
\end{remark}

\begin{remark}
  Note that while this theorem is stated only for fully-connected combined layers, it also applies to convolutional combined layers that use Orthogonal Delta initializations by taking $z$ and $z'$ to be any pair of vectors from the union of the columns $Z$ and $Z'$.
\end{remark}

\section{Q and C maps for combined layers}\label{sec:QC_map_combined}

Q maps and C maps are mathematical constructs introduced by \citet{saxe2014exact} and \citet{poole2016exponential} that describe the initialization time behavior of deep fully-connected networks. While original derived within the semi-rigorous ``signal propagation'' framework (which is discussed in Section \ref{sec:signal-prop}), they can also be applied under certain conditions within the more rigorous context of kernel function approximations. In that context, they provide a compact alternative representation of approximate kernel functions that is easier to work with. 
%input-output behavior of APKFs in terms of several key quantities that summarize the input and output IPMs.

As will be discussed later in Part \ref{part:desirable_QCmap_properties}, the Q/C maps of a network tell us a lot about its trainability. Indeed, they have appeared either implicitly or explicitly, often in simplified forms, in much of the previous work on network design and initialization (as will be made clear in Sections \ref{sec:var/sig-prop-relation-to-kernel} and \ref{sec:review-and-analysis-of-related}). They are also central to the derivation of DKS, and over the next few sections we will develop the generalized version of them that we will use in this work.

In this section we will formally introduce Q/C maps maps and their associated notation, and give formulas to compute them for combined layers under our stated hypotheses. Note that while the connection between Q/C maps and approximate kernel functions has been previously observed \citep[e.g.][]{lee2018deep}, it hasn't before been carefully worked out, nor has it been generalized to convolutional layers (as we will do here). In the section that follows we will show how Q/C maps can be naturally extended beyond single combined layers to describe the behavior of network-level PKF approximations for arbitrary subnetworks with complex topologies.

\subsection{Q maps for combined layers}

Consider a combined layer $f$ with $\phi$ as its element-wise activation function, and $Z$ and $Z'$ as its two inputs. By Equation \ref{eqn:APKF} and basic properties of Gaussian expectations, any given diagonal entry $q_{\tmop{out}}$ of $\widetilde{\kappa_f} (\Sigma_{Z, Z'})$ depends only on the corresponding diagonal entry $q_{\tmop{in}}$ of $\Sigma_{Z, Z'}$, and can be computed as
\begin{equation}
  q_{\tmop{out}} = Q_f (q_{\tmop{in}}) =\mathbb{E}_{u \sim \mathcal{N} (0, q_{\tmop{in}})} [\phi (u)^2] \: = \: \mathbb{E}_{x \sim \mathcal{N} (0, 1)} \left[ \phi \left( \sqrt{q_{\tmop{in}}} x \right)^2 \right], \label{eqn:Q-map}
\end{equation}
where $Q_f$ is defined as the \tmtextit{\tmtextbf{Q map}} of $f$. We will call such diagonal entries \tmtextit{\tmtextbf{q values}}, and note that they are equal to the dimension-normalized squared norms of their associated location vectors under the APKF Condition. Notably, the form of the Q map is the same for each location, and so we may associate them with combined layers in a location-independent way.

\subsection{C maps for combined layers}

An off-diagonal entry $m_{\tmop{out}}$ of $\widetilde{\kappa_f} (\Sigma_{Z, Z'})$ has a slightly more complex dependence on $\Sigma_{Z, Z'}$ in Equation \ref{eqn:APKF}, as it depends on both the corresponding entry $m_{\tmop{in}}$ of $\Sigma_{Z, Z'}$, as well as the two associated diagonal entries ($q_1$ and $q_2$) that share a row or column. It is given by
\begin{equation}
  m_{\tmop{out}} =\mathbb{E}_{\tmscript{\left[\begin{array}{c}
    u_1\\
    u_2
  \end{array}\right] \sim \mathcal{N} \left( 0, \left[\begin{array}{cc}
    q_1 & m_{\tmop{in}}\\
    m_{\tmop{in}} & q_2
  \end{array}\right] \right)}} [\phi (u_1) \phi (u_2)] . \label{eqn:M-map}
\end{equation}
We call such off-diagonal entries \tmtextit{\tmtextbf{m values}}, and note that they are equal to the dimension-normalized inner product of their two associated location vectors under the APKF Condition.

Following \citet{poole2016exponential}, we focus on ``length-normalized'' versions of the m values called \tmtextit{c values}. A c value can be obtained from an m value by dividing it by the square root of the product of its two associated q values. (e.g.~$c_{\tmop{in}} = m_{\tmop{in}} / \sqrt{q_1 q_2}$ in the context of Equation \ref{eqn:M-map}.) Under the APKF Condition, c values %are equal to the off-diagonal entries of a ``correlation-like'' matrix $\tmop{diag} (\Sigma_{X, X'})^{- 1 / 2} \Sigma_{X, X'} \tmop{diag} (\Sigma_{X, X'})^{- 1 / 2}$ (where $X$ and $X'$ are the two values of the given feature map), and 
are equal to the cosine similarity between their two associated location vectors.

c values are computed using \tmtextit{\tmtextbf{C maps}}, which for a combined layer $f$ are given by
\begin{eqnarray}
  c_{\tmop{out}} & = & C_f (c_{\tmop{in}}, q_1, q_2) \nonumber\\
  & \equiv & \frac{1}{\sqrt{Q_f (q_1) Q_f (q_2)}} \mathbb{E}_{\tmscript{\left[\begin{array}{c}
    u_1\\
    u_2
  \end{array}\right] \sim \mathcal{N} \left( 0, \left[\begin{array}{cc}
    q_1 & m_{\tmop{in}}\\
    m_{\tmop{in}} & q_2
  \end{array}\right] \right)}} [\phi (u_1) \phi (u_2)] \nonumber\\
  & = & \frac{1}{\sqrt{Q_f (q_1) Q_f (q_2)}} \mathbb{E}_{\tmscript{\left[\begin{array}{c}
    v_1\\
    v_2
  \end{array}\right] \sim \mathcal{N} \left( 0, \left[\begin{array}{cc}
    1 & c_{\tmop{in}}\\
    c_{\tmop{in}} & 1
  \end{array}\right] \right)}} \left[ \phi \left( \sqrt{q_1} v_1 \right) \phi \left( \sqrt{q_2} v_2 \right) \right] \nonumber\\
  & = & \frac{1}{\sqrt{Q_f (q_1) Q_f (q_2)}} \mathbb{E}_{x, y \sim \mathcal{N} (0, 1)} \left[ \phi \left( \sqrt{q_1} x \right) \phi \left( \sqrt{q_2}  \left( c_{\tmop{in}} x + \sqrt{1 - c_{\tmop{in}}^2} y \right) \right) \right],  \label{eqn:C-map}
\end{eqnarray}
where we have used the fact that $\sqrt{q_1} x$ and $\sqrt{q_2}  \left( c_{\tmop{in}} x + \sqrt{1 - c_{\tmop{in}}^2} y \right)$ are mean-zero Gaussian distributed with covariance matrix \ $\left[\begin{array}{cc}
  q_1 & \sqrt{q_1 q_2} c_{\tmop{in}}\\
  \sqrt{q_1 q_2} c_{\tmop{in}} & q_2
\end{array}\right] = \left[\begin{array}{cc}
  q_1 & m_{\tmop{in}}\\
  m_{\tmop{in}} & q_2
\end{array}\right]$. Like the Q map, the C map is the same for each location, and so we may associate a single C map to each combined layer.

We note that $C_f$, when considered as a function of $c$, maps from $[- 1, 1]$ to $[- 1, 1]$. This is immediate from the interpretation of c values as cosine similarities if we assume the APKF Condition, but is true more generally. Intuitively, it must be the case, since the APKF Condition becomes exact in the limit as the channel dimension grows, and thus c values are precisely equal to cosine similarities in infinite-dimensional spaces. To be more rigorous, one may apply H{\"o}lder's inequality within the Hilbert space of functions defined by the inner product $\langle g, h \rangle =\mathbb{E}_{x, y \sim \mathcal{N} (0, 1)} [g (x, y) h (x, y)]$, taking $g (x, y) = \phi \left( \sqrt{q_1} x \right)$ and $h (x, y) = \phi \left( \sqrt{q_2}  \left( c_{\tmop{in}} x + \sqrt{1 - c_{\tmop{in}}^2} y \right) \right)$.

\subsection{Q/C maps for more general combined layers?}

Note that the existence of Q/C maps, as we have defined them, depends on our stated hypotheses for combined layers. In particular, that they are convolutional (or fully-connected), and use a Delta initialization scheme. While APKFs do exist for certain other layer types and initialization schemes, they may not always give rise to low dimensional maps that fully describe their behavior. For example, if we use a conventional fan-in initialization instead of a Delta initialization for the filter weights, then the resulting APKF (given in Equation \ref{eqn:fanin-APKF}) implies a more complex dependence of the entries of the output IPM on the input IPM, where output q values will depend on (many) input c values.

\section{Extended Q and C maps}\label{sec:extended-maps}

\james{Could possible move some of the stuff about concats, unit averages, and sums to the appendix and just give the rules? Could maybe even move PLN there?}

In \citet{poole2016exponential} and  \citet{schoenholz2016deep}, the neural networks analyzed were assumed to be sequences of $D$ fully-connected combined layers, each with the same activation function. Thus, the network's initialization-time behavior could be approximated using a single per-layer Q/C map composed with itself $D$ times, and a dynamical systems analysis of this map could thus be performed. This analysis looked for the map's stable points and attractors, and characterized its asymptotic behavior as the number of self-compositions $D$ (i.e.~the network's depth) went to infinity.

In this work we consider architectures with a more general structure, and with layers that can be convolutional and employ a variety of activation functions. We are also interested in the given architecture's finite structure, instead of its depth-limiting behavior, as this will allow us to more carefully tailor our manipulations to the given network. To facilitate this, in this section we will extend the notion of Q maps and C maps to arbitrary subnetworks (consisting of potentially many layers) in the natural way.

Going forward, we will refer to Q maps and C maps defined specifically for combined layers as \tmtextbf{\tmtextit{local Q/C maps}}, and maps defined specifically for larger subnetworks, via the extension procedure defined in the next subsection, as \tmtextit{\tmtextbf{extended Q/C maps}}. Unqualified, ``Q/C maps'' will be a general term referring to both.

\subsection{Definition of extended Q/C maps}

The definition for extended Q/C maps is the natural generalization of the definition for local Q/C maps, where we replace APKF approximations for combined layers with network-level PKF approximations for subnetworks. In particular, given a subnetwork $f$, an extended Q map maps input q values, corresponding to the diagonal entries of the input IPM $\Sigma_{Z, Z'}$, to the associated output q values, corresponding to the diagonal entries of the associated output IPM $\widetilde{\kappa_f} (\Sigma_{Z, Z'})$, as computed by the network-level PKF approximation $\widetilde{\kappa_f}$. The definition for extended C maps is similar.

That these definitions can be made in a location-independent way (as with the definitions of local Q/C maps), follows from the fact that extended Q/C maps can be constructed from local ones via composition and weighted averaging (as will be detailed below), which are both operations that preserve the location-independence property.

\subsection{Computing extended maps}

Because Q maps compose with each other, and C maps compose with the combination of both, we can take the per-combined-layer maps and compose them in a way that mirrors the composition of the subnetwork's combined layers, analogously to how we assembled network-level PKF approximations from APKF approximations of each combined layer. For example, if we have two consecutive combined layers $f$ and $g$, and wish to compute the Q and C map for the subnetwork $h$ consisting of their composition, this is simply $Q_h (q) = Q_g (Q_f (q))$ and $C_h (c, q_1, q_2) = C_g (C_f (c, q_1, q_2), Q_f (q_1), Q_f (q_2))$. \sam{Just to point out, we use this compositional approach to compute the NNGP kernel / NTK in \href{github.com/google/neural-tangents}{Neural Tangents}. Incidentally, if you were interested, I'd love to pair program with you / implement DKS into neural tangents! Maybe you'd find this \href{https://colab.sandbox.google.com/github/google/neural-tangents/blob/main/notebooks/phase_diagram.ipynb}{colab} interesting.} 

The only complication is that we need to describe how q and c values can be computed when feature maps are concatenated along their channel dimensions, or when they are multiplied by a non-zero scalar constant. To handle the former situation, we recall from Equation \ref{eqn:concat-IPM} that concatenation leads to a weighted averaging of the feature maps' associated IPMs, with weights given by their respective number of channels. Thus, the q values, which are the diagonal entries of these matrices, average in the same way under concatenation. So given the channel dimensions $k_1$ and $k_2$, and the q values $q_1$ and $q_2$, we have that the associated q value of the concatenation is simply
\begin{equation}
  \frac{k_1 q_1 + k_2 q_2}{k_1 + k_2} . \label{eqn:q-concat-formula}
\end{equation}
c values are slightly more complicated to deal with, but still relatively straightforward. We note that m values, the unnormalized counterparts of c values, are the off-diagonal entries of the IPMs, and thus exhibit the same kind of averaging as q values. We can thus obtain the c values by first converting them to m values, performing the required weighted average, and then converting back to c values. This gives us the analogous formula
\begin{equation}
  \frac{k_1  \sqrt{q_{1, 1} q_{1, 2}} c_1 + k_2  \sqrt{q_{2, 1} q_{2, 2}} c_2}{k_1  \sqrt{q_{1, 1} q_{1, 2}} + k_2  \sqrt{q_{2, 1} q_{2, 2}}}, \label{eqn:c-concat-formula}
\end{equation}
where $q_{i, j}$ refers to the $j$-th q value associated with the c value from the $i$-th feature map being concatenated. (Recall that each c value is associated to {\tmem{two}} q values.)

Note that the property that local C maps send $[- 1, 1]$ to $[- 1, 1]$ carries over to extended C maps, as this clearly preserved under composition and weighted averages.

To handle multiplication of a feature map by a constant $\alpha \neq 0$, we note that the IPM of $\alpha Z$ and $\alpha Z'$ is equal to $\alpha^2$ times the IPM of $Z$ and $Z'$, or in other words: $\Sigma_{\alpha Z, \alpha Z'} = \alpha^2 \Sigma_{Z, Z'}$. We thus have that an output q value (or m value) for such an operation is simply $\alpha^2$ times the corresponding input q value (or m value). And an output c value is just equal to the corresponding input c value, since the constant $\alpha^2$ will cancel out when we divide by the geometric mean of the q values.

\subsection{Generalization to subnetworks with isolated affine and nonlinear layers}

Because it will simplify Q/C map computations for certain architectures (such as residual networks), we will also generalize Q/C maps to subnetworks that may contain affine or nonlinear layers in isolation (i.e.~separated from their parent combined layer). To do this, we will define local Q/C maps for isolated affine and nonlinear layers in a way that is consistent with our previous definitions (with one small proviso), and then use the previous composition argument to extend Q/C maps to larger subnetworks containing such layers.

The isolated affine layer case is trivial, as an affine layer is equivalent to a combined layer with an identity activation function, and so is covered under the previous discussion. It follows that \tmtextit{\tmtextbf{affine layers have local Q and C maps that are the identity function}} (which can easily be verified by setting $\phi (u) = u$ in Equations \ref{eqn:Q-map} and \ref{eqn:C-map}), and can thus be essentially ignored in the extended map computations.

The case of nonlinear layers is more subtle. APKFs, from which local Q and C maps are defined, don't actually exist for nonlinear layers in isolation. In particular, for arbitrary input vectors it is {\tmem{not}} the case that one can closely approximate the norm of the output vector given only the norm of the input vector (with high probability). However, when the layer is part of a larger network in which its input vector is always the output of some affine layer (with a suitable parameter distribution), such a prediction can be made, and is given by the APKF for the corresponding combined layer. (To see this, note affine layers have identity Q and C maps and thus the input to the nonlinear layer has the same q and c values as the input to the corresponding combined layer.) %\sam{I guess it'll come up later, but am I correct that this definition only makes sense for combined layers anyway (and that this is equivalent to the provision below? If so, it's not totally clear to me what this brings to the table.} \james{Yeah it comes up later in order to make it convenient / conceptually simpler to compute maps or slope polynomials for more complicated architectures where you might want to break things down recursively over pieces of the network. So for example if you want to deal with ResNets it's convenient to describe the map for each block and then compose these. The problem is that the first operation of a ResBlock is a nonlinear layer.}

Thus, we can define \tmtextit{\tmtextbf{the local Q and C map for an isolated nonlinear layer to be equal to the local Q and C maps for its associated combined layer}}, with the proviso that it describes the layer's kernel behavior only for ``typical'' input vectors (i.e.~those that are produced with high probability by the previous layers' computation) and not arbitrary input vectors.

Note that these definitions are consistent with our definitions for combined layers, as the composition of the local Q/C map for an affine and nonlinear layer (as we have defined them here) does indeed recover the local Q/C map of the associated combined layer. Also, it should be emphasized that these arguments rely crucially on the fact that nonlinear layers may be ``isolated'' only from the point of the view of a given subnetwork. From the perspective of the entire network, it is still required that they are always part of a combined layer, or in other words, are always directly preceded by an affine layer.

\section{Q and C map derivative computations}\label{sec:QC_map_derivatives}

Central to our analysis of Q and C maps are their derivatives, which encode many of the properties that we will care about. In this section we show how to compute them, first for local maps, and then for extended maps of arbitrary subnetworks.

\subsection{Local map case}

A conceivable approach to computing the derivatives of local maps would be to derive a closed form expression for the required integrals, and then apply standard differentiation techniques. Unfortunately, closed form expressions for these integrals are not generally available for most the activation functions. Instead, following \citet{poole2016exponential}, we will give integral expressions for the derivatives which are similar to the original maps themselves, and which can be efficiently approximated using numerical integration (as discussed in Section \ref{sec:estimate-expectations}). %Part of our later analysis will rely on proving properties of these derivatives which can be deduced directly from their integral expressions.

\subsubsection{Local Q map derivative for combined layers (or isolated nonlinear layers)}

Let $f$ be a combined layer (or an isolated nonlinear layer) with element-wise activation function $\phi$.

The derivative for $Q_f (q)$ with respect to $q$, which we denote by $Q'_f (q)$, can be computed straightforwardly from Equation \ref{eqn:Q-map}, and is equal to
\[ Q'_f (q) = \frac{1}{\sqrt{q}} \mathbb{E}_{x \sim \mathcal{N} (0, 1)} \left[ \phi \left( \sqrt{q} x \right) \phi' \left( \sqrt{q} x \right) x \right], \]
where $\phi'$ is the derivative of $\phi$. Note that because $\phi$ is continuous, we are still able to compute this expectation, and similar ones to follow, when $\phi'$ is undefined on a finite set of inputs (which is permitted under our global assumptions).

\subsubsection{Local C map derivatives}\label{sec:local-C-map-derivatives}

The derivative of local C maps with respect to their c value argument has an especially nice form which we make use of later in Section \ref{sec:uniform-q-consequences}.

We begin by defining the following notation:
\begin{equation}
  \Gamma_{\phi} (c, q_1, q_2) \equiv \mathbb{E}_{x, y \sim \mathcal{N} (0, 1)} \left[ \phi \left( \sqrt{q_1} x \right) \phi \left( \sqrt{q_2}  \left( cx + \sqrt{1 - c^2} y \right) \right) \right] . \label{eqn:Gamma-def}
\end{equation}
This function is closely related to the local C map of $f$ (given by Equation \ref{eqn:C-map}) in the sense that $C_f (c, q_1, q_2) = \frac{1}{\sqrt{Q_f (q_1) Q_f (q_2)}} \Gamma_{\phi} (c, q_1, q_2)$. The derivative of $\Gamma_{\phi} (c, q_1, q_2)$ with respect to $c$, which we denote as $\Gamma'_{\phi} (c, q_1, q_2)$, is given by
\[ \Gamma'_{\phi} (c, q_1, q_2) = \sqrt{q_1 q_2} \Gamma_{\phi'} (c, q_1, q_2), \]
This elegant formula was stated in \citet{poole2016exponential}, although no explicit derivation of it was given. For completeness we provide one in Appendix \ref{app:Gamma-derivative}.

An immediate consequence of this result is that the $i$-th derivative of $\Gamma_{\phi} (c, q_1, q_2)$ with respect to $c$, which we denote by $\Gamma^{(i)}_{\phi} (c, q_1, q_2)$, is equal to
\[ \Gamma^{(i)}_{\phi} (c, q_1, q_2) = (q_1 q_2)^{i / 2} \Gamma_{\phi^{(i)}} (c, q_1, q_2), \]
where $\phi^{(i)}$ denotes the $i$-th derivative of $\phi$. From this it follows that the $i$-th derivative of $C_f (c, q_1, q_2)$ w.r.t. $c$ can be written as
\begin{equation}
  C^{(i)}_f (c, q_1, q_2) = \frac{(q_1 q_2)^{i / 2}}{\sqrt{Q_f (q_1) Q_f (q_2)}} \Gamma_{\phi^{(i)}} (c, q_1, q_2) . \label{eqn:C-map-gen-derivative}
\end{equation}
This formula is valid even when $i = 0$, where the $0$-th derivative is defined as the function itself (i.e.\,$\phi^{(0)} = \phi$), as is standard convention. When $\phi^{(i)} (u)$ isn't defined on a measure zero set of points, the formula may still be valid, provided that $\phi^{(i - 1)}$ is continuous. 

For example, if $\phi$ is the RELU function, $\phi (u)$ is continuous everywhere and has a derivative everywhere except at $u = 0$, so the formula is valid for $i = 1$. However, $\phi^{(1)} (u)$ is not continuous at $u = 0$, and one can use Equation \ref{eqn:C-map-RELU} to show that $C^{(2)}_f (c, 1, 1) \rightarrow \infty$ as $c \rightarrow 1$, while the formula would wrongly predict a value of $0$.

\james{I suspect that there may be an equally nice derivative formula that won't even involve computing derivatives of $\phi$. The advantage of such a formula would be that we wouldn't need to assume that the derivatives of $\phi$ exist.}

\subsection{Derivatives of extended maps}\label{sec:derivative_extended_maps}

Because extended maps can be expressed as compositions and weighted averages of local maps, their derivative computations can be performed straightforwardly using automatic differentiation. The resulting formulae will still depend on the derivatives of local maps, but these can be computed (or numerically approximated) as per the previous subsection.

In such a scheme, composition corresponds to multiplication, and weighted averages correspond to weighted averages (since differentiation is linear). Notably, because q values don't depend on c values, the derivative of extended C maps with respect to their input c values can be computed as if all the q values in the network are constant (although they still depend on the network's input in general). So for example, the C map derivative for a composition of many combined layers is just the product of the local C map derivatives for each layer, evaluated at the appropriate values of $c$ as per the forward evaluation.

\section{Handling weighted sum operations}\label{sec:sum-ops}

An operation commonly performed in neural network models is the (weighted) sum of two or more feature maps. For example, in the ResNet-V2 architecture (which is described in detail in Section \ref{sec:standard-ResNet}), the input to a ``residual block" is added to its output, using what is known as a ``residual connection''. Since sum operations are not among those listed as allowed in Section \ref{sec:arch-assumptions}, it would seem that our assumptions rule out such architectures.

However, for the purposes of our analysis, there is no requirement that a network be formally constructed the same way it would implemented in code or drawn in a diagram; it only matters that it {\tmem{can}} be constructed in a way that conforms to the assumptions outlined in Section \ref{sec:arch-assumptions}. With this in mind, we will now describe a way that a certain restricted class of weighted sum operations can be simulated using only directly supported operations. The consequence of this is that our analysis will in fact apply to architectures that contain such sum operations.

Typically, the feature maps that are summed in neural networks are the outputs of a set affine layers $f_1, f_2, \ldots, f_n$ that don't share parameters. (This is true in ResNet-V2 architectures, for example.) In such situations, we can replace the sum $\sum_{i = 1}^n f_i (Z_i)$ with a single affine layer $h \left( \left[\begin{array}{cccc}
  Z_1^{\top} & Z_2^{\top} & \cdots & Z_n^{\top}
\end{array}\right]^{\top} \right)$, which is obtained by concatenating the filter banks, the bias vectors, and the input feature maps (i.e.~the $Z_i$'s) together along their respective channel dimensions. (If $\sum_{i = 1}^n f_i (Z_i)$ is followed by a nonlinear layer in the network, then one simply forms a new combined layer consisting of this and $h$.)

While almost good enough, the issue with this construction is that the implied initial distribution of $h$'s filter bank parameters is not one of the ones described in Section \ref{sec:param_dist}, and in particular, the variance/scale is not correct. To account for this, we must renormalize by the new number of channels (after the concatenation), the effect of which is that $h$ will instead compute a weighted sum of the form
\[ \frac{\sum_{i = 1}^n \sqrt{k_i} f_i (Z_i)}{\sqrt{\sum_{i = 1}^n k_i}}, \]
where $k_i$ is the input channel dimension for $f_i$.%, which isn't exactly what we wanted (i.e.~$\sum_{i = 1}^n f_i (Z_i)$).

Fortunately, we can extend this construction to support a more general class of weighted sums (with weights $w_i$) by multiplying each $Z_i$ by a scalar $\alpha_i = w_i  \sqrt{\sum_{i = 1}^n k_i} / \sqrt{k_i}$ before concatenating them. Doing so gives
\[ \frac{\sum_{i = 1}^n \sqrt{k_i} f_i (\alpha_i Z_i)}{\sqrt{\sum_{i = 1}^n k_i}} = \frac{\sum_{i = 1}^n \sqrt{k_i} \alpha_i f_i (Z_i)}{\sqrt{\sum_{i = 1}^n k_i}} = \sum_{i = 1}^n w_i f_i (Z_i), \]
where we have used the fact that the affine $f_i$'s are in fact linear (given that the biases are initialized to 0).

If the layer $f_i$ is still in the network after this replacement is performed for some $i$ (e.g.~because its output is used in more than one place), this creates parameter sharing between $h$ and $f_i$. However, as long as the network with sum operations that we are trying to simulate doesn't violate our parameter independence assumptions from Section \ref{sec:param_dist}, neither will our simulating network.

The existence of this construction thus implies that {\tmem{weighted sums between the outputs of two or more affine layers (and directly followed by an optional nonlinear layer) {\tmem{are}} supported within our framework, provided that said affine layers don't share parameters}}. Note that the weighed sum operation can be performed directly in the model code, and the concatenation-based construction only needs to be referenced in the theoretical analysis.

To deal with sum operations in Q map computations, we observe that the q value of $\left[\begin{array}{cccc}
  \alpha_1 Z_1^{\top} & \alpha_2 Z_2^{\top} & \cdots & \alpha_n Z_n^{\top}
\end{array}\right]^{\top}$ is, according to Equation \ref{eqn:q-concat-formula}, equal to
\begin{equation}
  \frac{\sum_{i = 1}^n k_i \alpha_i^2 q_i}{\sum_{i = 1}^n k_i} = \frac{\sum_{i = 1}^n k_i w_i^2  \left( \sum_{i = 1}^n k_i \right) / k_i q_i}{\sum_{i = 1}^n k_i} = \sum_{i = 1}^n w_i^2 q_i , \label{eqn:sum-q-formula}
\end{equation}
where $q_i$ is the q value associated with $Z_i$, and we have used the fact that the q value for $\alpha_i Z_i$ is $\alpha_i^2 q_i$. From this it follows that the output q value from the sum is also $\sum_{i = 1}^n w_i^2 q_i$, since $Q_h$ is just the identity function.

Given uniform q values, a similar derivation based on Equation \ref{eqn:c-concat-formula} lets us compute the corresponding c value as
\begin{equation}
  \frac{\sum_{i = 1}^n w_i^2 q_i c_i}{\sum_{i = 1}^n w_i^2 q_i}, \label{eqn:sum-c-formula}
\end{equation}
where $c_i$ is the c value associated with $Z_i$. Note that unlike the formula for the q value, this is always a weighted average of the $c_i$'s, regardless of the values of the $w_i$'s. And in the case where all input q values are equal, it simplifies to $\left( \sum_{i = 1}^n w_i^2 c_i \right) / \sum_{i = 1}^n w_i^2$.

\section{Uniform q values}\label{sec:uniform-q}

In general, C maps are three dimensional functions that depend on an input c value and two associated q values. While simpler objects than a network's PKF (or even the network-level PKF approximation), they are not yet simple enough for our purposes. In particular, the behavior of C maps depends strongly on the two input q values, and q values can vary significantly between different network inputs and/or feature map locations. Finding a single scheme that controls the behavior of the C map for all conceivable input q pairs is likely impossible in general, and so we look to restrict the possible q values through some sort of active intervention.

The one we propose in this section is a form of input data preprocessing, which ensures that all q values for a given layer are equal (across all possible locations in the feature map and inputs to the network). (Note that this condition \tmtextit{does not} require that q values be the same across different layers.) We will call this condition \tmtextit{\tmtextbf{uniform q values}}. %\sam{We chose to do this uniform q initialization in Xiao et al. 2020 (see section 6), although we were not as clever as you and so we didn't use the trick of adding extra channels for per-location normalization. However, we were in the FC setting so it didn't matter, afaik.} \james{It could still matter there if your inputs are low dimensional. Or more generally if the norm of the input vector is important for the task.}

\subsection{A previous solution to this problem}\label{sec:edgeofchaos-q-solution}

In \citet{poole2016exponential} it was observed that local Q maps can have stable fixed points, and that if the network consists of a composition of many combined layers of the same type, then its q values will converge with depth to such a point. Thus, a reasonable approximation, especially for deeper layers, is to assume that this convergence has already taken place, and that the q values over the entire network are equal. (Note that this is strictly stronger condition that uniform q values.)

As discussed in Section \ref{sec:extended-maps}, our setting is different from \poscite{poole2016exponential} in that we consider more general architectures, and are interested in the precise behavior of a finite network architecture instead of its depth limiting behavior. Moreover, it may be a poor approximation to assume that q values are close to convergence in the earlier layers of the network, especially if there are no constraints placed on the initial q values (which are determined by the network's input).

\subsection{Uniform q values via Per-Location Normalization}\label{sec:PLN}

Our solution to the problem of unpredictable q values is a type of input data pre-processing which we call \tmtextbf{\tmtextit{Per-Location Normalization}} (\tmtextbf{\tmtextit{PLN}}). This is related to the data normalization done in \citet{daniely2016toward} for fully-connected networks, but generalized to convolutional networks. PLN ensures that each location vector in the network's input feature map has a dimension-normalized squared norm of 1, or in other words, that the q values for the network's input layer are all 1. Because subsequent q values are fully determined by previous q values via location-agnostic computations (i.e.~Q maps), it thus follows by induction that each layer will have uniform q values under PLN.

PLN can be easily realized through a number of different possible transformations of the network's input, although care must be taken not to destroy information. The naive approach of normalizing the vector at each location of the input feature map (and multiplying by the square root of the channel dimension) destroys information because the vector goes from having $k$ degrees of freedom to $k - 1$ degrees of freedom (where $k$ is the number of channels). This can be seen most starkly when $k = 1$, in which case all location-wise ``vectors" are reduced to $\pm 1$ scalar values.

The naive approach can however be repaired, by first adding an extra channel to the network's input. In our experiments we used the value $ \left( \frac{1}{k} \mathbb{E}_j [\| x_j \|^2] \right)^{\frac{1}{2}}$ for this extra channel, where the expectation is an average over location vectors $x_j$ for the given input feature map $X$. This results in a vector of the form
\[ \frac{(k + 1)^{\frac{1}{2}}}{\left( \| x_i \|^2 + \frac{1}{k} \mathbb{E}_j [\| x_j \|^2] \right)^{\frac{1}{2}}}  \left[\begin{array}{c}
     x_i\\
     \left( \frac{1}{k} \mathbb{E}_j [\| x_j \|^2] \right)^{\frac{1}{2}}
   \end{array}\right] \]
for each location $i$.

Note that this approach to PLN still destroys some information, although it's only one degree of freedom across $X$, which includes all locations and channels. This can be seen most clearly in the case of only one location vector $x$, in which case the formula becomes
\[ \frac{(k + 1)^{\frac{1}{2}}}{\left( \| x \|^2 + \frac{1}{k}  \| x \|^2 \right)^{\frac{1}{2}}}  \left[\begin{array}{c}
     x\\
     \| x \| / \sqrt{k}
   \end{array}\right] = \left[\begin{array}{c}
     \sqrt{k} x / \| x \|\\
     1
   \end{array}\right] , \]
from which we cannot recover the norm of $x$. Thus, it only makes sense to use this form of PLN when there are a large number of locations and/or channels.

For cases where there is only one location (i.e.~in a fully-connected network) and the channel dimension is small, one possible alternative is to use a data-independent constant value for the extra channel. Using a value of 1 gives
\[ \frac{(k + 1)^{\frac{1}{2}}}{(\| x \|^2 + 1)^{\frac{1}{2}}}  \left[\begin{array}{c}
     x\\
     1
   \end{array}\right] = \sqrt{k + 1}  \left[\begin{array}{c}
     x / \sqrt{\| x \|^2 + 1}\\
     1 / \sqrt{\| x \|^2 + 1}
   \end{array}\right], \]
which doesn't destroy any information about $x$ (since we can invert the last entry to get $\sqrt{(\| x \|^2 + 1) / (k + 1)}$, and then multiply that by the other entries to recover $x$). The disadvantage of this approach is that the scale of $x$ could differ very significantly from $1$, so that after normalization, the value of the extra channel could either dominate the overall vector, or be minuscule.

Performing PLN may not always be important in practice, as we demonstrate later in our ablation experiments. Moreover, the version we have proposed seems to slightly {\tmem{degrade}} optimization performance in our benchmarks, possibly because of the extra parameters it adds to the first layer (for the extra channel dimension), or because of the nonlinear warping it applies to the input space. On the other hand, our experiments also demonstrate that very badly scaled input data can sometimes cause DKS to perform poorly, unless PLN is applied as a corrective measure. (See Appendix \ref{app:ablation-PLN} for the relevant results.)

There are other ways we can produce normalized vectors without destroying information, such as those discussed in \citet{daniely2016toward} for fully-connected networks. Of all of the aspects of DKS, our method of realizing PLN is the least explored, and we wouldn't be surprised if there was a significantly better way of doing it.

\subsection{Assuming uniform q values going forward}

From this point forward we will assume that the uniform q value condition holds. This thus allows us to treat C maps as one dimensional functions, as the q value for each layer will be constant. As we will show in the next section, it also imbues C maps with a set of very useful properties which end up being crucial to our subsequent analysis of them in Section \ref{sec:Cmap-analysis}.

\section{Additional consequences of uniform q values for C maps}\label{sec:uniform-q-consequences}

\subsection{Local C maps are positive definite functions and map c values of 1 to 1}

For convenience, when we have uniform q values we will drop the formal dependence of the C map on its input q values, and instead treat these as known constants within the expression (which are both equal to the same value). This allows us to view C maps as essentially one dimensional functions, and we can use notation of the form ``$C_f (c)$'' for them going forward.

Under the assumption of uniform q values, several interesting and useful properties of local C maps emerge. Suppose $f$ is a combined layer (or an isolated nonlinear layer) with activation function $\phi$. We begin by setting $q_1 = q_2 = q$ in Equation \ref{eqn:C-map-gen-derivative}, which gives
\[ C^{(i)}_f (c) = \frac{q^i}{Q_f (q)} \mathbb{E}_{x, y \sim \mathcal{N} (0, 1)} \left[ \phi^{(i)} \left( \sqrt{q} x \right) \phi^{(i)} \left( \sqrt{q}  \left( cx + \sqrt{1 - c^2} y \right) \right) \right] . \]
From this expression we can deduce that
\[ C_f (1) = \frac{1}{Q_f (q)} \mathbb{E}_{x \sim \mathcal{N} (0, 1)} \left[ \phi \left( \sqrt{q} x \right)^2 \right] = \frac{Q_f (q)}{Q_f (q)} = 1 \]
and
\begin{eqnarray}
  C^{(i)}_f (0) & = & \frac{q^i}{Q_f (q)} \mathbb{E}_{x, y \sim \mathcal{N} (0, 1)} \left[ \phi^{(i)} \left( \sqrt{q} x \right) \phi^{(i)} \left( \sqrt{q} y \right) \right] \nonumber\\
  & = & \frac{q^i}{Q_f (q)} \mathbb{E}_{x \sim \mathcal{N} (0, 1)} \left[ \phi^{(i)} \left( \sqrt{q} x \right) \right]^2 \hspace{0.8em} \geq \: \: 0 .  \label{eqn:C-at-0}
\end{eqnarray}
The second consequence implies two interesting properties. First, that for local maps, $C_f (0) = 0$ if and only if $\mathbb{E}_{x \sim \mathcal{N} (0, 1)} \left[ \phi \left( \sqrt{q} x \right) \right] = 0$, where we note the interpretation of $\mathbb{E}_{x \sim \mathcal{N} (0, 1)} \left[ \phi \left( \sqrt{q} x \right) \right]$ from Appendix \ref{app:average-unit-approx}. And second, that the Taylor series expansion of $C_f (c)$ about $c = 0$, which is given by
\[ \sum_{i = 0}^{\infty} \frac{1}{i!} C^{(i)}_f (0) c^i, \]
will have all non-negative coefficients. Provided that the Taylor series converges and is equal to $C_f (c)$ (which it will under mild technical conditions), it thus follows that $C_f (c)$ is a \tmtextit{\tmtextbf{positive definite function}} \citep{schoenberg1988positive, daniely2016toward}, which is defined as a function from $[- 1, 1]$ to $\mathbb{R}$ that can be written as
\[ \sum_{i = 0}^{\infty} b_i c^i, \]
for non-negative coefficient $b_i$.

\subsection{Properties of positive definite functions}\label{sec:pd-functions}

Positive definite functions have many interesting and useful properties which thus carry over to C maps. These include:
\begin{enumeratenumeric}
  \item The set of positive definite functions is closed under differentiation\footnote{Closedness under differentiation can be easily verified by observing that the derivative of $\sum_{i = 0}^{\infty} b_i c^i$ with respect to $c$ is just $\sum_{i = 1}^{\infty} ib_i c^{i - 1}$, which is also positive definite since $ib_i \geqslant 0$ when $b_i \geqslant 0$.}.
  
  \item Positive definite functions are non-negative, non-decreasing, and convex on the non-negative part of their domain. (This follows from the fact that their derivatives are also positive definite functions, and thus non-negative for non-negative inputs.)
  
  \item The set of positive definite functions is closed under composition and weighted averages with non-negative weights\footnote{Closedness under composition can be easily verified by substituting one series into the other, expanding, and observing that the coefficients of the resulting series are non-negative combinations of coefficients from the two original series. Similarly, closedness under weighting averaging follows by observing that the coefficients of the series for the weighted average are just weighted averages of the corresponding coefficients from the original two series.}.
\end{enumeratenumeric}
%We will use the fact that C maps are positive definite functions (given uniform q values) as the basis for much of our later analysis of them.

\subsection{Extended C maps are positive definite functions and map c values of 1 to 1}\label{sec:extended-map-posdef}

Given that local C maps are positive definite functions and map c values of 1 to 1, it's easy to show that the same applies to extended C maps. First, the property that c values of 1 map to 1 is clearly preserved under composition and weighted averaging, and thus carries over to extended maps since they are constructed from local maps this way. Second, the property of C maps being positive definite functions also carries over, since positive definite functions are closed under composition and non-negative weighted averages as mentioned above.

Another way that one can show that extended C maps are positive definite is by observing that they describe the exact one-dimensional kernel function $[\kappa_f (z, z')]_{1, 2}$ of a fully-connected network $f$ in the limit of infinite width (where we substitute convolutional layers in the original network with fully-connected layers). Because this kernel depends only on the inner product of its inputs via the function $C_f$, it is thus invariant to orthogonal transformations of its input, and so by Schoenberg's Theorem \citep{schoenberg1988positive} it is a positive definite function of this inner product (i.e.~$C_f$ is positive definite). Note that this argument works even for non-smooth activation functions for which Equation \ref{eqn:C-map-gen-derivative} may not apply. See \citet{daniely2016toward} for more details.

\subsection{A complementary perspective based on ``dual activations functions'' }\label{sec:dual-activations}

Assuming input q values of 1, \tmtextit{local} C maps are essentially equivalent to the ``dual activation functions'' defined in \citet{daniely2016toward}, with the only difference being that dual activation functions aren't normalized by the output q values (as C maps are). Using the notation of Equation \ref{eqn:Gamma-def}, the dual activation function $\tilde{\phi}$ of $\phi$ can be written as
\[ \tilde{\phi} (c) = \Gamma_{\phi} (c, 1, 1) . \]
Assuming that we normalize each activation function so that its output q value is 1, these dual activation functions can be composed and averaged in order to form what are called ``compositional kernels'', which are approximations of the kernel function for the entire network, and are analogous to our network-level PKF approximation in the case where there is only one location (i.e.~the fully-connected case).

Given this connection, it may thus be an appealing prospect for us to adopt \poscite{daniely2016toward} framework instead of the one we've presented, as it's very carefully laid out and rigorously developed, and comes packaged with the best known error bounds for initialization-time kernel approximations of neural networks (one of which we adapt in Section \ref{sec:how-accurate-approx}). However, while their framework can deal with local receptive fields, it cannot directly deal with the weight sharing used in convolutional layers, and it would likely require significant work to extend it in that direction.

Indeed, to deal with convolutional layers in a way that allows kernel approximations for individual layers to be naturally composed, one seemingly must define something like our APKFs which keep track of approximations to entire IPMs (which contain inner products between every pair of locations in the feature maps of both inputs). Moreover, without assuming a Delta initialization, a decomposition of the kernel approximations into 1 dimensional functions (such as Q/C maps) becomes impossible, since APKFs for standard initializations involve non-trivial interactions between all the locations in the feature map (as seen in Equation \ref{eqn:fanin-APKF}).

While we do indeed restrict our attention to Delta initializations in this work, without the PKF/APKF formalism we would not be able to extend our analysis to mean pooling layers, since the kernel approximation for such layers also involves interactions between locations. As we will see later in Section \ref{sec:application-to-resnet}, this extension will be necessary later in order to understand how DKS can be applied to standard convolutional neural network architectures.

\part{Desirable Q/C map behavior and how to achieve it}
\label{part:desirable_QCmap_properties}

\section{C map behavior in deep networks and necessary requirements for trainability}\label{sec:Cmaps_trainability}

C maps approximate a network's PKF at initialization time. In this view, c values approximate the cosine similarity between pairs of vectors (corresponding to different locations/inputs), and their evolution via C maps thus describes how these cosine similarities evolve in the network. Given uniform q values, the norms of these vectors are approximately constant (for a given layer), and thus their relative distance is related to their cosine similarity $c$ via
\[ \frac{\|x - y\|}{\frac{1}{2}  (\| x \| + \| y \|)} = \frac{\|x - y\|}{\sqrt{\| x \|  \| y \|}} = \sqrt{2 (1 - c)} .  \]
A (sub)network's C map thus provides a complete description of how it warps the geometry of its input space at initialization time.

As we will argue in this section, the preservation of some amount of this geometric information through the network, as indicated by a ``well-behaved'' C map, is a necessary condition for the network to be trainable. When C maps ``degenerate'' in certain ways, as we will show they do for standard deep neural networks, it means that the relative distances between the network's (location-wise) input vectors are hard to infer from the network's outputs, making gradient-based training difficult.

\james{TODO: maybe call forward or give preview of the argument about why degen makes learning hard?}

\subsection{RELU networks}\label{sec:relu-network-degen}

The local C map of a combined layer $f$ with a RELU activation function is given by
\begin{equation}
  C_f (c) = \frac{\sqrt{1 - c^2} + (\pi - \cos^{- 1} (c)) c}{\pi} . \label{eqn:C-map-RELU}
\end{equation}
This formula is stated in \citet{daniely2016toward}, and is based on a derivation by \citet{cho2012kernel}, where it corresponds to a normalized version of the ``1st-order arc-cosine kernel function''. Note that while in general C maps depend on the input q value, this formula is valid for {\tmem{any}} q value, which is a consequence of the fact that RELUs are positively homogeneous (i.e.~$\tmop{RELU} (\lambda u) = \tmop{RELU} \phi (u)$ for all $\lambda \geqslant 0$).

One interesting fact about $C_f$ is that $C_f' (1) = 1$, which can be verified by taking the derivative of Equation \ref{eqn:C-map-RELU} and letting $c \rightarrow 1$. Moreover, because $C_f (1) = 1$ (which is true for general C maps), we have that a deep RELU network $g$ consisting of the composition of $D$ combined layers will also have the property that $C_g' (1) = 1^D = 1$.

The following is a plot of $C_f$:
\begin{center}
\resizebox{4.5in}{3in}{\includegraphics{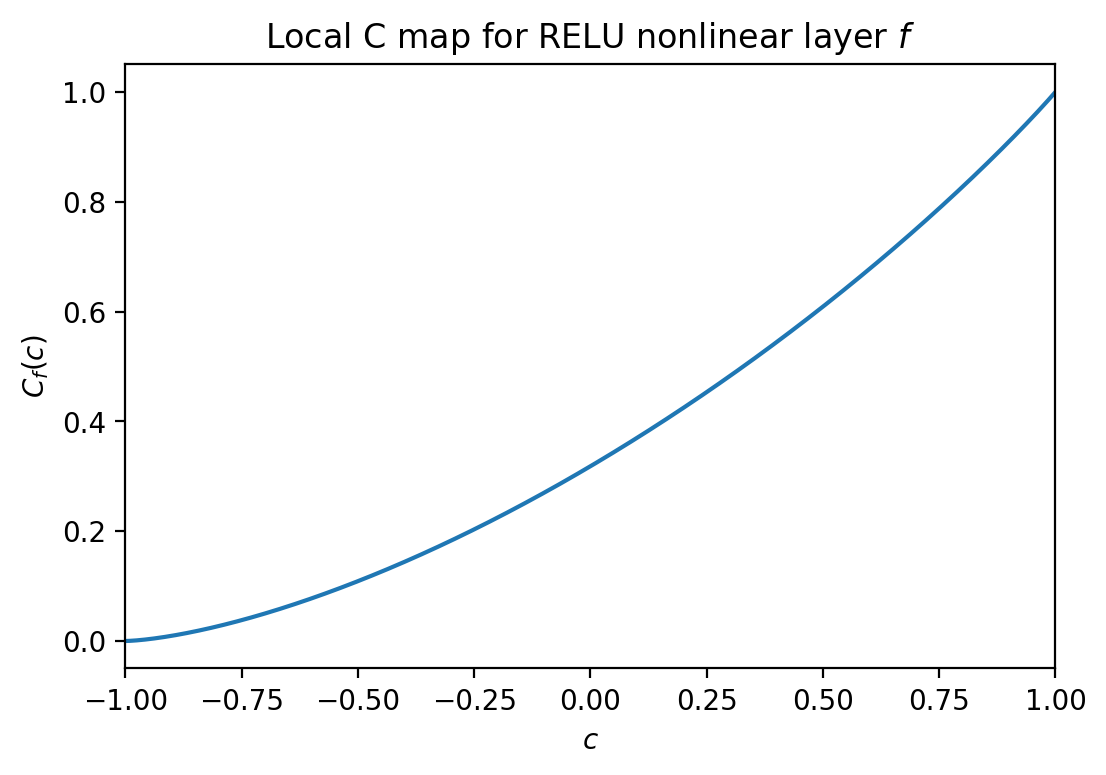}}
\end{center}

From this we can see that the entire domain $[- 1, 1]$ of input c values is compressed to the range $[0, 1]$, which makes intuitive sense since the RELU function is non-negative. Other than that, $C_f$ resembles a slightly shifted and rescaled identity function, and so is reasonably well-behaved.

However, if we build a deep network as the composition of many RELU combined layers, compression of the C map's output becomes much more extreme, with outputs rapidly concentrating around 1 as depth increases. This can be seen below in the plot of the C map for RELU networks of depths 5, 20 and 100 (which we obtain by iterating Equation \ref{eqn:C-map-RELU} the required number of times):
\begin{center}
\resizebox{4.5in}{3in}{\includegraphics{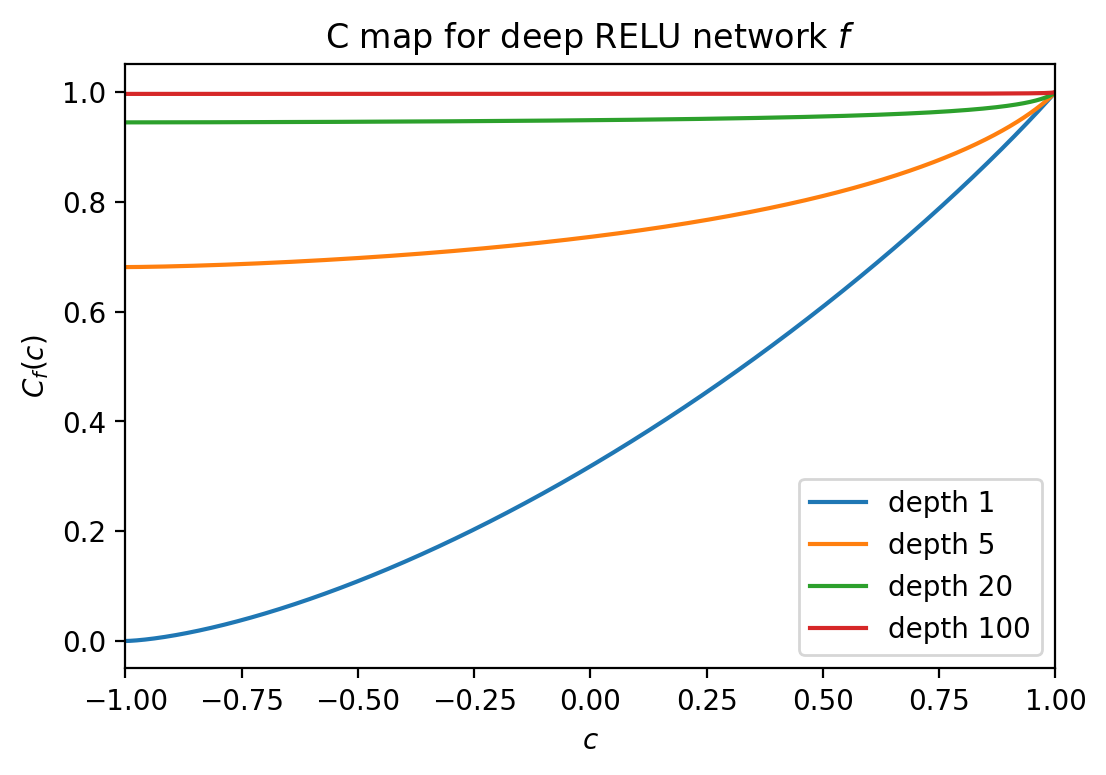}}
\end{center}

Here, the C map for depth 100 has the property that maps the entire interval $[- 1, 1]$ to [0.996, 1], which represents an extreme amount of compression.

\subsection{Sigmoidal networks (using the $\tmop{erf}$ activation)}\label{sec:erf-network-degen}

Another example of an activation function whose associated local Q and C maps have closed-form expressions is the classical ``error function'', which is given by $\tmop{erf} (u) = \frac{2}{\sqrt{\pi}}  \int_0^u \exp (- t^2) \, d \, t$. This function has a ``sigmoidal shape'', which makes it a reasonable stand-in for the more common sigmoidal activation functions like $\tanh$ and the logistic sigmoid.

The local Q map for a combined layer $f$ with an $\tmop{erf}$ activation function is given by
\[ Q_f (q) = \frac{2}{\pi} \sin^{- 1} \left( \frac{2 q}{1 + 2 q} \right), \]
and the local C map is given by
\[ C_f (c) = \frac{1}{Q_f (q)}  \frac{2}{\pi} \sin^{- 1} \left( \frac{2 cq}{1 + 2 q} \right) . \]
These formulas follow from equation 11 of \citet{williams1997computing}.

Unlike in the RELU case, the local C map depends on the input q value, and so to plot it we must make an assumption about this value. One natural choice is $q = 1$, which gives the following plot:
\begin{center}
\resizebox{4.5in}{3in}{\includegraphics{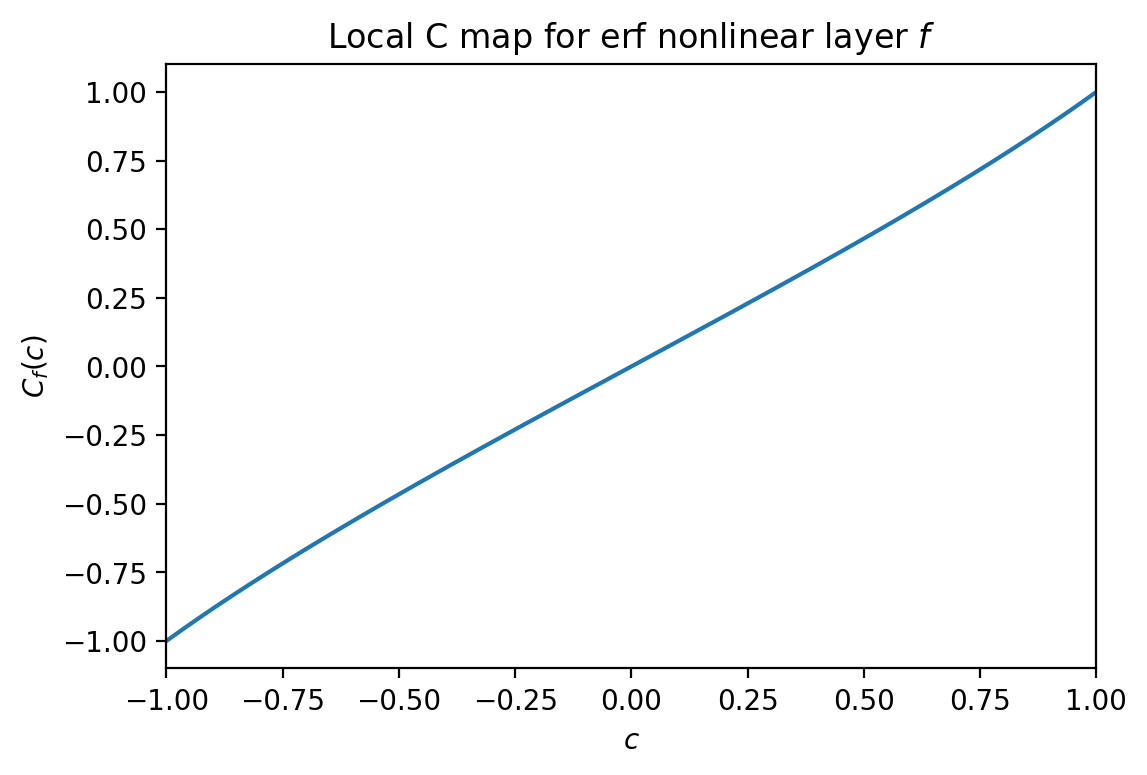}}
\end{center}

Visually, this function is almost indistinguishable from the identity function.

To compute the C maps for deeper RELU networks we need to track both the q and c values through each layer, using the previously stated equations. Doing so for depths 50, 150, and 500 gives the following plot:
\begin{center}
\resizebox{4.5in}{3in}{\includegraphics{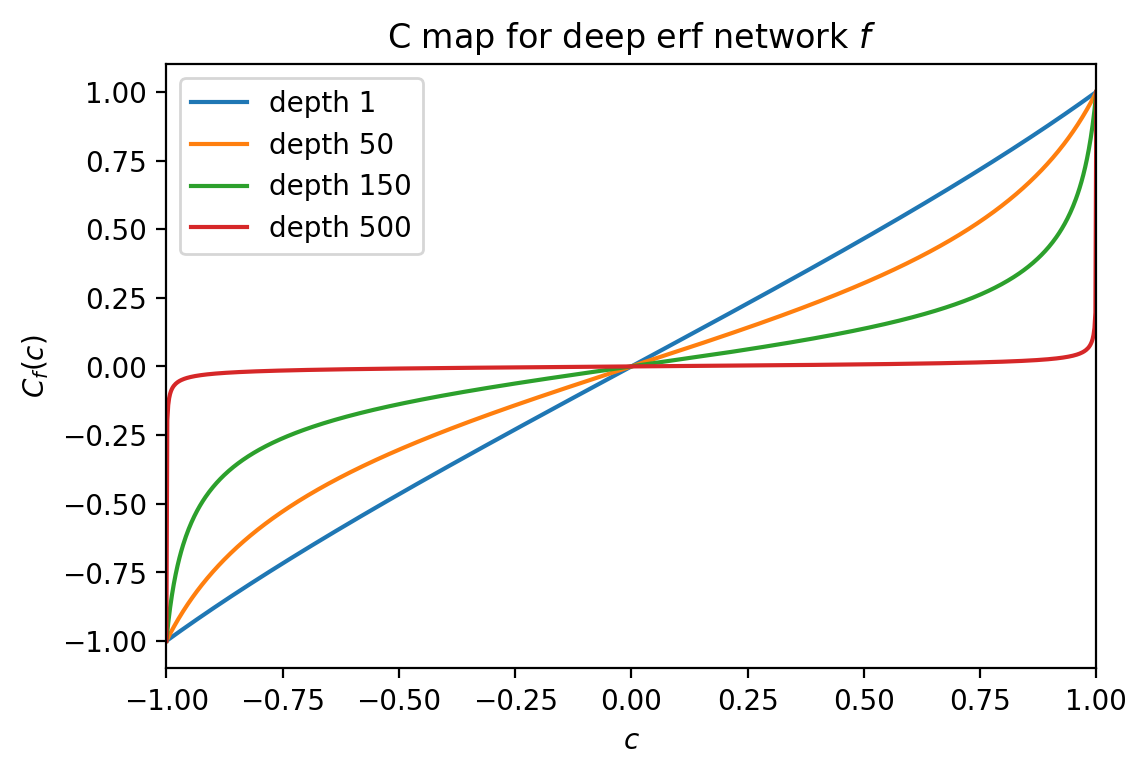}}
\end{center}

From this plot we can see that at high depths, the network's C map has a tendency to compress nearly all input c values to a small region around 0. Moreover, this behavior only becomes more extreme as the depth increases.

\subsection{C map degeneration in more general deep nonlinear networks}

The following proposition establishes that the C map degeneration we observed above for deep RELU and tanh networks happens for a larger class of deep networks. Moreover, the point $\fixpointofcmap$ towards which (nearly) all c values get mapped as the depth increases is unique.

\begin{proposition}
  \label{prop:C-map-fixed-point}Suppose $f$ is a deep network consisting of a composition of $D$ subnetworks, each with the same C map $\CC$. Then for all $c \in (- 1, 1)$ we have
  \[ \lim_{D \rightarrow \infty} C_f (c) = \fixpointofcmap, \]
  for some $\fixpointofcmap \in [0, 1]$.
\end{proposition}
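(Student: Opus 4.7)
The plan is to identify $\mathcal{C}$ as a probability generating function and then invoke elementary branching-process theory, which reduces the proposition to the classical extinction dichotomy for Galton--Watson chains. Since $\mathcal{C}$ is positive definite (see Section \ref{sec:extended-map-posdef}) with $\mathcal{C}(1)=1$, its power series $\mathcal{C}(c) = \sum_{i \geq 0} b_i c^i$ has non-negative coefficients summing to $1$, so $(b_i)_{i \geq 0}$ is a probability distribution. Consequently $\mathcal{C}(c) = \mathbb{E}[c^X]$ is the PGF of a random variable $X$ with $\Pr(X = i) = b_i$, and the standard PGF-composition identity gives $\mathcal{C}^D(c) = \mathbb{E}[c^{Z_D}]$, where $(Z_n)_{n \geq 0}$ is the Galton--Watson branching process with offspring law $X$ and $Z_0 = 1$. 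Note that the composition of $D$ subnetworks each with C map $\mathcal{C}$ has extended C map $C_f = \mathcal{C}^D$.

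I then invoke the classical dichotomy: almost surely either the process goes extinct ($Z_n = 0$ for all sufficiently large $n$) or it diverges ($Z_n \to \infty$). The extinction probability $\fixpointofcmap$ is known to lie in $[0, 1]$ and to equal the smallest fixed point of $\mathcal{C}$ in $[0, 1]$ (which exists because $\mathcal{C}(1) = 1$). For any $c \in (-1, 1)$, on the extinction event one has $c^{Z_D} \to 1$ (using the convention $0^0 = 1$), while on the survival event $|c^{Z_D}| \leq |c|^{Z_D} \to 0$ since $|c| < 1$. Because $|c^{Z_D}| \leq 1$ almost surely, the bounded convergence theorem yields
\[
\lim_{D \to \infty} C_f(c) \: = \: \lim_{D \to \infty} \mathbb{E}[c^{Z_D}] \: = \: \fixpointofcmap \cdot 1 + (1 - \fixpointofcmap) \cdot 0 \: = \: \fixpointofcmap,
\]
which is the claim.

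The main obstacle is that this argument leans on two facts from branching-process theory --- the extinction/survival dichotomy and the characterization of the extinction probability as the smallest fixed point of $\mathcal{C}$ in $[0,1]$ --- that are standard but not developed in the paper, and so must be cited or briefly rederived (both follow from monotone convergence applied to the identity $\mathcal{C}^n(0) = \Pr(Z_n = 0)$). A self-contained alternative would handle $c \in [0, 1)$ by a direct monotone-sequence argument, using convexity of $\mathcal{C}$ on $[0, 1]$ together with $\mathcal{C}(1)=1$ to show that $g(c) := \mathcal{C}(c) - c$ satisfies $g \geq 0$ on $[0, \fixpointofcmap]$ and $g \leq 0$ on $[\fixpointofcmap, 1]$, so that iterates are monotone and bounded and thus converge to a fixed point which must be $\fixpointofcmap$; the case $c \in (-1, 0)$ can then be reduced via the inequality $|\mathcal{C}(c)| \leq \mathcal{C}(|c|)$ (immediate from non-negativity of the Taylor coefficients), giving $|c_n| \leq \mathcal{C}^n(|c|) \to \fixpointofcmap$. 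However, the sign-handling for negative $c$ when $\fixpointofcmap > 0$ is somewhat delicate (one must argue that the iterates eventually enter $[0, 1]$), which is precisely what the PGF interpretation dispatches uniformly by folding the sign of $c$ into the random variable $c^{Z_D}$.
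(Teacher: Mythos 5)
Your proof is correct in substance, but it takes a genuinely different route from the paper's. The paper supplies no self-contained argument: it defers to the proof of Claim 1 of \citet{daniely2016toward}, noting in a footnote that the only property of $\CC$ used there is positive definiteness; that argument, like the paper's own detailed convergence analysis in Appendix \ref{app:math-for-Cmaps_trainability}, is elementary fixed-point analysis -- monotonicity and convexity of $\CC$ on $[0,1]$, linear bounding of the iterates, with the negative part of the domain handled via separate inequalities such as $\CC(-c) \geq 2\CC(0) - \CC(c)$ and an explicit case split for odd maps. You instead read the non-negative Taylor coefficients $(b_i)$, which sum to $\CC(1)=1$, as an offspring distribution, so that $\CC$ is a probability generating function and $C_f = \CC^D$ is the PGF of generation $D$ of a Galton--Watson process; the limit then falls out of the extinction/divergence dichotomy plus bounded convergence, with $\fixpointofcmap$ identified as the extinction probability, i.e.\ the smallest fixed point of $\CC$ in $[0,1]$. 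What your route buys: a uniform treatment of negative $c$ and of odd maps (exactly the cases requiring the most care in the analytic arguments, cf.\ Theorem \ref{thm:odd-case} and Lemma \ref{lem::fundamental_bound_neg_x}), and a conceptual interpretation of $\fixpointofcmap$. What the paper's route buys: quantitative, pre-asymptotic rates of convergence (Theorems \ref{thm::small_derivative} and \ref{thm::fixedpoint-convergence-detailed_appversion}), which a soft dichotomy-plus-bounded-convergence argument cannot deliver, and no reliance on imported probabilistic theorems.

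Two caveats you should make explicit. First, the Galton--Watson dichotomy fails precisely when $b_1 = \Pr(X=1) = 1$, i.e.\ when $\CC$ is the identity -- and there the proposition as literally stated is also false, since $C_f(c) = c$ for all $D$ (this is the deep-linear-network case of Section \ref{sec:linear-networks}); the paper's appendix handles this by restricting to ``nontrivial'' C maps (Corollary \ref{cor::fix_points_of_f}), and your proof must carry the same hypothesis, since the dichotomy you invoke is exactly the step that breaks. (Constant maps are harmless: $\CC \equiv 1$ corresponds to $X \equiv 0$, immediate extinction, and $\fixpointofcmap = 1$.) Second, your parenthetical claim that both imported facts ``follow from monotone convergence applied to $\CC^n(0) = \Pr(Z_n = 0)$'' is accurate for the fixed-point characterization of the extinction probability, but not for the dichotomy itself, which additionally requires transience of the states $k \geq 1$ -- a standard but separate short argument, and one that again needs $b_1 < 1$. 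Neither caveat is structural: one added hypothesis and one corrected attribution make the proof sound.
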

The proof of this proposition is a straightforward generalization\footnote{While the statement of their claim assumes that each of the $D$ subnetworks is a combined layer, the only fact they use about $\CC$ in their proof is that it is positive definite, which holds for more general subnetworks by Section \ref{sec:extended-map-posdef}.} of the proof of ``Claim 1" from \citet{daniely2016toward}. %This is because their proof only uses the fact that $\CC$ which only relies on the fact that $\CC$ is 
% (which is essentially equivalent to our proposition in the case where each subnetwork is a single combined layer). %As we will give a strictly stronger result below, we will omit it.

While Proposition \ref{prop:C-map-fixed-point} describes the convergence of $C_f (c)$ in the limit of infinite depth, it is still informative about $C_f(c)$ at finite depths (which is the setting we actually care about). In particular, it essentially says that for any $\epsilon$ there is a constant $D_{\epsilon}$ so that for when $D \geqslant D_{\epsilon}$, nearly all input c values get compressed to a region of radius $\epsilon$ around $\fixpointofcmap$. We will call C maps exhibiting this compressive behavior (with a small $\epsilon$) \tmtextbf{\tmtextit{degenerate}}.

\begin{remark}
  Note that Proposition \ref{prop:C-map-fixed-point} assumes that $\CC$ is the same for all values of $D$. If, for example, we were to modify the network's activation functions based on the value of $D$ (which DKS will do), the convergence seen in the proposition may not occur.
\end{remark}

\begin{remark}
  Also note that the hypothesis in Proposition \ref{prop:C-map-fixed-point} that each subnetwork has the same C map $\CC$ will rule out many common cases. For example, it is violated for the deep $\tmop{erf}$ network we looked at before, because the q values are different for each layer (which leads to different local C maps). However, because the q values converge rapidly to a fixed point in such networks, convergence of the c values will still occur. As shown in Appendix \ref{app:resnet-map-analysis}, the ``residual blocks" of ResNets (which are repeated many times in sequence) also violate this hypothesis, but their q values do {\tmem{not}} converge to a fixed point.
\end{remark}

In Appendix \ref{app:math-for-Cmaps_trainability} (Theorems \ref{thm::small_derivative}, \ref{thm::fixedpoint-convergence-detailed_appversion}, and \ref{thm:odd-case}) we give a much more detailed analysis of the convergence of $C_f(c)$ in terms of the properties of $\CC$ and the location of $c$ in $[-1, 1]$. When $\CC'(1) \neq 1$ we prove exponential convergence (as a function of the depth $D$) with precise rates, thus establishing that degeneration can happen very quickly in deep networks. In contrast to the related analyses of \citet{poole2016exponential}, our results apply pre-asymptotically.

\subsection{Types of degeneration and their implications for trainability}\label{sec:degen-C-maps}

As we've seen above, degenerate C maps send a large range of input c values to a small (and sometimes point-like) region near some fixed point $\fixpointofcmap$. This means that the original geometric relationships between the corresponding input vectors are obscured by the action of the network, becoming essentially impossible to recover from its outputs. While it seems intuitively plausible that this would make gradient-based optimization of such networks difficult (as has been argued by \citet{schoenholz2016deep}), it's worth examining the situation in more detail. In this subsection we will give a detailed intuitive argument. A more rigorous argument which confirms these intuitions will appear later in Section \ref{sec:NTK-analysis}.

Suppose $f$ is some subnetwork of the overall network that we wish to train. There are two basic cases, corresponding to different possible values for $\fixpointofcmap$.

\subsubsection{$\fixpointofcmap = 1$: the collapsing case}

If $C_f$ sends nearly all input c values to a small neighborhood near $\fixpointofcmap = 1$, this means that regardless of the original distance of the two associated input vectors, their corresponding output vectors under $f$ will be nearly identical (i.e.~have a relative distance $\sqrt{2 (1 - C_f (c))} \approx \sqrt{2 (1 - \fixpointofcmap)} = 0$). And because this holds for \tmtextit{all} pairs of vectors, it means that $f$ is nearly constant, with only a \tmtextit{very} weak dependence on its input. This has a different set of consequences for layers in the network before $f$ versus layers after. 

For layers before $f$ there are two cases. If $f$'s Jacobian is non-negligible for most inputs, then it will have to vary wildly over $f$'s input space, as this is the only that a function can achieve a nearly constant output while having a non-negligible Jacobian. (\citet{balduzzi2017shattered} observed a similar phenomenon for early layers in deep RELU networks, likening the gradient function to a ``random noise process''.) This will make learning difficult, or at the very least unlikely to generalize, as similar pairs of training cases will produce very different gradients. If on the other hand $f$'s Jacobian is negligible, this means that gradient magnitudes for layers before $f$ will be very small compared to those for other layers. This makes simultaneous optimization of the network's layers with gradient descent very difficult, and even sophisticated 2nd-order methods may struggle in the more extreme cases. (This is arguably related to the well-known ``vanishing gradients'' phenomenon identified in \citet{hochreiter2001gradient}.)

Meanwhile, layers after $f$ won't be able to learn anything more than a constant output prediction, as their inputs will be nearly constant. %Thus, if we resign ourselves to learning only the layer(s) after $f$, the best we can do is learn a constant output prediction (perhaps equal to the ``mean'' of the target distribution). 
And even if the output produced by $f$ has enough variance across the training data to overcome the limits of numerical precision, the part of the network after $f$ would need to have a {\tmem{very large}} Lipschitz constant in order to produce well-separated outputs for different training cases.

\subsubsection{$0 \leqslant \fixpointofcmap < 1$: the exploding case}\label{sec:exploding-case}

If $C_f$ sends nearly all input c values to a small neighborhood around $\fixpointofcmap$ with $0 \leqslant \fixpointofcmap < 1$, then any two input vectors (that aren't either almost identical or negations of each other) will be mapped by $f$ to output vectors that are nearly a constant relative distance $d = \sqrt{2 (1 - \fixpointofcmap)} > 0$ apart.

Given this condition, and that $f$ is differentiable, it follows that $f$'s Jacobian must be very large in certain regions of the input space (and possibly everywhere). This means that the gradient magnitudes for layers in the network before $f$ will be much larger than those for other layers, making it difficult to optimize them simultaneously. (This is arguably related to the well-known ``exploding gradient'' phenomenon discussed in \citet{hochreiter2001gradient}.)

A network containing such a subnetwork $f$ possibly stands a greater chance of being trainable than in the previous ``collapsing case", since $f$'s output vectors will still be distinguishable for different inputs vectors. Optimization of the layers after $f$ (or towards the end of $f$) could even conceivably learn a map from these vectors to their associated targets in the training set. However, it is highly unlikely that the resulting model would generalize well, as the similarity between two such output vectors would have no discernible relationship to the similarity of the associated two input vectors. %Moreover, they would resemble pseudo-random vectors or ``hashes'' of the input, making it virtually impossible to use them for anything useful.

\subsection{Are well-behaved C maps sufficient?}\label{sec:suff-cond?}

While a well-behaved (i.e.~non-degenerate) C map seems like a \tmtextit{necessary} condition for trainability (as argued above), it should be noted that without additional hypotheses, no condition on the C map can be a \tmtextit{sufficient}. For example, because the C map is invariant to the network's parameterization\footnote{This can be seen by observing that APKFs, from which local Q/C maps and ultimately extended Q/C maps are defined, only depend on the functional behavior of combined layers, and not which variables are formally considered ``parameters" from the standpoint of the optimizer. Indeed, one could reparameterize a layer's weights using any invertible function without changing what it computes at initialization time, or the form of its PKF/APKF.}, or even whether its parameters are considered {\tmem{trainable}} at all, it cannot completely predict the performance of a gradient-based optimizer. Even if we assume the standard network parameterization, and a model class which is equivalent to a standard deep nonlinear network, interesting counterexamples to sufficiency still exist, as we will show in Section \ref{sec:too-linear}.

One way to incorporate hypotheses about the network's parameterization, and the optimizer used, is to analyze gradient descent training from the perspective of Neural Tangent Kernel (NTK) theory \citep{jacot2018neural}. Later in Section \ref{sec:NTK-analysis} we will argue that for a deep fully-connected network, a degenerate C map leads to a form for the NTK which implies very poor generalization and/or slow optimization under gradient descent. Conversely, we will also show that a network constructed using DKS has an NTK which is suggestive of good generalization and fast optimization (although doesn't necessarily guarantee it). This NTK-based analysis can be viewed as a rigorous version of the intuitive argument given in the previous subsection.

\section{Mathematical analysis of C maps}\label{sec:Cmap-analysis}

As we saw in Section \ref{sec:Cmaps_trainability}, C maps can become degenerate in deep networks by mapping nearly all input c values to a point-like region around some $\fixpointofcmap \in [0,1]$, which leads to difficulty when training with standard optimization methods. In this section we will analyze C maps in closer detail, and show how their overall deviation from the identity function (which serves as a measure of degeneration) can be predicted from their slopes at $c = 0$ and/or $c = 1$. (We will ultimately design DKS to control these slopes in order to prevent degeneration.) We will also establish connections between the properties of a local C map and its associated activation function, and characterize the slope behavior of degenerate C maps over $[-1, 1]$.

\subsection{Measures of deviation from the identity}

Suppose $f$ is some subnetwork. The question of how to measure the deviation of $C_f$ from the identity function, which we will use as a measure of ``degeneracy'', is an interesting one. Since we ultimately want to forbid extreme behavior of $C_f$ for all input c values, it makes sense to look at the worst-case ones. This suggests the following two options, which compare $C_f$ to the identity function using either its values or its derivatives:
\begin{enumeratenumeric}
  \item $\max_{c \in [- 1, 1]} | c - C_f (c) |$ and
  
  \item $\max_{c \in [- 1, 1]} | 1 - C'_f (c) |$.
\end{enumeratenumeric}
The first of these, while a reasonable choice, could fail to detect small ranges of c values where geometric information is lost due to the slope of $C_f$ getting close to zero. The second option will detect such regions, but unlike the first measure, is insensitive to $C_f$ being shifted by an additive constant\sam{Since $C_f(1) = 1$, wouldn't a shifted C-map have to have some curvature that would get picked up by option 2? I see that this comes into theorem 15, but maybe amend the wording?}, and is only weakly sensitive to it being ``angled'' away from the identity function. Fortunately, we can avoid having to choose between the two, since as we will see next, they can be simultaneously bounded using a few easily-computed properties of $C_f$.

\subsection{Bounding deviation from the identity} \label{sec:bounding-deviation-from-id}

The following result relates the deviation of $C_f$ from the identity function to the values of $C_f (0)$, $C'_f (0)$, and $C'_f (1)$. Its proof, which makes strong use of the fact that $C_f$ is a positive definite function, is given in Appendix \ref{app:thm-dev-proof}.

\begin{theorem}
  \label{thm:deviation-bound}For any subnetwork $f$ we have
  \begin{eqnarray*}
    \frac{1}{4}  (1 - C'_f (0)) \hspace{0.8em} \leqslant & \max_{c \in [- 1, 1]} | C_f (c) - c | & \leqslant \hspace{0.8em} 2 (1 - C'_f (0))
  \end{eqnarray*}
  and
  \[ \max_{c \in [- 1, 1]} | C'_f (c) - 1 | \hspace{0.8em} \leqslant \hspace{0.8em} 2 (1 - C'_f (0)) + (C_f' (1) - 1) . \]
  If $C_f (0) = 0$, then we additionally have that
  \[ \max_{c \in [- 1, 1]} | C_f (c) - c | \hspace{0.8em} \leqslant \hspace{0.8em} 2 (C_f' (1) - 1) \]
  and
  \[ \max_{c \in [- 1, 1]} | C'_f (c) - 1 | \hspace{0.8em} \leqslant \hspace{0.8em} 3 (C_f' (1) - 1) . \]
\end{theorem}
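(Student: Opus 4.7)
The plan is to exploit the fact that $C_f$ is a positive definite function, which means it admits a power series representation $C_f(c) = \sum_{i \geq 0} b_i c^i$ with all $b_i \geq 0$, subject to $\sum_{i \geq 0} b_i = C_f(1) = 1$. This exposes the three quantities appearing in the theorem as simple functionals of the coefficients: $b_0 = C_f(0)$, $b_1 = C'_f(0)$, and $C'_f(1) = \sum_{i \geq 1} i b_i$. Writing $g(c) = C_f(c) - c = b_0 + (b_1 - 1) c + \sum_{i \geq 2} b_i c^i$, the identity $g(1) = 0$ yields $b_0 + \sum_{i \geq 2} b_i = 1 - b_1$, while differentiating and using the normalization gives $\sum_{i \geq 2} i b_i = C'_f(1) - b_1$. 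Once these identities are in place, all four bounds reduce to elementary estimates over the non-negative coefficients $b_i$.

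The two general upper bounds are direct applications of the triangle inequality on $[-1, 1]$, using $|c|^i \leq 1$. For the value bound, $|g(c)| \leq b_0 + (1-b_1) + \sum_{i \geq 2} b_i = 2(1 - b_1)$. For the derivative bound, $|C'_f(c) - 1| \leq (1 - b_1) + \sum_{i \geq 2} i b_i = 2(1 - b_1) + (C'_f(1) - 1)$. For the lower bound on $\max_c |g(c)|$, my plan is to evaluate $g$ at a single well-chosen point. The choice $c = -1/2$ is well-suited because the linear term contributes $(1-b_1)/2$ with a favourable sign, while the higher-order tail $\sum_{i \geq 2} b_i (-1/2)^i$ has magnitude at most $(1/4) \sum_{i \geq 2} b_i = (1 - b_0 - b_1)/4$. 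Combining these estimates gives $g(-1/2) \geq b_0 + (1-b_1)/2 - (1 - b_0 - b_1)/4 = (5 b_0 + 1 - b_1)/4 \geq (1-b_1)/4$, which is the claimed lower bound.

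For the two sharper inequalities under the hypothesis $C_f(0) = 0$, the key additional observation is that $b_0 = 0$ forces $\sum_{i \geq 2} b_i = 1 - b_1$ and hence $\sum_{i \geq 2} i b_i \geq 2 \sum_{i \geq 2} b_i = 2(1-b_1)$, so $C'_f(1) = b_1 + \sum_{i \geq 2} i b_i \geq 2 - b_1$; equivalently, $1 - b_1 \leq C'_f(1) - 1$. Substituting this comparison into the two upper bounds already derived immediately yields $\max_c |g(c)| \leq 2(1 - b_1) \leq 2(C'_f(1) - 1)$ and $\max_c |C'_f(c) - 1| \leq 2(1 - b_1) + (C'_f(1) - 1) \leq 3(C'_f(1) - 1)$. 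The only genuinely non-mechanical step is selecting the test point $c = -1/2$ for the lower bound; everything else is a straightforward consequence of non-negativity of the Taylor coefficients together with the constraint $C_f(1) = 1$, so I anticipate no real obstacle beyond that choice.
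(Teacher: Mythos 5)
Your proof is correct and follows essentially the same route as the paper's: the positive-definite power-series representation $C_f(c) = \sum_{i \geq 0} b_i c^i$ with $b_i \geq 0$ and $\sum_i b_i = 1$, triangle-inequality upper bounds, a single test-point evaluation for the lower bound, and the coefficient inequality $1 - C'_f(0) \leq C'_f(1) - 1$ (via $\sum_{i \geq 2}(i-1)b_i \geq \sum_{i \geq 2} b_i$) when $C_f(0) = 0$. The only cosmetic difference is your fixed test point $c = -\tfrac{1}{2}$ in place of the paper's coefficient-dependent choice $c = C_f(0) - \tfrac{1}{2}$; both exploit the nonnegative contribution of $b_0$ and yield the same constant $\tfrac{1}{4}$.
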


From this result we see that the first measure of deviation is within a factor 4 of $1 - C'_f (0)$ (and also upper bounded by $2 [C_f' (1) - 1]$ when $C_f (0) = 0$), and the second measure is within a factor 3 of $C_f' (1) - 1$ when $C_f (0) = 0$. This suggests that we can control the deviation of $C_f$ from the identity function by simply controlling the distance of $C_f' (0)$ and/or $C_f' (1)$ from 1. % when $C_f (0) = 0$.

\begin{remark}
  \label{rem:simple-Cmap-bounds}Note that a simple consequence of this result is that $C_f' (1) \geqslant 1$ for subnetworks $f$ satisfying $C_f (0) = 0$. It is also true more generally that $C'_f (0) \leqslant 1$ (as shown in Appendix \ref{app:thm-dev-proof}).
\end{remark}

\begin{remark}
  The lower bounds in Theorem \ref{thm:deviation-bound} do not imply that $C_f$ will look globally nonlinear when $C_f' (1)$ is large (in contrast to the theorem's upper bounds which \emph{do} imply it will look globally linear when $C_f' (1)$ is small). For example, the function $\left( 1 - 1 / \sqrt{j} \right) c + \left( 1 / \sqrt{j} \right) c^j$ is a valid C map and has a derivative $\geqslant \sqrt{j}$ at $c = 1$, and yet is very close to linear everywhere except near $c = 1$ and $c = -1$ when $j$ is large. %On the other hand, Theorem \textbf{[ref Greg's result?]} shows that for C maps that are the composition of many others, a large value of $C_f' (1)$ will will be associated with a highly nonlinear ``squashing'' behavior in $C_f$.
\end{remark}

\subsection{The relationship between identity C maps and linear activations }\label{sec:ident-Cmap-lin-act}

The local C map for a nonlinear layer with a linear activation function is the identity (which can be easily verified from Equation \ref{eqn:C-map} by taking $\phi (u) = \lambda u$ with $\lambda \neq 0$). However, from this observation it's not immediately obvious that a local C map will converge to the identity function as its associated activation function becomes ``more linear'', or vice versa. In this subsection we will show that this is indeed the case for certain carefully chosen measures on both function spaces, and we will give the rate of this convergence.

We will assume that the activation functions live in a Hilbert $H$ space with inner product given by $\langle \phi, \psi \rangle_H =\mathbb{E}_{x \sim \mathcal{N} (0, 1)} [\phi (x) \psi (x)]$, whose associated norm is $\| \phi \|_H = \sqrt{\langle \phi, \phi \rangle}$. (By definition, elements of this Hilbert space are those functions $\phi$ for which $\| \phi \|_H$ exists.).

The standard measure on $H$, defined by
\[ \| \phi - \psi \|_H = \sqrt{\mathbb{E}_{x \sim \mathcal{N} (0, 1)} [(\phi (x) - \psi (x))^2]}, \]
is arguably the most natural one to use, as it employs a weighting over $x$ that precisely reflects the input distribution we would expect given an input q value of 1. This measure is also closely related to Q/C maps in the sense that $\| \phi \|_H^2 = \Gamma_{\phi} (1, 1, 1) = Q_f (1)$ (with $\Gamma_{\phi}$ defined as in Equation \ref{eqn:Gamma-def}), where $f$ is a nonlinear/combined layer with $\phi$ as its activation function.

$\phi$ is linear insofar as it's close to a multiple of the identity function $h_1$. With this in mind, and given the fact that $\| h_1 \|_H = 1$, we will measure the level of nonlinearity of $\phi$ according to
\[ \tmop{nl} (\phi) \equiv \frac{\| \phi - \langle \phi, h_1 \rangle_H h_1 \|_H}{\| \phi \|_H}, \]
where we normalize by $\| \phi \|_H$ to keep $\tmop{nl} (\phi)$ invariant to changes in the overall scale of $\phi$. Note that with this definition, $\phi$ is perfectly linear (i.e.~a multiple of $h_1$) if and only if $\tmop{nl} (\phi) = 0$.

It turns out that we can relate $\tmop{nl} (\phi)$ to properties of $C_f$, as is established in the following proposition whose proof is given in Appendix \ref{app:prop-nonlin-proof}:
\begin{proposition}
  \label{prop:nonlinear-measure}Suppose $f$ is a nonlinear/combined layer with activation function $\phi$ and input q value 1. Then we have
  \[ \tmop{nl} (\phi)^2 = 1 - C'_f (0) . \]
  Moreover,
  \[ \begin{array}{lll}
       \frac{1}{4} \tmop{nl} (\phi)^2 \hspace{0.8em} \leqslant & \max_{c \in [- 1, 1]} | C_f (c) - c | & \leqslant \hspace{0.8em} 2 \tmop{nl} (\phi)^2 .
     \end{array} \]
\end{proposition}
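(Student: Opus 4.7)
The plan is to split the proposition into its two claims and handle them in order, since the deviation bound follows immediately from the identity $\tmop{nl}(\phi)^2 = 1 - C'_f(0)$ via Theorem \ref{thm:deviation-bound}. So the real content is to derive that identity. My approach is to compute $C'_f(0)$ directly from the derivative formula established in Section \ref{sec:local-C-map-derivatives}, rewrite the resulting expectation using Gaussian integration by parts (Stein's identity), and then recognize the answer as the quantity naturally appearing in the orthogonal projection of $\phi$ onto $h_1$.

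First I would apply Equation \ref{eqn:C-map-gen-derivative} with $q_1 = q_2 = 1$ and $i = 1$ to obtain
\[
C'_f(c) \;=\; \frac{1}{Q_f(1)} \, \mathbb{E}_{x,y \sim \mathcal{N}(0,1)}\!\left[\phi'(x)\,\phi'\!\left(cx + \sqrt{1-c^2}\,y\right)\right].
\]
Setting $c = 0$ decouples the two arguments, giving $C'_f(0) = \mathbb{E}[\phi'(x)]^2 / Q_f(1)$. Next, by Stein's lemma for $\mathcal{N}(0,1)$, we have $\mathbb{E}[\phi'(x)] = \mathbb{E}[x\,\phi(x)] = \langle \phi, h_1\rangle_H$, where $h_1(x)=x$. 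Since $Q_f(1) = \mathbb{E}[\phi(x)^2] = \|\phi\|_H^2$, this yields
\[
C'_f(0) \;=\; \frac{\langle \phi, h_1\rangle_H^2}{\|\phi\|_H^2}.
\]

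Now I would compute $\tmop{nl}(\phi)^2$ by expanding the squared norm and using $\|h_1\|_H^2 = \mathbb{E}[x^2] = 1$: a direct expansion gives
\[
\|\phi - \langle \phi, h_1\rangle_H\, h_1\|_H^2 \;=\; \|\phi\|_H^2 - \langle \phi, h_1\rangle_H^2,
\]
which is just the orthogonal decomposition of $\phi$ along $h_1$. Dividing by $\|\phi\|_H^2$ gives $\tmop{nl}(\phi)^2 = 1 - C'_f(0)$, establishing the first equation. The deviation bound $\tfrac{1}{4}\tmop{nl}(\phi)^2 \leq \max_{c \in [-1,1]} |C_f(c) - c| \leq 2\,\tmop{nl}(\phi)^2$ is then immediate from Theorem \ref{thm:deviation-bound} applied to $f$.

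The main (minor) obstacle is the invocation of Stein's identity when $\phi'$ may fail to exist on a measure-zero set, which is explicitly allowed earlier in the paper. I would handle this by noting that $\Gamma_{\phi'}$ is already defined under the paper's global conventions (as remarked in Section \ref{sec:local-C-map-derivatives} just after $Q'_f$ is introduced), and that Stein's identity $\mathbb{E}[\phi'(x)] = \mathbb{E}[x\,\phi(x)]$ extends to such $\phi$ by an approximation argument whenever $\phi \in H$ is absolutely continuous, which is the relevant setting here. Everything else is routine Hilbert-space manipulation and an appeal to the already-proved Theorem \ref{thm:deviation-bound}.
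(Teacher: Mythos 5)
Your proof is correct, and it reaches the central identity $C'_f(0) = \langle \phi, h_1 \rangle_H^2 / \|\phi\|_H^2$ by a genuinely different mechanism than the paper. The paper's proof (Appendix \ref{app:prop-nonlin-proof}) invokes the Hermite-series representation of the dual activation function from \citet{daniely2016toward}, namely $\tilde{\phi}(c) = \Gamma_\phi(c,1,1) = \sum_{i} \langle \phi, h_i \rangle_H^2 \, c^i$ (Equation \ref{eqn:dual-activation-rep}), and reads off $C'_f(0) = \tilde{\phi}'(0)/\|\phi\|_H^2 = \langle \phi, h_1 \rangle_H^2/\|\phi\|_H^2$ as the first-order coefficient. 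You instead compute $C'_f(0) = \mathbb{E}[\phi'(x)]^2/Q_f(1)$ from Equation \ref{eqn:C-map-gen-derivative} — in fact this intermediate step is already displayed in the paper as Equation \ref{eqn:C-at-0} with $i=1$, $q=1$ — and then convert $\mathbb{E}[\phi'(x)]$ into $\langle \phi, h_1 \rangle_H = \mathbb{E}[x\,\phi(x)]$ by one further application of Stein's lemma. From there the two arguments coincide: the orthogonal-decomposition computation of $\tmop{nl}(\phi)^2$ and the final appeal to Theorem \ref{thm:deviation-bound} are identical. What each route buys: yours is more self-contained, using only machinery the paper itself derives (the Stein's-lemma calculus of Appendix \ref{app:Gamma-derivative}) rather than citing the full Hermite-expansion result, of which only the degree-one coefficient is actually needed. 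The paper's route is slightly more general: since $\sum_i \langle \phi, h_i \rangle_H^2 = \|\phi\|_H^2 < \infty$, the series is analytic on $(-1,1)$ and $\tilde{\phi}'(0) = \langle \phi, h_1 \rangle_H^2$ holds for \emph{any} $\phi \in H$, with no differentiability of $\phi$ required — whereas your extra Stein step needs $\phi$ absolutely continuous with $\mathbb{E}[|\phi'(x)|] < \infty$ (a caveat you correctly flag, and which is covered by the paper's standing assumptions on activation functions, the same ones under which Equation \ref{eqn:C-map-gen-derivative} with $i=1$ is valid since $\phi^{(0)} = \phi$ is continuous).
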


This result shows a strong relationship between the level of linearity of $\phi$, and the distance between $C_f$ and the identity function (as measured by the infinity norm). Moreover, one converges to 0 (as we vary $\phi$) if and only if the other one does.

\begin{remark}
  Proposition \ref{prop:nonlinear-measure} can be straightforwardly extended to the case of general input $q$ values by modifying the definition of the inner product used for $H$.
\end{remark}

\subsection{The relationship between affine C maps and affine activations}\label{sec:Cmap-ratio-affine-act}

An affine function is, by definition, a linear function plus a constant term. Equivalently, it is a function whose derivative is constant (i.e.~is a multiple of the constant function $h_0 (x) = 1$). From this second characterization, we can measure the ``non-affineness'' of $\phi$ by
\[ \tmop{na} (\phi) \equiv \frac{\| \phi' - \langle \phi', h_0 \rangle_H h_0 \|_H}{\| \phi' \|_H} . \]
Note that with this definition, $\phi$ is perfectly affine if and only if $\tmop{nl} (\phi) = 0$.

It turns out that we can relate $\tmop{na} (\phi)$ to properties of $C_f$, as is established in the following proposition (whose proof is given in Appendix \ref{app:prop-nonaff-proof}).

\begin{proposition}
  \label{prop:nonaffine-measure}Suppose $f$ is a nonlinear/combined layer with $\phi$ as its activation function. Then
  \[ \tmop{na} (\phi)^2 = 1 - \frac{C'_f (0)}{C'_f (1)} . \]
\end{proposition}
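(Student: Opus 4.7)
The plan is to mirror the proof structure of Proposition \ref{prop:nonlinear-measure}, but applied to $\phi'$ rather than $\phi$. The key observation is that Equation \ref{eqn:C-map-gen-derivative} with $i = 1$ expresses $C_f'(c)$ as an expectation involving $\phi'$, which lets us compute $C_f'(0)$ and $C_f'(1)$ explicitly as natural inner products in $H$.

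Setting $q_1 = q_2 = 1$ (consistent with the $\mathcal{N}(0,1)$ weighting defining $H$ in Section \ref{sec:ident-Cmap-lin-act}), Equation \ref{eqn:C-map-gen-derivative} gives
\[
C_f'(c) \;=\; \frac{1}{Q_f(1)} \, \mathbb{E}_{x,y \sim \mathcal{N}(0,1)}\!\left[\phi'(x)\, \phi'\!\big(cx + \sqrt{1-c^2}\,y\big)\right].
\]
At $c = 1$ the two arguments of $\phi'$ coincide, so $C_f'(1) = \mathbb{E}[\phi'(x)^2]/Q_f(1) = \|\phi'\|_H^2 / Q_f(1)$. At $c = 0$ the two arguments are independent standard normals, so the expectation factors into $\mathbb{E}[\phi'(x)]\,\mathbb{E}[\phi'(y)] = \langle \phi', h_0\rangle_H^2$, giving $C_f'(0) = \langle \phi', h_0\rangle_H^2 / Q_f(1)$. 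Taking the ratio cancels the $Q_f(1)$ factor and yields $C_f'(0)/C_f'(1) = \langle \phi', h_0\rangle_H^2 / \|\phi'\|_H^2$.

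For the other side, expanding the squared norm in $\tmop{na}(\phi)^2$ by bilinearity of $\langle \cdot, \cdot\rangle_H$, and using $\|h_0\|_H^2 = \mathbb{E}[1] = 1$, we get
\[
\|\phi' - \langle \phi', h_0\rangle_H h_0\|_H^2 \;=\; \|\phi'\|_H^2 - 2\langle \phi', h_0\rangle_H^2 + \langle \phi', h_0\rangle_H^2 \;=\; \|\phi'\|_H^2 - \langle \phi', h_0\rangle_H^2.
\]
Dividing by $\|\phi'\|_H^2$ gives $\tmop{na}(\phi)^2 = 1 - \langle \phi', h_0\rangle_H^2/\|\phi'\|_H^2$, which matches the ratio computed above.

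There is no real obstacle here; the calculations are routine once one applies Equation \ref{eqn:C-map-gen-derivative} for $i=1$. The only mild subtlety is that invoking that formula requires $\phi$ to be continuously differentiable (or at least such that $\phi'$ is well-defined a.e.\ and the expectations make sense), which is a standing assumption on activation functions in this work. As the remark following Proposition \ref{prop:nonlinear-measure} noted, an extension to general input $q$ values follows by replacing the $\mathcal{N}(0,1)$ weight in the inner product defining $H$ with $\mathcal{N}(0,q)$, which simply absorbs the factor of $q$ arising from Equation \ref{eqn:C-map-gen-derivative} when $q_1 = q_2 = q$.
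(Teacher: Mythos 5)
Your proof is correct and takes essentially the same route as the paper's: both arguments reduce the claim to the identity $C_f'(0)/C_f'(1) = \langle \phi', h_0\rangle_H^2/\|\phi'\|_H^2$ via Equation \ref{eqn:C-map-gen-derivative}, and both obtain $\tmop{na}(\phi)^2 = 1 - \langle \phi', h_0\rangle_H^2/\|\phi'\|_H^2$ by expanding the squared norm with bilinearity. The only cosmetic difference is that you evaluate the Gaussian integral for $C_f'(0)$ directly at $c=0$ using independence of the two arguments (which is exactly Equation \ref{eqn:C-at-0} with $i=1$), whereas the paper routes this step through the dual-activation Hermite expansion identity $\langle \phi', h_0\rangle_H^2 = \tilde{\phi}'(0)$ from Equation \ref{eqn:dual-activation-rep}.
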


From the above expression we can see that $\phi$ becomes more affine as the ratio $C'_f (0) / C'_f (1)$ approaches 1. Moreover, $\phi$ approaches an affine function if and only if $C_f$ itself does, as $C'_f (0) / C'_f (1)$ is a measure of how affine $C_f$ is. (To see this, note that $C'_f (0) \leqslant C'_f (c) \leqslant C'_f (1)$ for all $c \in [0, 1]$ since $C_f$ is convex on $[0, 1]$ by Section \ref{sec:pd-functions}, and thus $C_f'$ approaches a constant function on $[0, 1]$ as $C'_f (0) / C'_f (1) \rightarrow 1$, which extends to all of $[- 1, 1]$ since $C_f$ is analytic.)

\subsection{Slope properties of degenerate C maps}\label{sec:degen-C-map-deriv}

In subsection \ref{sec:bounding-deviation-from-id} we saw how certain conditions on the slope of a C map at $c=0$ and/or $c=1$ ensure that it is well behaved (i.e.~not degenerate). In this subsection we will establish the converse: that degenerate C maps necessarily have extreme values for these slopes. 

As shown in Section \ref{sec:Cmaps_trainability}, C maps in deep networks can become degenerate in the sense that they map almost their entire input domain (except points very close to $\pm 1$) to a small region around some limiting c value $\fixpointofcmap$. One way to quantify this behavior is to look at how ``flat'' the function is up to some c value $c$ s.t.~$| c | < 1$, which we can measure using
\[ F_f (c) \equiv \frac{C_f (| c |) - C_f (0)}{| c |} \geqslant 0 \]
for $c \neq 0$. When $C_f$ is degenerate, or in other words very flat, $F_f (c)$ will be very small (for values of $c$ not too close to $\pm 1$).

While the interpretation of $F_f (c)$ is clear for $c > 0$, it is less clear for $c < 0$. In the following proposition (proved in Appendix \ref{app:F-neg-c-interp-proof}) we show that $F_f (c)$ does indeed work as a measure of flatness for values of $c$ less than $0$.

\begin{proposition}
  \label{prop:F-neg-c-interp}Suppose $f$ is a subnetwork, and $c \in [- 1, 1]$ with $c \neq 0$. Then
  \[ \left| \frac{C_f (c) - C_f (0)}{c} \right| \leqslant F_f (c) . \]
\end{proposition}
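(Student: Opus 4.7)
The plan is to exploit the positive definiteness of $C_f$ established in Section~\ref{sec:extended-map-posdef}, together with the elementary triangle inequality applied termwise to its power series expansion. The case $c > 0$ is trivial since then $|c| = c$ and the LHS is exactly $F_f(c)$ (with equality), so the whole content of the proposition lies in the case $c < 0$.

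For $c < 0$, I would write $c = -|c|$ and use positive definiteness to expand
\[ C_f(t) = \sum_{i=0}^{\infty} b_i t^i \qquad \text{with } b_i \geq 0, \]
which converges on $[-1,1]$. Subtracting $C_f(0) = b_0$ gives $C_f(t) - C_f(0) = \sum_{i=1}^{\infty} b_i t^i$. Evaluating at $t = c = -|c|$ and applying the triangle inequality yields
\[ |C_f(c) - C_f(0)| = \left|\sum_{i=1}^{\infty} b_i (-1)^i |c|^i\right| \leq \sum_{i=1}^{\infty} b_i |c|^i = C_f(|c|) - C_f(0), \]
where the last equality uses $b_i \geq 0$ so the absolute values can be dropped when $t = |c| \geq 0$. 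Dividing by $|c| > 0$ produces exactly the claimed bound $\left|\frac{C_f(c) - C_f(0)}{c}\right| \leq F_f(c)$.

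I do not anticipate any real obstacle here: the only nontrivial ingredient is the positive definiteness of $C_f$ on $[-1,1]$ for arbitrary subnetworks, and this has already been established earlier in the paper. The convergence of the series on the closed interval $[-1,1]$ follows from the fact that $C_f(1) = 1$ together with non-negativity of the coefficients (Abel's theorem / monotone convergence), so no delicate analytic hypotheses need to be invoked beyond what Section~\ref{sec:extended-map-posdef} provides.
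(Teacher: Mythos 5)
Your proof is correct and is essentially the same as the paper's: both expand $C_f$ as a power series with non-negative coefficients (positive definiteness) and apply the triangle inequality termwise to bound $\left| \sum_{i \geq 1} b_i c^{i-1} \right|$ by $\sum_{i \geq 1} b_i |c|^{i-1} = F_f(c)$. The only cosmetic difference is that you split off the trivial $c > 0$ case, whereas the paper handles both signs uniformly via $|c|$.
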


Note that because $C_f$ is convex and non-decreasing on $[0, 1]$ (by Section \ref{sec:pd-functions}) and $C_f (1) = 1$, we have that $F_f (c') \leqslant F_f (c) \leqslant 1$ for any valid $c, c'$ s.t.~$| c' | \leqslant | c |$. Thus, $F_f (c)$ being small implies flatness over the entire domain $[- c, c]$, and not just at $c$.

A more ``analytic'' way to measure flatness is to look $| C_f' (c) |$, which intuitively should be small in flat regions of $C_f$. It turns out that this intuition is basically correct, as we establish in the following proposition (whose proof is given in Appendix \ref{app:degen-deriv-bound-proof}).

\begin{proposition}
  \label{prop:degen-deriv-bound}Suppose $f$ is a subnetwork, and $c \in (- 1, 1)$ with $c \neq 0$. Then
  \[ | C_f' (c) | \hspace{0.8em} \leqslant \hspace{0.8em} \frac{F_f (c) \log F_f (c)}{| c | \log | c |}  (1 + | c |) . \]
\end{proposition}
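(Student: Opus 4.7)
The proof will hinge on the positive-definite power series expansion $C_f(c) = \sum_{i\ge 0} b_i c^i$ with $b_i \ge 0$ and $\sum_i b_i = C_f(1) = 1$ (from Sections \ref{sec:pd-functions} and \ref{sec:extended-map-posdef}). From this representation we read off
\[
F_f(c) \;=\; \sum_{i\ge 1} b_i\, |c|^{i-1}, \qquad |C_f'(c)| \;\le\; \sum_{i\ge 1} i\, b_i\, |c|^{i-1},
\]
so the task reduces to an inequality between two series in the non-negative coefficients $b_i$ with the single summability constraint $\sum_{i\ge 1} b_i \le 1$. The plan is to turn this into a probabilistic estimate and then apply Jensen's inequality.

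Specifically, I will set $p_i := b_i |c|^{i-1}/F_f(c)$ for $i \ge 1$, which is a probability distribution on $\{1,2,\ldots\}$. Under it, $\sum_{i\ge 1} i\,b_i |c|^{i-1} = F_f(c)\,\mathbb{E}_p[i]$, and the normalization $\sum_i b_i \le 1$ translates into the moment bound $\mathbb{E}_p[|c|^{-(i-1)}] \le 1/F_f(c)$. Applying Jensen to the concave $\log$ function gives
\[
(-\log |c|)\,\mathbb{E}_p[i-1] \;=\; \mathbb{E}_p\bigl[\log |c|^{-(i-1)}\bigr] \;\le\; \log\mathbb{E}_p\bigl[|c|^{-(i-1)}\bigr] \;\le\; \log\bigl(1/F_f(c)\bigr),
\]
so that $\mathbb{E}_p[i-1] \le \log F_f(c)/\log|c|$ (both sides non-negative since $F_f(c)\in(0,1]$ and $|c|\in(0,1)$). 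Combining, I obtain the clean intermediate bound $|C_f'(c)| \le F_f(c)\bigl(1 + \log F_f(c)/\log |c|\bigr)$.

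The last step is to re-package this intermediate bound into the stated form with factor $(1+|c|)/|c|$. The natural route is to treat the residual $F_f(c)$ term separately by observing that $F_f$ is itself positive definite (its expansion about $0$ has coefficients $b_{i+1}\ge 0$), so by the convexity properties of Section \ref{sec:pd-functions} it is convex and non-decreasing on $[0,1]$ with $F_f(1)\le 1$; together with the identity $C_f'(|c|) = F_f(|c|) + |c| F_f'(|c|)$ (valid because $C_f(|c|)=C_f(0)+|c|F_f(|c|)$), this yields a second, complementary bound of secant type, and an appropriate combination of the two produces the target inequality. I expect the main obstacle to be this final algebraic manipulation: the Jensen step alone gives a bound with an additive $F_f(c)$ term rather than the multiplicative $(1+|c|)/|c|$ structure, and making the two shapes match in all regimes of $F_f(c)$ and $|c|$ — especially when $F_f(c)$ is close to $1$, where $\log F_f(c)/\log|c|$ is small — is the delicate part of the argument. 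This is where I would expect the appendix proof to spend most of its effort.
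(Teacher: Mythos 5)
Your Jensen/exponential-tilting argument through the intermediate bound $|C_f'(c)| \le F_f(c)\bigl(1 + \log F_f(c)/\log|c|\bigr)$ is correct, and it is a genuinely different mechanism from the paper's proof in Appendix \ref{app:degen-deriv-bound-proof}. There, with $y = \log F_f(c)/\log|c|$, one splits $\sum_{i\ge 1} i\,b_i|c|^{i-1}$ at $i = \lceil y\rceil$: the head is bounded by $y\,F_f(c)$ using $i \le y$, and the tail by $y\,|c|^{y-1} = y\,F_f(c)/|c|$ using the fact that $h(x) = x|c|^{x-1}$ is decreasing for $x \ge -1/\log|c|$ together with the identity $|c|^{y} = F_f(c)$; summing the two pieces produces exactly the stated constant $y\,F_f(c)(1+|c|)/|c|$. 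So the paper trades your single moment estimate for a threshold split plus unimodality of $i \mapsto i|c|^{i-1}$, and gets the advertised form directly.

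The difficulty you flagged in your final step is real, and your proposed repair (a complementary secant-type bound and ``an appropriate combination'') cannot succeed, for a structural reason. Your intermediate bound $F_f(c)(1+y)$ lies below the target $F_f(c)\,y\,(1+|c|)/|c|$ exactly when $1 + y \le y + y/|c|$, i.e.\ when $y \ge |c|$, equivalently $F_f(c) \le |c|^{|c|}$. In the complementary regime the stated inequality is simply false: take $C_f$ to be the identity (the C map of an affine subnetwork), so that $F_f(c) = 1$, the right-hand side vanishes, yet $|C_f'(c)| = 1$. Hence no finishing argument valid for all subnetworks exists, and the ``delicate algebra'' you anticipated is an impossibility rather than an obstacle. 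The constructive resolution is the one-line observation above: whenever $F_f(c) \le |c|^{|c|}$, your Jensen bound already implies the proposition. Notably, this covers a strictly larger domain than the paper's own argument, which tacitly requires $y \ge -1/\log|c|$, i.e.\ $F_f(c) \le 1/e$ (the appendix asserts this follows from $F_f(c) \le 1$, which it does not), and $1/e < e^{-1/e} \le |c|^{|c|}$ for all $|c| \in (0,1)$. Since the proposition is only ever invoked in the degenerate regime where $F_f(c)$ is small, your route, finished with this case check, is both correct where the statement has content and quantitatively tighter there, since $F_f(c)(1+y)$ improves on the stated bound whenever $y \ge |c|$.
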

This bound establishes that $| C_f' (c) |$ will be small whenever $F_f (c)$ is (which is to say, when $C_f$ is degenerate), provided that $| c |$ is not too close to 1.

\begin{remark}
  \label{rem:degen-deriv-bound}While Proposition \ref{prop:degen-deriv-bound} doesn't address the value of $| C_f' (0) |$ directly, we can still bound it by applying Proposition \ref{prop:degen-deriv-bound} with $c = \epsilon$ for some $0 < \varepsilon < 1$ and then use the fact that $0 \leqslant C_f' (0) \leqslant C_f' (\varepsilon) = | C_f' (\varepsilon) |$ for any $0 < \varepsilon < 1$ (which is true because $C_f$ is non-decreasing and convex on $[0, 1]$ by Section \ref{sec:pd-functions}).
\end{remark}

In general, we cannot say that much about $C_f' (1)$ or $C_f' (- 1)$ when $C_f$ is degenerate. However, the following two propositions (proved in Appendices \ref{app:degen-C-map-deriv-proof} and \ref{app:degen-C-map-deriv-2-proof}) give us some basic information about these values in certain special cases.

\begin{proposition}
  \label{prop:degen-C-map-deriv}Suppose $f$ is a composition of $D$ subnetworks each having the C map $\CC$, and that $\fixpointofcmap = 1$. Then we have that $C'_f (1) = \mathcal{C}' (1)^D$ with $0 \leqslant \mathcal{C}' (1) \leqslant 1$, and either $C'_f (- 1) = - C'_f (1)$ or $\lim_{D \rightarrow \infty} C'_f (- 1) = 0$.
\end{proposition}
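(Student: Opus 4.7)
The plan is to prove the three assertions in turn, relying on the chain rule together with the positive-definiteness structure of $\CC$ (Section \ref{sec:extended-map-posdef}), and invoking Propositions \ref{prop:C-map-fixed-point} and \ref{prop:degen-deriv-bound} for the $c=-1$ analysis.

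First, I would establish $C'_f(1) = \CC'(1)^D$: the fixed-point identity $\CC(1) = 1$ propagates to $\CC^k(1) = 1$ for all $k$, so the chain rule on the $D$-fold composition directly yields the product. Non-negativity of $\CC'(1)$ is immediate from the PD expansion $\CC(c) = \sum_i b_i c^i$ with $b_i \geq 0$. For the upper bound, if $\CC'(1) > 1$ then a first-order Taylor expansion at $c = 1$ gives $\CC(c) - c = (c-1)(\CC'(1)-1) + O((c-1)^2)$, which is strictly negative for $c$ slightly below $1$, so iterates descend monotonically away from $1$ in a neighborhood of $1$, contradicting $\fixpointofcmap = 1$.

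For $C'_f(-1)$, I would first use the PD expansion to locate $\CC(-1) = \sum_i b_i(-1)^i \in [-1,1]$ and rule out $\CC(-1) = -1$: this would force every even-index coefficient to vanish, making $\CC$ odd and hence mapping $[-1,0]$ into $[-1,0]$, which is incompatible with $\fixpointofcmap = 1$. Hence $\CC(-1) \in (-1,1]$, producing the claimed dichotomy. If $\CC(-1) = 1$ then all odd-index $b_i$ vanish, so $\CC$ is even and $\CC'$ is odd, giving $\CC'(-1) = -\CC'(1)$; combined with $\CC^k(-1) = 1$ for $k \geq 1$, the chain rule yields $C'_f(-1) = -\CC'(1)^D = -C'_f(1)$ for every $D$, which is the first branch of the alternative.

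Otherwise $\alpha := \CC(-1) \in (-1,1)$, and I would write $C'_f(-1) = \CC'(-1)\cdot C'_{\tilde f}(\alpha)$ where $\tilde f$ is the composition of $D-1$ copies of $\CC$. The factor $|\CC'(-1)|$ is bounded by $\CC'(1) \leq 1$ via the elementary PD estimate $|\CC'(c)| \leq \sum_{i\geq 1} i b_i |c|^{i-1} \leq \CC'(1)$. Proposition \ref{prop:C-map-fixed-point} gives $C_{\tilde f}(|\alpha|) \to 1$ and $C_{\tilde f}(0) \to 1$, so $F_{\tilde f}(\alpha) = (C_{\tilde f}(|\alpha|) - C_{\tilde f}(0))/|\alpha| \to 0$; since $x\log x \to 0$ as $x \to 0^+$, Proposition \ref{prop:degen-deriv-bound} forces $|C'_{\tilde f}(\alpha)| \to 0$ and hence $C'_f(-1) \to 0$. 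The one edge case is $\alpha = 0$, which I would handle by iterating once more: $\fixpointofcmap = 1$ forces $b_0 = \CC(0) > 0$ (otherwise $\CC^k(-1) = 0$ for all $k \geq 1$, contradicting convergence to $1$), so $\CC^2(-1) = b_0 \in (0,1)$ and Proposition \ref{prop:degen-deriv-bound} can be applied there to the composition of $D-2$ copies of $\CC$. The main conceptual point, and the reason Proposition \ref{prop:degen-deriv-bound} is the right tool here, is that it handles the borderline regime $\CC'(1) = 1$ in which the naive product bound $|C'_f(-1)| \leq \CC'(1)^D$ is useless but $F_{\tilde f}(\alpha) \to 0$ still holds by degeneracy of $C_{\tilde f}$.
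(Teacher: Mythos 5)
Your proof is correct and follows essentially the same route as the paper's: the chain rule at the fixed point for $C'_f(1) = \CC'(1)^D$, a parity argument on the power-series coefficients to handle $\CC(-1) = 1$ (the even case, giving $C'_f(-1) = -C'_f(1)$), and the factorization $C_f = C_g \circ \CC$ with $F_g \rightarrow 0$ and Proposition \ref{prop:degen-deriv-bound} for $\CC(-1) \in (-1,1)$. The only cosmetic deviations are that you establish $\CC'(1) \leqslant 1$ and rule out $\CC(-1) = -1$ by self-contained local/parity arguments rather than citing Corollary \ref{cor::fix_points_of_f}, and you treat the edge case $\CC(-1) = 0$ by iterating one extra step instead of invoking Remark \ref{rem:degen-deriv-bound}.
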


\begin{remark}
  Note that the claim made in Proposition \ref{prop:degen-C-map-deriv} does not hold for more general types of networks. For example, if we have a sequence of networks $(f_n)_{n = 1}^{\infty}$ such that $C_{f_n} (c) = 1 - 1 / n + c^{n^2} / n$, then for all $c \in [- 1, 1]$ we have $C_{f_n} (c) \rightarrow 1$, $F_{f_n} (c) \rightarrow 0$ and $C'_{f_n} (1) = n \rightarrow \infty$ as $n \rightarrow \infty$.
\end{remark}

\begin{remark}
  For RELU combined layers we have $\mathcal{C}' (1) = 1$ (which follows from Equation \ref{eqn:C-map-RELU} by taking the derivative and letting $c \rightarrow 1$), and thus the bound $\mathcal{C}' (1) \leqslant 1$ in Proposition \ref{prop:degen-C-map-deriv} is tight. 
\end{remark}

\begin{proposition}
  \label{prop:degen-C-map-deriv-2}Suppose $f$ is a subnetwork. For all $0 < \epsilon < 1$ we have
  \[ C_f' (1) \hspace{0.8em} \geqslant \hspace{0.8em} \frac{1 - C_f (0) - F_f (1 - \epsilon)}{\epsilon} . \]
\end{proposition}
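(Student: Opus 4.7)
The plan is to exploit the convexity of $C_f$ on $[0,1]$, which follows from $C_f$ being a positive definite function (Section \ref{sec:pd-functions}). For a convex differentiable function on an interval, the derivative at the right endpoint dominates the slope of every chord ending at that endpoint. Applying this to the chord from $(1-\epsilon, C_f(1-\epsilon))$ to $(1, C_f(1))$ and using $C_f(1)=1$ (established in Section \ref{sec:uniform-q-consequences}) yields
\[
C_f'(1) \;\geq\; \frac{1 - C_f(1-\epsilon)}{\epsilon}.
\]

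Next I would rewrite $C_f(1-\epsilon)$ in terms of $F_f(1-\epsilon)$. By the definition $F_f(c) = (C_f(|c|) - C_f(0))/|c|$, we have
\[
C_f(1-\epsilon) \;=\; C_f(0) + (1-\epsilon)\, F_f(1-\epsilon),
\]
so substituting into the chord bound gives
\[
C_f'(1) \;\geq\; \frac{1 - C_f(0) - (1-\epsilon)\, F_f(1-\epsilon)}{\epsilon}.
\]

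Finally, to recover the cleaner form stated in the proposition, I would drop the factor $(1-\epsilon)$. This is legitimate because $F_f(1-\epsilon) \geq 0$ (since $C_f$ is non-decreasing on $[0,1]$, so $C_f(|c|) \geq C_f(0)$), and $0 < 1-\epsilon < 1$, hence $(1-\epsilon)\,F_f(1-\epsilon) \leq F_f(1-\epsilon)$. Weakening the numerator in this way gives the desired inequality
\[
C_f'(1) \;\geq\; \frac{1 - C_f(0) - F_f(1-\epsilon)}{\epsilon}.
\]

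There is essentially no hard step here: everything rests on the positive-definiteness of $C_f$, the normalization $C_f(1)=1$, and the algebraic identity for $C_f(1-\epsilon)$ in terms of $F_f(1-\epsilon)$. The only subtlety worth being explicit about in the write-up is why dropping the $(1-\epsilon)$ factor goes in the correct direction, which is handled by the non-negativity of $F_f$.
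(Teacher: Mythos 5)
Your proof is correct and is essentially identical to the paper's own argument in Appendix \ref{app:degen-C-map-deriv-2-proof}: both use the convexity chord bound $C_f'(1) \geqslant (1 - C_f(1-\epsilon))/\epsilon$, the identity $C_f(1-\epsilon) = C_f(0) + (1-\epsilon)F_f(1-\epsilon)$ from the definition of $F_f$, and the non-negativity of $F_f(1-\epsilon)$ to discard the factor $(1-\epsilon)$. The only difference is the order of the steps (the paper bounds $C_f(1-\epsilon)$ before invoking convexity), which is immaterial.
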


By taking a small value for $\epsilon$, Proposition \ref{prop:degen-C-map-deriv-2} tells us that for degenerate C maps with $\fixpointofcmap < 1$ (so that $C_f (0) \approx \fixpointofcmap < 1$), $C_f' (1)$ will be large provided that the flatness measure $F_f (1 - \epsilon)$ is small. See Section \ref{sec:erf-network-degen} for an example of degenerate C map where $C_f' (1)$ is indeed very large.

\section{C map behavior in linear networks and the problem of being ``too linear"}\label{sec:linear-networks}

\subsection{Linear networks have identity C maps and are easy to train}

In Section \ref{sec:Cmaps_trainability} we saw that deep nonlinear networks can easily have degenerate C maps, which makes them very hard to train with gradient-based methods. One might wonder if this pathology is reserved to nonlinear networks, or if deep {\tmem{linear}} networks\footnote{Here, a linear network is defined as one whose activation functions are a constant multiple of the identity function.} also suffer from it.

Given our assumption that the initial biases are zero, it turns out that the answer is no. The local C map for a combined layer with a non-zero linear activation function is equal to the identity. Because identity functions are preserved under composition and weighted averages, it thus follows that the extended C map for any subnetwork is also the identity function, and is therefore well-behaved.

Does this mean that linear networks are easy to train? Well, as discussed in Section \ref{sec:suff-cond?}, more hypotheses are required to say anything about trainability. But if one adopts the standard parameterization, very deep linear networks are surprisingly easy to train both in theory and practice using standard techniques \citep{saxe2014exact}, provided that they are initialized using orthogonal weights. Linear networks thus represent an interesting example of where our necessary condition for trainability (i.e.~having a well-behaved C map) also appears to be sufficient.

Despite how easy they are to train, we obviously can't use linear networks in practice, as their expressivity is fundamentally limited. But these observations do suggest a possible strategy to address problem of degenerate C maps in \tmtextit{nonlinear} networks: we can transform the network's activation functions so that they appear ``sufficiently linear" at initialization time. However, as we will see in the next subsections, going overboard on this idea will lead to a special type of untrainability that exists only in networks with very well-behaved C maps.

\subsection{The problem of being ``too linear''}\label{sec:too-linear}

As suggested in the previous subsection, one way to achieve a well-behaved C map would be to transform the activation functions in a network to resemble the identity function (or a multiple thereof). We can do this for the RELU activation function (defined by $\tmop{RELU} (u) = \max (0, u)$) by adding a large constant $a$ to its input and subtracting the same constant from its output. In other words, we set
\[ \phi (u) = \tmop{RELU} (u + a) - a = \max (0, u + a) - a, \]
which is equivalent to the identity function for all inputs $u \geqslant - a$. If $a$ is extremely large, say $10^{100}$, this means that all practically sized inputs $\phi$ will satisfy this constraint, and thus the function can be treated as the identity for all practical purposes. Moreover, the expectation formulas for local Q and C maps given in Section \ref{sec:QC_map_combined} will produce practically identical results to the identity function case, since the probability mass associated with inputs $u < - a$ to $\phi$ will be vanishingly small. (See Section \ref{sec:ident-Cmap-lin-act} for a formal justification of this.) Thus, the C map for networks consisting of the composition of many combined layers with these transformed RELUs will be equal to the identity function up to a vanishingly small approximation error.

Meanwhile, because nonlinear layers are always preceded and followed by linear layers with learnable biases, the network can in principle learn to undo these transformations and thus simulate a standard deep RELU network. The model class is thus {\tmem{technically}} no different from a standard RELU network, assuming a perfect optimizer. But despite this, these transformed networks will never actually learn nonlinear behavior via standard gradient-based methods in a reasonable amount of time, and so their hypothetical expressive power will fail to be properly utilized. Indeed, unless the optimizer manages to change the parameters by a factor on the order of $10^{100}$, the network will behave nearly identically to the corresponding linear network (which computes only affine functions of its input) throughout the entire course of optimization, both in terms of its function values and its gradient/curvature estimates.

The basic problem here is that the transformed network has become ``too linear'' in the sense that we require a very large change in its parameters to see any significant nonlinear behavior. While such a network may be readily trainable within the class of linear functions (as linear networks are), it will be severely limited compared to a standard RELU network in terms of its effective expressive power under gradient descent optimization. Thus, to achieve trainable networks that are also expressive, one must avoid this failure mode in addition to requiring a well-behaved C map.

\subsection{How to avoid networks that are ``too linear''}\label{sec:avoid-too-linear}

If a subnetwork $f$ has an C map which is very close to the identity function, this will usually mean the local C maps of its nonlinear layers are also very close to the identity function (and perhaps much more so). By Section \ref{sec:ident-Cmap-lin-act}, this implies that the activation functions must therefore be very close to linear, so that the network is at risk of being ``too linear'' (as defined above). We may thus hope to prevent this by insisting that the network's C map isn't too close to the identity function. However, this alone won't be good enough, as shown in the following example.

Consider modifying the transformed RELU activations in the previous example by adding 1 to their output and dividing the result by $\sqrt{2}$, so that they compute $\phi (x) = (x + 1) / \sqrt{2}$ over their high-probability range of inputs (which is an affine function of $x$). With this change, gradient-based learning will still be effectively restricted to the class of linear networks (which compute affine functions). Meanwhile, a straightforward calculation via Equations \ref{eqn:Q-map} and \ref{eqn:C-map} shows that $Q_f (q) \approx q$ and $C_f (c) \approx (c + 1) / 2$ for nonlinear/combined layers $f$ with activation function $\phi$, and so $C_f (c)$ differs significantly from the identity.

Fortunately, by leveraging the previous analysis, there is a simple way we can use C map properties to provably avoid networks that are ``too linear''. From Section \ref{sec:Cmap-ratio-affine-act}, the degree of ``non-affineness'' of $\phi$, denoted $\tmop{na} (\phi)$, is given by
\[ \tmop{na} (\phi)^2 = 1 - \frac{C'_f (0)}{C'_f (1)} . \]
As we have $C'_f (0) \leqslant 1$ by Remark \ref{rem:simple-Cmap-bounds}, it thus follows that
\[ \tmop{na} (\phi)^2 \geqslant 1 - \frac{1}{C'_f (1)} . \]
This shows that we can avoid activation functions that are too affine (which is sufficient to avoid networks that are ``too linear'') by requiring that $C'_f (1)$ be sufficiently greater than 1 for every nonlinear layer $f$.

\section{Mitigating kernel approximation error}\label{sec:error-and-Qmaps}

Our analysis of the initialization-time behavior of deep neural networks via Q/C maps relies on the assumption that the APKF approximation, when applied over multiple layers in a nested fashion, is a reasonable one to make. If this isn't true, then Q/C maps will cease to describe the network's PKF at initialization time, and our attempts to make the network trainable by controlling their properties will be doomed to failure.

In Section \ref{sec:how-accurate-approx} we discussed the error bounds from \citet{daniely2016toward} in order to help justify our use of nested APKF approximations in deep networks. These bounds make high-probability statements about the error of initialization-time kernel approximations of neural networks, and give a maximum value which shrinks with the square root of the width and grows exponentially with depth. While they represent the best rigorous account of neural network kernel approximations, they are still too pessimistic to be useful in practical settings, either for predicting the approximation error or controlling it.

In this section we will propose a heuristic way of looking at how approximation error originates and evolves across multiple layers which we have found to be quite predictive in practice, and which implies certain error mitigation strategies that we can incorporate into DKS.

\subsection{Minimizing propagation of errors by controlling Q map derivatives}

Suppose $f$ is a subnetwork consisting of a composition of many combined layers. A perturbation to the input q value to $Q_f$, representing the error from approximations made at previous layers, will manifest as a perturbation of $Q_f$'s output. Up to first order, the size of the latter will be approximated by that of the former, multiplied by $Q_f$'s derivative.

As discussed in Section \ref{sec:derivative_extended_maps}, the derivative of $Q_f$ is equal to the product of the derivatives for its constituent local Q maps (i.e.~those for each of $f$'s combined layers), and thus will grow or shrink in an exponential fashion as a function of the depth. Thus, it can easily be the case that deep networks will have very large Q map derivatives, which suggests a very large amplification of error through successive layers network. One way we can avoid this issue is by requiring that derivative of each local Q map, when evaluated at its expected input, is less than or equal to 1.

A closely related perspective, which applies to networks consisting of a sequence of combined layers each with local Q map given by $\mathcal{Q}$, is that a fixed point $q^*$ of $\mathcal{Q}$ will be attractive if $\mathcal{Q}'(q^*) < 0$. Attractive fixed points have the property that $\mathcal{Q}^D(q)$ will converge to $q^*$ as $D \to \infty$ for all values of $q$ sufficiently close to $q^*$, and are thus naturally ``robust" to reasonably sized errors in $q$.

See Appendix \ref{app:Q-map-error} for empirical evidence of the relationship between Q map derivatives and kernel approximation error.

%\james{Consider adding some diagrams illustrating how errors looks for example net when we control the Q map slope for $< 1$, $> 1$ and $= 1$. Guillaume interested in adding this?}

\subsection{Minimizing errors using large width and SUO-distributed weights}

In addition to controlling the propagation of errors across layers, another way to mitigate error is to increase quality of each of the layer-wise APKF approximations from which the errors first originate. In the case of Gaussian-distributed weights, APKFs use analytic expectations to approximate finite averages (over unit outputs), where each element being averaged is an iid unbiased estimator of the expectation. Increasing the width/channel dimension $m$ will thus reduce the variance of the overall estimator, and thus reduce error (as predicted by Theorems \ref{thm:error-bound-daniely} and \ref{thm:SUO-kernel-approx}).

While not originally conceptualized as such, the use of SUO-distributed weights (as defined in Section \ref{sec:param_dist}) provides another way to mitigate the kernel approximation error that is essentially free. When using the SUO distribution, the weights are no longer statistically independent or Gaussian distributed, and so the unit outputs being averaged across are neither iid nor unbiased estimators of the kernel approximation formula. Nonetheless, their average is a {\tmem{consistent}} (but biased) estimator, whose variance goes to zero as $m$ increases (as is established in Theorem \ref{thm:SUO-kernel-approx}).

It is well-known that biased estimators can sometimes have lower variance than unbiased ones, and this does seem to be the case here. Recall the discussion from the end of Section \ref{sec:random-ortho-discussion}, where it was observed that the distribution of an output vector from a linear layer is identical for the Gaussian and SUO-distributed cases, except that the former introduces a random multiplicative perturbation (with mean 1 and variance $1 / m$) on the vector's dimension-normalized squared length (which is an estimator of the associated q value). While this perturbation is required for the implied estimator (after the nonlinearity) to be unbiased, it leads to additional variance. We conjecture that this extra variance is more significant than the bias, and thus SUO-distributed weights yield an overall lower approximation error for APKFs.

Note that the upper bounds given in Theorems \ref{thm:error-bound-daniely} and Theorem \ref{thm:SUO-kernel-approx} do not reflect these intuitions, as they suggest an overall lower approximation error for Gaussian-distributed weights. However, because these are only upper bounds without matching lower bounds, and are likely quite loose/pessimistic, one cannot draw any conclusions. Indeed, we conjecture that tighter bounds could be obtained for SUO-distributed weights given a more careful analysis than the one in \citet{martens2021validity}.

\part{Specification and derivation of Deep Kernel Shaping}
\label{part:spec_and_deriv_of_DKS}

\section{Conditions on Q/C maps that we will enforce}\label{sec:QCmap-conditions}

Having established a detailed understanding of Q and C maps, and how their properties relate to network trainability, we are now in a position to state and justify the specific conditions which we will attempt to enforce with DKS. Our particular mechanism for doing this, which involves certain transformations of the network's activation functions, will be described in later sections, and isn't important for the present discussion.

In the follow series of subsections we describe each of the conditions. Note that because we want the entire network's capacity to be utilized, and not just the subnetwork corresponding to the most direct input-output path, we will enforce these conditions for {\tmem{all}} subnetworks of the network.

\subsection{$Q_f (1) = 1$ for all subnetworks $f$}

The uniform q condition ensures that the q values for a given layer are independent of location (in the feature map) and the network's input. While this alone is a sufficient hypothesis to derive an approach similar to DKS, we will go a step further and standardize to a q value of 1, which will allow us to reuse local Q/C map computations across the entire network. Note that given our use of PLN (which ensures initial q values are 1), this is equivalent to the condition that $Q_f (1) = 1$ for all subnetworks $f$.

The choice to standardize to a q value of $1$ (as opposed to some other positive constant) is somewhat arbitrary and not particularly important. 2 would have worked equally well, for example. However, the choice of 1 does lead to somewhat simpler expressions for the local Q/C map and their derivatives, and corresponds to an output scale which is in the range of ``interesting behavior'' for most typical loss functions (such as the commonly used softmax cross-entropy error).% (This wouldn't be true, say, for $10^{10}$ or $10^{- 10}$.)

\subsection{$Q'_f (1) = 1$ for all subnetworks $f$}\label{sec:qslope-condition}

As discussed in Section \ref{sec:error-and-Qmaps}, the size of the error in our kernel approximations can be roughly predicted from the size of the derivatives of the Q maps. Thus, in order for Q/C maps to be an accurate description of the network's true kernel function, we must keep the size of these derivatives under control. To this end, we will require that $Q'_f (1) = 1$ for all subnetworks $f$.

We look at the derivative at $q = 1$ in particular since this is the input q value we expect in the absence of error, due to the condition $Q_f (1) = 1$. In principle, we also care about the value of $Q_f' (q)$ for $q$'s close to 1, which would give us a more complete picture of the approximation error (and perhaps let us to establish rigorous bounds on it). Unfortunately, we don't yet have a powerful theory for the global properties Q maps like we do for C maps, and so the best we can do is to look at their properties at particular points. That being said, because we know that Q maps are smooth, it's likely that $Q_f' (q)$ will be reasonably well approximated by $Q_f' (1)$ for values of $q$ close to 1.

The choice to make $Q_f' (1)$ equal 1, which corresponds to the error neither growing nor shrinking, is somewhat arbitrary, and other choices for this value are possible. For example, we could minimize $Q'_f (1)$ instead of setting it to 1, thus suppressing the growth of errors as much as possible. Minimizing $Q'_f (1)$ did seem to work in our experiments, however we found that for certain activation functions (such as $\tanh$) it resulted in slower optimization compared to using $Q'_f (1) = 1$. (See Appendix \ref{app:min-qslope-experiment} for these experiments.)

We are currently not sure why setting $Q'_f (1) = 1$ works better than minimizing it for some activation functions. One possible explanation is that $Q'_f (1) = 1$ allows $f$ to transmit information about the overall scale of its input vector as a roughly linear function of $q$. Meanwhile, networks where $Q'_f (1)$ is minimized will tend to ``squash'' the range around $q = 1$, making it harder to recover the original input $q$ value from the network's output. One can perhaps draw a rough analogy between this and the preservation of ``geometric information" by C maps as discussed in Section \ref{sec:Cmaps_trainability}.

\subsection{$C_f (0) = 0$ for all subnetworks $f$}

In order to apply the analysis of Section \ref{sec:Cmap-analysis} we require that $C_f (0) = 0$ for all subnetworks $f$. While this might seem like an overly stringent requirement, it is worth noting that arbitrary deviation of $C_f$ from the identity function is possible if we don't place any restrictions on the value of $C_f (0)$. This is the case even when $C_f' (1) = 1$ is enforced, as can be seen in Section \ref{sec:relu-network-degen} for deep RELU networks.

\subsection{$C'_f (1) \leq \zeta$ for all subnetworks $f$}\label{sec:slope-ub-cond}

As discussed in Section \ref{sec:Cmaps_trainability}, degenerate C maps correspond to networks that are difficult to train with gradient-based methods. Avoiding this degeneration is the central aim of DKS. As argued in Section \ref{sec:Cmap-analysis}, we can do this for a given $C_f$ by bounding its maximum deviation from identity function (which is the canonical non-degenerate C map).

Given the condition $C_f (0) = 0$, Theorem \ref{thm:deviation-bound} says that this deviation is roughly equal to $C_f' (1) - 1$. Thus, we will enforce the condition $C'_f (1) \leq \zeta$ for all subnetworks $f$, where $\zeta > 1$ is a hyper-parameter which we will sometimes refer to as the \tmtextit{\tmtextbf{global slope bound}}. (Note that $C'_f (1) \geqslant 1$ is true automatically as consequence of Theorem \ref{thm:deviation-bound}.) This condition can be thought of as imposing a limit on how ``non-linear'' any given subnetwork is allowed to look. %, in order to prevent it from becoming too hard to train.

\subsection{$\min_f [C'_f (1)]$ is maximized}

Even assuming that our kernel approximations are exact, a well-behaved C map is not a \tmtextit{sufficient} condition for a nonlinear network to be trainable. As discussed in Section \ref{sec:too-linear}, one way such a network can fail to be trainable is if it's too far away in parameter-space from a significantly nonlinear function, or in other words is ``too linear''. In such cases, a gradient-based optimizer will struggle to utilize the full expressive power of the network.

For neural networks with standard parameterizations, this issue will manifest as nonlinear layers with activation functions that behave too much like affine functions. As argued in Section \ref{sec:avoid-too-linear}, this can be avoided by requiring that $C'_f (1)$ be sufficiently larger than 1 for such layers $f$. Thus, it makes sense to balance the condition in Subsection \ref{sec:slope-ub-cond} with one requiring that $\min_f [C'_f (1)]$ is maximized, where the minimum is taken over all nonlinear layers $f$ in the network.

\subsection{Choosing the global slope bound $\zeta$}\label{sec:choosing-zeta}

Given the above two conditions, the global slope bound $\zeta$ corresponds to the maximum value of $C'_f (1)$ over all subnetworks (and must be $\geqslant 1$). Heuristically, the degree to which $\zeta$ is greater than 1 tells us how nonlinear the network's functional mapping is at initialization time. If $\zeta$ is too large, then the C map for the network (or one of its subnetworks) will experience the ``exploding'' type of degeneration discussed in Section \ref{sec:exploding-case}, where c values are squashed towards $c_0 = 0$. If it's too close to 1, then the C map will be very close to the identity, and we run the risk of making the network ``too linear'' (as per Section \ref{sec:too-linear}).

In our experiments on 100 layer networks we tried only a few values of $\zeta$ before settling on $\zeta = 1.5$, and in general we found that DKS is reasonably robust to significant variations in $\zeta$ (or more precisely, $\log(\zeta-1)$). For example, $\zeta = 1.01$ and $\zeta = 100$ both produced depth 100 networks that trained at competitive speeds, being only somewhat outperformed by networks that used $\zeta = 1.5$. More extreme choices like $\zeta = 1.001$ and $\zeta = 10000$ meanwhile produced significantly slower training. See Appendix \ref{app:sweep-zeta-1} for these results. For depths 200 or greater we found that it was sometimes necessary to use a value of $\zeta$ less than 1.5 (such as 1.1) to achieve stable training. We speculate that this is because the kernel approximations underlying our Q/C maps tend to break down at very high depths (given our modest layer widths), but that this can be mitigated by making the network ``more linear".

An interesting systematic trend we observed is that larger $\zeta$ values (up to a certain limit) tended to produce slightly faster optimization, whereas smaller ones led to slightly improved generalization, possibly because this made the inductive bias of the model (plus optimizer) favor a more linear solution. Relevant experimental data is presented in Appendix \ref{app:sweep-zeta-2}.

\section{From global map conditions to local ones}\label{sec:global-to-local}

In this section we will describe how the ``global" map conditions given in the previous section can be achieved by enforcing an equivalent set of conditions on the local Q/C maps of the network. The particular mechanism we will use to enforce these ``local" conditions will be discussed later in Section \ref{sec:activation-transform}.

\subsection{Slope polynomials and maximal slope functions}\label{sec:slope-poly-and-maximal}

Before we can write down the local map conditions we will define a special construction that allows us to relate the slope at 1 of extended C maps to the slope at 1 of local C maps.

Let $f$ be an arbitrary subnetwork. As discussed in Section \ref{sec:derivative_extended_maps}, we may apply automatic differentiation to compute $C'_f (1)$ from the derivatives of the local C maps of $f$'s constituent layers, where composition corresponds to multiplication and weighted averages (due to concatenations or sum operations) correspond to weighted averages (with the same weights). Since c values of 1 always map to 1 (as argued in Section \ref{sec:uniform-q-consequences}), the expression for the derivative will be a polynomial function of the local C map derivatives at $c = 1$.

If we further assume that there is a constant $\psi$ such that $C'_g (1) = \psi$ for each nonlinear layer $g$ in $f$, then we can express $C'_f (1)$ as a polynomial function of $\psi$, as the local maps for all other layers are the identity function. We will call this function the \tmtextbf{\tmtextit{slope polynomial}} of $f$ and denote it by $p_f (\psi)$. Note that since $C'_g (1) \geqslant 1$ whenever $C_g (0) = 0$ (which we are enforcing), we may thus assume $\psi \geqslant 1$ without loss of generality.

Because slope polynomials can be constructed from products and weighted averages of lower degree slope polynomials, and the value of 1 is preserved under multiplication and weighted averages, it follows that $p_f (1) = 1$ for any subnetwork $f$. And since $\psi \mapsto \psi$ and $\psi \mapsto 1$ are positive definition functions of $\psi$ (trivially), and positive definite functions are closed under multiplication and non-negative weighted averaging (as discussed in Section \ref{sec:pd-functions}), it also follows that slope polynomials are positive definition functions, just like C maps. They are thus non-decreasing for $\psi \geqslant 0$, and indeed strictly increasing provided that the subnetwork contains a nonlinear layer. From this it also follows that $p_f (\psi) \geqslant 1$ for all $\psi \geqslant 1$.

As we will be interested in computing the most extreme slope over a network, we will define a related function called the \tmtextit{\tmtextbf{maximal slope function}}, which is given by $\mu (\psi) = \max_f [p_f (\psi)]$, where the maximum is taken over all subnetworks $f$ of the entire network. Because subnetworks with no nonlinear layers won't influence the maximum, and the maximum of a set of strictly increasing functions is strictly increasing, we have that the maximal slope function is strictly increasing provided that the network contains at least one nonlinear layer. And because it is the maximum over a set of continuous functions, the maximal slope functions is also continuous, and therefore invertible, which is a fact we will make use of later.

\subsection{Computing maximal slope functions}\label{sec:comp-max-slope-func}

The number of distinct subnetworks in a network can be very large, and so computing the maximal slope function naively from the definition can be laborious. Fortunately, we can eliminate most of these subnetworks from consideration immediately.

Observe that if a subnetwork is formed by feeding the output of one subnetwork into the input of another, i.e.~$h = f \circ g$, then we have $p_h (\psi) = p_f (\psi) p_g (\psi)$ by the chain rule. And because $p_f (\psi) \geqslant 1$ and $p_g (\psi) \geqslant 1$ for all $\psi \geqslant 1$, it thus follows that $p_h (\psi) \geqslant p_f (\psi)$ and $p_h (\psi) \geqslant p_g (\psi)$ for $\psi \geqslant 1$. Therefore, any subnetwork that is part of another subnetwork in this particular sense can be ignored when computing the maximum. Moreover, without assuming any relationship between $f$, $g$, and $h$, if $p_f (\psi)$ is a factor of $p_h (\psi)$, then $p_h (\psi) / p_f (\psi)$ is also a valid slope polynomial, and therefore $p_h (\psi) \geqslant p_f (\psi)$ for all $\psi \geqslant 1$, thus allowing us to ignore $f$ in the maximum.

Note this does {\tmem{not}} therefore imply that the maximal slope function is always the slope polynomial of the entire network\footnote{For the entire network to even have a slope polynomial requires that it be a valid subnetwork of itself, which is only the case for networks with a singular input and output.}, as not every subnetwork can be related to the entire network in this way. For example, if we have a very deep nonlinear network with $D$ nonlinear layers and a skip connection from the initial input to the final output, so that the final output is $1 / \sqrt{2}$ \ times the initial input plus $1 / \sqrt{2}$ \ times the output of the nonlinear subnetwork, then the slope polynomial for the nonlinear subnetwork is $\psi^D$, while the slope polynomial for the entire network is
\[ \left( \frac{1}{\sqrt{2}} \right)^2 1 + \left( \frac{1}{\sqrt{2}} \right)^2 \psi^D = \frac{1}{2}  (1 + \psi^D), \]
which is strictly smaller than $\psi^D$ for all $\psi \geqslant 1$. (This formula can be derived by following the recipe given in Section \ref{sec:slope-recipes}.) For this network, the maximal slope function is in fact $\psi^D$.

An even more interesting example is the same network, but with additional nonlinear layer added to the end, after the skip connection. The maximal slope function of this network is $\max \{ \psi^D, \psi (1 + \psi^D) / 2 \}$, which cannot be reduced to a polynomial as there are settings of $\psi$ for which either input to the $\max$ is larger.

\subsection{The equivalent local map conditions}\label{sec:local_map_conditions_list}

Having defined the maximal slope function, we are now in a position to derive the \tmtextit{\tmtextbf{equivalent local map conditions}} to the global ones given in Section \ref{sec:QCmap-conditions}.

First, observe that local Q/C maps for affine layers are identity functions, and can essentially be ignored when computing extended Q/C maps. What remains are nonlinear layers and weighted sum operations, and so we will concentrate on these.

If we have that $Q_f (1) = 1$ for all nonlinear layers $f$, then the analogous property automatically holds for all subnetworks that don't contain weighted sums or constant scalar multiplications, as it is clearly preserved under composition and weighted averages (arising due to concatenations). The same reasoning also applies to the condition $C_f (0) = 0$.

While weighted sum operations are constructed from concatenation operations (which are accounted for in the above argument), the construction also introduces scalar multipliers which can affect the q values. To account for this, we must ensure that the output q value of each weighted sum operation is 1. By Equation \ref{eqn:sum-q-formula}, this is equivalent to requiring that the squares of the weights sum to 1, assuming that the inputs to the sum have q values of 1. We will call weighted sums satisfying this condition \tmtextbf{\tmtextit{``normalized sums''}}. Given that a weighted sum is normalized, and that its input q values are 1, it additionally follows (by Equations \ref{eqn:sum-q-formula} and \ref{eqn:sum-c-formula}) that the corresponding output q and c values will be weighted averages of the input q and c values, with weights given by the {\tmem{squares}} of the weights of the sum itself.

To finally achieve $Q_f (1) = 1$ for all subnetworks $f$ we must remove any constant scalar multiplication operations from the network, except those that are part of the above normalized sums. With this done, a simple inductive argument then establishes that $Q_f (1) = 1$ for all subnetworks $f$.

Given constant q values of 1, and weighted sums that are all normalized, we may compute $Q'_g (1)$ using the same slope polynomials used to compute $C'_g (1)$, provided that $Q'_f (1)$ is the same for all nonlinear layers $f$. Thus if we impose the condition $Q'_f (1) = 1$ for all nonlinear layers $f$, it will follow that $Q'_g (1) = p_g (1) = 1$ for all subnetworks $g$.

Finally, maximizing $\min_f [C'_f (1)]$, while requiring that $C'_f (1) \leq \zeta$ for all subnetworks $f$, is equivalent to setting $C'_f (1) = \psi$ for all nonlinear layers $f$ (since a single nonlinear layer is a subnetwork), where $\psi = \mu^{- 1} (\zeta)$ and $\mu^{- 1}$ is the inverse of the maximal slope function for the network (which exists as long as the network has at least one nonlinear layer).

Summarizing, the equivalent local map conditions are:
\begin{enumeratenumeric}
  \item $Q_f (1) = 1$,
  
  \item $Q'_f (1) = 1$,
  
  \item $C_f (0) = 0$, and
  
  \item $C'_f (1) = \mu^{- 1} (\zeta)$,
\end{enumeratenumeric}
for all nonlinear layers $f$, with the additional requirement that all weighted sum operations in the network are normalized (i.e.~that the squares of their weights sum to 1).

\section{Activation function transformations}\label{sec:activation-transform}

In addition to PLN and the use of Delta initializations, our main mechanism of control over the initialization-time behavior of neural networks will be to apply transformations to their activation functions. In particular, we will apply constant scalar multiplications and shifts to both their inputs and outputs. For most typical activation functions this will give us sufficient control over a combined/nonlinear layer's local Q/C maps to enforce the ``equivalent local map conditions" from the previous section.

\subsection{Basic definition}

Suppose $\phi$ is some element-wise activation function in the network. We propose to make the following replacement:
\[ \phi (u) \quad \longrightarrow \quad \hat{\phi} (u) \equiv \gamma (\phi (\alpha u + \beta) + \delta) \]
where $\alpha$, $\beta$, $\gamma$, and $\delta$ are static scalar constants (that we do not train). Note that these constants are the same for all channels and feature map locations within a given layer, but can differ between layers.

\subsection{Equivalent parameters and preservation of the model class}

Provided that each nonlinear layer is both preceded by and followed by an affine layer (which is true for most architectures), this way of transforming the activation functions has the property that it \tmtextit{preserves the model class} of the original network. By this we mean that for any network with transformed activation functions, there exists an equivalent network with untransformed activations that has precisely the same functional behavior. We will call the filter weights and biases of this second network the \tmtextit{\tmtextbf{equivalent parameters}}.

Computing the equivalent parameters is relatively straightforward. Because each nonlinear layer is both preceded by and followed by an affine layer, we can essentially absorb the input/output scale and shift operations into these layers. To be more explicit, in the case of a standard fully-connected layer with weight matrix $W$ and bias vector $b$, $W (\gamma x + \delta \mathbbm{1}) + b$ becomes $W' x + b'$ with $W' = \gamma W$ and $b' = \delta W \mathbbm{1} + b$ (where $\mathbbm{1}$ denotes the vector of ones). Similarly, $\alpha (Wx + b) + \beta \mathbbm{1}$ becomes $W' x + b'$ with $W' = \alpha W$ and $b' = b + \beta \mathbbm{1}$. The construction for convolutional layers is similar, and relies on the fact that the scaling and shifting constants are the same for each location (just as the filter weights and biases are).

\subsection{Our method for transforming activation functions viewed as an initialization scheme} \label{sec:method-as-pure-init}

The existence of equivalent parameters, and their relatively straightforward computation, makes it possible to turn our method for transforming activation functions into an initialization scheme for the network's parameters. One simply computes the constants needed to appropriately transform the activation functions, and then uses them to instead compute the equivalent parameters, starting from a network initialized as per Section \ref{sec:param_dist}. If we view this process as a sampling procedure for the network's parameters, then it corresponds to a distribution with non-trivial correlations between the weights and biases of each affine layer.

Note that while a transformed network and an untransformed network (with equivalent parameters) compute the same function, they correspond to \tmtextit{different parameterizations} of the same model class, and thus may give rise to different optimization dynamics. Stochastic gradient descent for example, is not invariant to reparameterizations of this type, and so we would expect it to behave differently on either network. The K-FAC optimizer \citep{martens2015optimizing} on the other hand is approximately invariant reparameterizations involving affine transformations of layer inputs and outputs \citep{martens2015optimizing, grosse2016kronecker, luk2018coordinate}.

Our experimental results indicate that the transformed networks are easier to optimize with stochastic gradient descent than networks with equivalent parameters. Meanwhile, as predicted by the theory, the optimization performance with K-FAC is roughly the same for both versions. See Appendix \ref{app:equiv-params-experiments} for the relevant results.

\subsection{Achieving local map conditions with activation function transformations}\label{sec:achieving-local-conds}

Suppose $f$ is a nonlinear layer with activation function $\phi$ which we propose to replace by $\hat{\phi} (u) \equiv \gamma (\phi (\alpha u + \beta) + \delta)$. The four equivalent local map conditions (from Section \ref{sec:local_map_conditions_list}) give rise to a system of four nonlinear equations, with the four scalar constants ($\alpha$, $\beta$, $\delta$, and $\gamma$) as its unknowns. In this subsection we will show how to solve for these constants, assuming that a solution exists.

By Equation \ref{eqn:C-at-0}, the condition $C_f (0) = 0$ holds if and only if $\mathbb{E}_{x \sim \mathcal{N} (0, 1)} [ \hat{\phi} (x)] = 0$. Noting that
\[ \mathbb{E}_{x \sim \mathcal{N} (0, 1)} [ \hat{\phi} (x)] = \gamma (\mathbb{E}_{x \sim \mathcal{N} (0, 1)} [\phi (\alpha x + \beta)] + \delta), \]
this becomes equivalent to
\[ \delta = -\mathbb{E}_{x \sim \mathcal{N} (0, 1)} [\phi (\alpha x + \beta)] . \]
Thus $\delta$ is fully determined by $\alpha$ and $\beta$, which eliminates a single degree of freedom.

From Equation \ref{eqn:Q-map}, and basic properties of expectations, we have that
\begin{equation}
  Q_f (1) =\mathbb{E}_{x \sim \mathcal{N} (0, 1)} [ \hat{\phi} (x)^2] = \gamma^2 \mathbb{E}_{x \sim \mathcal{N} (0, 1)} [ (\phi (\alpha x + \beta) + \delta)^2] = \gamma^2 \tmop{Var}_{x \sim \mathcal{N} (0, 1)}  [\phi (\alpha x + \beta) ] . \label{eqn:special-Q1-formula}
\end{equation}
Thus the condition $Q_f (1) = 1$ is equivalent to
\[ \gamma = (\mathbb{E}_{x \sim \mathcal{N} (0, 1)} [ (\phi (\alpha x + \beta) + \delta)^2])^{- \frac{1}{2}} \: = \: \tmop{Var}_{x \sim \mathcal{N} (0, 1)} [\phi (\alpha x + \beta)]^{- \frac{1}{2}} . \]
This fully determines the value of $\gamma$ in terms of the other constants, thus eliminating another degree of freedom.

Given the above solutions for $\gamma$ and $\delta$, which we will treat as functions $\gamma(\alpha, \beta)$ and $\delta(\alpha, \beta)$ of $\alpha$ and $\beta$, it remains to solve for the values of $\alpha$ and $\beta$ which satisfy the final two conditions $Q'_f (1) = 1$ and $C'_f (1) = \mu^{- 1} (\zeta)$. From the fact that $Q_f (1) = 1$ (for our choice of $\gamma$), these two conditions can be written as:
\begin{enumerateroman}
  \item $\mathbb{E}_{x \sim \mathcal{N} (0, 1)} [ \hat{\phi} (x)  \hat{\phi}' (x) x] = 1$, and
  
  \item $\mathbb{E}_{x \sim \mathcal{N} (0, 1)} [\hat{\phi}' (x)^2] = \mu^{- 1} (\zeta)$ ,
\end{enumerateroman}
where the dependence on $\alpha$ and $\beta$ is implicit in $\hat{\phi}(x) = \gamma(\alpha, \beta) (\phi (\alpha x + \beta) + \delta(\alpha, \beta))$ and $\hat{\phi}'(x) = \alpha \gamma(\alpha, \beta) \phi' (\alpha x + \beta)$.

We are not aware of any closed-form solution for this two dimensional system. However, because it's only two dimensional, and the expectations required to evaluate it are one dimensional (including those needed to compute $\delta$ and $\gamma$), we can readily solve it using black-box numerical software, assuming a solution exists. And because the system of equations only depends on the functional form of $\phi$ and no other details about $f$, we only need to solve it once for each distinct activation function in the network. Implementation details are given in Section \ref{sec:implementation-details}.

\subsection{When will solutions exist?}\label{sec:when-solutions-exist}

While we found in our experiments that solutions for $\alpha$ and $\beta$ exist for nearly all commonly used nonlinear activation functions, the popular RELU is a notable exception (which we will examine in the next subsection). Thus, it is worth delving deeper into the question of the existence of these solutions.

Noting that $\mu^{- 1} (\zeta)$ will typically be quite close to 1, if we can show that $\lim_{\alpha \rightarrow 0} \: C_f' (1) = 1$, this will suggest that $C_f' (1) = \mu^{- 1} (\zeta)$ is achievable by choosing a sufficiently small value of $\alpha$. Intuitively speaking, shrinking $\alpha$ allows us to effectively narrow the interval of typical inputs to $\phi$, meaning that $\phi$ starts to resemble an affine function over this interval (since differentiable functions are, by definition, closely approximated by their 1st-order Taylor approximations within any sufficiently small neighborhood). As discussed in Section \ref{sec:Cmap-ratio-affine-act}, this means that $C'_f (0) / C'_f (1) \rightarrow 1$ as $\alpha \to 0$, which in turn implies that $C'_f (1) \rightarrow 1$ (as we have by Remark \ref{rem:simple-Cmap-bounds} that $C'_f (0) \leqslant 1 \leqslant C'_f (1)$ when $C_f (0) = 0$).

The following proposition formalizes this intuition, although is proved (in Appendix \ref{app:alpha-exist-proof}) using a different technique.
\begin{proposition}
  \label{prop:alpha-exist}Let $f$ be a nonlinear layer with transformed activation function $\hat{\phi}$ defined as above, with $\delta$ and $\gamma$ chosen as per Section \ref{sec:achieving-local-conds}. If $\phi' (\beta) \neq 0$ then we have
  \[ \lim_{\alpha \rightarrow 0} \: C_f' (1) = 1 . \]
\end{proposition}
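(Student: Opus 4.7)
The plan is to reduce $C_f'(1)$ to an explicit ratio of Gaussian expectations in $\alpha$, then take $\alpha \to 0$ via a Taylor expansion of $\phi$ around $\beta$. The hypothesis $\phi'(\beta) \neq 0$ will appear at the very end as what prevents the limit from being indeterminate.

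First, I would apply Equation \ref{eqn:C-map-gen-derivative} with $i = 1$, $c = 1$, and $q_1 = q_2 = 1$, together with the condition $Q_f(1) = 1$ that is built into our choice of $\gamma$. Since $\Gamma_{\hat{\phi}'}(1,1,1) = \mathbb{E}_{x \sim \mathcal{N}(0,1)}[\hat{\phi}'(x)^2]$, this gives $C_f'(1) = \mathbb{E}_{x \sim \mathcal{N}(0,1)}[\hat{\phi}'(x)^2]$. Substituting $\hat{\phi}'(x) = \alpha \gamma \, \phi'(\alpha x + \beta)$ and the closed form $\gamma^{-2} = \operatorname{Var}_{x \sim \mathcal{N}(0,1)}[\phi(\alpha x + \beta)]$ coming from Equation \ref{eqn:special-Q1-formula}, I obtain
\[
C_f'(1) \;=\; \frac{\alpha^2 \, \mathbb{E}_{x \sim \mathcal{N}(0,1)}[\phi'(\alpha x + \beta)^2]}{\operatorname{Var}_{x \sim \mathcal{N}(0,1)}[\phi(\alpha x + \beta)]}.
\]
This is the single expression whose limit as $\alpha \to 0$ I need to evaluate.

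Next, I would analyze numerator and denominator separately. For the numerator, continuity of $\phi'$ at $\beta$ gives $\phi'(\alpha x + \beta)^2 \to \phi'(\beta)^2$ pointwise in $x$; under a dominated convergence argument (justified by local boundedness of $\phi'$ near $\beta$ together with suitable growth control, which I would verify using the hypotheses assumed on $\phi$ elsewhere in the paper), the expectation converges to $\phi'(\beta)^2$. For the denominator, I would use invariance of variance under additive constants to write $\operatorname{Var}[\phi(\alpha x + \beta)] = \operatorname{Var}[\phi(\alpha x + \beta) - \phi(\beta)]$, and then appeal to the first-order Taylor expansion $\phi(\alpha x + \beta) - \phi(\beta) = \alpha \phi'(\beta) x + o(\alpha)$ at each fixed $x$ to conclude that $(\phi(\alpha x + \beta) - \phi(\beta))/\alpha \to \phi'(\beta) x$ pointwise. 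Again under a dominated convergence argument, upgraded to $L^2$ convergence, this yields $\operatorname{Var}[\phi(\alpha x + \beta)] / \alpha^2 \to \phi'(\beta)^2 \operatorname{Var}[x] = \phi'(\beta)^2$.

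Finally, combining the two limits gives $C_f'(1) \to \phi'(\beta)^2/\phi'(\beta)^2 = 1$, where the hypothesis $\phi'(\beta) \neq 0$ is exactly what allows me to cancel the common factor rather than face a $0/0$ indeterminate form. The main technical obstacle is justifying the passage to the limit inside the Gaussian integrals in both numerator and denominator; this requires a domination hypothesis on $\phi$ and $\phi'$ near $\beta$ (e.g.\ local boundedness plus at most polynomial growth at infinity), which is mild but should be made explicit, and a mild smoothness assumption such as $\phi$ being $C^1$ in a neighborhood of $\beta$. All other steps are routine manipulations of the formulas already established in Section \ref{sec:achieving-local-conds} and Section \ref{sec:local-C-map-derivatives}.
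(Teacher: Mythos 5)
Your proposal is correct and matches the paper's proof in Appendix \ref{app:alpha-exist-proof} essentially step for step: the same reduction via Equations \ref{eqn:C-map-gen-derivative} and \ref{eqn:special-Q1-formula} to $C_f'(1) = \mathbb{E}_{x \sim \mathcal{N}(0,1)}[\phi'(\alpha x + \beta)^2] \big/ \bigl( \tfrac{1}{\alpha^2}\operatorname{Var}_{x \sim \mathcal{N}(0,1)}[\phi(\alpha x + \beta)] \bigr)$, the same numerator limit $\phi'(\beta)^2$, and the same first-order analysis showing $\tfrac{1}{\alpha^2}\operatorname{Var}[\phi(\alpha x + \beta)] \rightarrow \phi'(\beta)^2$. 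The only difference is in packaging: where you argue via Taylor expansion plus dominated convergence upgraded to $L^2$, the paper invokes the statistical ``delta method'' (itself just the same Taylor argument stated as convergence in distribution); if anything your phrasing is slightly more careful, since convergence in distribution alone does not imply convergence of variances without the uniform-integrability/domination conditions that you explicitly flag.
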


The hypothesis that $\phi' (\beta) \neq 0$ is required here since otherwise $\hat{\phi}$ will tend to the zero function as $\alpha \rightarrow 0$ (whose C map is undefined). Apart from this restriction, there is no obvious requirement on $\beta$ for the condition $C'_f (1) = \mu^{- 1} (\zeta)$ to hold, and indeed in our preliminary tests we found that we could satisfy this for nearly all reasonable choices of $\beta$ for most activation functions. The role of $\beta$ can thus be thought of selecting the position in $\phi$'s graph to ``zoom in on'', and gives us the extra flexibility needed to control the value of $Q'_f (1)$.

\james{can we also explain why this will let us get the value of $Q'_f (1)$ that we want?}

\subsection{The problem with positively homogeneous activation functions}\label{sec:positively-homogeneous-problem}

A positively homogeneous activation function $\phi (u)$ of degree $k$ is one where $\phi (\lambda u) = \lambda^k \phi (u)$ for all non-negative scalars $\lambda$. A well-known example for $k = 1$ is the RELU activation function, which is given by $\phi (u) = \max (u, 0)$.

Due to their defining property, positively homogeneous activation functions yield at most three effective degrees of freedom under our parameterized transformation, instead of the typical four. This can be seen by observing that
\begin{eqnarray*}
  \hat{\phi} (u) = \gamma (\phi (\alpha u + \beta) + \delta) & = & \gamma (\phi (| \alpha |  (\tmop{sign} (\alpha) u + \beta / | \alpha |)) + \delta)\\
  & = & | \alpha |^k \gamma (\phi (\tmop{sign} (\alpha) u + \beta / | \alpha |) + \delta / | \alpha |^k)\\
  & = & \tilde{\gamma}  (\phi (\tmop{sign} (\alpha) u + \tilde{\beta}) + \tilde{\delta}),
\end{eqnarray*}
where we have defined $\tilde{\gamma} = | \alpha |^k \gamma$, $\tilde{\beta} = \beta / | \alpha |$, and $\tilde{\delta} = \delta / | \alpha |^k$. Apart from the sign of $\alpha$, which can take only two discrete values, we effectively have only three free real-valued random variables: $\hat{\gamma}$, $\hat{\beta}$, and $\hat{\delta}$.

Because of this reduction in the degrees of freedom for positively homogeneous activation functions, we can only enforce at most three of our four local map conditions. The only one which is arguably optional is the condition that $Q'_f (1) = 1$ for all combined layers $f$, which corresponds to the equation $\mathbb{E}_{x \sim \mathcal{N} (0, 1)} [ \hat{\phi} (x)  \hat{\phi}' (x) x] = 1$. Thus, in all of our experiments with RELU networks we dropped this condition, and while it did produce networks that trained reasonably well, optimization performance was still slower compared to all other activation functions we tested, at least for skip connection-free networks trained with K-FAC. (See Section \ref{sec:different-act-experiment} for these results.)

\subsection{Examples of transformed activation functions}

In this subsection we will give some examples of transformed activation functions produced by DKS. Our examples will assume a basic feedforward network of 100 combined layers, and a global slope bound $\zeta = 1.5$. We will consider the standard tanh and RELU activation functions, as well as Swish \citep{prajit2017swish}, SELU \citep{klambauer2017self}, and a commonly used smooth substitute for RELU called ``softplus'' (which is given by $\phi (x) = \log (1 + \exp (x))$).

The following table gives the approximate values for the activation function parameters found by DKS:
\begin{center}
  \begin{tabular}{|c|c|c|c|c|}
    \hline
    \textbf{Activation function} & \textbf{$\alpha$ value} & \textbf{$\beta$ value} & \textbf{$\delta$ value} & \textbf{$\gamma$ value} \\
    \hline
    tanh & 0.090438 & -0.56011 & 0.50500 & 14.9025\\
    \hline
    softplus & 0.22802 & 0.40751 & -0.92372 & 7.30325\\
    \hline
    relu & 0.387604 & 1.0000 & -1.0006 & 2.5916\\
    \hline
    swish & 0.12945 & 0.349475 & -0.20889 & 11.50455\\
    \hline
    selu & 0.088294 & -0.25244 & 0.38694 & 8.25434\\
    \hline
  \end{tabular}
\end{center}

In the following plots we compare the default and transformed activation functions over the input interval $[- 10, 10]$ for tanh, softplus, and RELU. Assuming uniform q values of 1, and that the error in our kernel approximations is relatively low, this interval contains all the inputs that our nonlinear units will see at initialization time with overwhelming probability.

\begin{center}
    \resizebox{4.5in}{3in}{\includegraphics{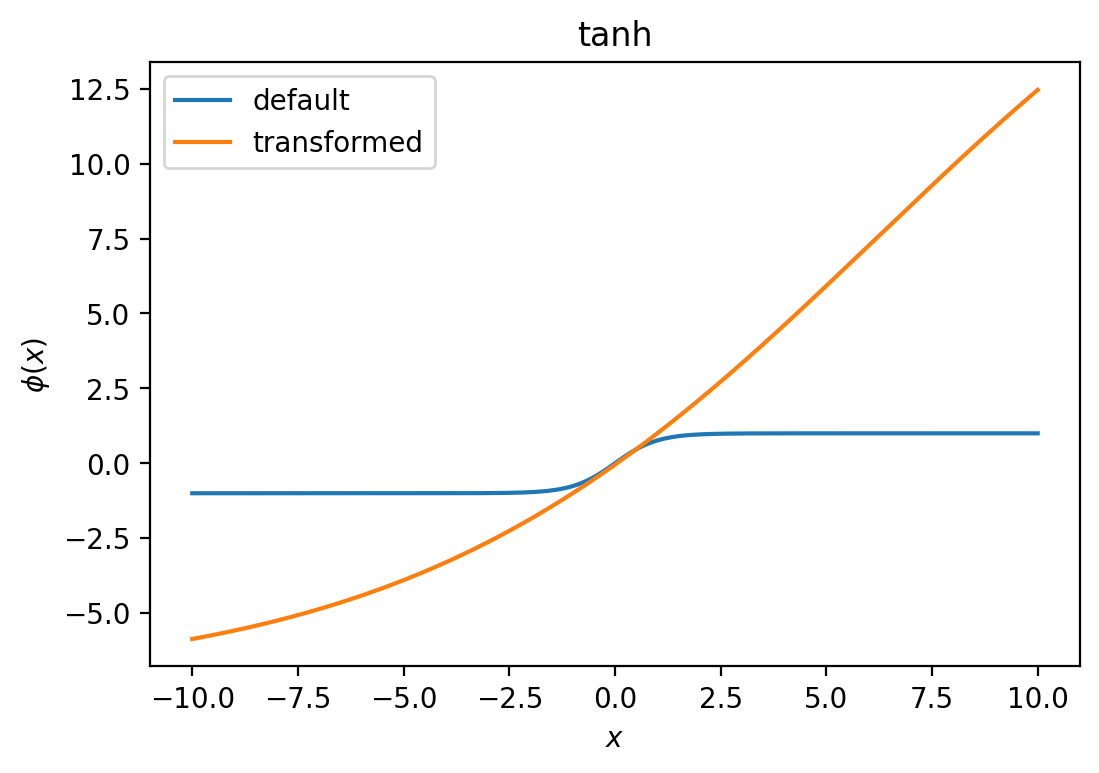}}
    
    \resizebox{4.5in}{3in}{\includegraphics{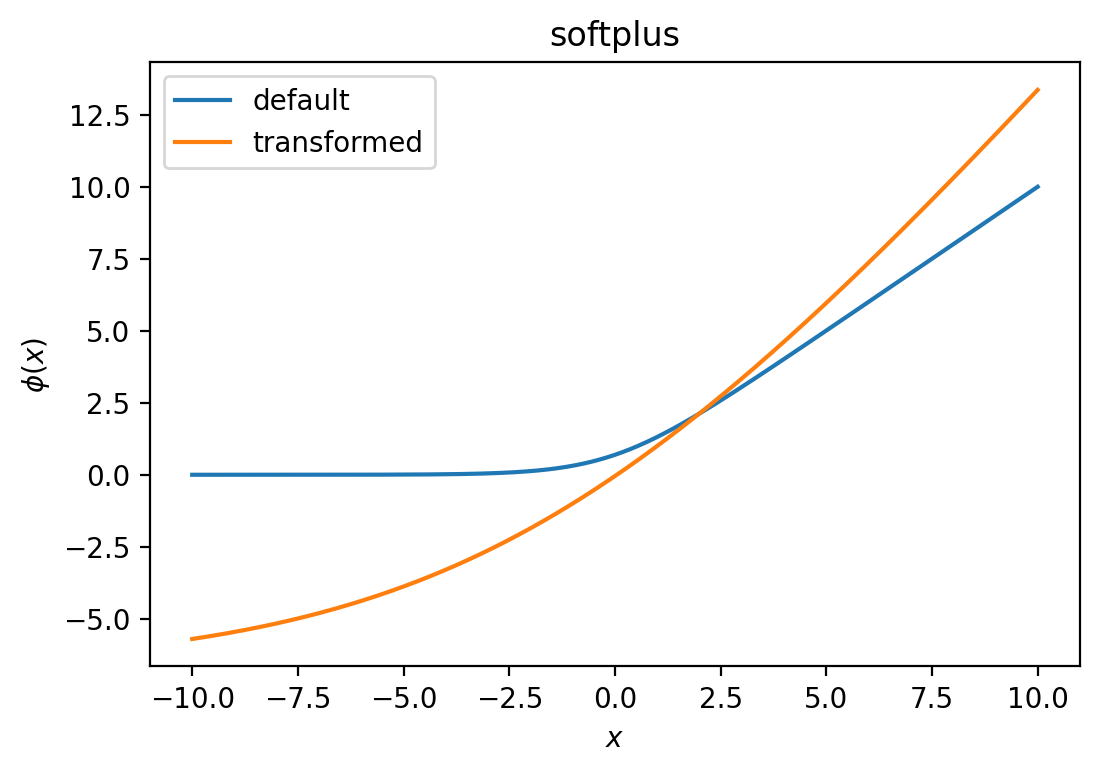}}
    
    \resizebox{4.5in}{3in}{\includegraphics{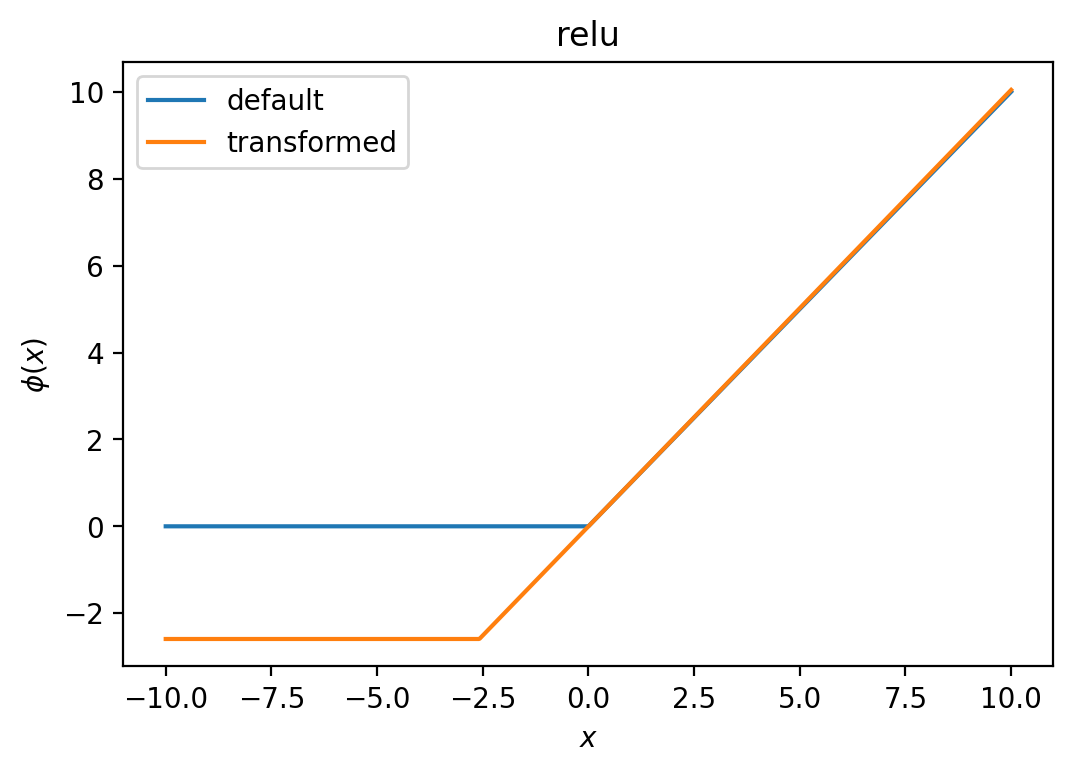}}
\end{center}

We can see from these plots that the transformed activation functions tend to look more like the identity functions than the defaults ones do (over the relevant range of inputs). In fact, they all bare a resemblance to each other (especially tanh, softplus and swish), as can be seen in the following plot:
\begin{center}
\resizebox{4.5in}{3in}{\includegraphics{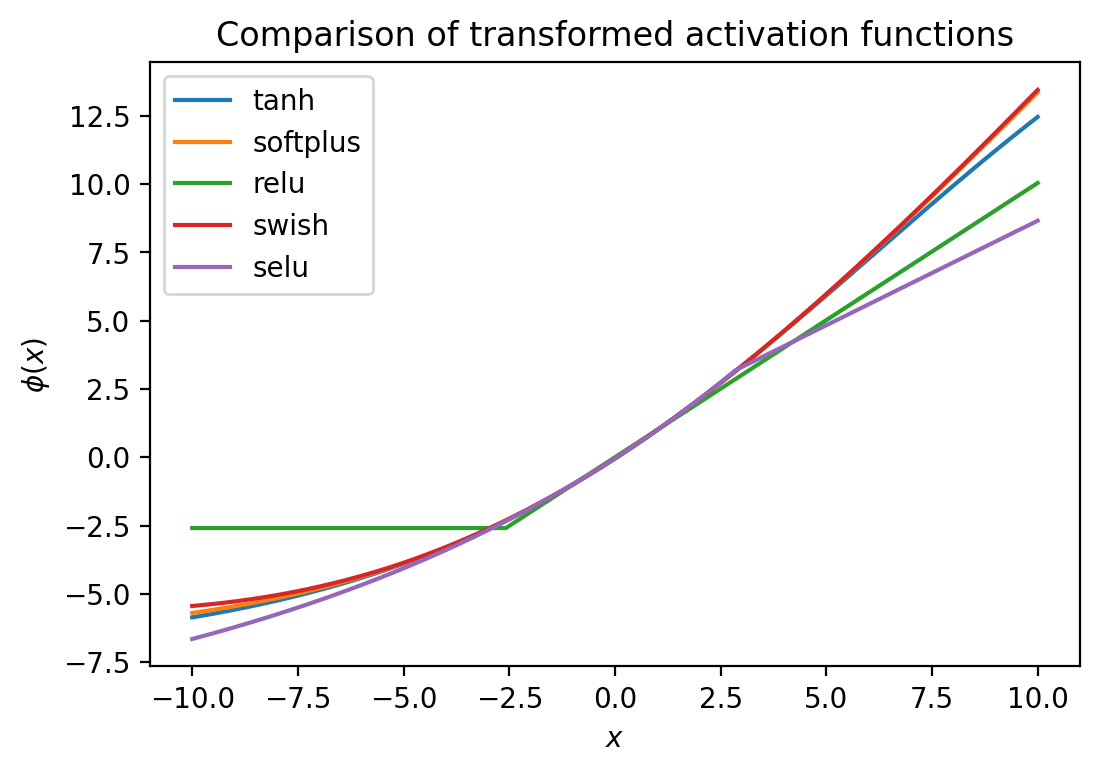}}
\end{center}

\james{Maybe add examples of C maps for these different transformed and untransformed activations?}

\section{Addressing normalization layers}\label{sec:normalization-layers}

\subsection{Batch Normalization layers}\label{sec:batchnorm}

Batch Normalization (BN) layers \citep{ioffe2015batch} are an important component in many neural network architectures, especially convolutional networks. For each unit scalar $u$ in their input, BN layers compute a mean $\mu$ and variance $\sigma^2$ of $u$ over the training mini-batch, and then output a ``normalized'' version $(u - \mu) / \sqrt{\sigma^2 + \epsilon}$, where $\epsilon$ is a small constant. This is this sometimes followed by the application of per-channel learnable bias parameters, which are initialized to zero.

Because they use statistics computed over the mini-batch, BN layers cannot really be described in Q/C map framework we have presented, and are therefore incompatible with DKS. In particular, our formalism assumes that the network's computation for a single training input depends only on that input, and not on other elements of the mini-batch. To account for such interactions, one would have to introduce hypotheses on the size of the mini-batch and the statistical distribution of its vectors, as the behavior of BN layers are highly dependent on these factors. Moreover, the evolution of q and c values would not happen independently across the mini-batch, which would likely preclude a simple one-dimensional description like Q and C maps.

\subsection{Layer Normalization layers}\label{sec:layer-norm}

Layer Normalization (LN) layers \citep{ba2016layer} are a popular ingredient in neural network architectures such as Transformers, and are sometimes used as an alternative to BN layers. For each location vector $z$ in its input feature map, an LN layer computes the scalar mean $\mu = \frac{1}{k}  \mathbbm{1^{\top}} z_i$ and variance $\sigma^2 = \frac{1}{k}  \| z - \mu \mathbbm{1} \|^2$ over the $k$ entries of $z$ (where $\mathbbm{1}$ denotes the vector of 1's), and outputs a ``normalized'' version $(z - \mu \mathbbm{1}) / \sqrt{\sigma^2 + \epsilon}$, where $\epsilon$ is a small constant. This is this sometimes followed by the application of learnable per-channel gain and bias parameters, which are initialized to 1 and 0 respectively. 

Note that LN layers were not explicitly defined for convolutional networks in the original paper. Thus, one could also conceivably define them as computing a mean $\mu$ and variance $\sigma^2$ over both locations and channels, instead of individually per location. In this work we will assume our previous definition, and anything we say regarding LN layers from this point will apply {\tmem{only to that definition}}.

Unlike BN layers, LN layers perform their computations and transformations individually per training case, and do not involve any computations across the mini-batch. Averaging of statistics instead occurs over entries (i.e.~channels) of the location vectors, and the same scale and shift is applied to all entries. In general, $\mu$ and $\sigma^2$ will be different for each input to the network, so that the learnable gain and bias cannot ever actually ``undo'' the normalization for all training cases cases simultaneously. This means that introducing LN layers into a network will fundamentally change its model class.

As we will show next, LN layers can be understood within our Q/C map framework, and are thus compatible with DKS. The formulas for their local Q/C maps are given below.

\subsubsection{Q/C map computations for Layer Normalization layers}\label{sec:layernorm-maps}

As we are concerned with the network's initialization-time behavior when computing Q/C maps, we will assume going forward that the LN layers $f$'s learnable gain and bias parameters, if they are indeed used, are set to their initial values (1 and 0). Given this assumption, and taking $\epsilon = 0$, the output of $f$ will always have a dimension-normalized squared length of 1, as $\sigma^2 = \frac{1}{k}  \| z - \mu \mathbbm{1} \|^2$ by definition. As this is precisely the quantity approximated by q values, we can thus define $Q_f (q) = 1$.

To understand how $f$ will affect c values, it suffices to analyze it as a mapping from $z$ to $z - \mu \mathbbm{1}$, since c values are invariant to scalar multiplications of the underlying vectors. 

Suppose $z_1$ and $z_2$ are two different vector inputs to $f$ (for a particular location), with q values $q_1, q_2$ and c values and $c_1$, $c_2$ (respectively), and define $\mu_i = \frac{1}{k} \mathbbm{1^{\top}} z_i$ for $i = 1, 2$. Then we have $\frac{1}{k}  (z_i - \mu_i  \mathbbm{1})^{\top}  (z_j - \mu_j  \mathbbm{1}) = \frac{1}{k} z_i^{\top} z_j - \mu_i \mu_j$ for $i, j \in \{ 1, 2 \}$. $C_f (c, q_1, q_2)$ approximates the cosine similarity of $z_1 - \mu_1  \mathbbm{1}$ and $z_2 - \mu_2  \mathbbm{1}$, which can therefore be written as
\[ \frac{\frac{1}{k} z_1^{\top} z_2 - \mu_1 \mu_2}{\sqrt{\left( \frac{1}{k}  \| z_1 \|^2 - \mu_1^2 \right)  \left( \frac{1}{k}  \| z_2 \|^2 - \mu_2^2 \right)}} \approx \frac{\sqrt{q_1 q_2} c - \mu_1 \mu_2}{\sqrt{(q_1 - \mu_1^2)  (q_2 - \mu_2^2)}} . \]
For a network initialized as per Section \ref{sec:param_dist}, we have by Equation \ref{eqn:average-unit-approx} that $\mu_i \approx \mathbb{E}_{x \sim \mathcal{N} (0, 1)} \left[ \phi \left( \sqrt{q_i} x \right) \right]$ for $i = 1, 2$, where $\phi$ is the activation function of the immediately preceding combined layer $g$ (with $\phi$ being the identity if $g$ is affine).

If we have uniform q values (so that $q_1 = q_2 = q$), then by Equation \ref{eqn:C-at-0} this implies $\mu_1 = \mu_2 = \sqrt{qC_g (0)}$, so that the above expression for $f$'s C map simplifies to
\begin{equation}
  C_f (c) = \frac{qc - qC_g (0)}{q - qC_g (0)} = \frac{c - C_g (0)}{1 - C_g (0)} . \label{eqn:C-map-LN}
\end{equation}
When $g$ is an affine layer (or a sum over multiple affine layers), or is a combined/nonlinear layer transformed via DKS, we have $C_g (0) = 0$. In this case, the above expression for $f$'s C map reduces to the {\tmem{identity function}}.

More generally, we note that
\[ C_{f \circ g} (0) = C_f (C_g (0))  = \frac{C_g (0) - C_g (0)}{1 - C_g (0)} = 0, \]
and so the application of the LN layer $f$ after $g$ thus has the effect of ensuring that $C_{f \circ g} (0) = 0$ even when $C_g (0) \neq 0$.

\section{Addressing pooling layers}\label{sec:pooling-layers}

Pooling layers are a type of layer used in certain convolutional network architectures to compress information from a larger feature map into a smaller one (with fewer locations). In this section we will discuss why standard pooling layers aren't compatible with our Q/C map framework, and describe potential replacements for them which are. We will also give mathematical arguments and empirical evidence suggesting that it may nonetheless be okay to use them with DKS in practice.

\subsection{(Global) mean-pooling layers}

Mean-pooling layers function similarly to convolutional layers, except that instead of computing a (learnable) affine function of each ``patch'' of activation vectors, they simply compute the average of those vectors. Typically these patches don't overlap, and thus a mean pooling layer reduces the number of locations (while preserving the channels).

In order to simplify the discussion we will restrict our attention to ``global'' mean-pooling layers, which average over {\tmem{all}} locations, and are the most common type used in practice. The same basic conclusions will apply to general mean-pooling layers, with somewhat more complicated formulas for the associated kernel functions.

Formally, a global mean-pooling layer $f$ computes
\begin{equation}
  f (Z) = \frac{1}{\ell} Z \mathbbm{1}, \label{eqn:global-mean-pool}
\end{equation}
where $\ell$ is the number of locations of the feature map $Z$, and $\mathbbm{1}$ denotes a vector of 1's of the appropriate dimension (which will change based on context).

\subsubsection{The PKF for mean-pooling layers and associated difficulties}

From the above equation, the PKF associated with $f$ is a $(2 \times 2)$-matrix-valued function given by
\begin{equation}
  \kappa_f (Z, Z') = \frac{1}{k \ell^2}  \left[\begin{array}{cc}
    (Z \mathbbm{1})^{\top} Z \mathbbm{1} & (Z \mathbbm{1})^{\top} Z'  \mathbbm{1}\\
    (Z'  \mathbbm{1})^{\top} Z \mathbbm{1} & (Z'  \mathbbm{1})^{\top} Z'  \mathbbm{1}
  \end{array}\right] = \frac{1}{\ell^2}  \left[\begin{array}{cc}
    \mathbbm{1}^{\top}  \left( \frac{1}{k} Z^{\top} Z \right)  \mathbbm{1} & \mathbbm{1}^{\top}  \left( \frac{1}{k} Z^{\top} Z' \right)  \mathbbm{1}\\
    \mathbbm{1}^{\top}  \left( \frac{1}{k} {Z'}^{\top} Z \right)  \mathbbm{1} & \mathbbm{1}^{\top}  \left( \frac{1}{k} {Z'}^{\top} Z' \right)  \mathbbm{1}
  \end{array}\right], \label{eqn:mean-pool-PKF}
\end{equation}
where $k$ is the output channel dimension.

Noting that the input and output channel dimension are equal for mean-pooling layers, we have
\[ \Sigma_{Z, Z'} = \left[\begin{array}{cc}
     \frac{1}{k} Z^{\top} Z & \frac{1}{k} Z^{\top} Z'\\
     \frac{1}{k} {Z'}^{\top} Z & \frac{1}{k} {Z'}^{\top} Z'
   \end{array}\right], \]
and so $\kappa_f (Z, Z')$ only depends on the inputs $Z$ and $Z'$ via their IPM $\Sigma_{Z, Z'}$. Thus, $\kappa_f$ can be composed with APKFs to form a network-level PKF approximation.

Unfortunately, while $\kappa_f$ depends only on input q and c values (i.e.~the entries of $\Sigma_{Z, Z'}$), it cannot be broken down in terms of Q and C maps that operate independently across different locations. For example, the output q value is given by $\frac{1}{\ell^2}  \mathbbm{1}^{\top} \left( \frac{1}{k} Z^{\top} Z \right)  \mathbbm{1}$, which would be computed as an average of multiple input q and c values. Even if we assume that the {\tmem{input}} q values to $\kappa_f$ are uniform, the {\tmem{output}} q values will differ for each input to the network and each location due to their dependence on the c values. This invalidates our C map analysis for subsequent layers, which is predicated on uniform q values.

\subsubsection{A possible replacement: weighted mean-pooling layers}\label{sec:weighted-mean-pools}

A possible solution to the issues associated with mean-pooling layers is to replace them with layers that can be more easily handled within our framework, and which ideally don't shrink the model class. (Expansion of the model class is less objectionable, provided that it doesn't significantly harm generalization performance.)

One natural option is to use convolutional layers whose filter size equal is equal to that of the entire feature map. This won't shrink the model class, as such layers can easily simulate global mean-pooling layers by setting all filter weights to $\left( \ell \sqrt{k} \right)^{- 1}$. Unfortunately, such a layer will most likely have a very large filter bank matrix, and this may significantly increase the total number of parameters in the network.

Another option is something we call \tmtextit{\tmtextbf{weighted mean-pooling layers}}, which are defined similarly to regular mean-pooling layers, except that the vector of 1's in Equation \ref{eqn:global-mean-pool} is replaced by a learnable vector of weights $w$, giving
\[ f (Z) = Zw. \]
These can clearly also simulate regular mean-pooling layers (by setting $w = (1 / \ell)  \mathbbm{1}$). And because they don't introduce as many new parameters as the previous option, they have a better chance of preserving the model's generalization characteristics.

Suppose $f$ is a weighted mean-pooling layer. As with combined layers, we can compute an APKF $\widetilde{\kappa_f}$ which approximates $f$'s PKF $\kappa_f$ at initialization-time (with high probability), under the assumption that $w \sim \mathcal{N} (0, (1 / \ell) I)$. As shown in Appendix \ref{app:weighted-mean-pool}, this is given by
\[ \widetilde{\kappa_f} (\Sigma_{Z, Z'}) = \frac{1}{\ell}  \left[\begin{array}{cc}
     \tmop{tr} \left( \frac{1}{k} Z^{\top} Z \right) & \tmop{tr} \left( \frac{1}{k} Z^{\top} Z' \right)\\
     \tmop{tr} \left( \frac{1}{k} {Z'}^{\top} Z \right) & \tmop{tr} \left( \frac{1}{k} {Z'}^{\top} Z' \right)
   \end{array}\right] , \]
and becomes a more precise approximation as $\ell$ grows, provided that the average absolute input c value across all pairs of locations simultaneously goes to zero. This latter condition could occur if the feature vectors for nearly all pairs of locations in the network's input image have a small absolute cosine similarity, since c values always decrease with depth under DKS. It could also occur for more general input images if DKS is used with a large $\zeta$ parameter, and $f$ is sufficient deep into the network (so that the C map up to $f$ maps most inputs to a relatively small region near zero).

$\widetilde{\kappa_f}$ has more favorable properties than the PKF for mean pooling layers given in Equation \ref{eqn:mean-pool-PKF}. In particular, since the output q/c value is just the average across locations of the input q/c values, the property of uniform q values of 1 will be preserved, thus enabling our C map analysis to be valid for subsequent layers.

Complicating the story somewhat is the fact that the c values for different locations are averaged together, as our analysis up to this point has assumed them to be separate and independently evolving. This means that geometric information about each individual location is no longer strictly preserved, as the averaging operation makes recovery of the individual c values impossible. It is true however that each location still has a proportional effect on the output, and thus the degeneration discussed in Section \ref{sec:degen-C-maps} can still be avoided, as long as the C map of the subnetwork up to $f$ is sufficiently well-behaved.

Because the input c values are given by $C_g (c_i)$ for locations $i = 1, 2, \ldots, \ell$ (for some $c_i$'s) where $g$ is the subnetwork up to $f$, and $C_g$ is a convex and increasing function on $[0, 1]$ (by Section \ref{sec:pd-functions}), we have that
\[ C_g \left( \frac{1}{n}  \sum_i c_i \right) \leqslant \frac{1}{\ell}  \sum_i C_g (c_i) \leqslant C_g (\max_i \{ c_i \}), \]
for $c_i \in [0, 1]$. The output c value associated $\widetilde{\kappa_f}$, which is given by $\frac{1}{\ell}  \sum_i C_g (c_i)$, is thus closely related to $C_g$ as applied to a single input. It is on this basis that we will treat $f$ as having an identity C map in our computations which, for lack of a richer multi-dimensional theory, seems like a reasonable heuristic.

\james{can someone think of a better argument than the one given above?}

In some of our experiments on convolutional networks we tried using a weighted mean-pooling layer in place of the usual global mean-pooling operation near the end of the network. While this worked well, we found that it didn't provide any optimization benefit. (See Appendix \ref{app:weighted-mean-pool-experiment} for these experiments.) Thus in our main set of experiments with DKS we continued to use standard global mean-pooling layers, despite their apparent incompatibility with our theoretical framework.

\subsection{Max-pooling layers}\label{sec:max-pooling}

A max-pooling layer is similar to a mean-pooling layer, but instead of taking the mean of a set of location vectors, it takes the coordinate-wise maximum.

\subsubsection{PKFs for max-pooling layers and approximate map properties}

In the previous subsection we saw that the PKF for a mean-pooling layer, despite having a simple form that depended only on the IPM ($\Sigma_{Z, Z'}$) of its input, had unfavorable properties that made it impossible to properly analyze within our Q/C map framework. The situation with max-pooling layers is arguably even worse, as its PKF has a more general dependence on its input, and thus cannot be composed with APKFs of combined layers to form a network-level PKF approximation. But despite this, we can still make some non-trivial statements about a max-pooling layer's PKF that will be useful in understanding how DKS may possibly still apply to networks containing such layers.

Consider a patch of locations over which the max operation is applied. If all the location vectors in the patch are nearly equal to each other, then the max operation simply outputs a close approximation of the vector in the center of the patch, and thus has a PKF approximated by the identity (for non-dropped locations). It is therefore reasonable to approximate max-pooling layers as having local Q and C maps equal to the identity in this case. This situation is fairly common when max-pooling layers are used very early in the network, since nearby pixels tend to be similar to each other in natural image data, which, assuming well-behaved C maps, means that the corresponding vectors for subsequent layers will be similar too (as measured by their cosine similarly).

Analogously, if the pixels within a patch fall into two tight clusters, which can happen if the patch overlaps an edge or the corner of an object, then the subsequent vectors will also fall into two tight clusters. If this is the situation, and we assume uniform q values and wide layers, then it can be shown that the output q value of a max pooling layer will be closely approximated by its input q value (so that we can treat the layer as having an identity Q map). This is shown in Appendix \ref{app:max-pool}, and relies on the somewhat surprising fact that
\[ \mathbb{E}_{\tmscript{\left[\begin{array}{c}
     u_1\\
     u_2
   \end{array}\right] \sim \mathcal{N} \left( 0, \left[\begin{array}{cc}
     1 & c\\
     c & 1
   \end{array}\right] \right)}} [\max \{ u_1, u_2 \}^2] = 1 \]
for all $c \in [- 1, 1]$, along with the mild assumption that the max-pooling layer in question is directly preceded by a convolutional layer. Note that for clusters of 3 or more pixel values this approximation doesn't work, although the output q value will only deviate from the input q value by a factor that grows slowly with the number of clusters.

\subsubsection{Possible replacements for max-pooling layers}

As with mean-pooling layers, we could consider replacing max-pooling layers with ones that are handled within our framework. However, unlike the mean operation, the max operation is difficult to elegantly simulate using our standard layer types, and so there are no obvious substitutions that would preserve the model class.

In some architectures, max-pooling layers are used merely to reduce the size of a feature map, with the particular choice of pooling operation (max or mean) being unimportant from a modeling perspective. In such cases it may thus be quite reasonable to replace max-pooling layers with weighed mean-pooling layers.

%For example, the shortcut branches in ResNets are sometimes implemented using max-pooling operations for the small subset of layers where the feature map needs to change in size. These could conceivably be replaced with trainable convolutional layers or mean pooling layers without any significant ill effect, and indeed some ResNet implementations elect to do this. In our experiments we tried making these replacements, although we found that it didn't improve the performance of our method.

\section{Summary of our method}

\subsection{Architectural requirements}\label{sec:arch-requirements-method}

In order to apply DKS we must observe certain architectural requirements on the network. These are summarized below:
\begin{enumeratenumeric}
  \item The network must be constructed from combined layers (defined as an affine layer followed optionally by a nonlinear layer), weighted sums between the output of two or more affine layers (followed optionally by a nonlinear layer), concatenations of two or more feature maps along their channel dimension, mean-pooling layers, and max-pooling layers (although the latter should be used with caution as discussed in Section \ref{sec:max-pooling}).\label{item:layers_assumptions}
  
  \item Batch Normalization layers must not be used. However, Layer Normalization layers are allowed, provided that their associated gain and bias parameters are initialized as per Item \ref{item:extra-params-assumptions} of this list. (See Section \ref{sec:normalization-layers} for additional discussion of normalization layers.)
  
  \item Nonlinear layers must use element-wise activation functions. Positively homogeneous activation, such as RELU, are allowed but not recommended as they lead to a limited version of DKS (as discussed in Section \ref{sec:positively-homogeneous-problem}).
  
  \item Multiplication operations, such as those used in attention mechanisms, are also {\tmem{not}} allowed (although we hypothesize that DKS can be extended to handle these in the future).
  
  \item The network should not contain any extraneous trainable parameters such as scalar multiplications or shift, unless these parameters have no effect at initialization time (e.g.~a shift that is initialized to 0). Constant scalar multiplications are allowed, although these will typically be removed as part of the application of DKS.\label{item:extra-params-assumptions}
  
  \item Similarly, constant multiplications and shifts are not allowed, except as part of weighted sum operations. (Note that if such constants are normally required for the network to be trainable with standard optimization methods, it's likely that DKS will render them unnecessary/obsolete.)
\end{enumeratenumeric}

\subsection{Execution steps}\label{sec:method-steps}

To apply DKS to a given network one performs the following steps:
\begin{enumeratenumeric}
  \item Initialize each bias vector to 0, and each weight matrix/filter bank using either a Gaussian Delta initialization, or an Orthogonal Delta initialization (which are both defined in Section \ref{sec:param_dist}).
  
  \item Choose a value larger than 1 for the scalar hyperparameter $\zeta$ (such as 1.5 or 1.1). Note that $\zeta$ roughly corresponding to the ``degree of nonlinearity'' of the network. See Section \ref{sec:choosing-zeta} for additional discussion of this. As observed in Section \ref{sec:choosing-zeta}, lower values of $\zeta$ tend to be associated with slightly better generalization, at the cost of somewhat slower optimization.
  
  \item (optional) Apply some version of Per-Location Normalization (PLN) to the input data. Note that this can be done entirely online, as it only requires the current example, and not any aggregate statistics over the entire training set. (See Section \ref{sec:PLN} for more details.)
  
  \item Remove any constant scalar multiply operations from the network.
  
  \item Replace any weighted sums between features maps $Y_1, \ldots, Y_n$ with ``normalized sums'' of the form $\sum_{i = 1}^n w_i Y_i$, for weights $w_i$ satisfying $\sum_{i = 1}^n w_i^2 = 1$ (which may be chosen freely). Note that if $w_i = w_j$ for all $i$ and $j$ this simplifies to $\frac{1}{\sqrt{n}}  \sum_{i = 1}^n Y_i$, although other choices are permitted and may indeed be preferable (as demonstrated in Section \ref{sec:modifications-made-by-method}).
  
  \item (optional) Replace any mean-pooling layers with ``weighted mean-pooling layers'' as defined in Section \ref{sec:weighted-mean-pools}.
  
  \item Compute the network's maximal slope function $\mu (\psi)$ (or some approximation of this). One can use the recipe given in Section \ref{sec:slope-recipes}.
  
  \item Using the fact that $\mu$ is a 1D monotonically increasing function, compute $\mu^{- 1} (\zeta)$ using binary search (or a similar such method).
  
  \item For each distinct activation function $\phi$ in the network, do the following:\label{item:replace_activation}
  \begin{enumerateroman}
    \item Given $\mu^{- 1} (\zeta)$, solve for $\alpha$, $\beta$, $\gamma$, and $\delta$ as per Section \ref{sec:achieving-local-conds}, using the numerical methods described in Section \ref{sec:implementation-details} (or some alternative).
    
    \item Replace all instances of $\phi$ in the network with $\hat{\phi} (u) \equiv \gamma (\phi (\alpha u + \beta) + \delta)$.
  \end{enumerateroman}
\end{enumeratenumeric}
\subsection{Recipe for computing slope polynomials and maximal slope functions}\label{sec:slope-recipes}

As per Sections \ref{sec:slope-poly-and-maximal} and \ref{sec:comp-max-slope-func}, the maximal slope function $\mu (\psi)$ is computed as
\[ \mu (\psi) = \max_f [p_f (\psi)], \]
where $p_f (\psi)$ is the network polynomial of $f$ (whose computation we will describe below), and the maximum is taken over all subnetworks $f$ of the entire network.

Note while the number of distinct subnetworks may be quadratic (or worse) in the depth, when computing the maximum we may ignore any subnetwork that can be composed with another one to form a strictly larger subnetwork, or more generally, any subnetwork whose slope polynomial is a factor of the slope polynomial of another subnetwork.

For a given subnetwork $f$, the computational graph of the network polynomial $p_f (\psi)$ may be obtained from the computational graph of $f$ by recursively applying the following rules (which are essentially just the result of applying automatic differentiation to the graph of $C_f (c)$ and then evaluating the result at $c = 1$ to obtain $p_f (\psi) = C'_f (1)$):
\begin{enumeratenumeric}
  \item Composition $g \circ h$ of two subnetworks $g$ and $h$ maps to $p_g (\psi) p_h (\psi)$.
  
  \item Affine layers map to the constant 1.
  
  \item Nonlinear layers map to $\psi$.
  
  \item Concatenation operations (over the channel dimension) between the outputs of subnetworks $g_1, g_2, \ldots, g_n$  map to
  \begin{equation*}
    \frac{1}{\sum_{i = 1}^n k_i} (k_1 p_{g_1} (\psi) + k_2 p_{g_2} (\psi) + \cdots + k_n p_{g_n} (\psi)) ,
  \end{equation*}
  where $k_i$ is the number of output channels of $g_i$.
  
  \item Normalized sums with weights $w_1, w_2, \ldots, w_n$ over the outputs of subnetworks $g_1, g_2, \ldots, g_n$ map\footnote{For reference, when computing the slope polynomial for a network whose q values may vary between layers (which won't come up when applying DKS), normalized sums instead map to
  \[ \frac{1}{\sum_{i = 1}^n w_i^2 q_i}  (w_1^2 q_i p_{g_1} (\psi) + w_2^2 q_i p_{g_2} (\psi) + \cdots + w_n^2 q_i p_{g_n} (\psi)), \]
  where $q_i$ is the output q value associated with $g_i$.} to
  \[ w_1^2 p_{g_1} (\psi) + w_2^2 p_{g_2} (\psi) + \cdots + w_n^2 p_{g_n} (\psi) . \]

    \item Layer normalization layers map to the constant 1.
  
  \item Max-pooling and weighted mean-pooling layers map to the constant 1. (Standard mean-pooling layers can be heuristically mapped to 1, although they technically break our network polynomial formalism.)
  
  \item The network's input maps to the constant 1.
\end{enumeratenumeric}
Note that this recipe for computing can be generalized to compute $C_f' (1)$ for networks in which $C'_g (1)$ may be different for each nonlinear layer (i.e.~not equal to some common $\psi$) by mapping nonlinear layers $g$ to $C'_g (1)$ instead of $\psi$.

Provided that the network architecture is compatible with DKS, a quick way to compute slope polynomials is to count the number $k$ of nonlinear layers in a given sequence of layers (to get a slope polynomial of $\psi^k$ for that subnetwork), and then apply the rule for normalized sums where appropriate. See Sections \ref{sec:comp-max-slope-func} and \ref{sec:maximal-slope-mod-resnet} for instructive examples of how to the compute maximal slope function for certain architectures.

\james{is it possible that we could develop a set of rules that gets the maximal slope function more directly without bothering with this max over slope polynomials?}

\section{Some implementation details}\label{sec:implementation-details}

Through careful optimization and engineering, and a lot of trial and error, we were able to get the runtime of DKS down to a few seconds for typical large-scale networks. In this section we describe the aspects of this that were the most challenging and non-obvious.

\subsection{Solving for the $\alpha$ and $\beta$ constants}

As we saw in Section \ref{sec:achieving-local-conds}, finding the appropriate $\alpha$ and $\beta$ constants for a particular activation function $\phi$ amounts to solving a system of two nonlinear equations. Since we don't have a closed form solution for this, we must resort to numerical methods.

After trying several possibilities, we got the best results using \python{scipy.optimize.root()}, which is part of the popular SciPy Python package \citep{scipy2001}. We call this with the arguments \python{method="hybr"} and \python{jac=False}, and leave all other options at their defaults. This invokes an implementation of the modified Powell algorithm \citep{powell1964efficient}.

Because the implicit regression loss of the system is non-convex in general, the solver sometimes fails to find a solution from its initial guess. Our solution to this is simply to call it repeatedly with different initial guesses until it returns successfully. In our experiments it never took more than 4 calls to find a solution for any of the eleven activation functions we tried. We took our first six initial guesses for $(\alpha, \beta)$ from the list
%\begin{minted}{python}
\begin{equation*}
[(1, 0), (1, 1), (1, -1), (0.1, 0), (0.1, 1), (0.1, -1)]
%\end{minted}
\end{equation*}
and then generated subsequent ones randomly using \python{numpy.random.uniform(low=0.0, high=2.0)} for $\alpha$, and \python{numpy.random.uniform(low=-3.0, high=3.0)} for $\beta$.

\subsection{High-quality estimates of the expectations}\label{sec:estimate-expectations}

In order to guarantee fast and reliable convergence, the solver \python{scipy.optimize.root} requires the LHS values of the nonlinear equations to be computed to a very high precision. Moreover, the system we need to solve may be numerically sensitive in general, regardless of the particular solver algorithm used. Thus it is important that we compute high quality estimates of the four Gaussian expectations that determine these LHS values.

A naive estimate based on sampling $x$'s from $\mathcal{N} (0, 1)$ will perform very poorly, as its variance scales as $1 / n$, where $n$ is the number of sample points. Fortunately, all four of the Gaussian expectations are one dimensional, which opens the door to heavy-duty numerical integration methods capable of achieving near numerical precision in a reasonable amount of time. After experimenting with several such methods built into SciPy, we found that the best performing one by far was \python{scipy.integrate.fixed_quad}, which implements fixed-order Gauss-Legendre quadrature to compute one dimensional definite integrals. We used this method to approximate Gaussian expectations as
\[ \mathbb{E}_{x \sim \mathcal{N} (0, 1)} [h (x)] = \frac{1}{\sqrt{2 \pi}}  \int_{- \infty}^{\infty} h (x) \exp (- x^2 / 2) d x \: \approx \: \frac{1}{\sqrt{2 \pi}}  \int_{- 10}^{10} h (x) \exp (- x^2 / 2) d x, \]
which is justified by the fact that the integrand is negligible for values of $x$ outside of $[- 10, 10]$ for all the $h(x)$'s we care about. To ensure high quality estimates, we set the ``order'' parameter $n$ to $10^5$ (which is considered very high).

By far the most expensive part of \python{fixed_quad}'s computation is the calculation of the sample points and weights (via the \python{roots_legendre} function), which can take around 15 minutes on a modern CPU when $n = 10^5$. Fortunately, because this part of the computation only depends on the order parameter $n$, it is cached by \python{fixed_quad} during the first call and reused in subsequent calls, allowing these calls to execute almost instantly. In our codebase we went a step further and stored the results of \python{roots_legendre} in a file which was then loaded and monkey-patched into the SciPy library before the first call to \python{fixed_quad}, thus eliminating this 15 minute overhead completely.

\subsection{Computing and inverting the maximal slope function in software}\label{sec:compute-maximal-slope-in-practice}

As they are determined by a network's architecture, maximal slope functions can in principle be computed automatically and efficiently, according to recipe in Section \ref{sec:slope-recipes}. Automating this in software requires access to a high-level description of network's structure, which could possibly be extracted from the API calls made to the neural network library. In our experiments we just computed them by hand, as we only experimented with a small handful of architectures.

An alternative to manual computation or automation is to approximate the maximal slope function by a reasonable surrogate. For networks whose ``deepest path'' has $D$ nonlinear layers, a natural approximation to use is $\psi^D$. However, for network architectures that involve extensive use of skip connections such as ResNets, this approximation may be very poor, as it fails to account for how the skip connections make the network's computation ``more linear''. (See Section \ref{sec:maximal-slope-mod-resnet} for more details.)

As discussed in Section \ref{sec:slope-poly-and-maximal}, maximal slope functions are continuous 1-dimensional functions of $\psi$ that are strictly increasing (as long as the network has at least one nonlinear layer), and thus they can be inverted up to a fixed tolerance using a simple binary search. Our implementation started with an interval of $[1, 2]$ for the solution, and kept doubling the maximum if the solution was determined to lie outside of it. We used a convergence tolerance of $10^{- 6}$ on the function value, and used full precision 64-bit floating point numbers in all computations (which are cheap and can be done on the CPU). Note that for certain simple special cases, such as when $m (\psi) = \psi^D$, closed form solutions can be used if desired.

\section{Application to various modified ResNets}\label{sec:application-to-resnet}

In this section we will discuss the very commonly used ResNet architecture \citep{he2016deep, he2016identity} and certain modified versions of it, and then go over the application of DKS to these different versions. In addition to being instructive in the application of DKS, these example will be the primary focus of our later experiments.

\subsection{Standard ResNet-V2 architectures and terminology}\label{sec:standard-ResNet}

In this work we will only consider the ``V2'' version of the ResNet architecture \citep{he2016identity} as the opposed to the ``V1'' version \citep{he2016deep}, as the former is conceptually simpler and is usually preferred by practitioners. We will also concentrate on the version of ResNet-V2 designed specifically for use in 224x224 Imagenet classification, noting that versions of the architecture for other problems and datasets can differ, especially in terms of their first and last few layers.

We will denote by $D$ the ``depth parameter" of the ResNet architecture, which corresponds to the total number of nonlinear layers plus 1. The standard values for $D$ are 50, 101, and 152. The input is assumed to be 224x224 features maps with 3 dimensional pixel features (possibly extended to 4 dimensions if PLN is used as per Section \ref{sec:PLN}). This is then fed into a 7x7 convolutional layer with 64 output channels and a stride of 2. Following this is a 3x3 max-pooling layer with a stride of 2. These two early layers are particular to the Imagenet version of ResNet-V2, and have the purpose of reducing the dimension of the feature representation to a smaller size for processing by subsequent layers.

Following this is a long sequence of \tmtextbf{\tmtextit{residual blocks}} that form the large bulk of the network. Each of these is parameterized by an associated output channel dimension, a ``bottleneck'' channel dimension, and a stride, which can differ from block to block. The particular values for these parameters are determined by $D$.

Let $k$ be the input channel dimension, $d$ the output channel dimension, $b$ the bottleneck channel dimension, and $s$ the stride associated with a particular residual block. The residual block contains two ``branches'' from its input that get summed together at the output. The first is called the \tmtextbf{\tmtextit{residual branch}}, and consists of the following sequence of layers: a Batch Normalization (BN) layer, a RELU nonlinear layer, a 1x1 convolutional layer with stride 1 and output channel dimension $b$, a BN layer, a RELU nonlinear layer, a 3x3 convolutional layer with stride $s$ and output channel dimension $b$, a BN layer, a RELU nonlinear layer, and finally a 1x1 convolutional layer with stride 1 and output channel dimension $d$. The second branch is called the \tmtextbf{\tmtextit{shortcut branch}}, and consists of the identity map if $k = d$ and $s = 1$, or a 2x2 max-pooling layer if $k = d$ and $s > 1$. Otherwise, if $k \neq d$ (which is the case for \tmtextbf{\tmtextit{transition blocks}}), it consists of the following sequence of layers: a BN layer, a RELU nonlinear layer\footnote{This nonlinear layer in the shortcut branch does not contribute to the total number of nonlinear layers for the purposes of computing $D$. Moreover, it can be identified with the first nonlinear layer of the residual branch (as they compute the same thing), in which case both the shortcut branch and residual branches can be seen as ``branching off'' from this layer's output (instead of from the block's original input). }, and a 1x1 convolutional layer with stride 1 and output channel dimension $d$.

%For the purposes of applying DKS, the only thing we need to know about these parameters is that the input and output channel dimensions differ for a certain 4 blocks known as \tmtextbf{\tmtextit{``transition blocks"}}, whose locations in the sequence is a function of $D$.

\james{diagram a ResBlock and possibly the whole network? Guillaume possibly interested in adding this?}

The shortcut branch is meant to act as an identity function or a reasonable approximation to one, except when its input and output channel dimensions differ (which is only the case for transition blocks). Meanwhile, the residual branch, which always contains 3 nonlinear layers, is what performs the interesting nonlinear computation in the network.

After the sequence of residual blocks, there is a BN layer and a RELU nonlinear layer, followed by a ``global'' mean-pooling layer which reduces the number of locations down to 1. The final layer of the network is a 1x1 convolutional layer operating on this single location (or equivalently a fully-connected layer), whose output channel dimension is the number of classes.

Convolutional layers in ResNets typically do not have bias parameters, since these are made pointless by the mean-subtraction done by the BN layers that always immediately follow them. To compensate for this, BN layers will sometimes include trainable gain and/or bias parameters applied after their centering and normalization operations.

\subsection{Modified ResNet architecture}

In this subsection we will describe the particular changes we made to the ResNet-V2 architecture in order to conform to the requirements listed in Section \ref{sec:arch-requirements-method} and thus achieve compatibility with DKS. These changes don't perfectly preserve the model class, although we tried to make them as innocuous as possible in order to facilitate the fairest comparison to standard ResNets in our experiments. Note that other modification schemes are possible, and the one we present here is not meant to be in any way ``canonical'' for this or any other architecture.

As BN layers are incompatible with DKS we elect to remove them, while adding learnable bias parameters (initialized at zero) back into the convolutional layers. %(Removing BN layers will have subtle consequences that we will address later when choosing the weights for the normalized sums.) 
Another option would be to replace BN layers with Layer Normalization layers, as the latter are compatible with DKS.

As discussed in Section \ref{sec:positively-homogeneous-problem}, while technically supported, RELU activations force us to use a diminished version of DKS. Thus in our main experiments we often used alternative activation functions instead, including ones with a ``RELU-like" shape, such as softplus.

Max-pooling layers are provisionally supported by DKS, especially if they occur early in the network and have a relatively small kernel size. (See Section \ref{sec:positively-homogeneous-problem} for more details about this.) The 3x3 max-pooling layer near the beginning of the network meets these criteria, and so we elect to leave it in.

%NOT ACTUALLY USED? Occurring later in the network are various 2x2 max-pooling layers that implement some of the shortcut branches. These can be conceivably be replaced by convolutional or (normalized) mean-pooling layers as their only purpose in the model is to approximate linear projections. In our experiments we found that doing this didn't help, and so we left these latter max-pooling layers in as well.

\subsection{Further modifications made {\tmem{by}} DKS}\label{sec:modifications-made-by-method}

Having achieved compatibility by making the above changes, we can now apply DKS to the resulting modified ResNet architecture. In this subsection we will describe the subsequent changes made to the network {\tmem{as part of the execution of DKS itself}}, whose steps are outlined in Section \ref{sec:method-steps}. Note that some of these steps are optional, or involve degrees of freedom, and are all designed to preserve (or slightly expand) the model class.

First, the mean-pooling layer near the end of the network can optionally be replaced by a weighted mean-pooling layer (as described in Section \ref{sec:weighted-mean-pools}). While this replacement is necessary for the Q/C map computations to make sense, we found that it didn't significantly improve optimization performance in our preliminary experiments, and so we didn't do it in our main ones. One possible explanation for this finding is that because there is only a single nonlinear layer after the mean-pooling layer, the non-uniform q values produced by the latter can have only a limited effect on the network's overall C map.

Next, we must replace the sum operations, which occur in ResNets at the end of each residual block (where the residual and shortcut branches are combined together), with normalized sums. Each normalized sum involves two weights and one constraint (that the squares of the weights sum to 1), and so has one degree of freedom. A natural choice is to set both weights to $1 / \sqrt{2}$, which naively seems like the best option for reproducing the behavior of an unmodified ResNet. However, as we will discuss in Section \ref{sec:resnet-related-discussion}, for networks that forgo normalization layers and/or use bounded activation functions (as our modified ResNets do), placing more weight on the shortcut branch will result in better behavior that more closely matches that of a standard ResNet. This is confirmed in our experiments in Appendix \ref{app:residual-weight-sweep}.

The final modification made to the network as part of DKS is to replace all of the activation functions with their transformed versions, as described in Step \ref{item:replace_activation} of Section \ref{sec:method-steps}.

\subsection{Computing the maximal slope function for modified ResNets}\label{sec:maximal-slope-mod-resnet}

Having described the modifications we made to ResNets to achieve compatibility with DKS, and the further ones made by DKS itself, we are now in a position to compute the maximal slope function following the recipe in Section \ref{sec:slope-recipes}. 

For simplicity, we will assume that the normalized sums at the end of the residual blocks each have a weight of $w$ on their residual branches. (A weight of $\sqrt{1-w^2}$ on the shortcut branches is then implied.)

The subnetwork before the sequence of residual blocks is just a affine and max-pooling layer and so has a slope polynomial of 1.

Consider any non-transition block. The slope polynomial for the residual branch is $\psi^3$, as it has 3 nonlinear layers, and is 1 for the shortcut branch. Thus, the overall slope polynomial for the block is $w^2 \psi^3 + (1 - w^2)$. Similarly, the slope polynomial for a transition block (which has a single combined layer in its shortcut branch) is $w^2 \psi^3 + (1 - w^2) \psi$.

The subnetwork after the sequence of blocks consists of nonlinear layer, a (possibly weighted) mean-pooling layer, and then an affine layer, and so has a slope polynomial of $\psi$.

Noting that the total number of residual blocks is $(D - 2) / 3$, and the number transition blocks is 4 for all values of $D$, the overall slope polynomial for the network $f$ (which has a single input and output and so is a subnetwork of itself) is
\[ p_f (\psi) = (w^2 \psi^3 + (1 - w^2))^{(D - 2) / 3 - 4}  (w^2 \psi^3 + (1 - w^2) \psi)^4 \psi , \]
which can be simplified to
\begin{equation}
p_f (\psi) = (w^2 \psi^3 + (1 - w^2))^{(D - 14) / 3}  (w^2 \psi^2 + (1 - w^2))^4 \psi^5 . \label{eqn:mod-resnet-slope-poly}
\end{equation}

The only subnetworks of $f$ that don't compose with other subnetworks to form larger ones are the residual branches, and so their slope polynomials are the only other ones to consider when computing the maximal slope function $\mu (\psi)$. As they are simple compositions of layers with 3 (or 2) nonlinear layers total, their slope polynomials are $\psi^3$ (or $\psi^2$). Noting that $\psi^3$ (or $\psi^2$) is a factor of $p_f (\psi)$, we may ignore them when computing the maximum, and thus conclude that $\mu (\psi) = p_f (\psi)$.

It is worthwhile to note the dependency of $\mu (\psi)$ on the value of the residual branch weight $w$. For $w = 0$ we have $\mu (\psi) = \psi^5$, and for $w = 1$ we have $\mu (\psi) = \psi^{D - 1}$. More generally, $\mu (\psi)$ will be a degree $D - 1$ polynomial in $\psi$, where the coefficients (which must sum to 1) will more heavily favor high order terms as $w$ approaches 1, and low-order terms as $w$ approaches 0. Thus, much like $\psi$, $w$ can be thought of as controlling the overall ``degree of nonlinearity'' of the network $f$ (as quantified by $C_f' (1)$).

\subsection{Equivalent standard convolutional networks}

To help demonstrate the power of DKS in our experiments, we will consider a skip-connection-free convolutional network architecture obtained from the above modified ResNet-V2 architecture by simple removal of the shortcut branches. The resulting architecture retains the channel dimensions, strides, etc., of the original ResNet architecture, including it use of ``bottleneck'' layers in the residual branches, but is otherwise a standard deep convolutional network.

Given the straightforward sequential structure of this architecture, with its $D - 1$ nonlinear layers, its network polynomial and maximal slope function are simply $\psi^{D - 1}$ (which corresponds to the $w = 0$ case above).

\subsection{``Wide'' ResNet variants for CIFAR-10}

For our experiments involving the CIFAR-10 dataset \citep{krizhevsky2009learning} we will make use of Wide Residual Networks \citep{zagoruyko2016wide}, which are a well-known variant of the standard ResNet architecture. The Wide-ResNets we used in our experiments differ from standard ResNets in the following ways:
\begin{itemizedot}
  \item The initial subnetwork before the sequence of residual blocks consists of just a 3x3 convolutional layer with 16 output channels and a stride of 1. There is no max-pooling layer.
  
  \item Given per-block parameters $s$ and $d$, a residual branch consist of the following sequence: a BN layer, a RELU nonlinear layer, a 3x3 convolutional layer with stride $s$ and output channel dimension $d$, a BN layer, and RELU nonlinear layer, and finally a 3x3 convolutional layer with stride 1 and output channel dimension $d$. Note that there are only 2 nonlinear layers instead of the 3 normally present in standard ResNets.
  
  %\item Shortcut branches behave similarly to the standard ResNet case, but use convolutional layers instead of max-pooling layers wherever the latter would be used. (This replacement is a typical modification used in ResNet-derived architectures.)
  
  \item There is a global ``width'' parameter which acts as multiplier on the output channel dimensions of all the residual blocks. In our experiments this was set to 2.
  
  \item The scheme for mapping $D$ to a configuration for the residual blocks is generalized to work with any value of $D$ such that $D - 4$ is divisible by 6. Here, $D$ represents the number of nonlinear layers plus 3, so that there are $(D - 4) / 2$ total blocks, 3 of which are transition blocks.
\end{itemizedot}
As we did for standard ResNets, to achieve compatibility of Wide-ResNets with DKS we will modify the architecture by removing the BN layers, adding back in learnable biases to the convolutional layers, and (possibly) replacing the RELU activation functions with various alternatives. 

Following a similar derivation to the one in Subsection \ref{sec:maximal-slope-mod-resnet}, the maximal slope function for these networks is given by
\[ \mu (\psi) = (w^2 \psi^2 + (1 - w^2))^{(D - 10) / 2}  (w^2 \psi + (1 - w^2))^3 \psi^4, \]
where like before we have assumed weights $w$ and $\sqrt{1 - w^2}$ for the weighted sum operation at the end of each residual block.

We can also define a skip-connection-free version of this architecture by removing the shortcut branches. The maximal slope polynomial associated with such a network is $\psi^{D-3}$, as there are $D-3$ total nonlinear layers.

\part{Additional analysis of DKS and related methods}
\label{part:additional_analysis}

\section{Neural Tangent Kernel analysis}\label{sec:NTK-analysis}

Recent advances in the theoretical understanding of neural network training have shown that highly overparameterized networks behave like linear functions of their parameters over the entire course of training by gradient descent \citep{jacot2018neural, li2018learning, du2018gradient, du2019gradient, allen2019convergence, arora2019exact}. This analysis works by approximating the network function by its own 1st-order Taylor series with respect to its parameters (centered at their initial values), and then showing that the parameters remain close enough to their initial values throughout training that the approximation remains a good one. Under this approximation, which becomes exact as the width of each layer goes to infinity, training a neural network with gradient descent resembles kernel regression, with a kernel\footnote{Note that the Neural Tangent Kernel is related to but distinct from the kernels we have been analyzing in this work so far.} known as the \tmtextbf{\tmtextit{Neural Tangent Kernel (NTK)}} that is computed from the network's Jacobian at initialization time. This enables one to accurately predict the functional form of the trained network, and precisely characterize the rate of convergence to this solution by gradient descent. While this type of analysis has been extended to exact and approximate natural gradient descent methods \citep{zhang2019fast, cai2019gram, karakida2020understanding}, we will only consider the gradient descent version in this work.

Even though real networks trained on challenging datasets like Imagenet are typically not wide enough to satisfy the formal requirements of NTK theory (especially when random dataset transformations are employed), the setting where this 1st-order Taylor approximation works well -- known colloquially as the \tmtextit{\tmtextbf{``NTK regime''}} -- may still serve as a rough analogy to more  realistic training. It is thus interesting to consider what effect the network's architecture, activation functions, and initialization has on the NTK, and what this says about training in the NTK regime.

In this section we will review the basics of NTK theory, characterize the NTK in terms of the properties of the network's C map, and show how the C map degeneration which happens naturally in deep networks (as shown in Section \ref{sec:Cmaps_trainability}) leads to a form for the NTK which implies very slow optimization and/or very poor generalization. We will then show how the form of the NTK under DKS, assuming a reasonable choice for the global slope bound $\zeta$, is much nicer, and leaves open the possibility of fast optimization and good generalization. (Although actually proving that it {\tmem{necessarily}} leads to these things would require assumptions on the dataset, and is beyond the scope of this work.)

\subsection{Assumptions of this analysis}\label{sec:NTK-assumptions}

For the remainder of this section we will assume that the network in question is a standard feed-forward MLP comprised of $D$ fully-connected combined layers, where the last such layer has an identity activation function. We will represent the network as the function $f (x, \theta)$ for input vector $x$ and parameter vector $\theta$. For notational simplicity we will assume that the network's output dimension is 1. %, noting that this analysis does generalize to higher dimensions.

The parameter vector $\theta$ will be split across layers into $D$ segments denoted $\theta_i$ for $i = 1, 2, \ldots, D$. Each $\theta_i$ corresponds to $\sqrt{d_i} W_i$ (as opposed to $W_i$ itself), where is $W_i$ is the weight matrix for layer $i$, and $d_i$ its input dimension. This non-standard parameterization, which is known as the\tmtextbf{ \tmtextit{NTK parameterization}}, is what we apply gradient descent on, and is required for the NTK to have its desired properties. We will not consider bias parameters in this analysis.

The training set will consist of $n$ input-output pairs $(x_i, y_i)_{i = 1}^n$ satisfying $\| x_i \|^2 = d_0$ for all $i$ (where $d_0$ is the network's input dimension), and the objective function used to train the network will be the standard mean squared error: $\frac{1}{2}  \sum_{i = 1}^n (y_i - f (x_i, \theta))^2$. We will denote by $\theta (t)$ (or $\theta_i (t)$) the parameters at iteration $t$ of optimization. $\theta (0)$ (or $\theta_i (0)$) will denote their random initial value, which is determined by a Gaussian fan-in initialization applied to the standard parameters (i.e.~the original $W_i$'s).

In addition to our global assumption that the activation functions are infinitely differentiable everywhere except for a finite set of points, we will also assume that they are Lipshitz continuous, which is required in order to apply the results in \citet{jacot2018neural}.

\subsection{NTK definition}

The \tmtextbf{\tmtextit{Neural Tangent Kernel (NTK)}} is given by $\Theta (x, x') = \sum_{i = 1}^D \Theta_i (x, x')$, where $\Theta_i (x, x') \in \mathbb{R}$ denotes the inner-product
\[ \left\langle \left. \frac{\partial f (x, \theta)}{\partial \, \theta_i} \right|_{\theta = \theta (0)}, \: \left. \frac{\partial f (x', \theta)}{\partial \, \theta_i} \right|_{\theta = \theta (0)} \right\rangle . \]
Given the NTK $\Theta (x, x')$ and training dataset $(x_i, y_i)_{i = 1}^n$, the \tmtextit{\tmtextbf{NTK matrix}} $K \in \mathbb{R}^{n \times n}$ is defined by
\[ K_i = \left[\begin{array}{cccc}
     \Theta (x_1, x_1) & \Theta (x_1, x_2) & \cdots & \Theta (x_1, x_n)\\
     \Theta (x_2, x_1) & \Theta (x_2, x_2) &  & \vdots\\
     \vdots &  & \ddots & \\
     \Theta (x_n, x_1) & \cdots &  & \Theta (x_n, x_n)
   \end{array}\right] . \]
We can similarly define the \tmtextbf{\tmtextit{per-layer NTK matrix}} $K_i \in \mathbb{R}^{n \times n}$ for layer $i$ by
\[ K_i = \left[\begin{array}{cccc}
     \Theta_i (x_1, x_1) & \Theta_i (x_1, x_2) & \cdots & \Theta_i (x_1, x_n)\\
     \Theta_i (x_2, x_1) & \Theta_i (x_2, x_2) &  & \vdots\\
     \vdots &  & \ddots & \\
     \Theta_i (x_n, x_1) & \cdots &  & \Theta_i (x_n, x_n)
   \end{array}\right], \]
noting that $K = \sum_{i = 1}^D K_i$.

\subsection{Training in the NTK regime: a brief review}

There are various NTK-type results that bound the convergence rate of gradient descent in the case of finite width layers \citep[e.g][]{du2018gradient}. However, such results are complicated to prove, and seem to be fairly pessimistic in terms of the rate of convergence\footnote{While these results predict exponential convergence, the associated rate constants are close enough to 1 that convergence requires a prohibitively large number of iterations.} they predict, and the width they require.

On the other hand, the situation simplifies considerably in the limit of infinite width (for all layers but input and output ones, whose width is fixed), and quite simple and elegant expressions exist for both the convergence rate of gradient descent, and the function computed at the converged solution \citep{jacot2018neural}. While the infinite width limit is unrealistic, and totally ignores how kernel approximation error affects the theoretical predictions, we will nonetheless use it in our analysis for the sake of simplicity and clarity.

\subsubsection{Notation and basic results}

Before we begin we must define some additional notation. Let
\[ \tmmathbf{y} = \left[\begin{array}{c}
     y_1\\
     y_2\\
     \vdots\\
     y_n
   \end{array}\right] \in \mathbb{R}^n \text{, \qquad} \tmmathbf{f} (t) = \left[\begin{array}{c}
     f (x_1, \theta (t))\\
     f (x_2, \theta (t))\\
     \vdots\\
     f (x_n, \theta (t))
   \end{array}\right] \in \mathbb{R}^n \text{, \qquad} \tmmathbf{k}_i (x) = \left[\begin{array}{c}
     \Theta_i (x, x_1)\\
     \Theta_i (x, x_2)\\
     \vdots\\
     \Theta_i (x, x_n)
   \end{array}\right] \in \mathbb{R}^n, \]
and define $\tmmathbf{k} (x) = \sum_{i = 1}^D \tmmathbf{k}_i (x)$.

In the infinite width limit, provided that $K$ is positive definite (i.e.~non-singular), a standard result of NTK theory is that the $t$-th iterate $\theta (t)$ produced by gradient descent (with learning rate $\eta$) satisfies \
\begin{equation}
  f (x, \theta (t)) = \tmmathbf{k} (x)^{\top} K^{- 1}  (I - (I - \eta K)^t)  (\tmmathbf{y} - \tmmathbf{f} (0)) + f (x, \theta (0)) \label{eqn:NTK-iter-t}
\end{equation}
for all valid $x$. If $0 < \eta \leqslant 1 / \lambda_1 (K)$, where $\lambda_i (K)$ denotes the $i$-th {\tmem{largest}} eigenvalue of $K$, then $\theta (t)$ converges to some ${\theta^{\star}}^{}$. At this solution, the form of $f$ is given by taking $t \rightarrow \infty$ in the above equation, yielding:
\begin{equation}
  f \left( {x, \theta^{\star}}^{} \right) = \tmmathbf{k} (x)^{\top} K^{- 1}  (\tmmathbf{y} - \tmmathbf{f} (0)) + f (x, \theta (0)) . \label{eqn:NTK-predict-new}
\end{equation}

\subsubsection{Convergence behavior on the training set}

Observing that $\tmmathbf{k} (x_i)^{\top}$ is the $i$-th row of $K$, we can ``stack'' both sides of Equation \ref{eqn:NTK-iter-t} to obtain
\begin{eqnarray*}
  \tmmathbf{f} (t) & = & KK^{- 1}  (I - (I - \eta K)^t)  (\tmmathbf{y} - \tmmathbf{f} (0)) + \tmmathbf{f} (0)\\
  & = & \tmmathbf{y} - \tmmathbf{f} (0) - (I - \eta K)^t  (\tmmathbf{y} - \tmmathbf{f} (0)) + \tmmathbf{f} (0)\\
  & = & \tmmathbf{y} - (I - \eta K)^t  (\tmmathbf{y} - \tmmathbf{f} (0)) \\
  & = & \tmmathbf{y} - \sum_{i = 1}^n (1 - \eta \lambda_i (K))^t  (v_i^{\top} (\tmmathbf{y} - \tmmathbf{f} (0))) v_i,
\end{eqnarray*}
where $v_i$ denotes the eigenvector of $K$ corresponding to the eigenvalue $\lambda_i (K)$.

Plugging this expression into the objective function and using the fact that the $v_i$'s are mutually orthogonal gives the following expression for the training loss:
\[ \frac{1}{2}  \sum_{i = 1}^n (y_i - f (x_i, \theta_t))^2 =  \frac{1}{2}  \| \tmmathbf{y} - \tmmathbf{f} (t) \|^2  =  \frac{1}{2}  \sum_{i = 1}^n (1 - \eta \lambda_i (K))^{2 t}  (v_i^{\top} (\tmmathbf{y} - \tmmathbf{f} (0)))^2 . \]
When $0 < \eta \leqslant 1 / \lambda_1 (K)$, this expression converges to 0 which implies that ${\theta^{\star}}^{}$ is indeed a global minimizer of the objective. Moreover, if we employ early stopping, then directions in function space corresponding to eigenvectors with with smaller eigenvalues in $K$ will have converged less than the others. As observed by \cite{jacot2018neural}, this may help explain early stopping's regularization benefits.

A complete picture of the convergence of the objective requires us to know the entire spectrum of $K$ and the coefficients $v_i^{\top} (\tmmathbf{y} - \tmmathbf{f} (0))$. However, assuming that $(v_n^{\top} (\tmmathbf{y} - \tmmathbf{f} (0)))^2$ is significantly large, the convergence speed will tend to $(1 - \eta \lambda_n (K))^{2 t}$ asymptotically. With the optimal learning rate of $\eta = 1 / \lambda_1 (K)$ this becomes $(1 - 1 / \tmop{cond} (K))^{2 t}$, where $\tmop{cond} (K) = \lambda_1 (K) / \lambda_n (K)$ is the condition number of $K$.

\subsubsection{Training only certain layers}

If we only optimize layer $i$, we may replace $K$ by $K_i$ and $\tmmathbf{k} (x)$ by $\tmmathbf{k}_i (x)$ in the above formulas (provided that it is positive definite) in order to obtain a description of the resulting convergence. As before, the training error will converge to zero at a speed determined by the eigenvalues of $K_i$. An analogous statement is also true if we optimize an arbitrary subset $S$ of the layers, in which case we replace $K$ by $\sum_{i \in S} K_i$ and $\tmmathbf{k} (x)$ by $\sum_{i \in S} \tmmathbf{k}_i (x)$. Note that because we are assuming infinitely wide layers there is no paradox here; each layer has enough capacity to memorize the training data entirely by itself.

The form of the NTK matrix allows us to gain insight into the relative contribution of each layer to the overall solution. A layer whose per-layer NTK matrix is much smaller\footnote{By ``smaller'' we mean that a PSD matrix $A$ is smaller than $B$, written $A \prec B$, if $B - A$ is positive definite.} than the other layers will have much smaller gradients, and the changes to its weights made during training will have a much smaller effect on the overall solution. While training any single layer is sufficient to achieve zero error in the infinite width case, what this arguably means for realistically sized networks is that layers with very small per-layer NTKs will train much slower than other layers.

\subsection{An elegant expression for the limiting NTK using C maps}\label{sec:NTK-via-C-map}

The inner-product which defines the per-layer NTK $\Theta_i (x, x')$ is a random variable that depends on the random initial value $\theta (0)$ of the parameters $\theta$. In the limit as the width of the network goes to infinity, $\Theta_i (x, x')$ converges in probability to a deterministic function in much the same way that the network's kernel function does. Indeed, an approximation result directly analogous to Theorem \ref{thm:error-bound-daniely} exists for the NTK \citep[][Theorem 3.1]{arora2019exact}. As we are performing our analysis in the infinite width limit, we will take $\Theta_i (x, x')$ to be this limiting value going forward.

Let $g_i$ be the subnetwork that maps the network's input to the input of the $i$-th combined layer (which is the output of the $(i - 1)$-th combined layer when $i \geqslant 2$). \citet{jacot2018neural} show that
\[ \Theta_i (x, x') = \left[ \widetilde{\kappa_{g_i}} (\Sigma_{x, x'}) \right]_{1, 2}  \prod^D_{j = i} \left[ \mathbb{E}_{u \sim \mathcal{N} \left( 0, \widetilde{\kappa_{g_j}} (\Sigma_{x, x'}) \right)} [\phi_j' (u) \phi_j' (u)^{\top}] \right]_{1, 2}, \]
where we note that $\widetilde{\kappa_{g_1}} (\Sigma_{x, x'}) = x^{\top} x' / d_0$ (since $g_1$ is the identity), and that the quantities inside of $[\cdot]_{1, 2}$ are $2 \times 2$ matrices (so that $[\cdot]_{1, 2}$ extracts their top corner entry).

Let $f_i$ represent the $i$-th combined layer of the network, $q_i$ its output q value (with $q_0 = 1$ being the q value for the network's input), and $\phi_i$ its activation function. By Equations \ref{eqn:C-map}, \ref{eqn:Gamma-def}, and \ref{eqn:C-map-gen-derivative}, we can write the above expression for the NTK as
\begin{eqnarray*}
  \Theta_i (x, x') & = & q_{i - 1} C_{g_i} (c_0)  \prod^D_{j = i} \Gamma_{\phi_j'} (C_{g_j} (c_0), q_{j - 1}, q_{j - 1})\\
  & = & q_{i - 1} C_{g_i} (c_0)  \prod^D_{j = i} \frac{q_j}{q_{j - 1}} C'_{f_j} (C_{g_j} (c_0))\\
  & = & q_{D + 1} C_{g_i} (c_0)  \prod^D_{j = i} C'_{f_j} (C_{g_j} (c_0)),
\end{eqnarray*}
where $c_0 \equiv x^{\top} x' / d_0$ is the c value for the network's input (recalling that $\| x \|^2 = \| x' \|^2 = d_0$ by assumption).

Denote by $h_i$ the subnetwork that maps the input of $f_i$ to the network's final output. Since we have $h_i = f_D \circ f_{D - 1} \circ \cdots \circ f_i$ it follows that $C_{h_i} = C_{f_{_D}} \circ C_{f_{_{D - 1}}} {\circ \cdots \circ C_{f_i}} $, and so by the chain rule we have $C'_{h_i} (C_{g_i} (c_0)) = \prod^D_{j = i} C'_{f_j} (C_{g_j} (c_0))$. Plugging this into the above equation we arrive at the elegant formula
\begin{equation}
  \Theta_i (x, x') = q_D C_{g_i} (c_0) C'_{h_i} (C_{g_i} (c_0)) . \label{eqn:NTK-elegant}
\end{equation}
While this formula has only been proven for deep MLPs (consisting of a composition of a sequence combined layers), we conjecture that it holds for more general architectures.

\subsection{The form of the NTK matrix given a degenerate C map and implications for gradient descent training}\label{sec:NTK-degen-C-map}

In this subsection we will consider the situation where a deep network $f$ has a ``degenerate'' C map $C_f$ that sends nearly all input $c$ values to a small region around some value $\fixpointofcmap$ (in the sense of Section \ref{sec:Cmaps_trainability}), and argue that this implies slow optimization and/or poor generalization in the NTK regime. This analysis can be seen as a more rigorous version of the intuitive argument given in Section \ref{sec:degen-C-maps}, and overlaps with the results of \citet{xiao2020disentangling}.

\subsubsection{Additional assumptions of this analysis}

To simplify the discussion, we will assume that each combined layer has the same activation function (except the last one, which is required to be linear), which means that the network's C map $C_f$ is just the composition of $D - 1$ copies of some local C map $\mathcal{C}$ . We will further assume that $\mathcal{C}$ is itself ``well-behaved'' in the sense that $\mathcal{C}' (1)$ is reasonably close to 1, so that the overall C map $C_f$ is degenerate only because $D$ is large. Moreover, any sufficiently ``deep'' subnetwork of $f$ is also degenerate, and any sufficiently ``shallow'' subnetwork is well-behaved.

Additionally, we will assume that $q_D = 1$ (without loss of generality), and that there are no two distinct inputs $x$ and $x'$, from either the training or test set, for which $x^{\top} x' / d_0$ is very close to $1$ or $- 1$ (which would imply that either $x \approx x'$ or $x \approx - x'$ given our previous assumption that $\| x \|^2 = \| x' \|^2 = d_0$).

%that $q_D = 1$, which we can do without loss of generality since by Equation \ref{eqn:NTK-elegant} the form of the NTK for the more general case is simply the product of the NTK for this particular case and the value of $q_D$.

\subsubsection{NTK matrix estimates}

As in Section \ref{sec:degen-C-maps} there are two main cases to consider for $\fixpointofcmap$: the ``collapsing case'', where $\fixpointofcmap = 1$, and the ``exploding case'', where $0 \leqslant \fixpointofcmap < 1$ with $\fixpointofcmap \not\approx 1$. For the collapsing case we must have $\mathcal{C}' (1) \leqslant 1$ since $\fixpointofcmap = 1$ is an {\tmem{attractive}} fixed point of $\mathcal{C}$. And for the exploding case we must have that $1$ is a non-attractive fixed point (since $\mathcal{C}$ can only have one such point by Proposition \ref{prop:C-map-fixed-point}), and so $\mathcal{C}' (1) > 1$. For the layer index $i$ there are three cases to consider: $i$ is small so that the layer is ``early'' in $f$, $D  - i$ is small so that the layer is ``late'' in $f$, and the default case where neither $i$ nor $D - i$ are small, so that the layer is the ``middle'' of the network.

The following table gives estimates of the layer-wise NTK matrix for each combination of cases. These estimates are computed in Appendix \ref{app:est-per-layer-NTK-mat} using a semi-rigorous style of argument. The results of these computations have been checked numerically for the case of RELU and Erf activation functions (whose C maps have convenient analytic forms). Here, the symbol $E$ denotes the matrix of 1's.

\begin{center}
%{\noindent}\begin{tabularx}{1.0\textwidth}{|@{}X@{}|@{}X@{}|@{}X@{}|@{}X@{}|}
{\noindent}\begin{tabularx}{1.0\textwidth}{|X|X|X|X|}
  \hline
  \ \textbf{Type of degeneration} & \textbf{Early layers} & \textbf{Middle layers} & \textbf{Later layers} \\
  \hline
  Collapsing case ($\fixpointofcmap = 1$) w/ $\mathcal{C}' (1) < 1$ & $K_i \approx 0$ & $K_i \approx 0$ & $K_i \approx \mathcal{C}' (1)^{D - i} E$\\
  \hline
  Collapsing case ($\fixpointofcmap = 1$) w/ $\mathcal{C}' (1) = 1$ & $K_i \approx I$ & $K_i \approx I + \alpha_i  (E - I)$ where $0 = \alpha_1 \leqslant \alpha_2 \leqslant \cdots \leqslant \alpha_D = 1$
  
  (Observed empirically, and conjectured to be true in general.)  & $K_i \approx E$\\
  \hline
  Exploding case ($0 \leqslant \fixpointofcmap < 1$, $\fixpointofcmap \not\approx 1$) w/ $\mathcal{C}' (1) > 1$ & $K_i \approx \mathcal{C}' (1)^{D - i} I$
  
  (very large) & $K_i \approx \mathcal{C}' (1)^{D - i} I$
  
  (very large, but still much smaller than for early layers) & $K_i \approx \mathcal{C}' (1)^{D - i} I + \fixpointofcmap  \mathcal{C}' (\fixpointofcmap)^{D - i}  (E - I)$
  
  ({\tmem{not}} very large)\\
  \hline
\end{tabularx}
\end{center}
From the above values we can compute an estimate of the overall NTK matrix. This is given the following table, which is computed in Appendix \ref{app:est-overall-NTK-mat}:

\begin{center}
%{\noindent}\begin{tabularx}{1.0\textwidth}{|@{}X@{}|@{}X@{}|}
{\noindent}\begin{tabularx}{1.0\textwidth}{|X|X|}
  \hline
  \  \textbf{Type of degeneration} & \textbf{Overall NTK matrix} \\
  \hline
  Collapsing case ($\fixpointofcmap = 1$) w/ $\mathcal{C}' (1) < 1$ & $K \approx \frac{1}{1 - \mathcal{C}' (1)} E$\\
  \hline
  Collapsing case ($\fixpointofcmap = 1$) w/ $\mathcal{C}' (1) = 1$ & $K \approx D (I + \bar{\alpha}  (E - I))$ for some $0 \leqslant \bar{\alpha} \leqslant 1$
  
  (Observed empirically for deep RELU networks with $\bar{\alpha} = 1 / 4$, and for other networks with $\bar{\alpha} = 1 / 3$. Conjectured to be true in general.)\\
  \hline
  Exploding case ($0 \leqslant \fixpointofcmap < 1$, $\fixpointofcmap \not\approx 1$) w/ $\mathcal{C}' (1) > 1$ & $K \approx \frac{1 - \mathcal{C}' (1)^D}{1 - \mathcal{C}' (1)} I + \frac{\fixpointofcmap}{1 - \mathcal{C}' (\fixpointofcmap)}  (E - I)$\\
  \hline
\end{tabularx}
\end{center}

\subsubsection{Implications for speed and generalization of gradient descent training}

There are several implications for gradient descent training that we can infer from the above estimates, all of which are bad.

Firstly, in the collapsing case with $\mathcal{C}' (1) < 1$, and in the exploding case, the magnitude of the per-layer NTK matrices differ substantially over the network. This means that the layers whose per-layer NTKs are not amoung the largest will train very slowly. Given that such layers are only a small fraction of the total, this implies that only a few layers of the network will have the potential to train quickly. While this is technically sufficient to minimize the training loss in the NTK regime, in practice, our networks often won't be highly overparameterized to the extent required by NTK theory, and so we actually will need to train all of the layers in order to fit the dataset. Insofar as the NTK regime is an analogy to this more realistic setting, this analysis thus predicts slow training.

Secondly, in the collapsing case with $\mathcal{C}' (1) < 1$, we have that the per-layer and overall NTK matrices are approximately rank 1, which implies that they have a very high condition number. This means neither the individual layers, nor the overall network, will train quickly, and so the training loss will take a very long time to be minimized no matter what subset of layers we elect to train.

%Thirdly, in the exploding case, we actually have a well-conditioned NTK matrix, which implies rapid convergence to a global minimizer in the NTK regime. However, this is misleading, since this rapid convergence will only happen for a few of the earliest layers in the network, which as argued above, will imply slow training in more realistic settings.

Finally, in all cases we have that the approximate form of the per-layer and overall NTK matrices does not depend on the input training data. Additionally, we have that the vector $\tmmathbf{k} (x)$ does not depend on the training data, since by the derivations in Appendices \ref{app:est-per-layer-NTK-mat} and \ref{app:est-overall-NTK-mat}, $\Theta_i (x, x')$ doesn't depend on $x$ or $x'$ (except to detect when $x \approx x'$ or $x \approx - x'$). And $\tmmathbf{f} (0)$ also won't depend on the training data, since it will either look like a multiple of the ones vector in the collapsing case, or a completely random vector in the exploding case. It thus follows from Equation \ref{eqn:NTK-predict-new} that the predictions made by the fully trained network for a test point $x$ will not actually depend on the input training data in any significant way, making it impossible for the network to generalize. %\james{can we deal with $\tmmathbf{f} (0)$ more cleanly and definitively here? the fact that the C map degenerates should tell us that f(0) is non-informative about the training data I would think}

\subsubsection{Comparison to the results of \citet{xiao2020disentangling}}

The results of this subsection overlap with those of \citet{xiao2020disentangling}, who derive approximations to the overall NTK matrix for deep networks (although not for individual layers) using a different style of argument. Their results mostly agree with ours, except that for the case $\mathcal{C}' (1) = 1$ they predict a universal value of $\bar{\alpha} = 1 / 3$ (whereas we observe $\bar{\alpha} = 1 / 4$ for deep RELU networks), and for $\mathcal{C}' (1) = 1$ they estimate the second (and less significant) term of $K$ to be $\frac{1}{1 - \mathcal{C}' (\fixpointofcmap)}  (E - I)$, whereas we predict $\frac{\fixpointofcmap}{1 - \mathcal{C}' (\fixpointofcmap)}  (E - I)$. Numerical studies we performed on the Q/C maps of deep RELU/$\tmop{erf}$ networks seem to confirm our predictions in these cases.

\subsection{The form of the NTK under DKS}\label{sec:NTK-form-our-approach}

The following theorem is proved in Appendix \ref{app:NTK-our-approach-proof} using Theorem \ref{thm:deviation-bound} and Equation \ref{eqn:NTK-elegant}.

\begin{theorem}
  \label{thm:NTK-our-approach}Suppose that $\Theta_i$ is the per-layer NTK (for layer $i$) of a network conforming to the assumptions of Section \ref{sec:NTK-assumptions} which has been transformed using DKS with global slope bound $\zeta$. Then we have
  \[ \left| \Theta_i (x, x') - \frac{1}{d_0} x^{\top} x' \right| \leqslant 11 (\zeta - 1) . \]
\end{theorem}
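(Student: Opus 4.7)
The plan is to plug the DKS conditions into the elegant formula (\ref{eqn:NTK-elegant}) for $\Theta_i$ and then bound the resulting expression via Theorem~\ref{thm:deviation-bound}, applied to the two subnetworks appearing in that formula.

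First, I would reduce the quantity $\Theta_i(x,x')$ to something purely expressible in terms of C maps. By the assumptions of Section~\ref{sec:NTK-assumptions} we have $\|x\|^2 = \|x'\|^2 = d_0$, so the input q values are $1$ and the input c value is $c_0 = x^{\top} x' / d_0$. DKS enforces $Q_f(1) = 1$ for every subnetwork, hence $q_D = 1$. Substituting into (\ref{eqn:NTK-elegant}) gives
\[
\Theta_i(x, x') \;=\; C_{g_i}(c_0)\, C'_{h_i}\!\left(C_{g_i}(c_0)\right),
\]
where $g_i$ is the subnetwork mapping the input to the input of layer $i$, and $h_i$ is the subnetwork from layer $i$ to the final output. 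Both are subnetworks of the overall network.

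Next, I would apply DKS's control over the two key quantities appearing. DKS enforces $C_f(0) = 0$ and $C_f'(1) \leq \zeta$ for every subnetwork $f$, so both $g_i$ and $h_i$ satisfy the hypotheses of Theorem~\ref{thm:deviation-bound}. Writing $a \equiv C_{g_i}(c_0)$ and $b \equiv C'_{h_i}(a)$, the theorem therefore yields
\[
|a - c_0| \;\leq\; 2\bigl(C_{g_i}'(1)-1\bigr) \;\leq\; 2(\zeta - 1),
\qquad
|b - 1| \;\leq\; 3\bigl(C_{h_i}'(1)-1\bigr) \;\leq\; 3(\zeta - 1).
\]
Combined with the fact that C maps send $[-1,1]$ into $[-1,1]$ (so $|a| \leq 1$), a single triangle-inequality step gives
\[
|\Theta_i(x,x') - c_0| \;=\; |ab - c_0| \;\leq\; |a|\,|b-1| + |a - c_0| \;\leq\; 3(\zeta-1) + 2(\zeta-1) \;=\; 5(\zeta-1),
\]
which is comfortably inside the stated bound $11(\zeta-1)$. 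The slack likely exists in the theorem statement so that it continues to hold if one tracks additional higher-order correction terms from, e.g., a more careful treatment of $q_D$ or from using slightly looser intermediate inequalities; either way it is a direct corollary of the chain above.

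The proof is essentially mechanical and I don't expect substantial obstacles. The one place that requires a moment of care is confirming that \emph{both} $g_i$ and $h_i$ qualify as subnetworks to which the global DKS conditions apply and that their C-map derivatives at $1$ are both bounded by $\zeta$; this is immediate from the definition of the maximal slope function $\mu$ in Section~\ref{sec:global-to-local} together with the condition $C_f'(1)\leq \zeta$ of Section~\ref{sec:slope-ub-cond}. All other steps are elementary manipulations, so no step should be genuinely difficult.
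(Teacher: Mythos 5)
Your proof is correct and follows essentially the same route as the paper's (Appendix \ref{app:NTK-our-approach-proof}): substitute $q_D = 1$ and the DKS conditions into Equation \ref{eqn:NTK-elegant}, apply Theorem \ref{thm:deviation-bound} to both $g_i$ and $h_i$ (noting $C_{g_i}(c_0) \in [-1,1]$ so the derivative bound applies at that point), and finish with a triangle inequality. The only differences are cosmetic: the paper tracks the finer per-subnetwork bounds $C'_{g_i}(1) \leqslant \zeta^{(i-1)/(D-1)}$ and $C'_{h_i}(1) \leqslant \zeta^{(D-i)/(D-1)}$ and uses the inequality $|ab - c| \leqslant |b-1|\,(|c| + |a-c|) + |a-c|$, which produces the constant $6+3+2 = 11$, whereas your exploitation of $|C_{g_i}(c_0)| \leqslant 1$ (C maps send $[-1,1]$ into $[-1,1]$) gives the tighter $5(\zeta-1)$ — so the factor $11$ is just the output of the paper's looser chain, not deliberately reserved slack, and your bound immediately implies the stated one.
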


The bound in this theorem establishes that each per-layer NTK matrix $K_i$ converges to the training data Gram matrix $X^{\top} X / d_0$ as $\zeta$ approaches 1, where $X = \left[\begin{array}{cccc}
  x_1 & x_2 & \cdots & x_n
\end{array}\right]$. It also allows us to reason about larger values of $\zeta$ to a limited extent, although it arguably only becomes interesting when $\zeta < 1 + \frac{1}{11}$. (We suspect that with a tighter and/or more detailed analysis, interesting statements about the relationship of $\zeta$ and the layer-wise NTK could be made for larger values of $\zeta$.)

If $X^{\top} X / d_0$ is low rank, which it will be in the common case that $\dim (x) < n$, this means that the $K$ will approach a low-rank matrix as $\zeta$ approaches 1, which corresponds to slow/impossible training under gradient descent. Intuitively this makes sense, since a value of $\zeta$ very close (or equal) to $1$ corresponds to a network that looks almost perfectly linear at initialization time (by Theorem \ref{thm:deviation-bound}), and thus could fail to properly train as per the discussion in Section \ref{sec:too-linear}. Indeed, the foundational works on NTK only predict that the NTK will be positive definite (i.e.~full-rank) when the activation functions are non-polynomial (and thus nonlinear) functions, and DKS makes them approach linear functions as $\zeta \rightarrow 1$.

So while a value of $\zeta$ very close to 1 is clearly a bad choice, a value somewhat close to $1$ (such as 1.5; which we use in most of our experiments) will allow $K$ to retain some of the structure of $X^{\top} X / d_0 $, thereby ensuring that the network's prediction (given in Equation \ref{eqn:NTK-predict-new}) depends on the training data and thus has the potential to generalize. It will also allow $K$ to deviate enough from $X^{\top} X / d_0$ to be full rank with a potentially small condition number (which would imply fast training). Unfortunately, $\tmop{cond} (K)$ is difficult to accurately estimate without full knowledge of both $X^{\top} X / d_0$ and the behavior of the C map over its entire domain, and existing methods to bound $\tmop{cond} (K)$ \citep[e.g.][]{du2018gradient} seem unlikely to produce useful results in our context. We leave the problem of accurately estimating the value of $\tmop{cond} (K)$ under DKS to future work.

\section{Variance propagation, signal propagation, and their relationship to approximate kernel analysis} \label{sec:var/sig-prop-relation-to-kernel}

Many previous methods for constructing and initializing neural networks are justified using analysis frameworks which attempt to characterize the initialization-time behavior of neural networks. The two most prominent examples of such frameworks are ``variance propagation'' and ``signal propagation''. 

In this section we review these frameworks, highlight certain mathematical issues with them, and provide counterexamples to their general claims where possible. We also relate them and their predictions to the kernel approximation framework underlying DKS, and advocate for the latter as a more powerful and mathematically rigorous alternative. 

\james{TOOD: maybe add more equations when describing var/sig prop?}

\subsection{The original variance propagation analysis of \citet{lecun1998efficient}}\label{sec:orig-var-prop}

The earliest such analysis that we are aware of appeared in \citet[][Section 4.6]{lecun1998efficient}, which we will call ``variance propagation''. It is based on the idea of computing the per-unit variance for each layer as a function of the per-unit variance of the previous layer, where the underlying distribution is over training cases. Typically, all units within a layer will have the same variance, and so only one scalar needs to be ``propagated''.

For fully-connected layers, \citet{lecun1998efficient} argue that the variance for a particular output unit is equal to the $\ell_2$-norm of that unit's vector of weights, multiplied by the per-unit variance of the input. For this to hold, they require that the input units are {\tmem{uncorrelated}} with each other and have the same variance. For nonlinear layers they assume the use of the activation functions that approximately preserve the mean and variance of their inputs. As an example, they give a transformed $\tanh$ activation function which resembles the identity function within a prescribed range of ``typical'' inputs around 0.

Assuming that each weight vector has a norm of approximately 1, and that the training input data is whitened, one might try to apply this single-layer argument recursively to all layers of the network, starting from the input. Unfortunately, this doesn't seem to work. While a whitening transform applied to the training data ensures that the input units of the first layer are uncorrelated with variance 1, uncorrelatedness will fail to hold for the output of this layer, making it impossible to apply the same argument for subsequent layers.

As no assumption is made about the network's parameters, beyond that the weight vectors must have a norm of $\sim\!\!1$, one can design a counterexample where these variance computations break down after the first fully-connected layer. For example, consider a linear neural network with a 1-dimensional input $x$, a 2-dimensional hidden layer, and a 1-dimensional output $y$, defined by the equations
\[ h_1 = x, \quad h_2 = - x, \quad \text{and} \quad y = \frac{1}{\sqrt{2}} h_1 + \frac{1}{\sqrt{2}} h_2 . \]
The input weight vector for each unit has norm 1, and yet $y = 0$ for all $x$, so that the network will {\tmem{not}} preserve the per-unit variance of $x$. (Intuitively, this is because $h_1$ and $h_2$ have strong negative correlation.) Note that this example can easily be generalized to arbitrarily wide layers.

Another more subtle issue with the analysis in \citet{lecun1998efficient} is that the approximation errors arising from the analysis of nonlinear layers can easily accumulate with depth, and may push the activation functions out of their assumed range of inputs.

\subsection{\poscite{klambauer2017self} modified variance propagation}\label{sec:SELU-var-prop}

\citet{klambauer2017self} present a modified version of \poscite{lecun1998efficient} variance propagation analysis, which would seem to address the issues we've highlighted. They do this by arguing that as long as the inputs to a fully-connected layer are independent, its outputs (which are fixed linear combinations of its inputs as determined by the weights) will be approximately Gaussian distributed, thanks to the Central Limit Theorem (CLT) and the assumption of wide layers. Using this approximation they then compute the moments of the subsequent nonlinear layer using Gaussian integrals (similar to those that define Q maps), without having to make any strong assumptions on its activation function. % rather than having to assume that nonlinear layers preserve variance as \citet{lecun1998efficient} does.

Unfortunately, CLT is not actually applicable to arbitrary weighted sums of variables, even when those variables are perfectly iid. For example, if the weights of the sum are $(1, 0, \ldots, 0)$, then the output will have the same distribution as the first input unit, which won't be Gaussian in general. Even if we somehow ruled out such weight vectors as having ``low probability'', and focused only on weight vectors for which CLT would apply, there would still be major difficulties to overcome.

Firstly, since we need to show that the output units of a fully-connected layer are approximately independent (in order to recursively apply the same analysis to subsequent layers), we would need to show that they are {\tmem{jointly}} Gaussian distributed with a diagonal covariance matrix. This would require the use of one of the multi-dimensional versions of CLT, all of which require significant additional hypotheses compared to the standard one-dimensional versions. Secondly, the approximate independence provided by CLT would not be sufficient to recursively apply the same argument to subsequent layers, as CLT typically requires {\tmem{exact}} independence of the variables under summation\footnote{In an attempt to preempt this criticism, \cite{klambauer2017self} refer to \citet{bradley1981central}, which proves a version of CLT that relaxes the independence assumption. However, this result assumes a very specific type of weak dependence which is unlikely to satisfied in this setting, and also only applies for one-dimensional variables.}. Thirdly, because CLT requires that the number of variables under summation is large, it usually won't be applicable to the first layer of the network (where the input dimension is a fixed property of the training data).

\subsection{The version of variance propagation in \citet{glorot2010understanding}}

\citet{glorot2010understanding} present a modified version of \poscite{lecun1998efficient} variance propagation analysis, which has formed the basis of many subsequent analyses over the years. The first change they make is to compute per-unit variances with respect to the {\tmem{joint distribution on network inputs and parameters}} (as opposed to just the inputs). Their second modification is to directly assume that the network's activation functions behave like the identity function over typical inputs, thus implying that they preserve the mean and variance of their inputs.

Assuming that the weights of a given fully-connected layer are iid with mean zero and variance $\sigma^2$, and that its input units are mean zero with variance $v$, they show that the per-unit variance of the layer's output is simply $k \sigma^2 v$, where $k$ is the input dimension.

Notably, by computing variances with respect to training cases \emph{and} parameters, they do not require the input units to a layer to be uncorrelated. (Intuitively, this is because the multiplication by the independent mean-zero random weights causes any two random variable to become decorrelated.) This addresses one of the main problems of the original variance propagation analysis, and allows it to be recursively applied over the entire network without issue. Unfortunately, the modification also introduces a new issue not present in prior analysis: {\tmem{the variances no longer refer to any single network (with a particular parameter setting), but rather to a distribution over networks.}} This makes the interpretation of these variances unclear, and represents a subtle but serious issue in their analysis.

One could possibly argue that, with high probability, a single network sampled from this distribution would have variances similar to those computed over the whole distribution. However, this would require additional hypotheses, since otherwise there are simple counterexamples to the general claim. For example, consider a linear network with $D \gg 1$ fully-connected layers of width 1, where the biases are zero and the weights are sampled iid from $\mathcal{N} (0, 1)$. The function computed by this network amounts to just multiplying its input by the product of $D$ scalar weights drawn independently from $\mathcal{N} (0, 1)$. Variance propagation would predict that such a network will exactly preserve the variance of its input, or in other words, that the product of these $D$ weights would be approximately 1. However, the distribution of the product of $D$ independent samples from $\mathcal{N} (0, 1)$ is {\tmem{highly}} concentrated around zero for even moderate large values of $D$ (which can be seen via Monte Carlo simulation), despite the fact that the variance of this product is 1.

These counterexamples are not restricted to narrow networks either. If, for example, the weights are drawn iid from a heavy-tailed distribution that is highly concentrated around zero and has variance $1 / k$ (where $k$ is the width), then even for large $k$ there will be an overwhelming probability that all the weights will be close to zero, leading to a network which ``squashes'' its input. This contradicts the prediction made by variance propagation, which is that such a network would approximately preserve the variance of its input.

\subsection{Extension of variance propagation to RELUs}\label{sec:He-init}

\citet{he2015delving} extend the version of variance propagation in \citet{glorot2010understanding} to deal specifically with RELU activation functions, as there is no zero-centered range of inputs for which RELUs resemble the identity function. To do this, they introduce the additional hypotheses that the weights have a symmetric distribution around zero, that the biases are initialized to zero, and that each RELU layer is directly preceded by a fully-connected layer. Given these hypotheses, it follows that the input to each RELU layer is distributed symmetrically around zero, and thus the expected squared valued of a RELU unit will be exactly $1 / 2$ times its input variance. One can then use this expected squared value in place of the input variance for the variance propagation calculation at the next layer, since multiplication by the mean-zero weights of said layer will restore a mean of zero.

While it deals with nonlinear layers in a cleaner fashion (at least for RELU networks), this analysis retains the central issue present in \poscite{glorot2010understanding} analysis, which is that the variances do not necessarily describe the behavior of a single network. Moreover, while the variance propagation formulas were originally derived for fully-connected networks, \citet{he2015delving} also applies them to convolutional networks without any additional justification. (This is problematic since the weight sharing violates the iid weights assumption.) % This is an issue because the weight sharing (across locations) that happens in convolutional layers violates a key assumption used in the derivation of the standard variance propagation equations: that the weights are iid.

\subsection{Extensions of variance propagation to normalizer-free residual networks}

\citet{zhang2019fixup}\footnote{\citet{zhang2019fixup} claim that their analysis actually describes the case of a constant network input, where only the network's parameters are random variables. However, all of their variance propagation equations express the per-unit output variance of a layer as a function of its per-unit input variance. As this variance will be zero for the first layer when the network's input is constant, this claim appears to be unsupported.}, \citet{de2020batch}, and \citet{shao2020normalization}\footnote{Technically, \citet{shao2020normalization} never actually specify what distribution they compute variances over. However, the only interpretation which makes sense, given the majority of their derivations, is that this is the distribution over network inputs and parameters. Despite this, they treat the parameters as fixed in some of their discussions.} adopt \poscite{glorot2010understanding} variance propagation framework to perform an analysis of ResNets (which are described in detail in Section \ref{sec:standard-ResNet}) without normalization layers. While a mostly straightforward application of the existing variance propagation formulas, they require an additional one which says that the output variance of a residual block is the sum of the output variances of its two branches.

While the formula itself is correct (given the conceits of variance propagation), as far as we can tell it has never been properly justified. \citet{zhang2019fixup} attempt to justify it using $\tmop{Var} (x + y) =\mathbb{E} [\tmop{Var} (y | x \nobracket)] + \tmop{Var} (x)$, although this formula is shown to be incorrect by taking $x = y$ \james{could someone confirm my interpretation here?}. \citet{shao2020normalization} give a different argument which assumes that the input units to a residual block are uncorrelated (which won't be true in general), and which treats the weights as fixed instead of random variables (which violates one of the core premises of variance propagation).

For reference, we will give an argument here for fully-connected networks. Let $x$ be the input to the residual block and $z$ be the input to the final fully-connected layer of the residual branch, which has weight matrix $W$. Given that $\mathbb{E} [W] = 0$ and $W$ is independent of $x$ and $z$ we have
\[ \tmop{Cov} (Wz, x) =\mathbb{E} [Wz (x -\mathbb{E} [x])^{\top}] =\mathbb{E} [W] \mathbb{E} [z (x -\mathbb{E} [x])^{\top}] = 0, \]
where we have used $\mathbb{E} [Wz] =\mathbb{E} [W] \mathbb{E} [z] = 0$. From this it follows that $\tmop{Var} (Wz + x) = \tmop{Var} (Wz) + \tmop{Var} (x)$.

\subsection{Signal propagation (aka mean field analysis)}\label{sec:signal-prop}

Closely related to variance propagation is an approach for understanding the initialization-time behavior of neural networks commonly referred to as ``signal propagation''\footnote{Note that some works \citep[e.g.][]{de2020batch} use the term ``signal propagation'' to refer to certain versions of what we have been calling ``variance propagation''. In such works elements of both types of analysis often appear, and the precise distinction between them becomes a bit blurry.} or ``mean field analysis'' \citep{poole2016exponential}. In this approach, instead of propagating variances, one propagates per-unit expected squared values %(which end up being the same for each unit in a given layer)
, or expected products between corresponding units from two copies of the same network (each fed different inputs). Here, expectations are taken with respect to the distribution on network parameters, and on the two network inputs (which may be correlated). In order to propagate through nonlinear layers, one approximates their input as being Gaussian distributed with mean zero and covariance matrix determined by the expectations from the previous layer. As will be explained in the next subsection, the expectations computed under signal propagation end up being equal to q and m values (or c values, after suitable normalization) as we have defined them in this work. Indeed, Q/C maps were originally derived by \citet{poole2016exponential} in the context of signal propagation.

The mathematical justification of signal propagation given by \citet{poole2016exponential} in the case of a single fully-connected combined layer is roughly as follows. One starts from the assumption that the $k$ entries of the input vector are iid random variables with expected squared values given by $q$. Then, multiplication by an $m \times k$ random matrix with mean-zero iid entries of variance $\sigma^2 / k$ produces $m$ outputs, each of which has bounded variance $\sigma^2 q$, and is a sum of $k$ iid terms. For large $k$ one applies the Central Limit Theorem (CLT) to get that these sums will be approximately iid Gaussian distributed, with mean zero and variance $\sigma^2 q$. It then follows that the entry-wise outputs of the nonlinear layer are approximately iid, with expected squared values given by Gaussian integrals. Expected products are then handled using a straightforward generalisation of this argument.

In principle, this single layer argument can be applied recursively to a composition of combined layers, always starting from the hypothesis that the entry-wise inputs to a given layer are iid with some known expected squared value. Unfortunately, this recursive approach runs into the same problems with CLT discussed in the last paragraph of Section \ref{sec:SELU-var-prop}, which cannot be easily repaired\footnote{To the best of our knowledge, the only mathematically rigorous CLT-based treatment of the width-limiting behavior of random networks is that of \cite{matthews2018gaussian}, which is given in the context of approximate kernel analysis. It's not immediately obvious if/how \poscite{matthews2018gaussian} arguments can be used to rigorously justify signal propagation.}. 

As with variance propagation, the interpretation of the expectations computed under signal propagation isn't clear. In particular, there is no obvious relationship between these expectations, and the properties of a single randomly initialized network.

Signal propagation's two main advantages over variance propagation are that it handles nonlinearities in a much more general and precise way (via Gaussian integrals), and that it also describes the propagation of expected unit products for correlated network inputs. These features make it a much more powerful framework for understanding the understanding the initialization-time behavior of neural networks, and for designing initialization schemes. However, as we will discuss next, approximate kernel analysis has the same advantages while also being mathematically rigorous and more clearly interpretable. %, and thus arguably represents a more appealing alternative.

\subsection{Relationship of variance/signal propagation to approximate kernel analysis}\label{sec:var/sig-prop-apk-relationship}

When $\sigma^2 = 1$ (in the notation of the previous subsection), signal propagation's defining equations for fully-connected combined layers are precisely equivalent the local Q and C maps computed under approximate kernel analysis. We may thus interpret the quantities propagated by signal propagation as q and m values, and their normalized versions as c values. And for other values of $\sigma^2$, a similar statement holds for a slightly generalized notion of Q/C maps (as given in \citet{poole2016exponential}).

In some sense, this equivalence acts as a mathematical justification of signal propagation's equations, although with a different meaning for the quantities being propagated. % (and one which is directly relevant to the behavior of a fixed network with definite inputs). 
In particular, the expected squared values computed by signal propagation correspond to q values, and can thus be thought of as approximations of dimension-normalized squared norms of the associated feature map's vectors. Similarly, the expected products computed by signal propagation correspond to m values, and can thus be viewed as approximations of the dimension-normalized inner-product between two such vectors (or the same vector for two different network inputs).

Given this relationship between approximate kernel analysis and signal propagation, we can also relate approximate kernel analysis to \poscite{glorot2010understanding} version of variance propagation (and its extensions). To so see this, note that insofar as the units in each layer have mean zero (under variance/signal propagation's assumed distribution), their expected squared values are equal to their variances, in which case variance propagation also computes q values. Moreover, even when the means are not zero, as is the case for RELU networks, one can modify variance propagation to deal directly with expected squared values in a manner similar to \citet{he2015delving}.

While approximate kernel analysis provides the same level of description as signal propagation, it has several advantages. The first is that it is based on a rigorous mathematical theory with clearly defined hypotheses and probabilistic error estimates. This allows one to be confident in determining which architectures it can be applied to, and to have a rigorous pathway for extending it to new architectures (which we exploited in our treatment of normalization and pooling layers). The second advantage is that the quantities it computes have a clear relationship to the (high probability) initialization-time behavior of actual randomly initialized networks with definite inputs and weights. The third is that it applies to networks with low dimensional inputs, for which the CLT-based arguments commonly used to justify signal propagation are inapplicable. And while these advantages come at the cost of additional/stronger hypotheses (such as Gaussian or SUO-distributed weights), such hypotheses are likely required in order for the predictions made by the equations to be accurate in general. %For example, approximate kernel theory requires SUO or Gaussian distributed weights (with a variance that scales as $1 / k$), whereas signal/variance propagation only requires weight that are mean-zero and iid. Intuitively, the stronger hypothesis on the weight distribution is used to argue that the inputs to each nonlinear layer will ``look like'' samples from a Gaussian distribution (with high probability), without needing to invoke CLT. And the property of being distributionally invariant to multiplications by orthogonal matrices (as discussed in Section \ref{sec:random-ortho-discussion}) plays a crucial role in allowing one to argue that output inner-product matrix (IPM) of an affine linear depends only on its input IPM.

\subsection{Extensions of variance/signal propagation to networks with Batch Normalization layers}\label{sec:var-sig-prop-for-bn}

\citet{de2020batch} propose an extension of variance propagation to networks with Batch Normalization (BN) layers, in order to analyze standard ResNets. To do this, they argue that for large mini-batches, BN layers will compute a per-unit empirical variance which closely matches the per-unit variance computed under variance propagation. Thus, after normalization by the square root of this variance, the per-unit output variance of a BN layer will be always be 1, regardless of its per-unit input variance. 

There appears to be a subtle issue with this argument. As discussed above, variance propagation is a faithful description of a single randomly initialized network (with definite inputs) only insofar as the variances it computes correspond to q values. q values in turn are approximations of dimension-normalized squared norms of entire activation vectors, and have no clear relationship to the properties of individual units within a layer of such a network. So in general, the empirical unit-wise variances computed by a BN layer will {\tmem{not}} correspond to the variances computed by variance propagation, even approximately. It is conceivable that with additional hypotheses on the batch size and distribution, the network, and the initialization, the empirical distribution of the values of each input unit to a BN layer (taken across the mini-batch, for {\tmem{fixed parameters}}) might all have roughly the same variance with high probability, in which case the approximation in \citet{de2020batch} would be a valid one. However, formalizing this would likely be quite difficult.

\citet{yang2019mean} propose an extension of signal propagation/mean field analysis to networks where BN layers are inserted between affine and nonlinear layers. To facilitate this, they propagate $B \times B$ matrices representing the expected products between different copies of the same unit for each of $B$ possible inputs to the network, where $B$ is the batch size. For networks without BN layers the propagation equations decomposes nicely in terms of low-dimensional Q/C maps, while for networks with BN layers no such decomposition exists, due to the way different elements of the mini-batch interact in BN layers. \citet{yang2019mean} are nonetheless able to analyze the resulting high-dimensional propagation equations using various sophisticated approximations and characterize their asymptotic fixed point behavior.

In their approach, the batch size, as well as the distribution used to generate the mini-batch, are encoded via the initial $B \times B$ matrix of expectations to the first layer. Thus, their analysis is not dependent on $B$ being large, or on any strong distributional assumptions about the mini-batch. However, unlike for networks with element-wise nonlinearities (where approximate kernel analysis gives rise to the same equations as signal propagation), there is no mathematically rigorous derivation of their generalized equations for BN layers. Thus, it remains an open question as to whether these equations are accurate approximations in the sense of Section \ref{sec:how-accurate-approx}. \citet{yang2019mean} provide empirical evidence that they are, at least for fairly wide networks with some commonly used activation functions.

\section{Review and analysis of related approaches for constructing and initializing deep neural networks}\label{sec:review-and-analysis-of-related}

In this section we will review some existing techniques, both standard and otherwise, for constructing and initializing neural networks in order to make them easier to train. We will further analyze these techniques from the perspective of approximate kernel theory by exploiting the latter's connections with variance/signal propagation established in Section \ref{sec:var/sig-prop-relation-to-kernel}.

\subsection{The fan-in initialization and related approaches derived from variance propagation}\label{sec:fan-in-and-friends}

The classical fan-in initialization \citep{lecun1998efficient} for fully-connected neural networks samples filter weights iid with mean zero and variance $1 / k$, where $k$ is the total input dimension. Here, $1 / k$ is precisely the value required for their version of variance propagation to predict constant per-unit variances throughout the entire network, with other values leading to an exponential increase or decrease with depth. However, as \poscite{lecun1998efficient} variance propagation analysis is only a reasonable approximation for activation functions that preserve the mean and variance of their input, their initialization will tend to fail in more realistic settings, especially as the network's depth increases \citep{he2015delving}.

\citet{glorot2010understanding} use their own version of variance propagation to motivate a similar initialization scheme, where the weight variance is $2 / (k + m)$, with $m$ being the output dimension. This choice is made as a ``compromise'' between the following two competing constraints: that the per-unit variances should be uniform across layers, and that the variances of the per-layer gradients should also be similarly uniform. 

We would argue that $2 / (k + m)$ is not a good choice in general compared to $1/k$. For example, if the layer widths alternate between between $n$ and $2 n$ for some $n \geqslant 1$, running variance propagation across two consecutive combined layers would predict a decrease in the variance by a factor $n (2 n)  (2 / (n + 2 n))^2 = 8 / 9$. This will lead to an exponential convergence of the variance towards zero as depth increases. Meanwhile, for the choice $1/k$, variance propagation (or approximate kernel analysis) predicts no such exponential increase or decrease for {\tmem{any}} choice of widths.

\citet{he2015delving} propose to use a weight variance of $2 / k$ specifically in RELU networks, which compensates for how RELU nonlinear layers decrease the variance by a factor of $1 / 2$ instead of preserving it. This is based on their expanded version of variance propagation that handles RELU activation functions.

Setting aside issues of mathematical rigor and the interpretation of the quantities being propagated, variance propagation and approximate kernel analysis involve similar calculations (as discussed in Section \ref{sec:var/sig-prop-apk-relationship}), and so these three initialization schemes can all be viewed as methods to control the q values of the network. When combined with the normalization of the input vectors (as per Section \ref{sec:PLN}), and applied to standard feed-forward fully-connected networks with suitable activation functions\footnote{Here, ``suitable" means (approximately) mean and variance preserving for the standard fan-in initialization, or RELU for \poscite{he2015delving} modified version. Note that the large majority of activation functions do not fall into the former category.}, the fan-in initialization method and its extensions achieve q values of $\sim\!\!1$ throughout the network, which is one of the four constraints enforced by DKS.

Having q values of 1 ensures that local C maps are the same for each combined layer (assuming they all use the same activation function), and that the final output of the network falls within a reasonable range. If this is not done, q values can grow very large or small with increasing depth, leading to various problems. In particular, very large values can cause bounded monotonic activation functions like $\tanh$ to ``saturate'', so that local C maps become increasing degenerate with depth. And very small values can cause most activation functions to behave in a way that is ``too linear'', which may limit the effective expressivity of the network (as per the discussion in Section \ref{sec:too-linear}). Notably, the RELU activation function is immune both of these issues due to it being positively homogeneous, which perhaps explains its popularity.

However, by not enforcing the other three conditions of DKS, networks using these initializations can still have degenerate network-level C maps, and can experience an exponential accumulation of kernel approximation errors with depth (so that the q values won't actually be constant in practice). As a concrete example of the former problem, consider the example from Section \ref{sec:relu-network-degen} of a standard deep RELU network. This network's C map doesn't depend on the input q value at all (as long its uniform), but still develops degenerate behavior at very high depths, leading to a network that is essentially untrainable.% (Intuitively, this degenerate behavior corresponds to the network mapping all inputs to an approximately constant output vector, which will be problematic from the standpoint of optimization no matter what the corresponding q value is.)

Another more subtle issue with these initializations is that q values of 1 will work very badly for certain activation functions. For example, consider the activation function defined by $\phi (x) = \tanh (\alpha x)$. As $\alpha$ increases, an input q value of 1 becomes arbitrarily bad, leading to increasing levels of saturation and consequent C map degeneration. The reason that q values of one work reasonable well in practice is that the most commonly used activation functions in the literature happen to work well it, or have local C map behavior that is insensitive to q values (as is the case for RELUs). Notably, DKS does {\tmem{not}} suffer from this issue (despite also enforcing q values of 1), as its use of a multiplier on the input of each activation function ensures that 1 will always be optimal (since any other value can be effectively ``simulated"). 

%In general, there can be no optimal universal q value without making strong assumptions about the activation functions. To see this, note if $\phi (x)$ is an activation function for which $q_0$ is optimal (e.g.~because it yields a local C map that is both well-behaved and not too close to the identity), then $\phi (100 x)$ will be an activation function for which $q_0 / 100^2$ is optimal.

\subsection{Layer-Sequential Unit-Variance initialization and Within-Layer initialization}\label{sec:LSUV}

\citet{mishkin2015all} proposed an initialization method called Layer-Sequential Unit-Variance (LSUV), which uses an iterative procedure that starts from a standard random initialization and adjusts the scale of each weight matrix/filter bank to achieve the condition that the variance of the output of each affine layer -- taken over the channels, locations and training cases -- is approximately equal to 1. These variances are computed by evaluating the network empirically on random mini-batches of training data.

By taking $\phi$ in Equation \ref{eqn:average-unit-approx} to be the identity function, we have that the average value (across channels) for each location-wise output vector of an affine layer is approximately zero with high probability. The variances computed by LSUV can therefore be interpreted as estimates of the length-normalized squared norms of the location vectors for each layer, except that they are also averaged over locations and network inputs. Thus, we can think of LSUV as enforcing the condition that the ``average q value'' for the output of each affine layer is equal to 1. This condition is similar to the one that the fan-in initialization (and its variants) are trying to achieve, and thus our discussion and critique of those methods (in Section \ref{sec:fan-in-and-friends}) also applies to LSUV. %(Approximation errors may be less of an issue due to LSUV's use of empirically computed statistics.) 
In particular, a network initialized with LSUV can still have degenerate C maps, with all of their consequent problems.

Because LSUV uses empirically computed statistics instead of canned formulas, it takes into account the given architecture and network topology, as well as the properties of the input training vectors. By contrast, variants of the fan-in initialization are usually only valid for the particular activation function and network topology they were derived for, despite often being applied more generally. They also implicitly assume that the training input vectors are appropriately normalized, which isn't always the case in practice. (Note that DKS, while it is also based on formulas instead of empirical evaluations of the network, takes into account the activation functions and network topology, and is packaged with a data preprocessing technique for the input vectors.)

The ``Within-Layer Initialization'' (WLI) of \citet{krahenbuhl2016data} can be viewed as a modification of LSUV where one enforces the condition that the mean and variance of the output of each layer \tmtextit{and each channel} is 0 and 1 respectively (as opposed to LSUV, which considers the average variance across channels). This is done by rescaling the filter bank weights separately for each output channel, and setting the bias appropriately. %WLI also enforces a mean of zero for each channel by setting the biases appropriately. %(which notably will {\tmem{not}} happen automatically as $k$ grows, as one does average across channels).
Because it enforces conditions per-channel instead of averaging across channels, this modification is harder to compare directly to fan-in initializations or DKS. It is perhaps more closely related to Batch Normalization (which is discussed later in this section), as it achieves the same conditions that BN does before the first step of optimization.

\subsection{Self-normalizing neural networks}\label{sec:SELU-nets}

Self-normalizing neural networks \citep{klambauer2017self} use Scaled Exponential Linear Unit (SELU) activation functions to achieve a per-unit mean of 0 and variance of 1 asymptotically with depth, as computed under variance propagation. Due to the relationship between variance propagation and Q/C maps (discussed in Section \ref{sec:var/sig-prop-apk-relationship}), this is equivalent to $Q_g(g)$ having an attractive fixed point at $q=1$, and $C_g(c)$ having an attractive fixed point at $c=0$, where $g$ is a SELU nonlinear layer. Assuming the use of PLN, and a standard feed-forward architecture (or normalized sums in more general architectures), these conditions imply that $Q_f(1)=1$ and $C_f(0)=0$ for all subnetworks $f$, which is two of the four conditions enforced by DKS.

As previously discussed, $Q_f(1)=1$ for all subnetworks $f$ is a good condition to have, and will prevent extreme q values from developing in deep networks (which can adversely affect C maps). However, it won't in general guarantee a well-behaved C map in deep networks, even when combined with the condition $C_f(0)=0$. 

For a SELU nonlinear layer $g$ we have $C_g' (1) \approx 1.0716$, which can be computed numerically using Equation \ref{sec:estimate-expectations} and the methods described in Section \ref{sec:estimate-expectations}. Along with the condition $C_g (0) = 0$, this guarantees a well-behaved C map up to a modest depth. For example, given a standard 100 layer network $f$ we have $C_f (0) = 0$ and $C'_f (1) = \mathcal{C}' (1)^{100} \approx 1.005 \cdot 10^3$, so that $C_f$ is reasonably well behaved according to Theorem \ref{thm:deviation-bound}. However, if $f$ has 300 layers then we have $C'_f (1) \approx 1.0157 \cdot 10^9$, which indicates degenerate behavior with $\fixpointofcmap = 0$ (in the sense of Section \ref{sec:degen-C-maps}).

\subsection{The ``Edge of Chaos'' (EOC) method}\label{sec:EOC-method}

Consider a network $f$ defined by a composition of $D$ combined layers, each with the same nonlinear activation function $\phi$. Every combined layer will have the same Q map, which we denote by $\mathcal{Q}$. As discussed in Section \ref{sec:edgeofchaos-q-solution}, $\mathcal{Q}$ will typically have a fixed point $\qfixedpoint$ which is rapidly converged to under repeated applications, and thus one may approximate the q values as uniform and constant across layers. This allows one to define a local C map $\mathcal{C}$ which only depends on the input c value, and which is the same for each combined layer.

As argued by \citet{schoenholz2016deep}, $\mathcal{C}' (1)$ will describe the asymptotic dynamics of the c values as they evolve through the layers of the network in the limit as $D \rightarrow \infty$. $\mathcal{C}' (1) > 1$ indicates rapid convergence of the c values to $1$, while $\mathcal{C}' (1) < 1$ indicates rapid convergence to a value $c_0 < 1$. Because of its close proximity to these two undesirable depth-limiting asymptotic behaviors, a network with $\mathcal{C}' (1) = 1$ is said to be ``on the edge of chaos'', and will have its c values converge slowly towards 1 at an asymptotic rate which is sub-exponential.

In the initialization method proposed by \citet{schoenholz2016deep}, which we will call the \tmtextbf{\tmtextit{``Edge of Chaos'' method (EOC)}}, one initializes the weights using a standard Gaussian fan-in method, with the variance of the weights and biases chosen so that $\mathcal{C}' (1) = 1$. (Note that $\qfixedpoint$ also depends on these variances, which is taken into account when computing $\mathcal{C}' (1)$.) As observed by \citet{schoenholz2016deep}, there are typically infinitely many combinations for these two variances which achieve $\mathcal{C}' (1) = 1$ (assuming any exist for the given activation function), and so one is chosen arbitrarily. Notably, the condition $\mathcal{C}' (1) = 1$ is based entirely on the properties of $\mathcal{C}$, and as such does not depend on the depth of the network.

In \citet{xiao2018dynamical}, a version of EOC was proposed that used an Orthogonal Delta initialization technique for convolutional layers, with variances chosen so that $\mathcal{C}' (1) = 1$. In their experiments \citet{xiao2018dynamical} showed that basic convolutional neural networks (without skip-connections or batch normalization) can be successfully trained on CIFAR-10 with the resulting initialization at depths of up to 10,000. This was a remarkable result, as such networks are considered essentially impossible to train at even modestly large depths when initialized using standard methods.

\subsubsection{Relationship to DKS}

DKS is in many ways a spiritual successor to EOC, and is derived using an extended version of the same basic Q/C map analysis that underlies the latter. Like EOC, DKS also makes use of the Orthogonal Delta initialization technique. But despite these similarities, there are many important differences between the two methods, which we will discuss in sequence below.

Firstly, DKS enforces uniform q values via data pre-processing instead of relying on the (presumed) convergent fixed-point behavior of the Q maps to achieve this asymptotically. See Section \ref{sec:edgeofchaos-q-solution} for a more detailed discussion of this point.

Secondly, instead of targeting the condition $\mathcal{C}' (1) = 1$ for each combined layer $f$, DKS targets $\mathcal{C}' (1) = \mu^{- 1} (\zeta)$, so that the ``degree of nonlinearity'' is calibrated to the given architecture. This is motivated by looking at the overall C map behavior of the network, instead of the fixed point convergence behavior of its local C maps. From the perspective of fixed point convergence, DKS achieves exponential rate towards $\fixpointofcmap = 0$, while EOC achieves sub-exponential rate towards $\fixpointofcmap = 1$. While this might seem like a point against DKS, one must remember that the precise rate of its exponential convergence will depend\footnote{In particular, the worst-case rate of convergence to the fixed point will be given by $\mathcal{C}' (0)$, which by Equation \ref{eqn:C-slope-ineq} satisfies $\mathcal{C}' (0) \geqslant 2 - \mathcal{C}' (1) = 2 - \mu^{- 1} (\zeta)$. For a $D$ layer convolutional network this is $2 - \zeta^{1 / D}$, which will be just slightly below $1.0$ when $D$ is large (given typical choices for $\zeta$).} on the overall depth $D$, the effect of which is that the c values will be far from converged even by the $D$-th layer.

The third difference between DKS and EOC is that DKS manipulate the network by transforming the input and output of the activation functions, as opposed to changing the variances of the weights and biases. If we consider the ``equivalent parameters'' (as per Section \ref{sec:method-as-pure-init}), DKS is implicitly searching over a space of distributions with more degrees of freedom than the two used by EOC, and one which allows for non-zero correlations between the weights and biases.

Fourthly, despite having more degrees of freedom with which to manipulate the network, DKS makes use of all of them in order to enforce a total of four conditions on the Q and C maps of the network (which are listed in Section \ref{sec:local_map_conditions_list}). EOC meanwhile only enforces a single condition ($\mathcal{C}' (1) = 1$), which leaves one of its two degrees of freedom unconstrained. (The effect of this is that EOC has a manifold of possible weight and biases variances from which to choose, and is therefore under-determined.)

Fifthly and finally, DKS is applicable to a diverse set of architectures, thanks to our generalized notion of Q/C maps, analysis of pooling layers, and network polynomial construction. As it was originally developed, EOC assumes a strictly feedforward network consisting of a sequence of fully-connected or convolutional combined layers.

%\textbf{[maybe this final discussion of experiments doesn't belong here and should be moved or deleted entirely]} In our experiments, we benchmarked the version of EOC from \citet{xiao2018dynamical}. We found that it does indeed let us train standard feedforward convolutional networks of high depths on Imagenet without batch normalization or RELUs at a reasonable speed. However, the training speeds are still significantly slower than what is achieved with a standard ResNet architecture and initialization (for both SGD and K-FAC). Our approach meanwhile enables the vanilla architecture to match the training speed of a ResNet, provided that we use K-FAC as the optimizer.

\subsubsection{Possible failure modes of EOC}

It is worth pointing out that the condition $\mathcal{C}' (1) = 1$ enforced by EOC does not necessarily imply that the network will look perfectly linear at initialization time, as $\mathcal{C} (0) = 0$ is not enforced in EOC as it is in DKS. Nonetheless, depending on the activation function, there may be choices for the weight and bias variances which can make the network look ``too linear'', thus leading to very slow training as per the discussion in Section \ref{sec:too-linear}. As such choices are not explicitly forbidden in EOC, and are indeed compatible with the condition $\mathcal{C}' (1) = 1$, this may represent a failure mode of the method.

Conversely, without $\mathcal{C} (0) = 0$, the condition $\mathcal{C}' (1) = 1$ may not sufficient to ensure that the entire network's C map $C_f$ is well-behaved. For example, given a choice of 1 and 0 for the variances of the weights and biases respectively, we have $\mathcal{C}' (1) = 1$ for unmodified RELU activation functions (by Section \ref{sec:relu-network-degen}), and yet $C_f$ quickly degenerates as $D$ grows, as can be seen from the figures in Section \ref{sec:relu-network-degen}. Moreover, our experiments in Section \ref{sec:main-experiments} confirm that standard deep RELU networks (without BN layers or skip connections) are not readily trained at high depths, even with a high-powered optimizer like K-FAC.

\subsection{The Looks Linear method}\label{sec:looks-linear-method}

\citet{balduzzi2017shattered} use path-weight analysis (which we review in Appendix \ref{app:path-weight-analysis}) to argue that gradients with respect to the network's input will decorrelate or ``shatter'' in deep RELU networks, leading to difficulties when training with gradient descent\footnote{Note that this prediction agrees with our NTK analysis in the sense that the per-layer NTK matrix of the first layer of a deep RELU network is approximately the identity, although the implications for training are somewhat different; see Appendix \ref{app:path-weight-analysis}}. They also argue that this happens to a much lesser extent in ResNets.

Motivated by these observations, and by the fact that this effect doesn't occur in a purely linear network, \citet{balduzzi2017shattered} propose the Looks Linear (LL) method for initializing/constructing RELU networks. This method exploits the fact that $\phi (x) - \phi (- x) = x$ when $\phi$ is the RELU function in order to construct a network that behaves exactly like a linear one at initialization. In particular, one replaces $\phi$ by $(\phi (x), \phi (- x))$ for each RELU nonlinear layer (which effectively doubles its output channel dimension), and initializes the weights of the affine layer after each RELU layer according to $(W, - W)$, where $W$ is sampled according to a Delta Orthogonal initialization. For fully-connected combined layers this produces an overall computation $W \phi (x) - W \phi (- x) = W (\phi (x) - \phi (- x)) = Wx$, while for convolutional combined layers the computation is similarly linear (although harder to express in standard matrix notation).

From the perspective of our analysis, perfectly linear-looking networks have (very) well-behaved C maps, and thus satisfy one of the main necessary conditions for trainability. With such networks there is always the danger that they may be ``too linear'' in the sense of Section \ref{sec:too-linear}, but LL-initialized networks avoid this because $W_1 \phi (x) + W_2 \phi (- x)$ will become highly nonlinear given a relatively small perturbations of $W_1$ away from $- W_2$.

The main two obvious disadvantages to the LL approach are that it only works for RELU networks, and that it doubles the widths of RELU layers (without proportionally increasing the network's expressivity/capacity). Beyond these things, the main difference between LL and DKS is the precise mechanism used to make the network ``look linear'', and the implications that this has for optimization (which is not well understood in either case). In DKS, the degree of nonlinearity of a nonlinear layer, as measured by properties of its C map (such as the slope at $c = 1$), varies smoothly as a function of the parameters of the transformed activation functions, and so one could argue that it should also vary smoothly as a function of the network's parameters, resulting in easier optimization. By contrast, the linearity property of the LL method depends on a delicate mirrored symmetry of the weights in each layer, so that relatively small perturbations in these could lead to large changes in the degree of nonlinearity of the network. This sensitivity may make optimization more difficult, and may explain the optimization difficulties we observed in our experiments with the LL approach. (See Section \ref{sec:looks-linear-experiments} for more details.)

\subsection{Residual connections}\label{sec:resnet-related-discussion}

Residual Networks aka ResNets \citep{he2016deep, he2016identity}, which are described in detail in Section \ref{sec:standard-ResNet}, have become the dominant neural network architecture for computer vision problems. What makes ResNets so successful isn't that they are more powerful or expressive than other more traditional deep convolutional architectures like VGG \citep{simonyan2015very}, but rather that they are easier to train with stochastic gradient descent at very high depths \citep{he2016deep, szegedy2017inception}. This easier training is owed to their use of skip connections (aka shortcut connections; which have been a feature of network architectures since the 1990s), Batch Normalization (BN) layers \citep{ioffe2015batch}, RELU nonlinearities, and the surprising interplay between all three of these components \citep{de2020batch}. Moreover, popular new architectures such as Efficient Nets \citep{tan2019efficientnet} and Transformers \citep{vaswani2017attention} are based on the same high-level residual block structure, and differ only in terms the layers contained in their residual branches.

ResNets, and their generalizations, thus represent a solution to the problem of how to achieve fast and stable training of very deep neural networks. And while the nature of this solution is still not totally understood, there has been progress in this direction (of which we will cover only a small subset).

\citet{veit2016residual} argued that residual networks behave like a ensemble of shallow networks of varying depth throughout training. They gave evidence for this by showing that deep residual networks are highly robust to ``lesion" operations which remove or rearrange layers, and that the network's gradient is dominated by contributions made by paths through the network with fewer nonlinear layers

\citet{zhang2019fixup} observed that if one removes the BN layers from a ResNet-V2 network and initializes the last convolutional layer of each residual branch to zero (along with a few other smaller tweaks to the architecture and its initialization), the resulting network achieves training speed comparable to a standard ResNet, at least at modest batch sizes. In such networks, the residual blocks act as identity functions at initialization tune, only becoming nonlinear as training progresses. Subsequent work showed that one could achieve similar results in BN-free networks simply by using learnable weights on the residual branches that are initialized to zero \citep{de2020batch, bachlechner2020rezero}. More recently, it was found by \citet{shao2020normalization} that the branch sum can use static (non-learnable) weights, where the relative size of the weight on the residual branch is set to a small value (that can vary between blocks).

To help explain these findings for BN-free networks, \citet{de2020batch} applied a version of variance propagation to argue that the per-unit output variance of a residual block will be roughly 1 plus its per-unit input variance, so that the $i$-th residual block has a variance proportional to $i$. Then, because the output variance of each residual {\tmem{branch}} is constant (due to the use of BN), it follows that the {\tmem{relative}} contribution to the block's output made by the residual branch shrinks as $1 / i$. This, they argue, leads to a network which behaves more like an linear function than it otherwise would. In Appendix \ref{app:resnet-map-analysis} we make this argument more rigorous by computing q values in a (nearly) standard ResNet, and showing that their growth over layers leads to a better behaved C map. We also show that an identical C map can be obtained in a network without normalization layers via careful selection of weights on the residual and shortcut branches.

%\james{maybe mention some of the other work on understanding ResNets?}

\subsection{Normalization layers}

Normalization layers (the two most common types of which are defined in Section \ref{sec:normalization-layers}) have become a standard component in neural networks, since the introduction of Batch Normalization (BN) by \citet{ioffe2015batch}. In addition to the important and specific role they play in ResNet-style architectures (as discussed in Section \ref{sec:resnet-related-discussion} and Appendix \ref{app:resnet-map-analysis}), these layers have been observed to make deep neural networks easier to train on their own. In this subsection we will discuss possible explanations for this, with a particular focus on ones arising from Q/C map analysis. We will also give some arguments for why normalization layers alone are insufficient to enable fast training of deep networks.

\subsubsection{Layer Normalization layers}

As discussed in Section \ref{sec:layernorm-maps}, a Layer Normalization (LN) layer $f$ has the property that $Q_f (q) = 1$ for all $q$, provided that its learnable gain and bias are set to their initial values. Additionally, when applied after a combined/nonlinear layer $g$, the C map of the composition has the property that $C_{f \circ g} (0) = 0$, regardless of C map behavior of $g$. Thus, when used after each nonlinear layer in a network initialized as per Section \ref{sec:param_dist}, LN layers achieve uniform q values of 1 throughout the network, and also $C_h (0) = 0$ for all subnetworks $h$, which are two of the four conditions enforced by DKS. (Note that if LN layers are instead inserted {\tmem{before}} each nonlinear layer, they will not achieve the latter condition.)

The way LN layers achieve these conditions differs from DKS in at least two ways. First, they perform direct calculation of the relevant quantities instead of using q and c values as approximations. This allows them to work with arbitrary initializations of the parameters (including badly scaled ones), poorly scaled input data, and without any explicit knowledge of the network's structure or activation functions. Second, LN layers continue to enforce a version of these conditions throughout training, or at least as long as their learnable gain and bias remain close to their initial values of 1 and 0.% (Although it's not clear whether continued enforcement is desirable.)

As discussed previously (e.g.~Subsection \ref{sec:fan-in-and-friends}), uniform q values of 1 is a useful property to have, but is far from sufficient to ensure trainability. The condition $C_h (0) = 0$ for all subnetworks $h$ is meanwhile only one half of the two conditions required by Theorem \ref{thm:deviation-bound} to ensure well-behaved C maps, and arguably the less important of the two.

To make this discussion more concrete, we will consider how putting LN layers after each nonlinear layer will effect the C map of a standard deep RELU network. For a RELU nonlinear layer $g$ we have by Equation \ref{eqn:C-map-RELU} that $C_g (0) = 1 / \pi$ and $C_g' (1) = 1$ (which follows from Equation \ref{eqn:C-map-RELU} by taking the derivative and letting $c \rightarrow 1$). Taking the derivative in Equation \ref{eqn:C-map-LN} we have $C_f' (c) = 1 / (1 - C_g (0)) = \pi / (\pi - 1)$, and so by the chain rule $C'_{f \circ g} (1) = C_f' (C_g (1)) C_g' (1) = C_f' (1) C_g' (1) = \pi / (\pi - 1) \approx 1.467$. Thus we see that while the use of an LN layer after a RELU layer gives us $C_{f \circ g} (0) = 0$, it comes at the price of increasing the C map slope from 1 to $\sim\!\!1.467$.

The following plot shows the extended C map for a RELU network $h$ with 20 combined layers, with and without LN layers used after each nonlinear layer:
\begin{center}
\resizebox{4.5in}{3in}{\includegraphics{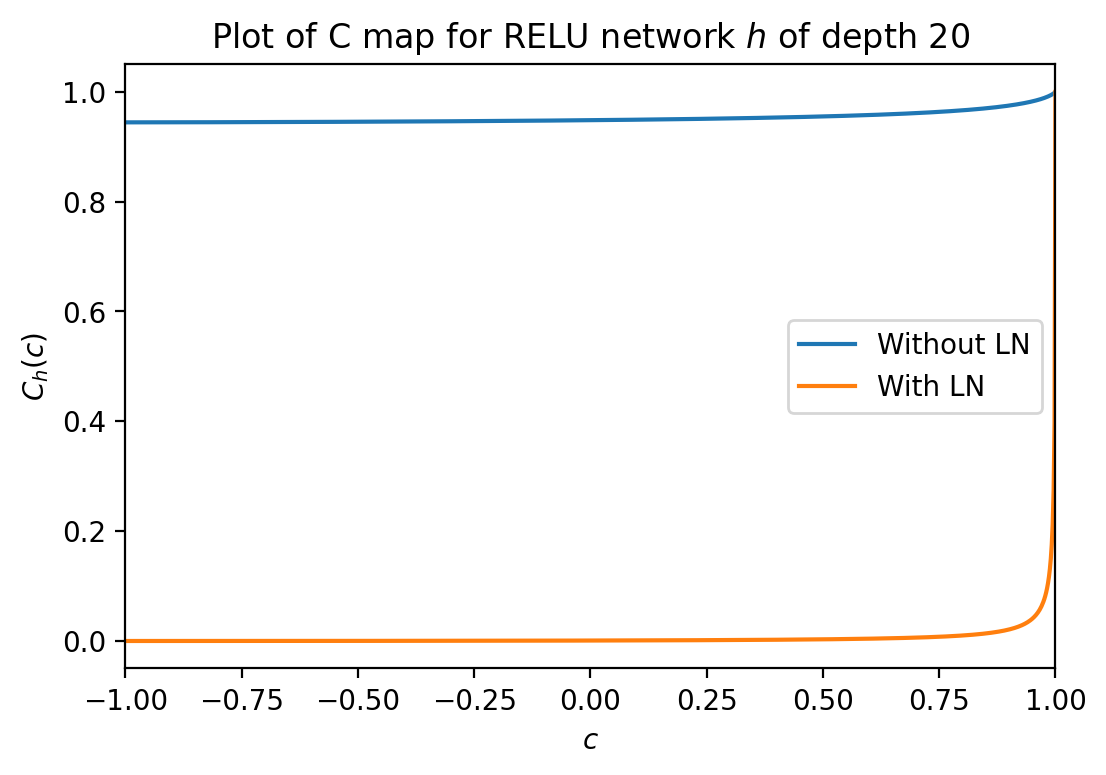}}
\end{center}

From this plot we can see that the C map for the network with LN layers has a much larger output range. However, for the vast majority of its input domain, the output is restricted to a small region around 0, and it is still highly degenerate in the sense of Section \ref{sec:degen-C-maps} (and thus suggestive of poor training).

Beyond their affect on the initial behavior of the network, LN layers may also have an independent and possibly beneficial effect on optimization, as they change the relationship of the loss and the parameters. \citet{ba2016layer} argue that LN layers lead to a Fisher information matrix with more favorable properties for optimization. Another intuition is that an LN layer decouples the scale and direction of the weights of its immediately preceding affine layer, which may encourage faster optimization with gradient descent. Networks with LN layers may also be also be smoother when considered as functions of either their parameters or their inputs, since the change in the output of an LN layer is always bounded. Despite these intuitions, as far as we know there has yet to be strong theoretical or empirical evidence in favor of a specific optimization benefit to LN layers beyond their affect on the network's initial behavior.% And successful applications of LN layers always seem to be associated with major changes to the initial network behavior.

\subsubsection{Batch Normalization layers}

As discussed in Section \ref{sec:batchnorm}, Batch Normalization (BN) layers cannot be analyzed within the Q/C map framework we have presented. Despite this, we can still make some observations regarding their effect on network behavior in the context of our previous discussions.

As shown in Section \ref{sec:degen-C-maps}, one common way that a deep network $f$ can become difficult/impossible to train is when all input vectors map to approximately the same output vector (as measured by cosine similarity) at deeper layers of the network. This happens naturally in deep RELU networks, where BN is typically applied. Placing BN layers throughout the network may mitigate this particular pathology by ensuring that the empirical distribution of each unit over the mini-batch has a large variance compared to its mean. However, this won't obviously do anything to help with the opposite problem discussed in Section \ref{sec:degen-C-maps}, where output vectors appear ``random'', and in particular fail to reflect the geometric relationships between the original input vectors.

These intuitions are confirmed by \poscite{yang2019mean} signal propagation analysis of RELU networks with BN layers (which we discuss in Section \ref{sec:var-sig-prop-for-bn}). In particular, \citet{yang2019mean} predict that the distances between the output vectors (generated from different inputs) will converge to a constant as depth increases, and that this leads to an exponential increase in the norm of the gradient.

Like with LN layers, placing BN layers after each affine layer makes the network insensitive to the scale of its weight parameters, which can thus correct for badly scaled initial weights. One can perhaps also view BN layers as ensuring that the ``average q value'' across the mini-batch is 1, although this is an imperfect analogy since BN layers operate on a per-channel basis instead of averaging over channels.

Similar to LN layers, BN layers may also have an effect on optimization which is independent from their effect on the network's initial behavior. Evidence for this includes the fact that various methods which modify networks and their initializations in an attempt to eliminate the need for BN (such as those we've previously discussed) fail to achieve the same optimization performance under SGD, except perhaps at small mini-batches sizes (where classical optimization considerations like curvature matter a lot less, as argued in \citet{zhang2019algorithmic}). %However, this evidence is not definitive, as these methods probably don't perfectly replicate the initialization-time effects of BN, especially in the context of ResNets \textbf{[we should really run a ResNet with LN layers to clarify this]}.

In support of the optimization-effect hypothesis, \citet{santurkar2018does} argue that BN layers make a network's output a smoother function of its parameters, and that this helps improves the performance of gradient descent. \citet{li2019exponential} argue that gradient descent applied to networks with BN layers behaves similarly to gradient descent applied to a normalizer-free network with a decaying learning rate, thus allowing gradient descent with a constant learning rate to converge in the stochastic setting (where it otherwise might not). Finally, \citet{roger-notes} argues that placing a BN layer after an affine layer $g$ will make the network invariant to scaling and shifting of $g$'s input, and that this leads to a curvature matrix for $f$'s parameters which is better conditioned.

%Our experimental results with DKS and the K-FAC optimizer also shed light on this question, by showing that networks without normalization layers, if they avoid degenerate C maps (e.g.~via our method), can train as fast as a standard ResNet. This suggests that the optimization benefit of using BN layers (beyond its effect on the network's initial behavior), is fully subsumed by the use of a strong optimization like K-FAC. [but since we only ever run at bs 512 are we actually showing this any more than we show it for SGD on networks with skips?]

%In this work, and in other recent related ones \citep[e.g.][]{zhang2019fixup, de2020batch, bachlechner2020rezero, shao2020normalization}, it has been demonstrated that BN layers can be replaced with careful initialization schemes, with perhaps the only cost being a loss of optimization speed with SGD at larger batch sizes. This suggests that of the different effects that BN is believed to have, the most important one is likely its effect on the initial behavior of the network, especially in networks with residual connections.

\part{Experiments and conclusions}
\label{part:experiments_and_conclusions}

\section{Experimental setup}\label{sec:experimental-setup}

In this section we will describe and justify the setup we will use in our experiments, which will depart somewhat from common practice.

\subsection{Training problem and datasets}

The benchmark training problem we use in all of our experiments is image classification, on either the Imagenet \citep{deng2009imagenet} and CIFAR-10 \citep{krizhevsky2009learning} datasets.

The training objective is the average loss over the training set, with the loss given by the cross-entropy error between network's output (interpreted as ``logits'' of a softmax) and the dataset labels. We also measure top-1 classification accuracy, and report this instead of the loss in our plots due to its higher interpretability.

For Imagenet, we use an image preprocessing and random augmentation pipeline similar to the one from \citet{szegedy2015going} to obtain images of size $224 \times 224$. The training set is obtained from the standard Imagenet training set, minus the last 10000 cases (which is used as a new validation set), and the test set is obtained from the usual Imagenet validation set. Training accuracy is reported using the examples actually used during training, which are subject to random augmentation. Test accuracy is meanwhile reported using examples from the test set without random augmentation.

For CIFAR-10, we apply the standard preprocessing consisting of mean subtraction and normalization of each color channel. The training and test sets are their standard versions.

For both datasets we apply Per-Location Normalization (as described in Section \ref{sec:PLN}) as a final stage of processing before feeding the inputs to the network. This is done for all approaches and experiments unless stated otherwise, in the interest of fairness.

\subsection{Focusing on optimization speed}

While we will report test set accuracy in many of our experiments, our primary focus will be on optimization speed, as measured using training accuracy. Moreover, the decisions we make while designing out experiments will be in the interest of obtaining the cleanest and fairest comparison for optimization speed, and we will tune various components (like the learning rate schedule, regularization, etc) with this in mind. In this subsection we will explain our rationale for this decision.

The current \tmtextbf{\tmtextit{standard approach to deep learning}} is to train normalized residual architectures with RELU nonlinearities, such as ResNets or Transformers, with basic optimizers like SGD or Adam. Alternative approaches (such as normalization-free networks using Fix-up \citep{zhang2019fixup}, standard deep convolutional networks initialized with EOC, or DKS) can underperform the standard approach in one of two basic ways. First, they can yield networks whose training plateaus earlier, resulting in underfitting, or whose training is just much slower overall. And second, they can yield a worse inductive bias for typical training problems (like Imagenet classification), resulting in increased overfitting.

In our initial experiments we found that while alternative deep learning approaches are typically affected by both of these problems, slower training is by far the more significant one, particularly for networks without skip connections. Moreover, the resulting underfitting problem (given a finite optimization step budget) led to a commensurate degradation in test set performance. (These findings echo those of \citet{he2016deep} and \citet{szegedy2017inception}, who observed that the main benefit of adding skip connections was faster optimization.)  Thus, by focusing on training speed, we are isolating what is the more serious problem currently affecting alternative methods, and the one which arguably should be addressed before attempting to close the generalization gap.

We believe that the increased generalization gap we observed on Imagenet for alternative approaches such as ours is small enough that it can be overcome through the use of additional regularization strategies, dataset augmentation, scheduling of the optimizer hyperparameters, architectural tweaks, etc. We will leave this to future work. This position is echoed by \citet{zhang2019fixup}, and was arguably validated in the recent work of \citet{brock2021high} on ``Normalizer-Free Networks'', which used a combination of these techniques to close the generalization gap for one such alternative approach. It is also our view that overfitting may become less of a concern as the machine learning community moves beyond supervised benchmark problems like Imagenet classification, and towards giant/streaming datasets and unsupervised methods.

\subsection{Network architectures and regularization}\label{sec:exper-setup-nets-and-reg}

In our experiments we train standard and modified ResNets and Wide-ResNet models for Imagenet and CIFAR-10 image classification. (A detailed description of all the relevant architectures is given in Section \ref{sec:application-to-resnet}.) 

We will place particular emphasis on ``ablated'' versions of ResNets, where Batch Normalization (BN) layers and/or the skip connections are removed, leaving everything else unchanged. The motivation for doing this is that we want to facilitate the fairest possible comparison to the standard deep learning approach. In particular, since we are focused mostly on optimization speed in our comparisons, we want to use models that are provably no more powerful than standard ResNets in terms of the class of functions they can express, so that the fundamental data fitting problem doesn't become any ``easier''.

In the interest of making our experiments fair we also didn't include the standard L2 regularization that is often used when training ResNets. This is because the effect of L2 regularization on the effective capacity of a model is highly dependent on the model's parameterization, and this will vary significantly across the different approaches we consider. For example, due to the way BN layers are invariant to scalar multiplication of their inputs, one can rescale the weights of any affine layer that precedes a BN layer without changing the overall output of the network. Thus, networks with BN layers can effectively ``cheat'' the L2 regularization penalty in a way that networks without BN layers cannot. In our experiments we found that the removal of L2 regularization did have a small but still significant effect on the test set performance of standard ResNets, which is reflected in our reported results.

Note that our purpose in experimenting with these modified ResNets is not to show they are a good replacement for standard ResNets in practice. Rather, our purpose is to determine the extent to which we can replace the ingredients of the standard deep learning approach with various alternatives (that preserve the model class), while retaining its fast training capabilities. If we were primarily interested in maximizing test set performance in our evaluations, then we would be free to design a network architecture best suited to DKS, and to include whatever regularization scheme we found to be most effective. And while this does seem like an interesting direction to explore, as has been done in the context of other alternative approaches to deep learning \citep{brock2021high}, it is beyond the scope of the present work.

\subsection{Automatic learning rate schedules with Fire PBT}\label{sec:FIRE PBT}

Achieving a near optimal rate of convergence for standard ResNet training with SGD seems to require a carefully designed learning rate schedule (and not just a fixed value), especially for more difficult datasets like Imagenet. Through extensive and costly trial and error, the community has produced learning rate schedules which seem to work well on certain standard problems, such as Imagenet classification with ResNets. These typically involve a quick ``warm-up'' of the learning rate from a moderate starting value to a larger one, followed by a decay or step-wise descent towards zero. %Some other popular ones employ a counter-intuitive ``cyclic" behavior with periodic increases and decreases \citep{loshchilov2016sgdr, smith2017cyclical}.

In our experiments we consider a large variety of approaches for training deep networks, most of which depart from the standard one along directions such as architectural choices, optimizers, initialization, etc. There is no reason to think that a learning rate schedule tuned for standard ResNet training with SGD should perform well for all such approaches, and this was borne out in our initial experiments. (By contrast, it seemed like the momentum hyperparameter was much less important.) Thus, in order to conduct fair experiments, which are minimally confounded by hyperparameter tuning, we need a way of determining a near-optimal learning rate schedule for each approach. And this should ideally be done in an automatic way in order to reduce the role of experimenter bias.

Recently, \citet{firepbt} proposed an alternative version of Population Based Training \citep[PBT,][]{jaderberg2017population} called FIRE PBT, which is designed specifically for the dynamic adjustment of optimizer hyperparameters. Like many other methods for automatically tuning the learning rate, \emph{standard} PBT falls into the trap of being too greedy, and tends to lower the learning rate too quickly for the sake of short-term improvements in the loss \citep{wu2018understanding}. FIRE PBT is designed to tackle this issue, using a strategy which we will now briefly explain.

Both PBT and FIRE PBT work by having many workers independently train neural networks, each with their own values for the hyperparameters. Both methods also associate a fitness to each of their workers which guides an evolutionary procedure. In PBT, this fitness is simply the current value of the objective function (which can be defined on the training or test sets). In FIRE PBT these fitnesses are altered in order to promote population members which may have a worse objective but are promising in other ways. In particular, a separate class of workers, called evaluators, periodically copy the model parameters of other workers, change the hyperparameters (e.g.~decay the learning rate), and measure the \emph{rate} at which the objective function improves while training with the new hyperparameters. The higher the rate of improvement as measured by the evaluator, the higher the fitness FIRE PBT will associate to the original worker whose model parameters were copied. This approach encourages workers to use ``non-greedy" hyperparameters (such as high learning rates), if it is shown that doing so leads to better performance after training with different hyperparameters (such as lower learning rates) in the long run.

In their experiments, \citet{firepbt} showed that FIRE PBT worked very well at automatically generating learning rate schedules on the fly for standard ResNet training with SGD, matching or exceeding the performance of the previously mentioned community-tuned schedules. In our initial experiments we found that this capability carried over nicely to non-standard deep learning approaches as well, and so we decided to use it in all of our subsequent experiments.

We now discuss the technical settings related to our use of FIRE PBT. We follow the presentation of \citet{firepbt}. Each experiment uses 36 workers. We divide them into three sub-populations $\mathcal{P}_1, \mathcal{P}_2, \mathcal{P}_3$, each of size 8, and the evaluator set $\mathcal{H}$ which includes the remaining 12 workers. We train for a maximum of 200,000 steps when training on ImageNet and for a maximum of 25,000 when training on CIFAR-10.

\textbf{Hyperparameters} We optimise the learning rate hyperparameter. When using SGD or Adam, the learning rate is initially sampled log-uniformly in the range $[10^{-5}, 1]$. When using K-FAC, we instead use the range $[10^{-7}, 10^{-3}]$.

\textbf{Objective function} We evaluate the current model by reporting the current negated training loss. 

\textbf{Ready} A member of the population is deemed ready to exploit and explore every 500 steps when training on ImageNet and 50 steps when training on CIFAR-10. 

\textbf{Exploit} We use a truncation selector: If a population member has a fitness in the bottom 25\% of the population, it copies the neural network weights and hyperparameters of a random member in the top 25\% of the population.

\textbf{Explore} We multiply the learning rate by a value sampled at uniform random between the following two value: [0.8, 1.25]. 

We further set the FIRE PBT hyperparameter of \python{max_eval_steps} to 7200 when training on ImageNet, and 360 when training on CIFAR-10. We set the hyperparameter \python{min_steps_before_eval} to 5000 and 10000 for $\mathcal{P}_2$ and $\mathcal{P}_3$ respectively when training on ImageNet, and to 250 and 500 when training on CIFAR-10.

The training curves plotted in our results section use the values recorded by the sequence of workers that led to the best eventual objective value (negative loss).

In Appendix \ref{app:PBT-LR-example} we plot the learning rate schedule found by FIRE PBT for some of our main experiments. We note that apart from some small fluctuations, these schedules are fairly simple and natural looking, and typically involve an initial rapid increase in the learning rate, followed by a gradual decay. Thus, we don't believe that the qualitative nature of our results is highly dependent on our use of FIRE PBT versus a simpler approach for learning rate tuning.

\subsection{Optimizers}

In our experiments we used SGD (with momentum), K-FAC \citep{martens2015optimizing}, Adam \citep{kingma2014adam}, and Shampoo \citep{gupta2018shampoo, anil2020scalable} as optimizers, with the majority just using SGD and K-FAC. Our motivation for considering stronger optimizers is that alternative deep learning approaches such as ours seem to benefit substantially from using them.

For all optimizers we used a momentum parameter of 0.9, and adjusted the learning rate dynamically throughout training using FIRE PBT. For Imagenet experiments we used a batch size of 512 with all optimizers, and for CIFAR-10 we used a batch size of 1024.

For Adam we used a value of $10^{-5}$ for the ``$\epsilon$'' parameter, which performed slightly better than the default value of $10^{-8}$.

For K-FAC, we used a 0.99 exponential decay of the curvature matrix, and computed its inverse every 50 iterations. We initialized K-FAC's damping parameter $\lambda$ to $10^{-3}$, and exponentially decayed it at the rate 0.98 every 50 iterations to a minimum value of $10^{-6}$. Finally, we enforced a maximum norm of $10^{-2}$ on all updates, with the norm computed using K-FAC's approximate curvature matrix (as in \citet{ba2017distributed}).

For Shampoo we used an epsilon parameter of $10^{-5}$ and an exponential decay factor if 0.99 for the second moments. In order to achieve optimization performance which was competitive with K-FAC, we used an ``exponent multiplier'' of 3 (which increases the exponent of all factors of the preconditioner by a factor of 3, with 1 being the default value), and enabled ``grafting'' (which uses Adagrad \citep{duchi2011adaptive} to compute the magnitude of the update for each parameter tensor, and the usual Shampoo formula to compute its direction).

\subsection{Hardware and implementation details}

All of our experiments were implemented in TensorFlow \citep{tensorflow2015-whitepaper}. Each of the 36 workers used by FIRE PBT ran on an 16 chip 32 core Cloud TPU v3 Pod \citep{tpus}. For multi-core TPU Pods, each core ran a ``replica" of the entire gradient computation on its assigned subset of the training mini-batch, with gradients and other key optimization quantities being averaged across the cores to simulate a single core computation. 

As long as training cases are independent of each other in the forward pass, this simulation is exact. However, this independence is slightly violated for networks with BN layers, and the resulting simulation is thus imperfect. Handling BN in this way in the multi-core setting has nonetheless become standard practice, and is even thought to be beneficial as it increases the ``noise" originating from BN layers, which is thought to have a regularizing effect.

\section{Experimental results} \label{sec:experimental-results}

%\minitoc

%\startcontents[sections]
%\printcontents[sections]{}{2}{}
%\setcounter{tocdepth}{2}
%\localtableofcontents
%\startcontents[mytoc]
%\printcontents[mytoc]{}{2}{}

% \begingroup
% \parindent=0em
% \etocsettocstyle{\rule{\linewidth}{\tocrulewidth}\vskip0.5\baselineskip}{\rule{\linewidth}{\tocrulewidth}}
% \localtableofcontents 
% \endgroup

In this section we will present our main experimental results as a series of plots of training/test top-1 accuracy vs iteration number, with some discussion. Most of our experiments will use the standard RELU, tanh, and softplus activation functions, the latter of which is a smooth analogue of the RELU function defined by $\phi (x) = \log (1 + \exp (x))$.

%\subsection{List of results}

\setcounter{tocdepth}{3}

\localtableofcontents 

\setcounter{tocdepth}{1}

\james{TODO: add an overview/roadmap of experiments for this section with a summary of conclusions?}

\subsection{Default hyperparameters and network configurations assumed for each experiment}\label{sec:default-experiment-settings}

We use a value of $1.5$ in all experiments for DKS's global slope bound parameter $\zeta$, unless stated otherwise.

Except for DKS and the Looks Linear method, whenever using RELU activation functions we multiply the network's initial weights by $\sqrt{2}$. This has become standard practice in the literature following \citet{he2015delving}, and can be interpreted as making the local Q map of combined RELU layers equal to the identity. We don't do this for DKS or the Looks Linear method since those methods achieve identity local Q maps through other means.

{\tmem{Unless otherwise indicated, all result will be given for a skip connection-free BN-free modified ResNet-101 architecture trained on Imagenet.}} ``Standard ResNet'' will refer to a standard unmodified ResNet with RELU activation functions, initialized with the standard Gaussian Fan-in initialization (with a $\sqrt{2}$ multiplier). For networks trained on CIFAR-10 we will use a modified Wide-Resnet with 250 layers and a width multiplier of 2.

\subsection{DKS with skip-free nets vs standard baselines}\label{sec:main-experiments}

In this subsection we present our main results, in which we compare DKS networks without skip connections or BN layers, to both standard ResNets, and various ``ablated'' ResNets that are missing skip connections or BN layers (or both).

\

\resizebox{0.85\columnwidth}{!}{\includegraphics{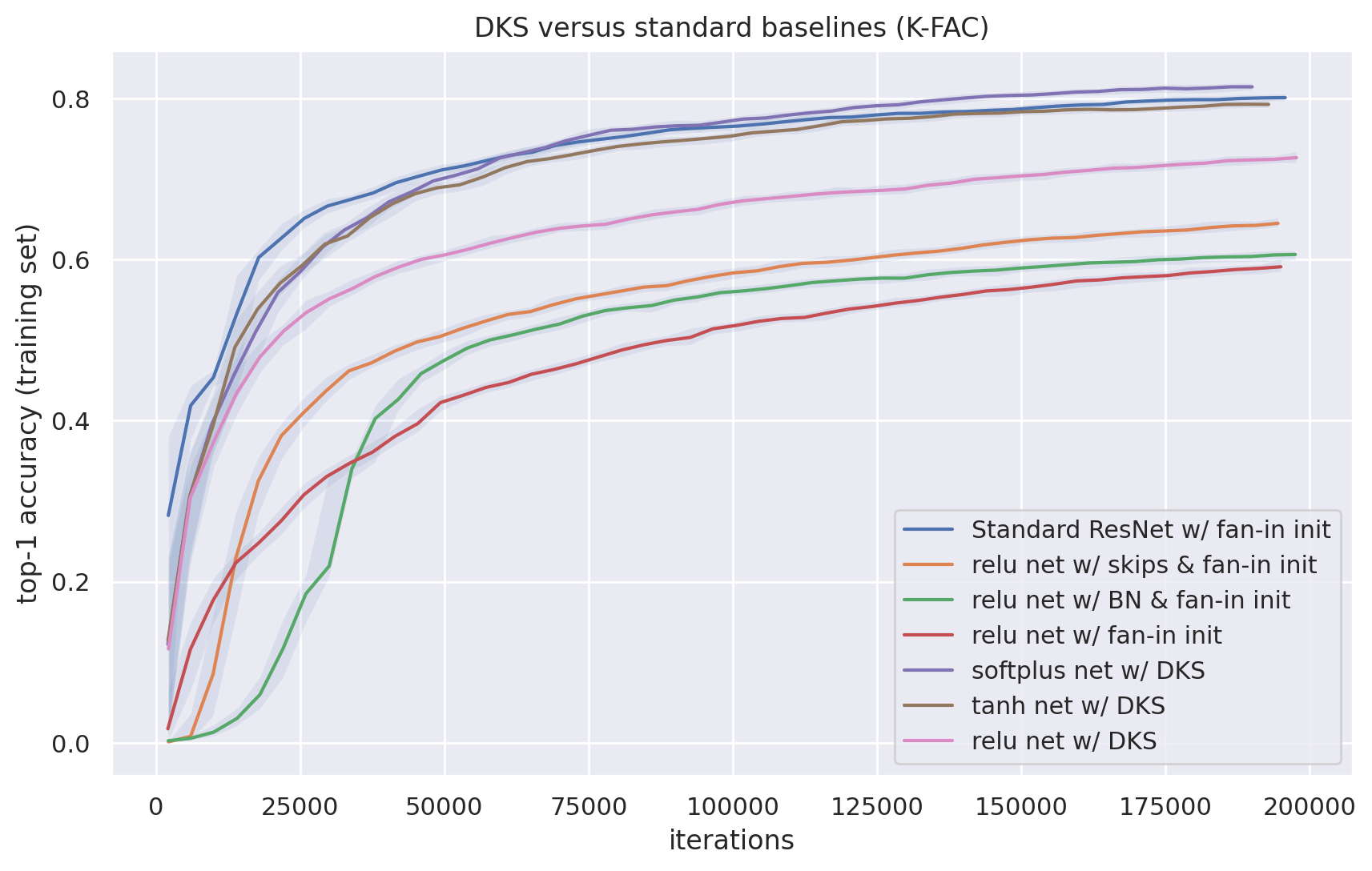}}

From this first plot we can see that, with K-FAC, DKS enables skip-free BN-free networks to train as fast as a standard ResNet on Imagenet, which is the first time this has been demonstrated to the best of our knowledge. Meanwhile, the ablated ResNets exhibit significantly slower optimization or underfitting. We also see that DKS underperforms for RELU compared to other activation functions, perhaps for the reasons discussed in Section \ref{sec:positively-homogeneous-problem}.

\

\resizebox{0.85\columnwidth}{!}{\includegraphics{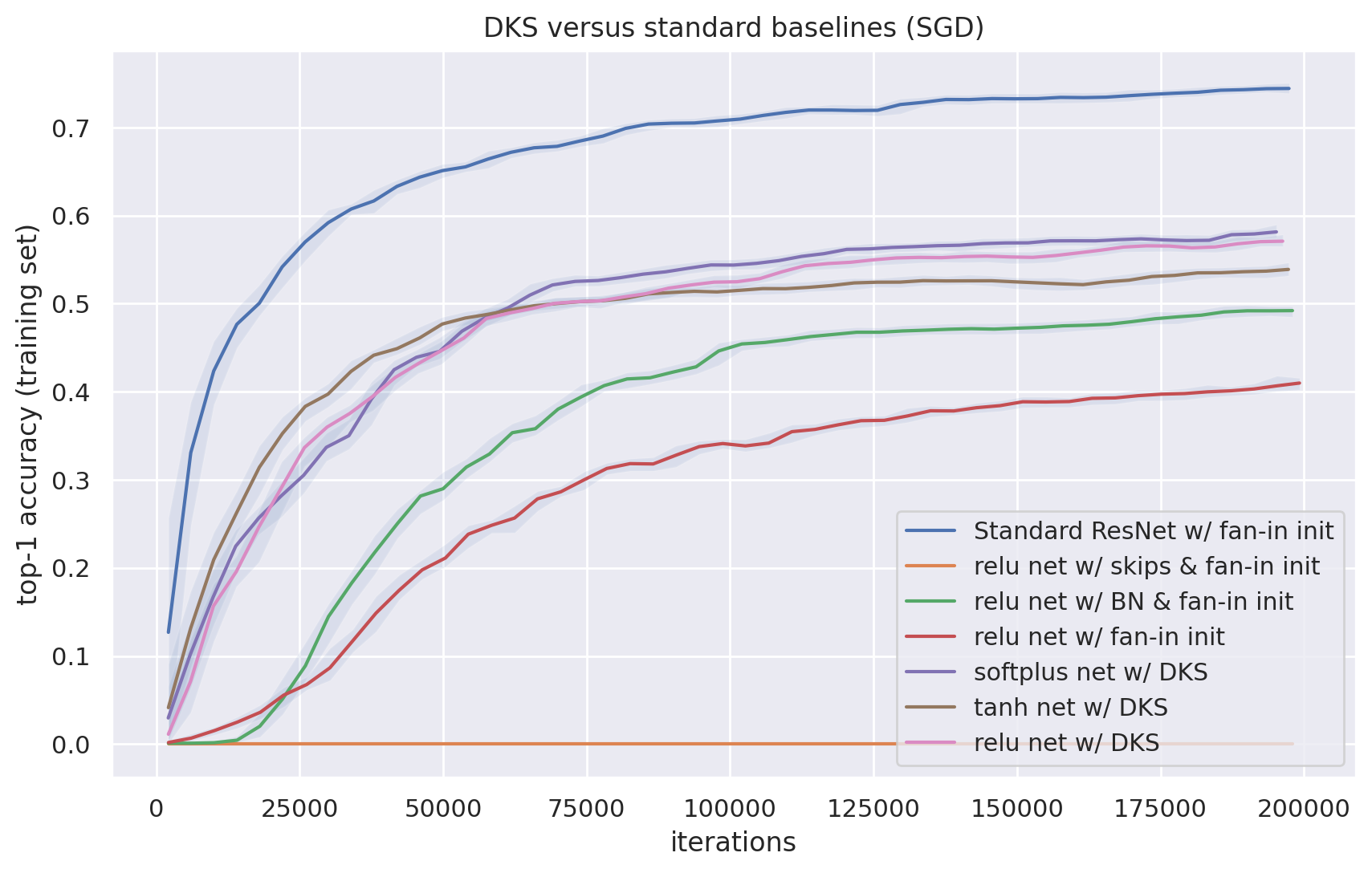}}

The story is somewhat different for SGD training. With SGD and no skip connections, DKS networks fail to match the training speed of standard ResNets, although they still outperform the ablated ResNets. Interestingly, RELUs give the same performance with DKS as the other activation functions do in this setting.

\

\resizebox{0.85\columnwidth}{!}{\includegraphics{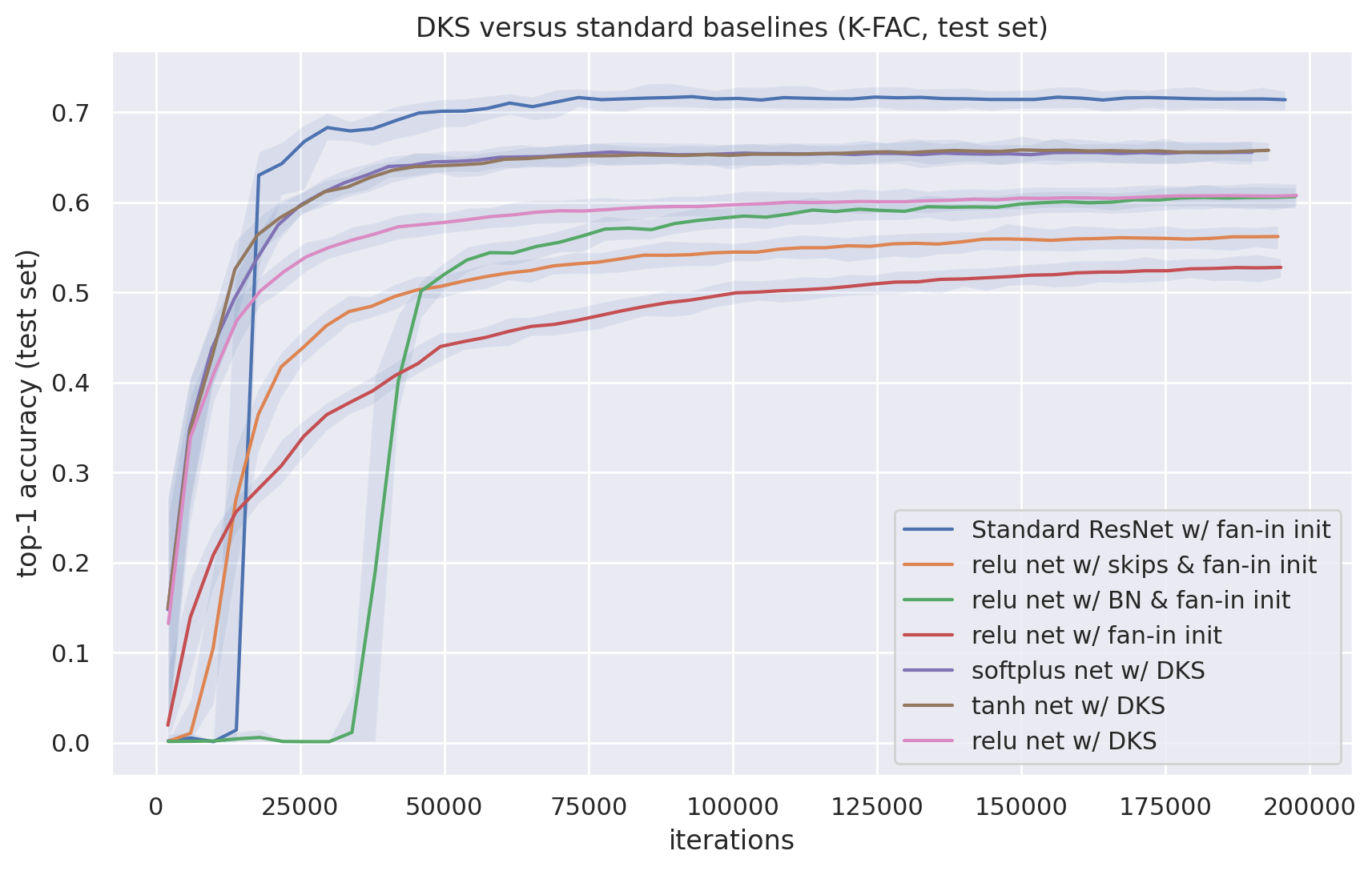}}

For test set performance with K-FAC training we observe increased overfitting with DKS compared to standard ResNets, resulting in an overall lower test accuracy. Notably however, test accuracy is still higher than for the ablated ResNets.

\

\resizebox{0.85\columnwidth}{!}{\includegraphics{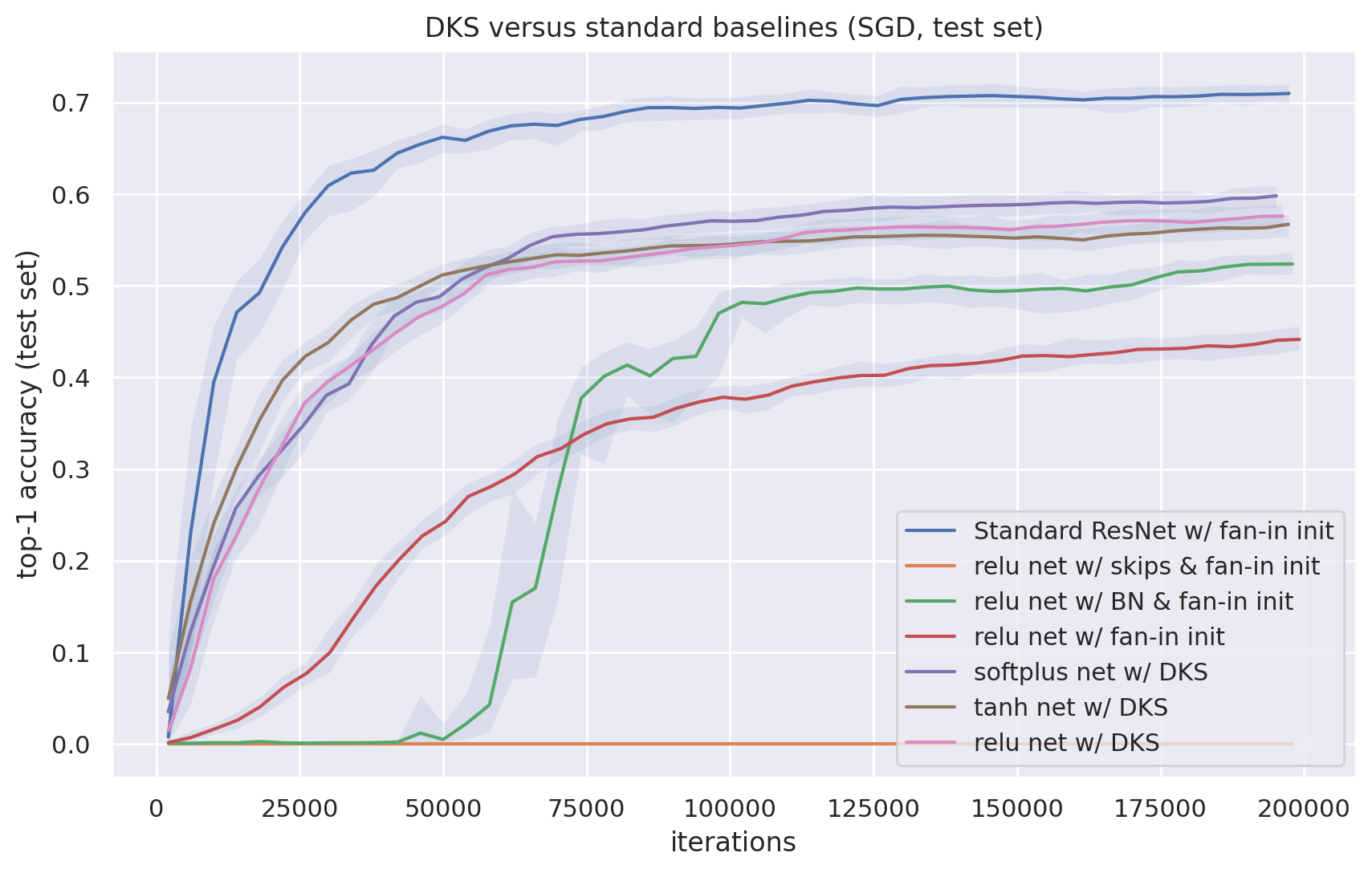}}

Once again the story is somewhat similar for SGD training, although with a larger performance gap vs standard ResNets due to the additional effect of underfitting from using SGD (without skip connections) instead of K-FAC.

Note that the test error numbers for standard ResNet training with SGD are a few percentage points the commonly reported values. This is for a number of reasons, including the fact that we don't include L2 regularization (as discussed in Section \ref{sec:exper-setup-nets-and-reg}), that we configured FIRE PBT to maximize training speed and not test set performance, and that we use PLN to process the data. (Because these things affect DKS as well, we believe the comparison to still be fair.)

The remaining results in this subsection are analogous to the previous ones, but use CIFAR-10 with modified/ablated Wide-ResNets models. The observations from these results are similar, although we note that the performance gap between the DKS networks and ablated Wide-ResNets is considerably larger, likely due to the higher depth (250 vs 100) used in these experiments.

\

\resizebox{0.85\columnwidth}{!}{\includegraphics{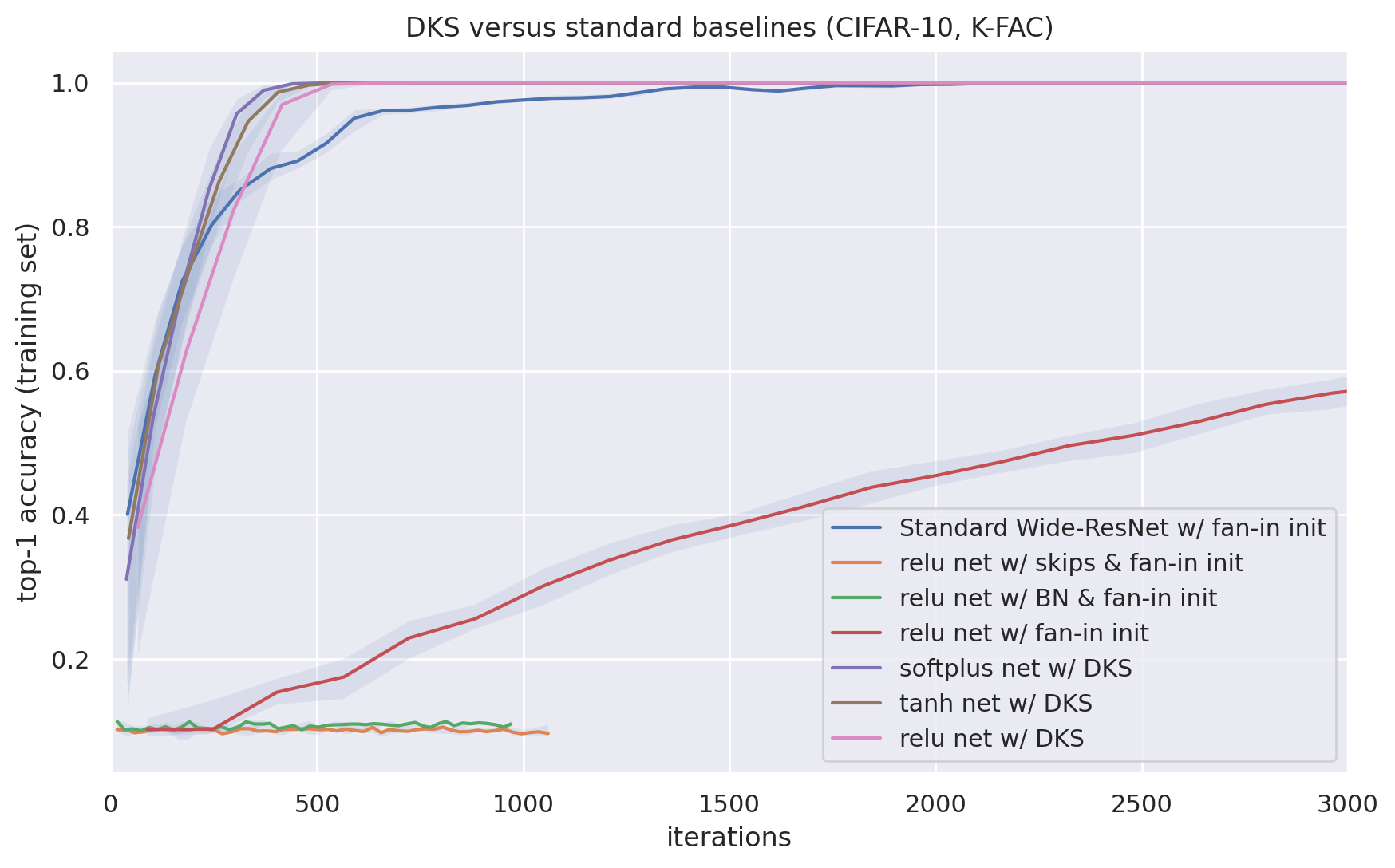}}

\resizebox{0.85\columnwidth}{!}{\includegraphics{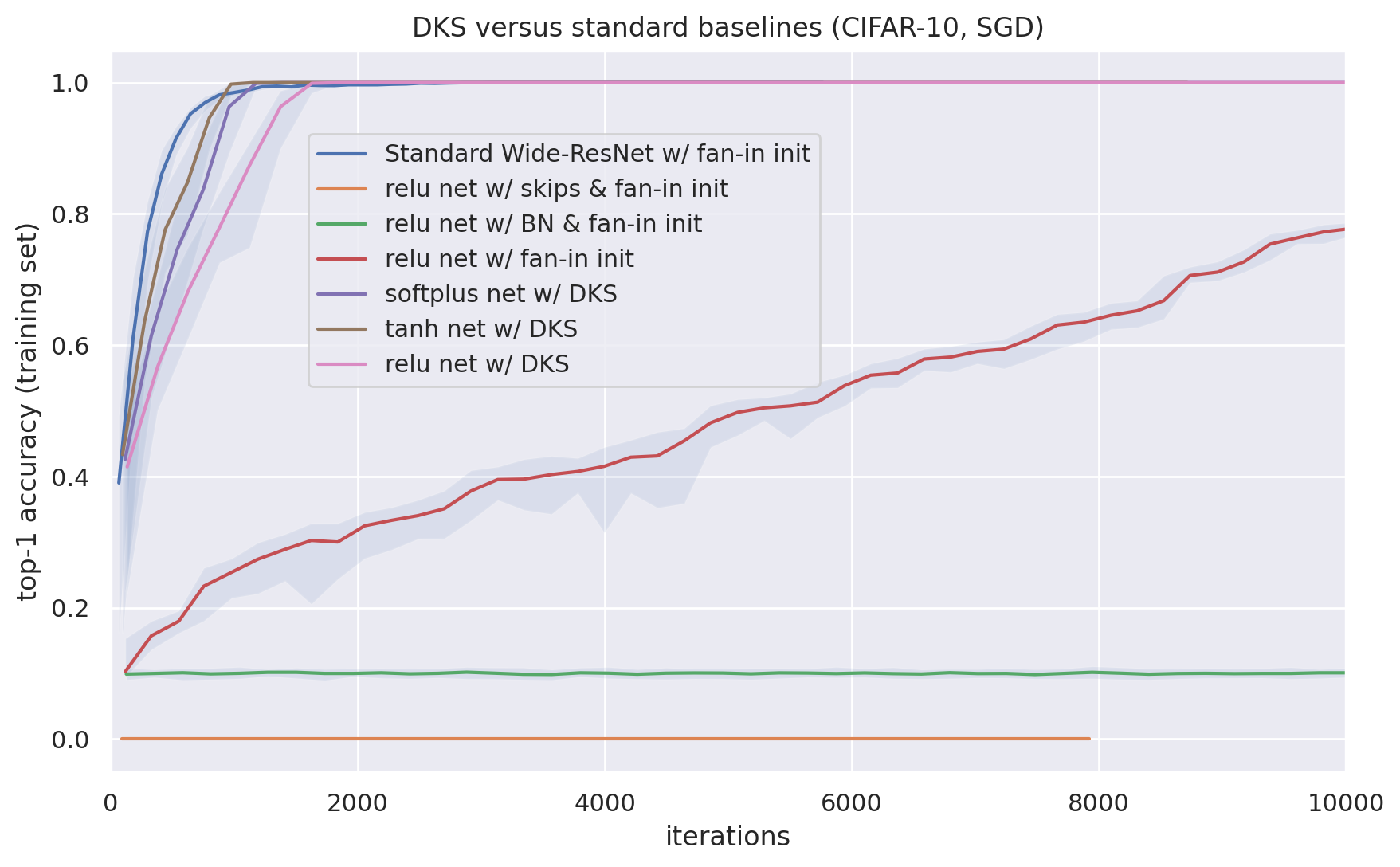}}

\resizebox{0.85\columnwidth}{!}{\includegraphics{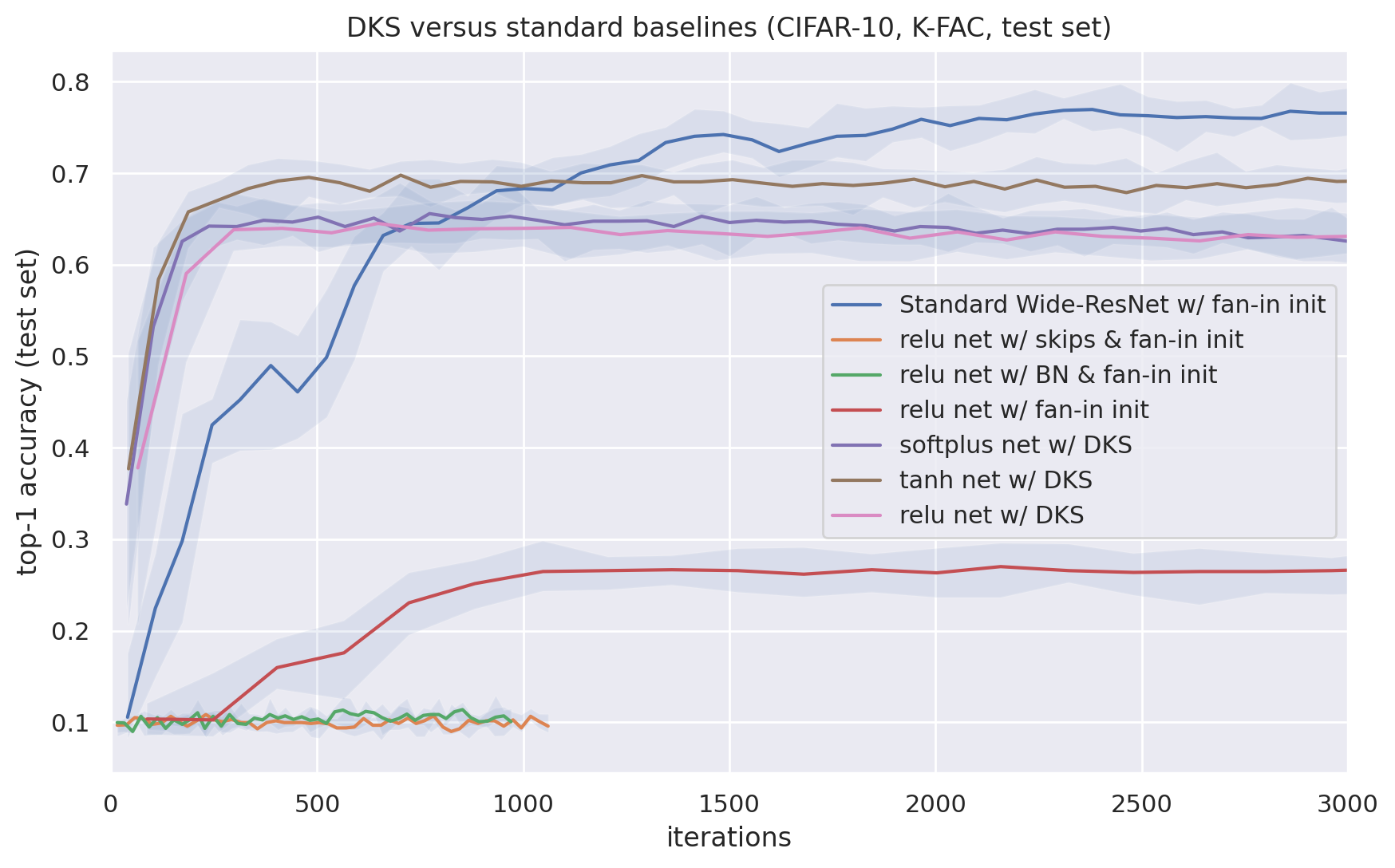}}

\resizebox{0.85\columnwidth}{!}{\includegraphics{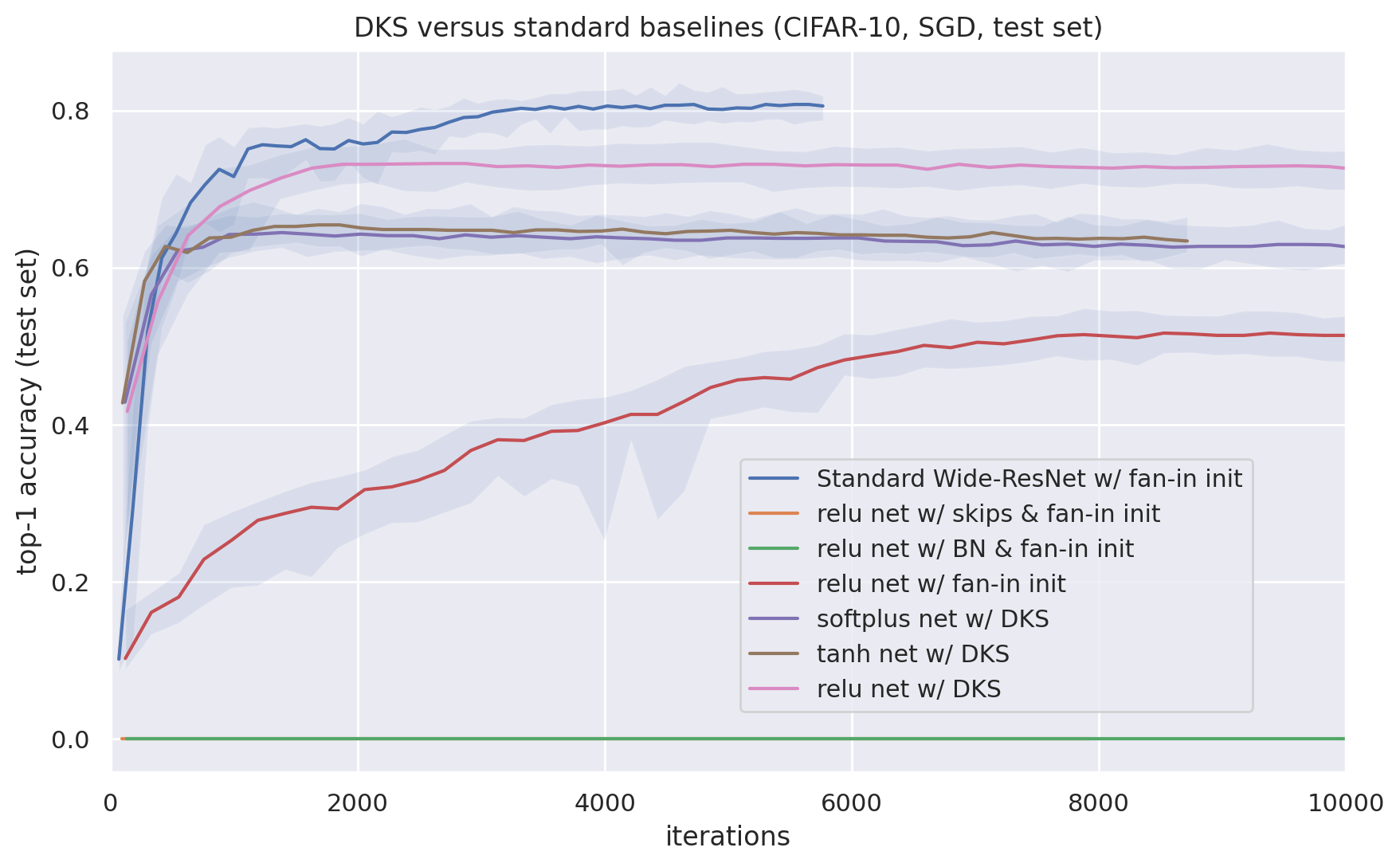}}

\subsection{DKS with and without skip connections} \label{sec:experiment-DKS-skips}

In this subsection we compare the performance, with and without skip connections, of BN-free networks constructed with DKS. We use weights of $\sqrt{0.05}$ and $\sqrt{0.95}$ for the residual and shortcut branches respectively (so that the all sums in the network are normalized as per Section \ref{sec:method-steps}). The value $\sqrt{0.05}$ was selected from several candidate options in order to maximize training speed, as shown in Appendix \ref{app:residual-weight-sweep}.

\

\resizebox{0.85\columnwidth}{!}{\includegraphics{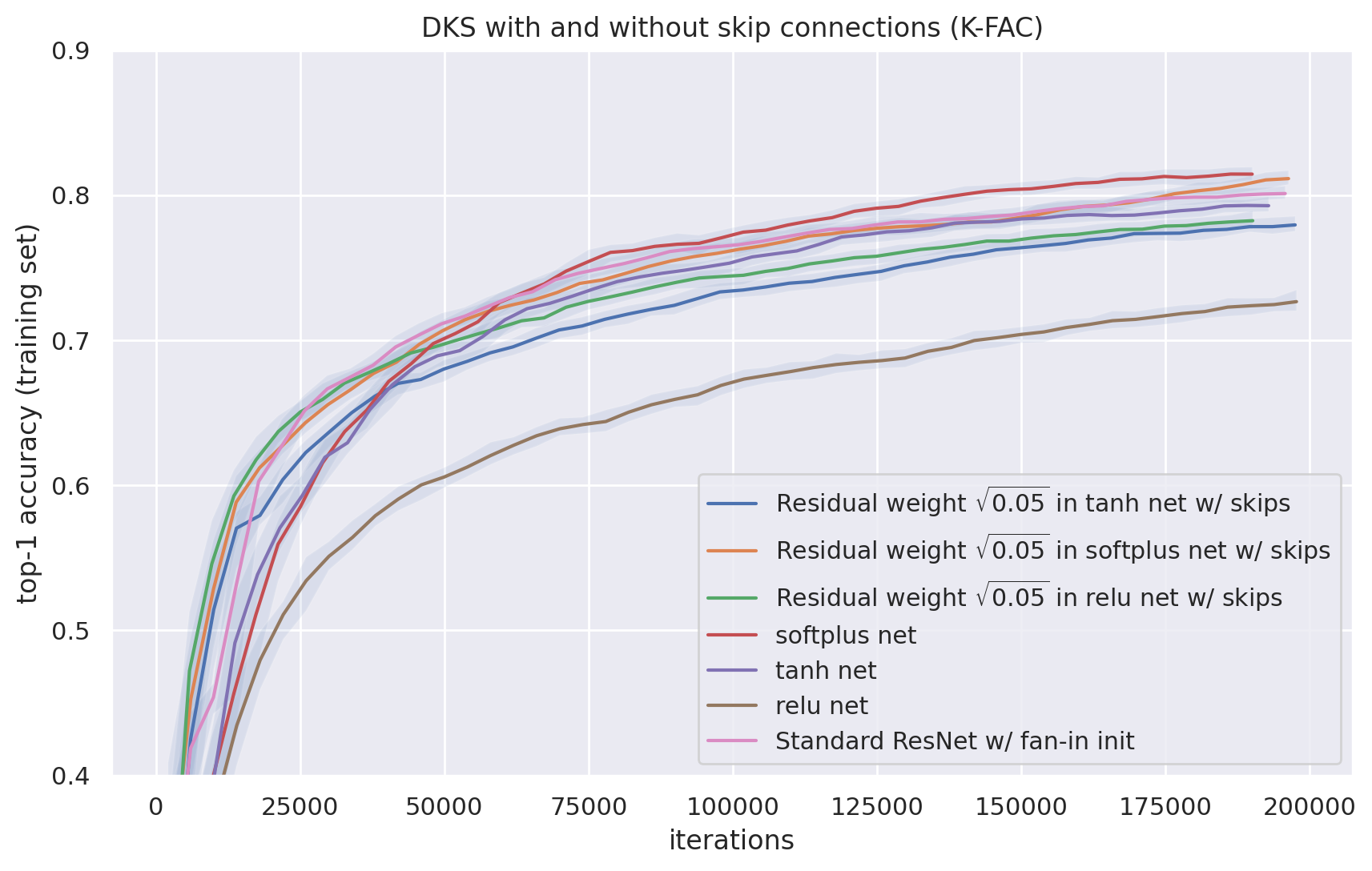}}

When using K-FAC we see that the training speed remains the same whether or not we use skip connections, except in the case of RELU activation functions. For RELUs, skip connections seem to help significantly, closing the performance gap with the other activation functions.

\

\resizebox{0.85\columnwidth}{!}{\includegraphics{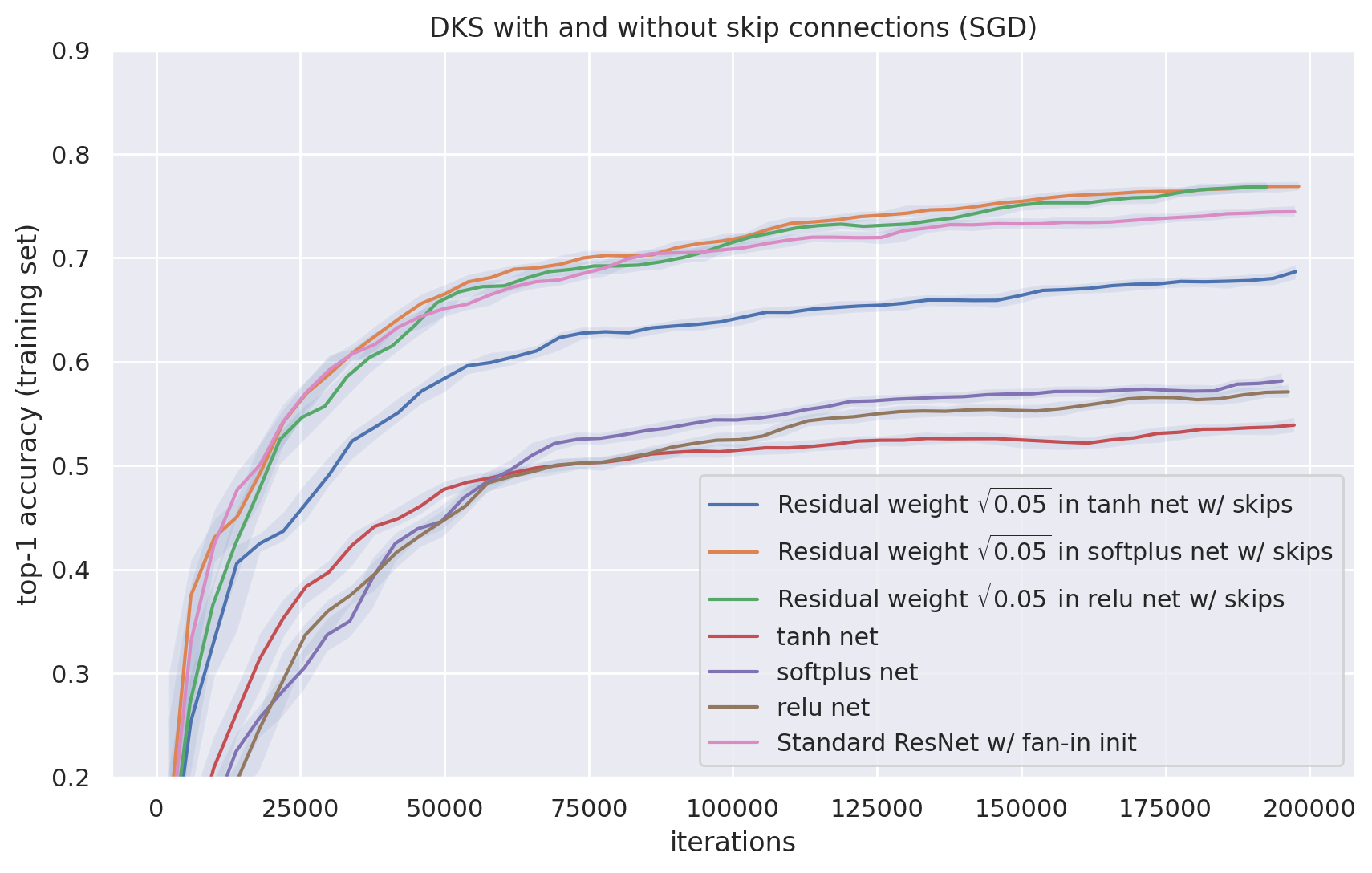}}

With SGD the story is different, and skip connections allow us to match the training speed of standard ResNets with DKS, at least when using softplus or RELU activation functions.

\

\resizebox{0.85\columnwidth}{!}{\includegraphics{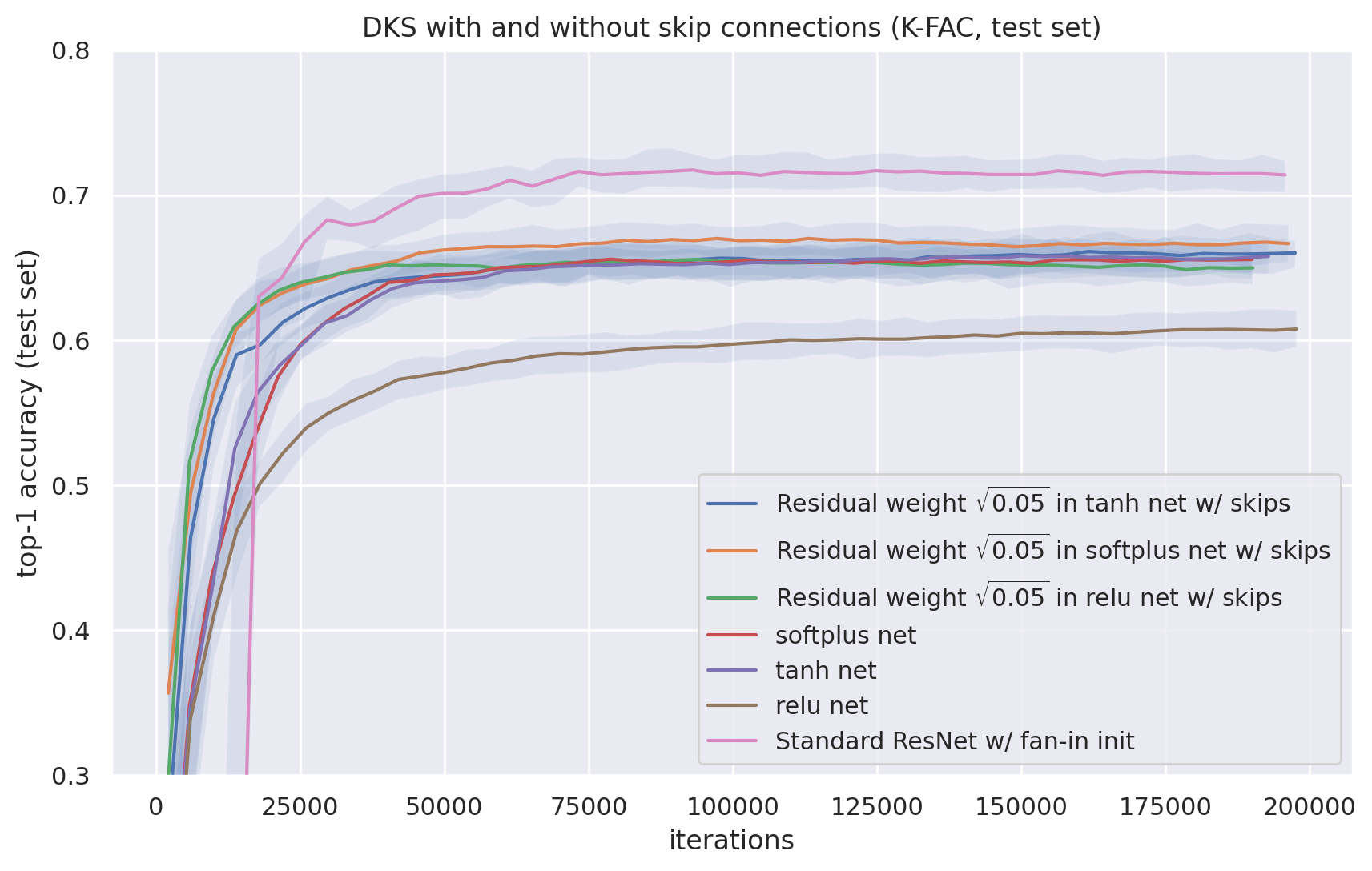}}

For K-FAC, the improvement to test set accuracy from using skip connections with DKS appears to be minimal, with the notable exception of RELU networks (where the improvement is likely due to improved fitting/optimization, as opposed to improved generalization).

\

\resizebox{0.85\columnwidth}{!}{\includegraphics{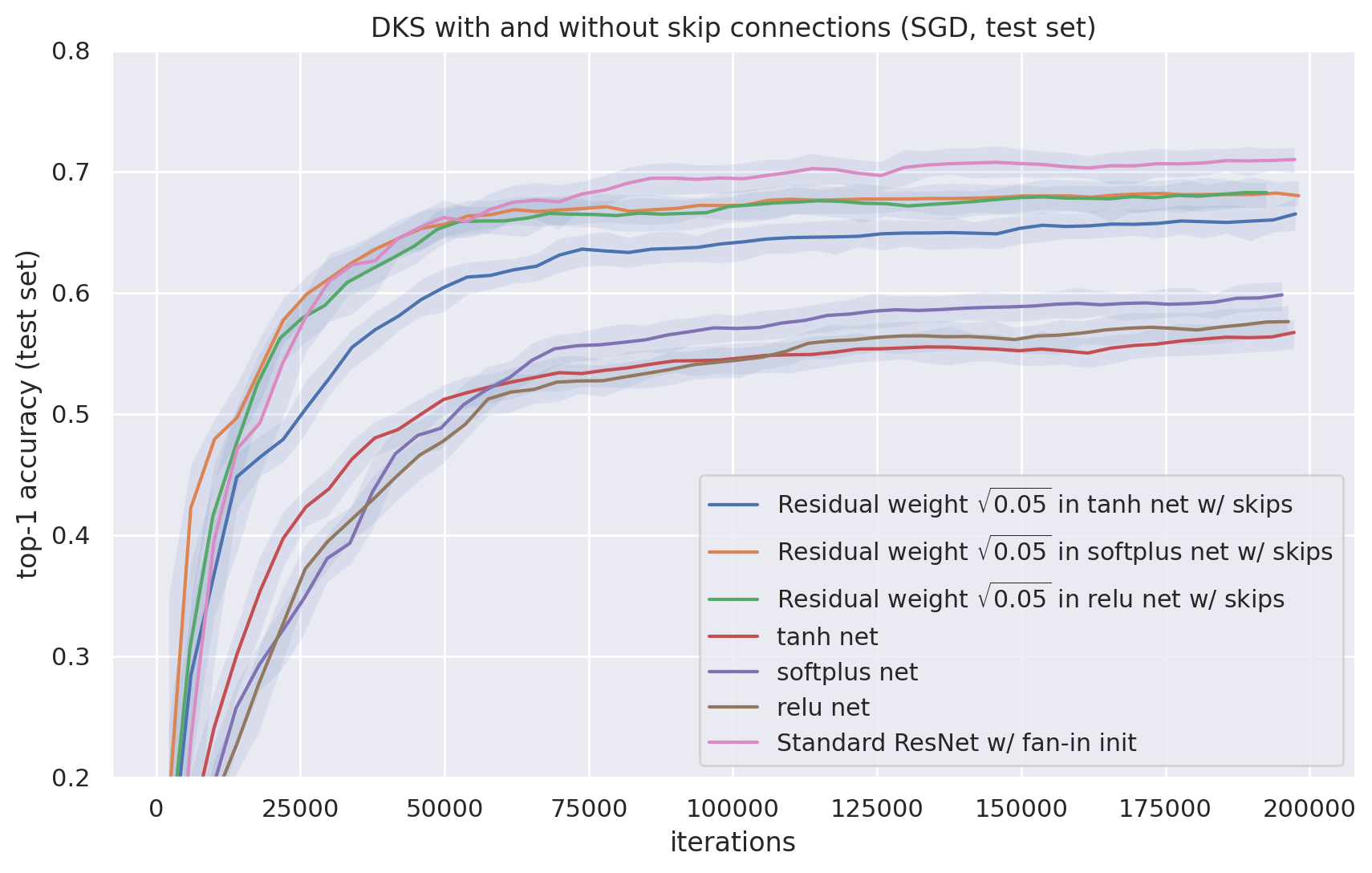}}

By contrast, in the context of SGD training we see a significant improvement to the test set accuracy from using skip connections with DKS. Although again, this is likely due to improved fitting enabled by the use of skip connections with SGD, rather than improved generalization.

\subsection{DKS with different activation functions}\label{sec:different-act-experiment}

In this subsection we compare performance of DKS with twelve different activation functions. In addition to certain well-known mathematical functions, we also include SELU \citep{klambauer2017self}, Softsign \citep{bergstra2009quadratic}, Swish \citep{ramachandran2017swish, elfwing2018sigmoid}, Elu \citep{clevert2016fast}, and BentId (defined by $\phi(x) = x + (\sqrt{x^2 + 1} - 1) / 2 $).

\resizebox{0.85\columnwidth}{!}{\includegraphics{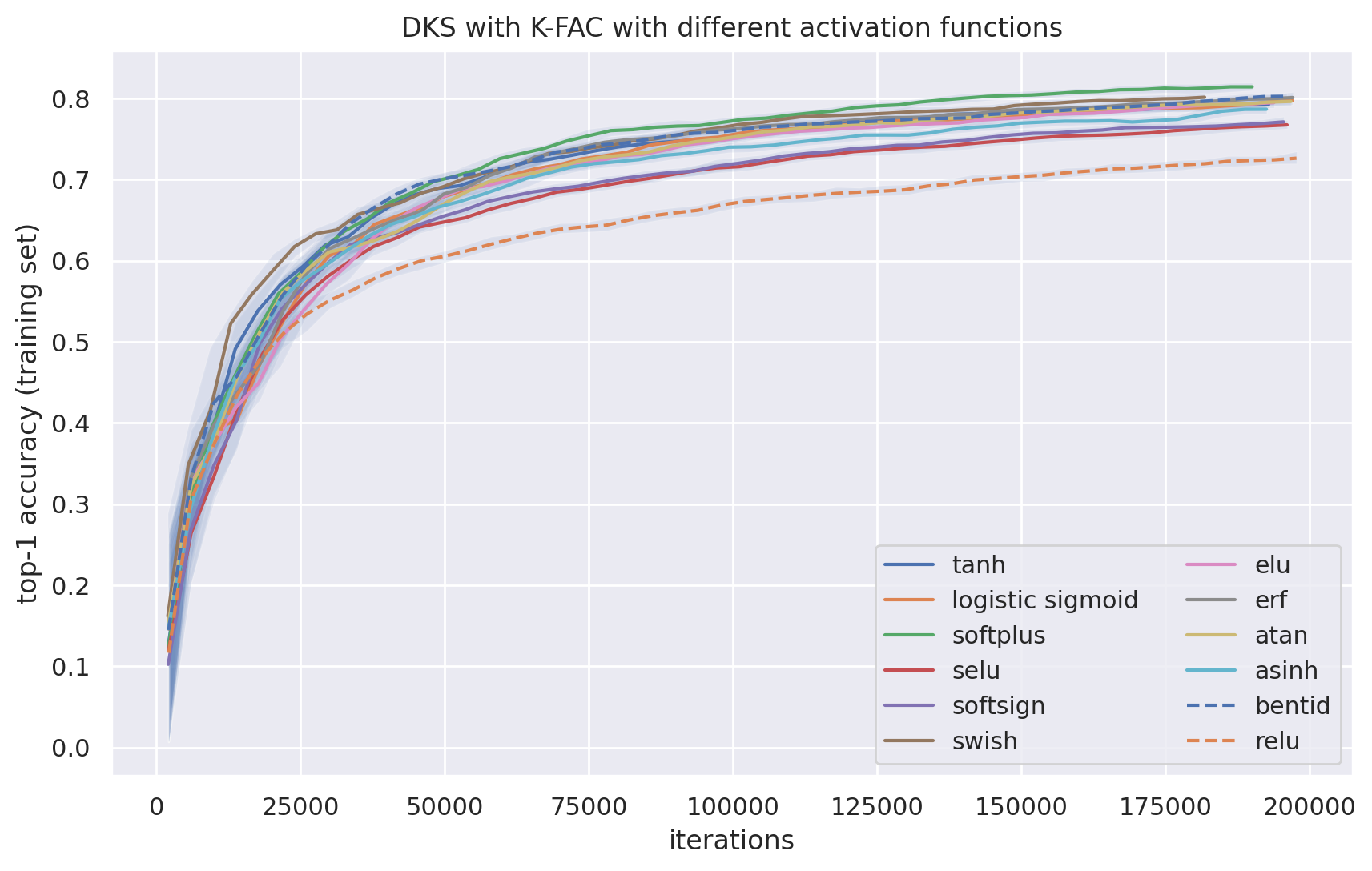}}

For K-FAC we see fairly similar training speeds for each of the twelve activation functions, with RELU being the notable outlier.

\

\resizebox{0.85\columnwidth}{!}{\includegraphics{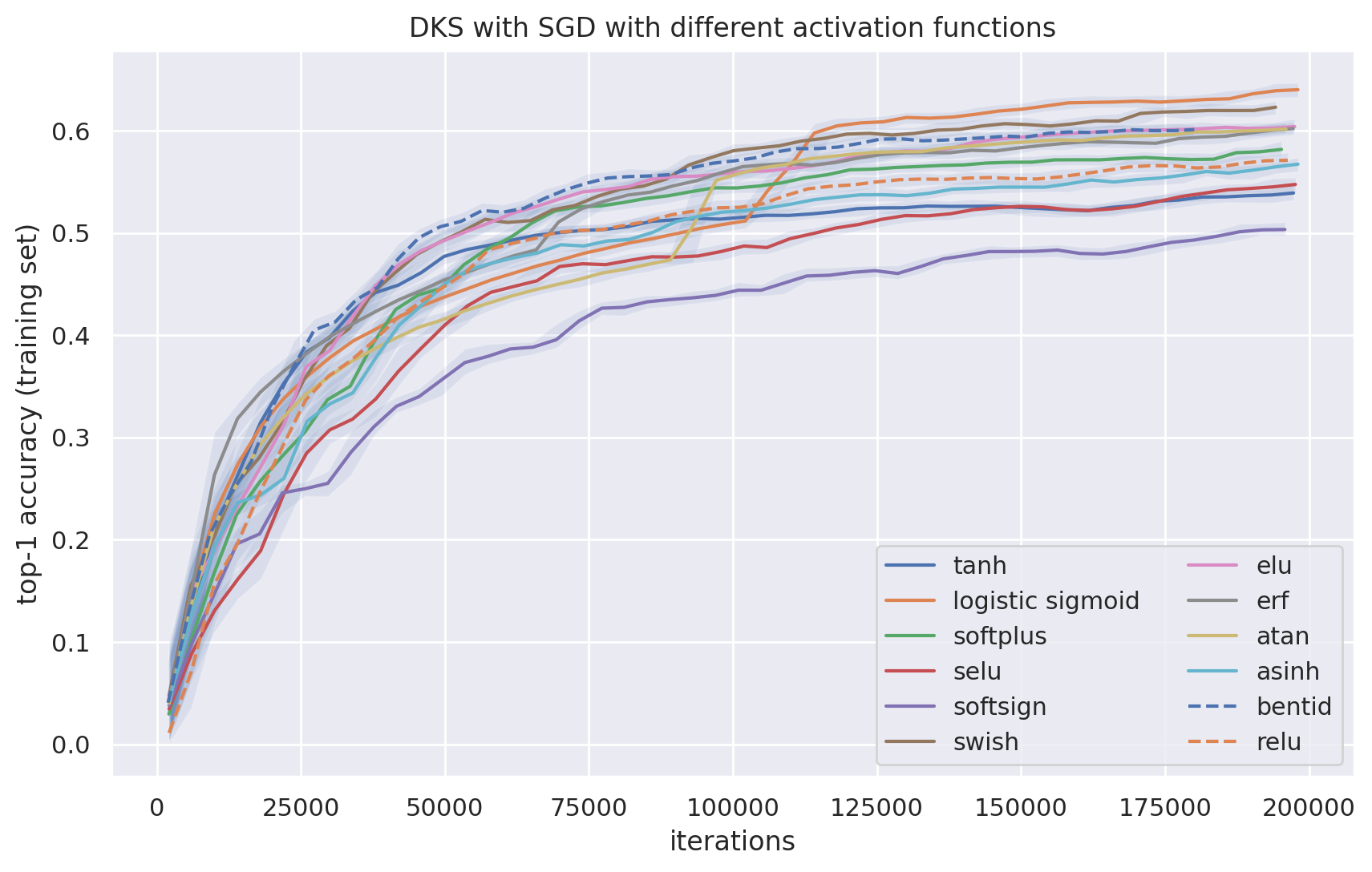}}

For SGD, there is a larger deviation in performance observed for the different options, and RELU is notably no longer an outlier.

Results for test set accuracy were qualitatively very similar, and so we won't report them here.

\subsection{Comparisons to other approaches}

In this subsection we compare DKS to various other approaches for initializing and constructing neural networks. We will focus primarily on skip connection-free BN-free networks, except when comparing to Fix-up (which requires the use of skip connections).

We will omit test set accuracy in these comparisons, as we found that it gave qualitatively similar results to training accuracy. (This is likely because nearly all competing methods yield significant underfitting for skip-free BN-free networks, which overwhelms any possible advantage they might have in terms of generalization.)

\subsubsection{Gaussian fan-in initialization}

The Gaussian fan-in initialization (aka ``variance scaling initialization'' or ``Lecun initialization''), which is discussed in Section \ref{sec:fan-in-and-friends}, is the default initialization method used in many modern neural network frameworks, and is the first method we compare to.

\

\resizebox{0.85\columnwidth}{!}{\includegraphics{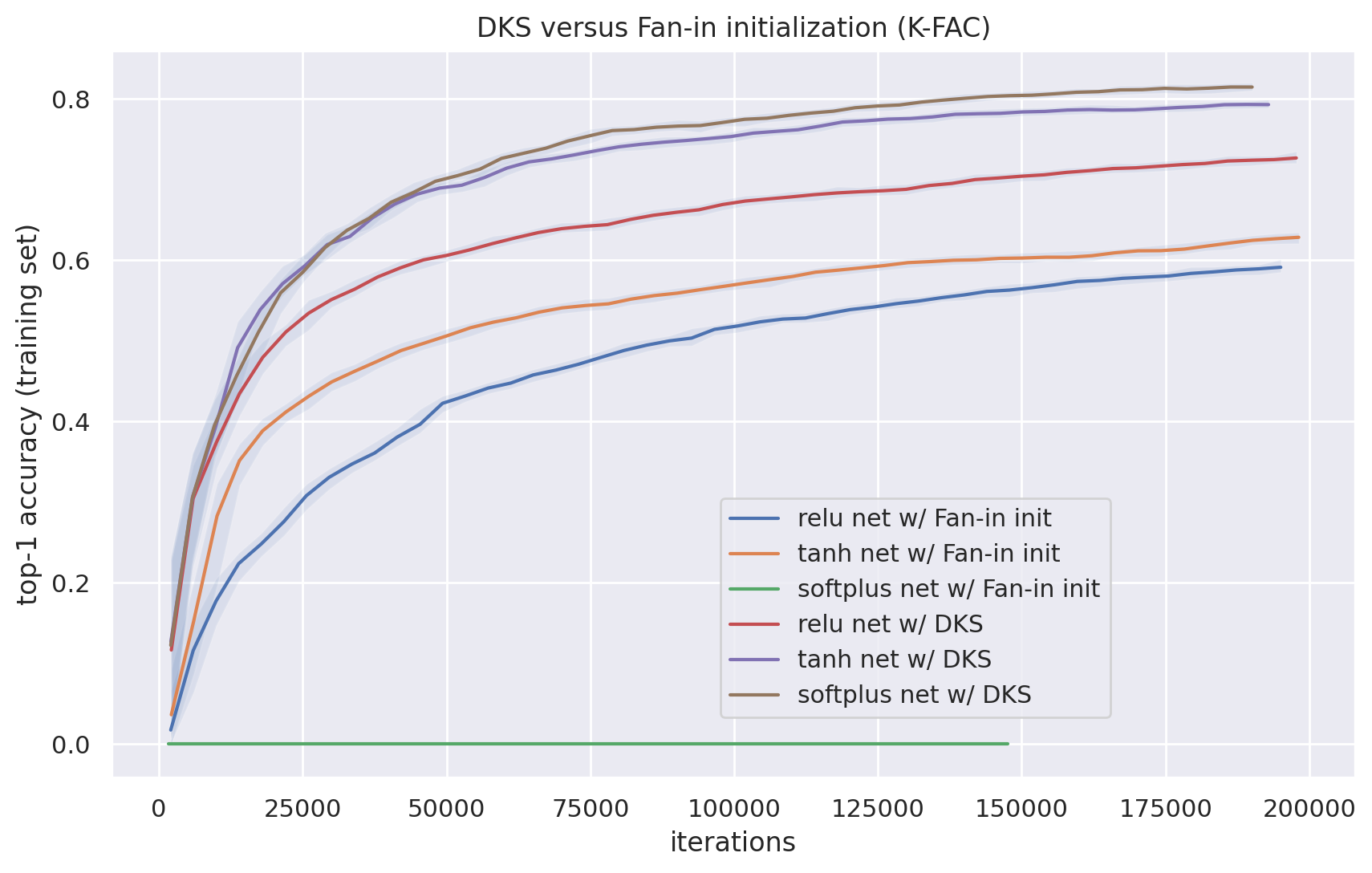}}

\resizebox{0.85\columnwidth}{!}{\includegraphics{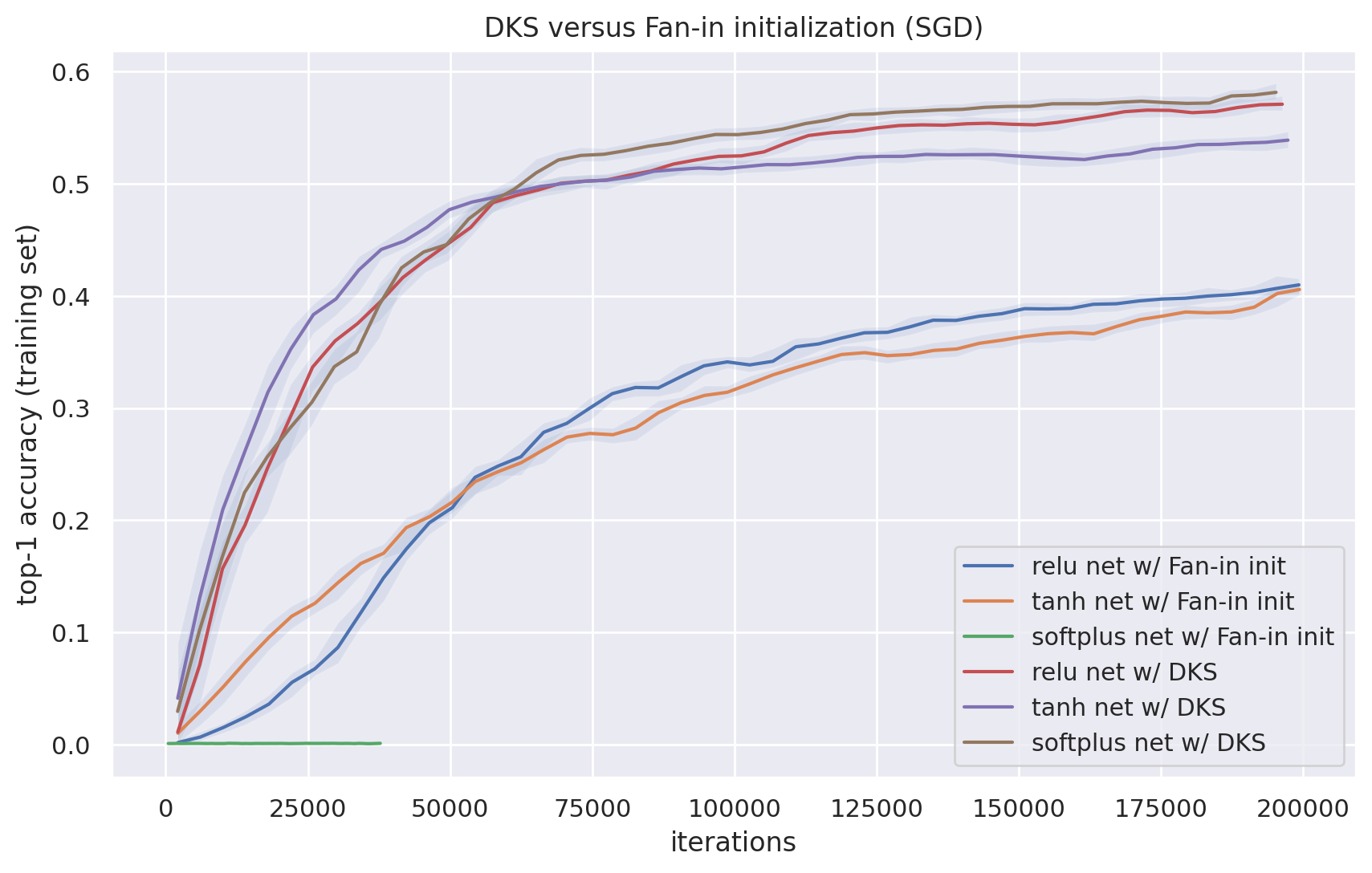}}

From these results we can see that DKS significantly outperforms this canonical approach, whose poor performance in this setting is not surprising given the analysis of Section \ref{sec:Cmaps_trainability}.

Note that it is common in practice to use a truncated Gaussian distribution or uniform distribution to sample the weights in a fan-in initialization, instead of the usual Gaussian distribution. When used with an appropriate rescaling term, these distributions produce weights with the same variance as the standard Gaussian distribution, although they won't necessarily give rise to the same approximate kernel functions. We ran additional experiments using these distributions, and found that they gave similar results to those presented above.

\subsubsection{Glorot uniform initialization}

Glorot initialization (aka ``Xavier initialization'') is a commonly used modification of the Gaussian fan-in initialization which we discuss in Section \ref{sec:fan-in-and-friends}. As with the Gaussian fan-in method, it is also often used with a truncated Gaussian or uniform distribution, the latter of which we will present results for. (We also performed experiments using truncated and non-truncated Gaussian distributions for the weights, which yielded similar findings.)

\

\resizebox{0.85\columnwidth}{!}{\includegraphics{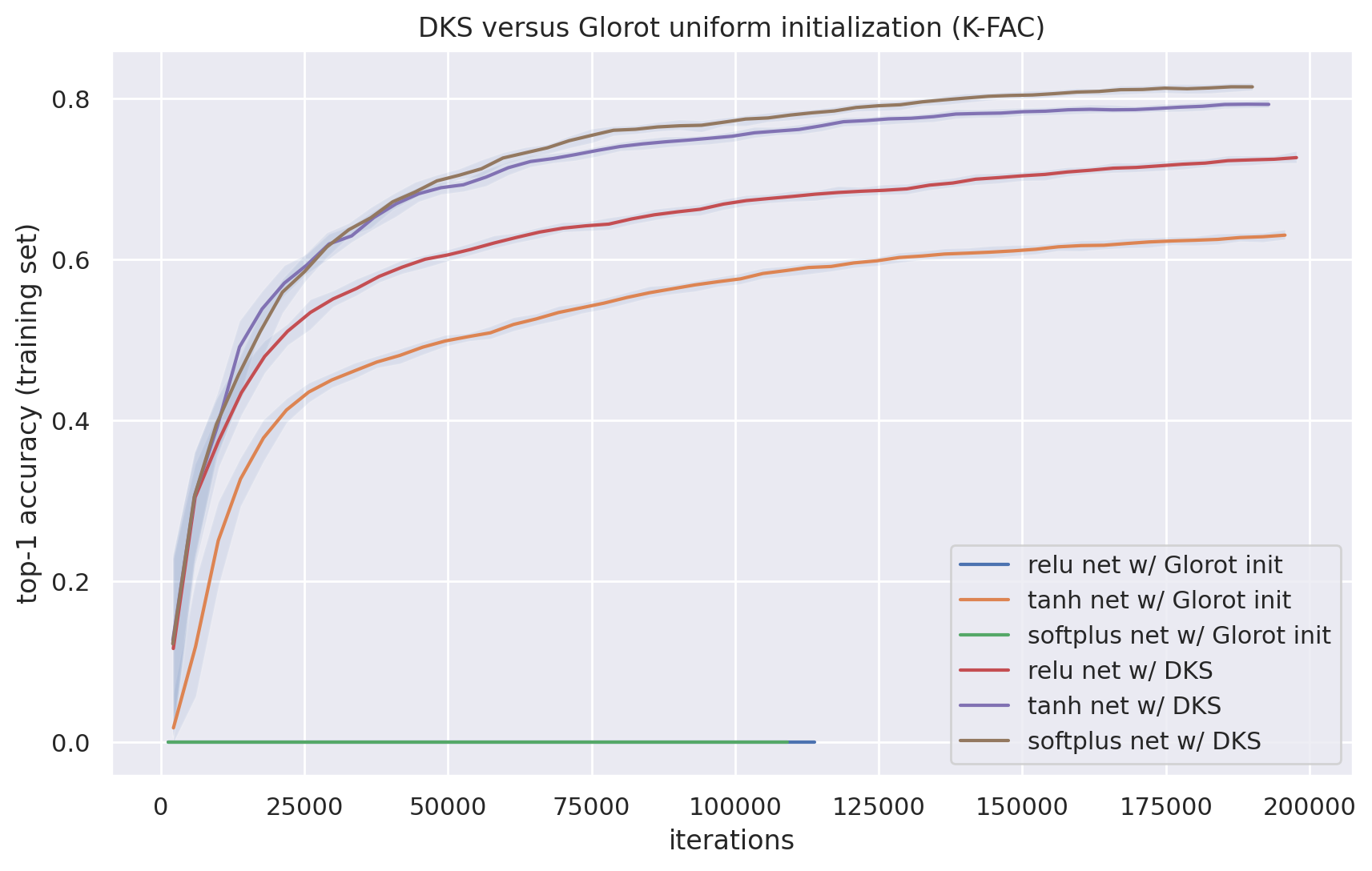}}

\resizebox{0.85\columnwidth}{!}{\includegraphics{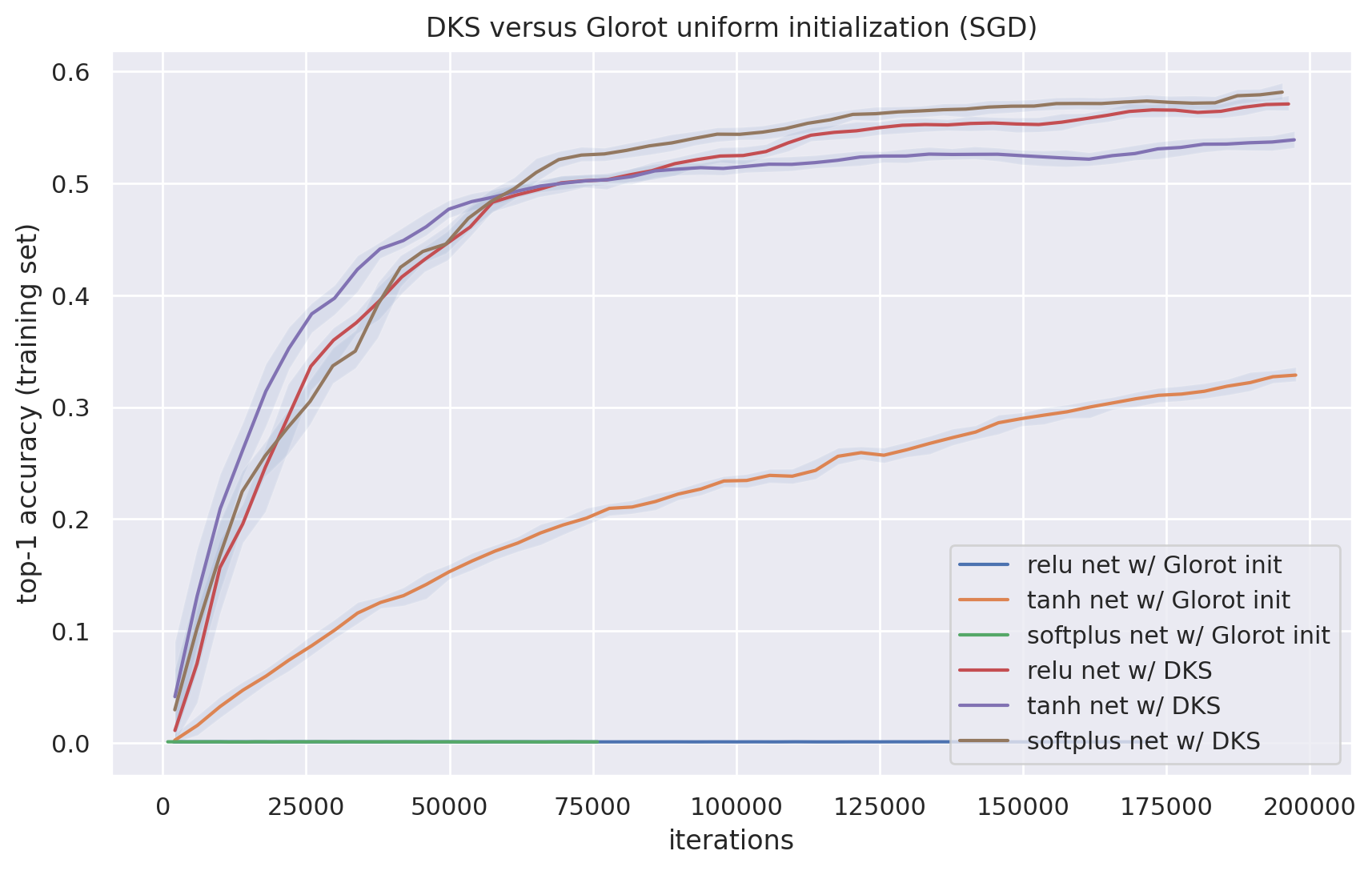}}

From these results we can see that the Glorot approach is significantly outperformed by DKS, and completely fails to produce a trainable network for both the RELU and softplus activation functions.

\subsubsection{LSUV and WLI}

The LSUV and WLI approaches, which are discussed in Section \ref{sec:LSUV}, represent the first generation of methods which attempt to capture the benefits of Batch Normalization through initialization. They are fairly similar in their implementation, which is why we consider them together here.

\resizebox{0.85\columnwidth}{!}{\includegraphics{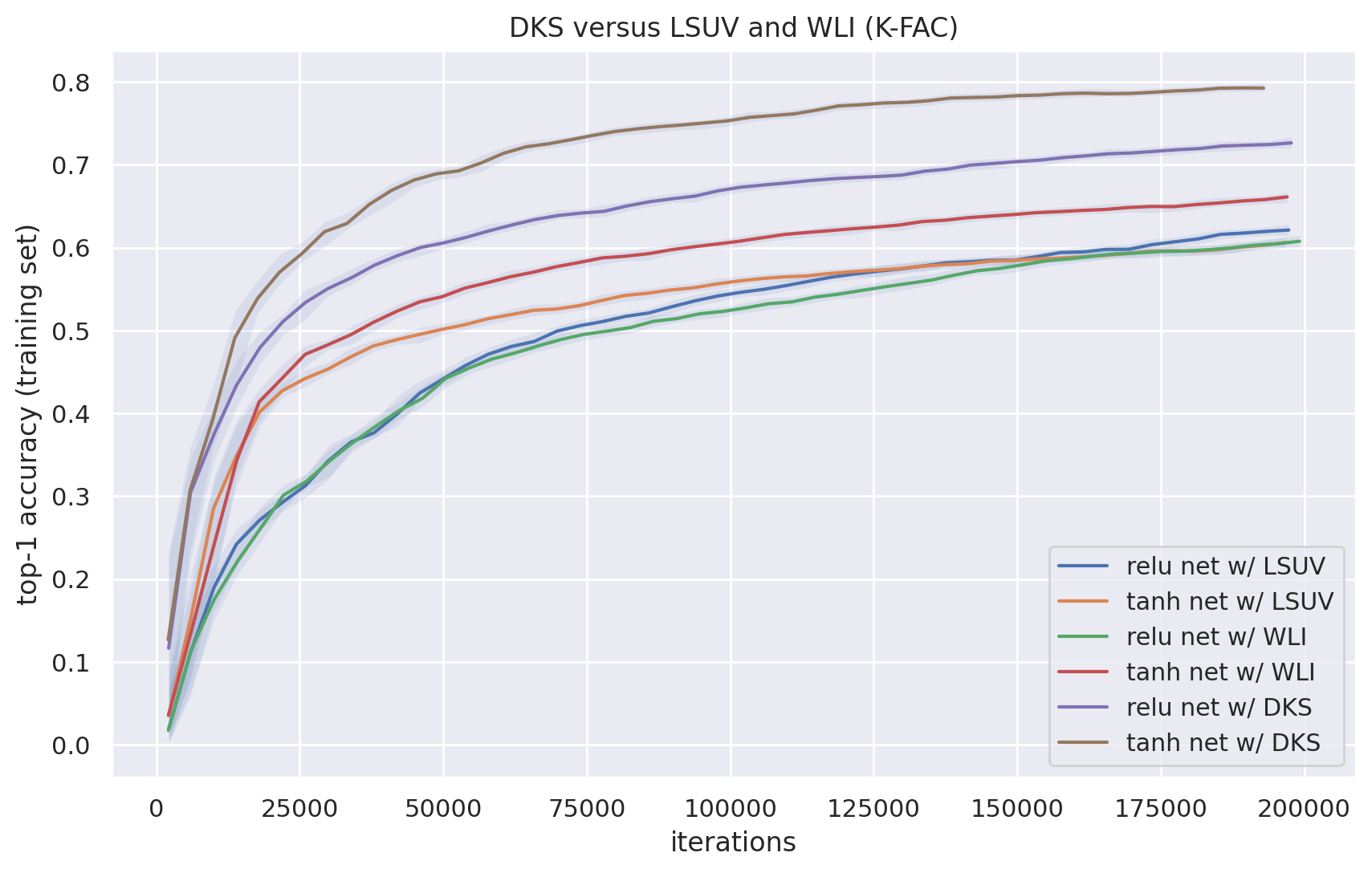}}

\resizebox{0.85\columnwidth}{!}{\includegraphics{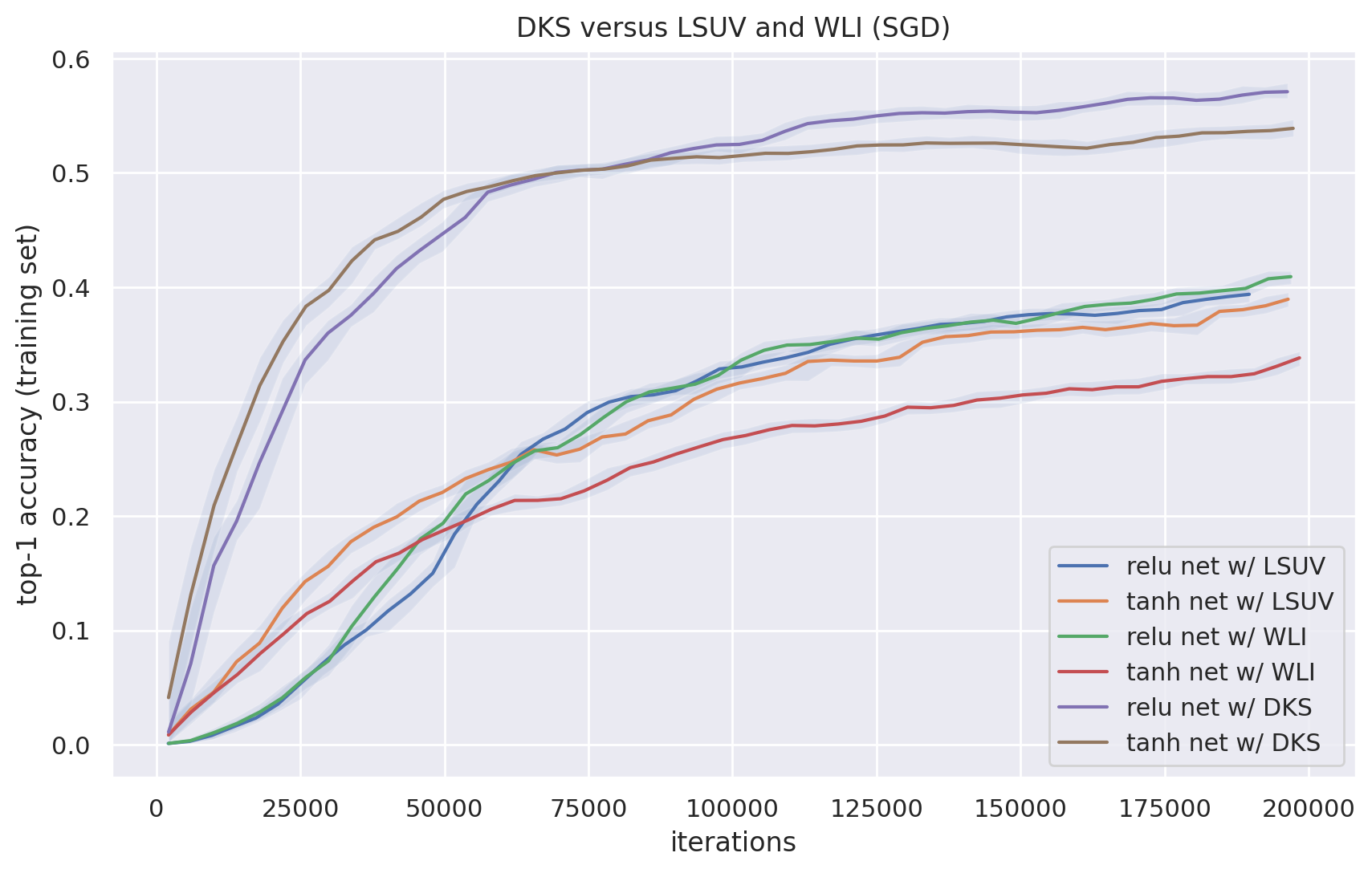}}

From these plots we can see that these methods outperform simple initializations schemes like fan-in and Glorot, but are still significantly outperformed by DKS.

\subsubsection{Self-normalizing neural networks}

Self-normalizing neural networks (which we discuss in Section \ref{sec:SELU-nets}) use SELU activation functions, together with a standard Gaussian fan-in initialization, to achieve certain conditions under variance propagation which are essentially equivalent to two of the four conditions enforced by DKS.

\resizebox{0.85\columnwidth}{!}{\includegraphics{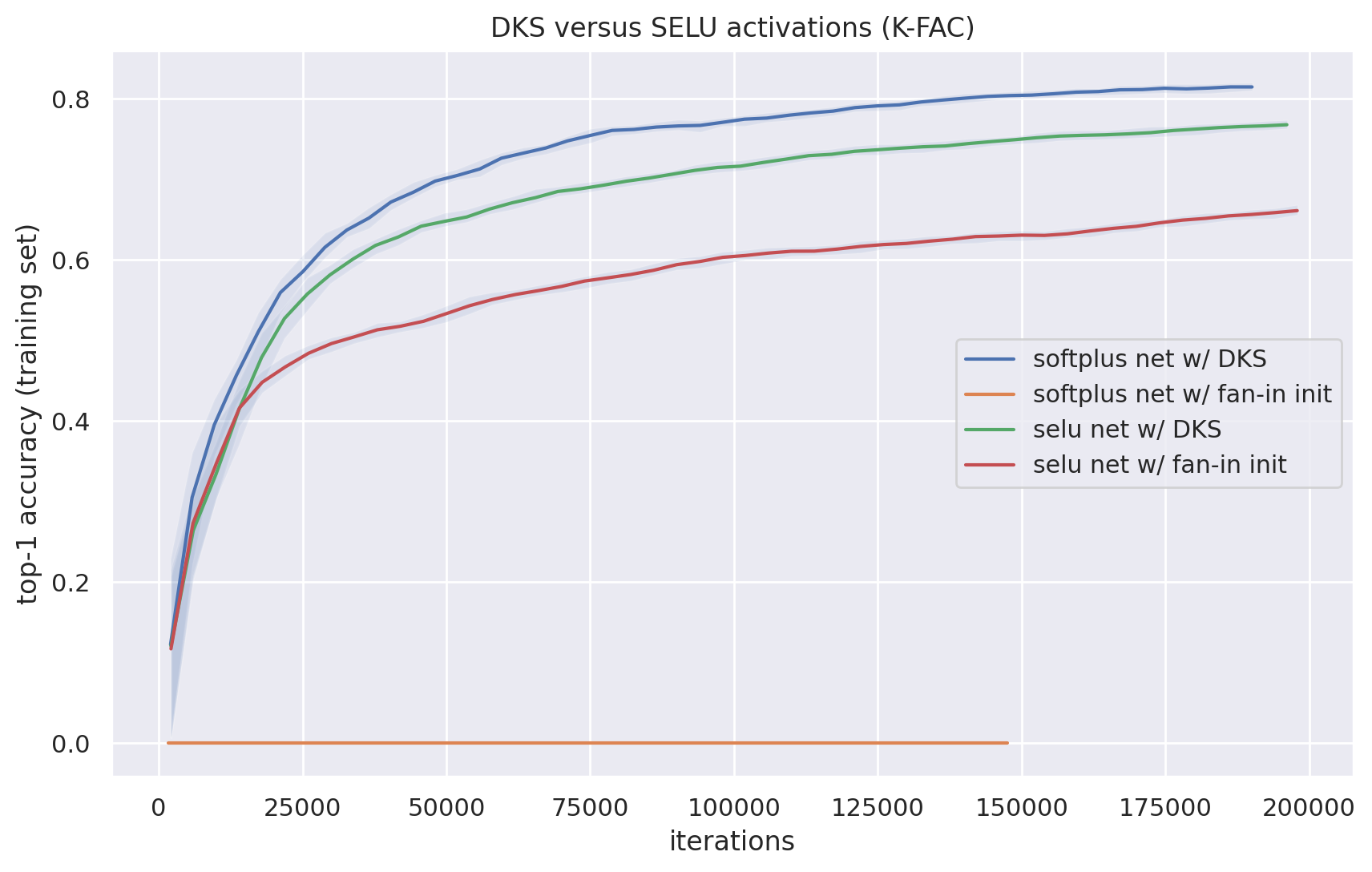}}

\resizebox{0.85\columnwidth}{!}{\includegraphics{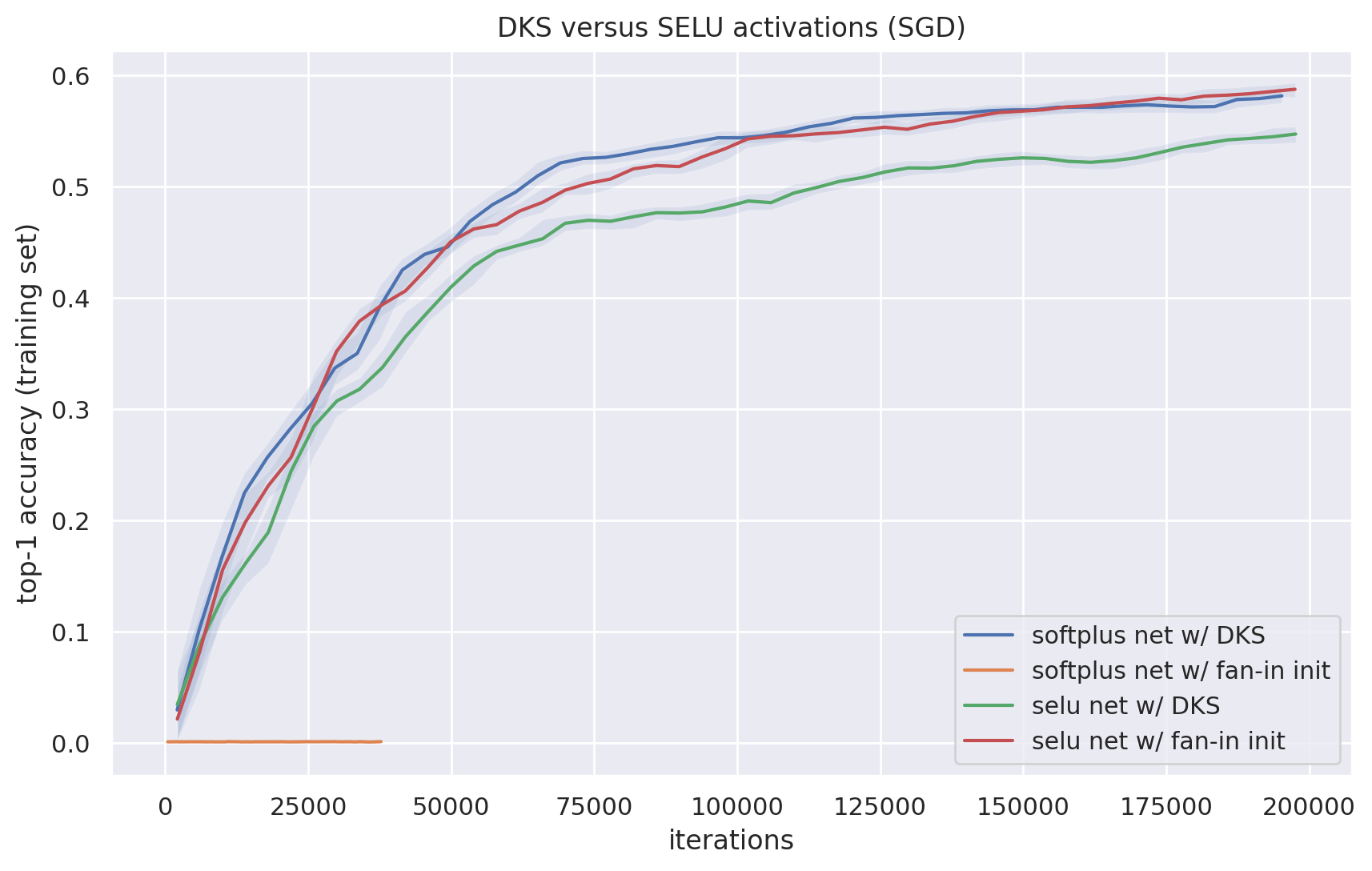}}

From these results we see that DKS applied to a softplus network matches or exceeds the optimization performance of a self-normalizing network. DKS also improves the performance of a SELU network optimized with K-FAC, although slightly degrades it for SGD.

\subsubsection{Looks linear method}\label{sec:looks-linear-experiments}

The Looks Linear method, which is discussed in Section \ref{sec:looks-linear-method}, is an approach for constructing and initializing RELU networks which makes them behave like perfectly linear functions at initialization time, without the use of skip connections. The method is somewhat difficult to fairly compare to other ones, as it involves doubling the channel dimension of each layer, while using a form of weight sharing which makes the resulting network less expressive than a standard one of the same dimensions. Our imperfect solution to this problem is to use the original dimensions when constructing networks with DKS, which will disadvantage DKS in the comparison.

We had some trouble optimizing the networks constructed with the Looks Linear method. K-FAC would quickly diverge for all the hyperparameter settings we tried, perhaps because it broke the delicate symmetry of the initial weights too quickly, leading to extreme nonlinear behavior. We had more luck with Adam and SGD, although we found that it was necessary to threshold the maximum update magnitude at 1 to achieve stable optimization (which is an approach known as ``clipping'' \citep{pascanu2013difficulty}).

\

\resizebox{0.85\columnwidth}{!}{\includegraphics{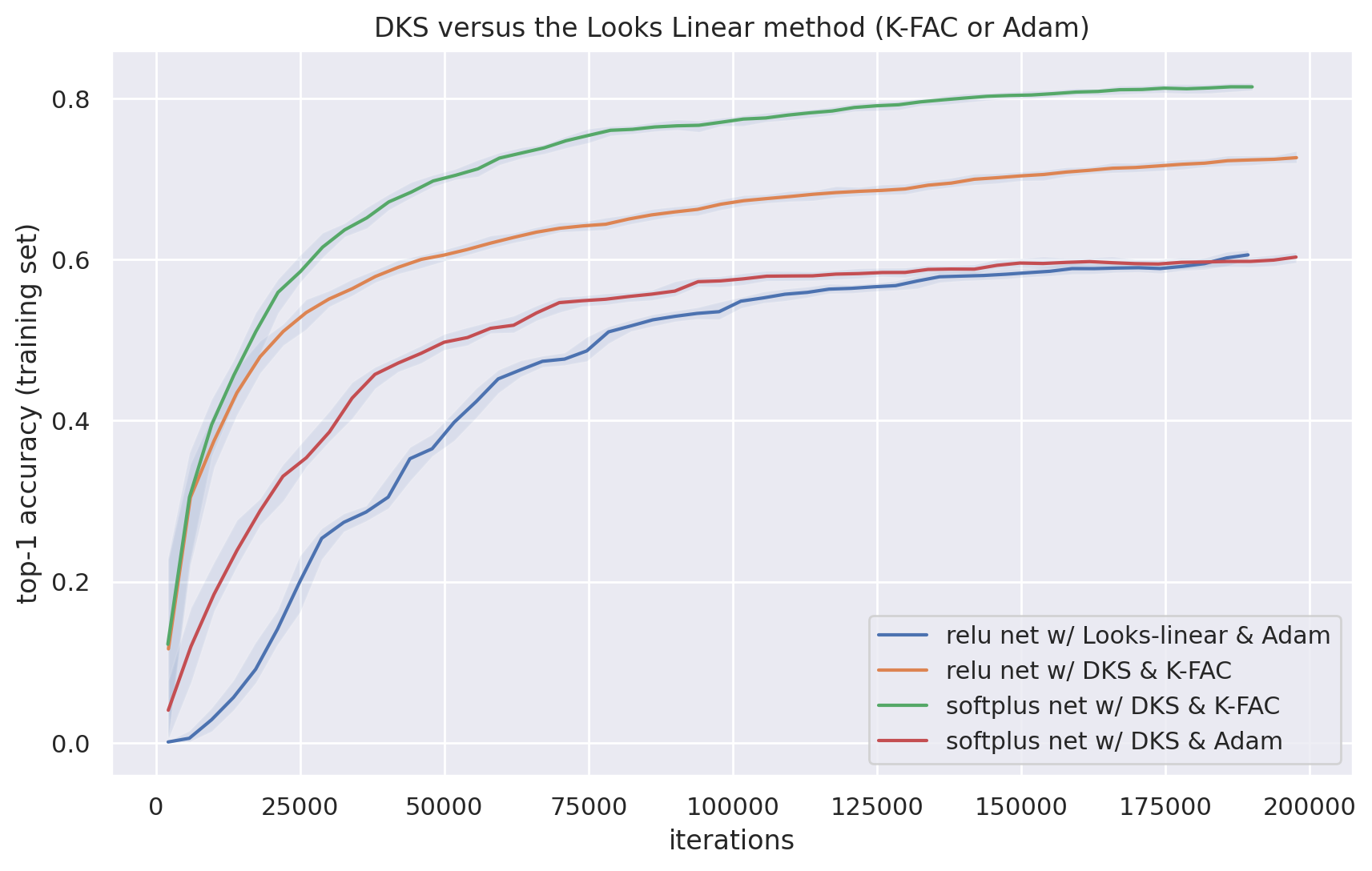}}

Because we couldn't get K-FAC to work well with the Looks Linear method, we used it with Adam instead in our first comparison. We note that with Adam, DKS performs similarly to the Looks Linear method, but when used with K-FAC, DKS significantly outperforms it.

\resizebox{0.85\columnwidth}{!}{\includegraphics{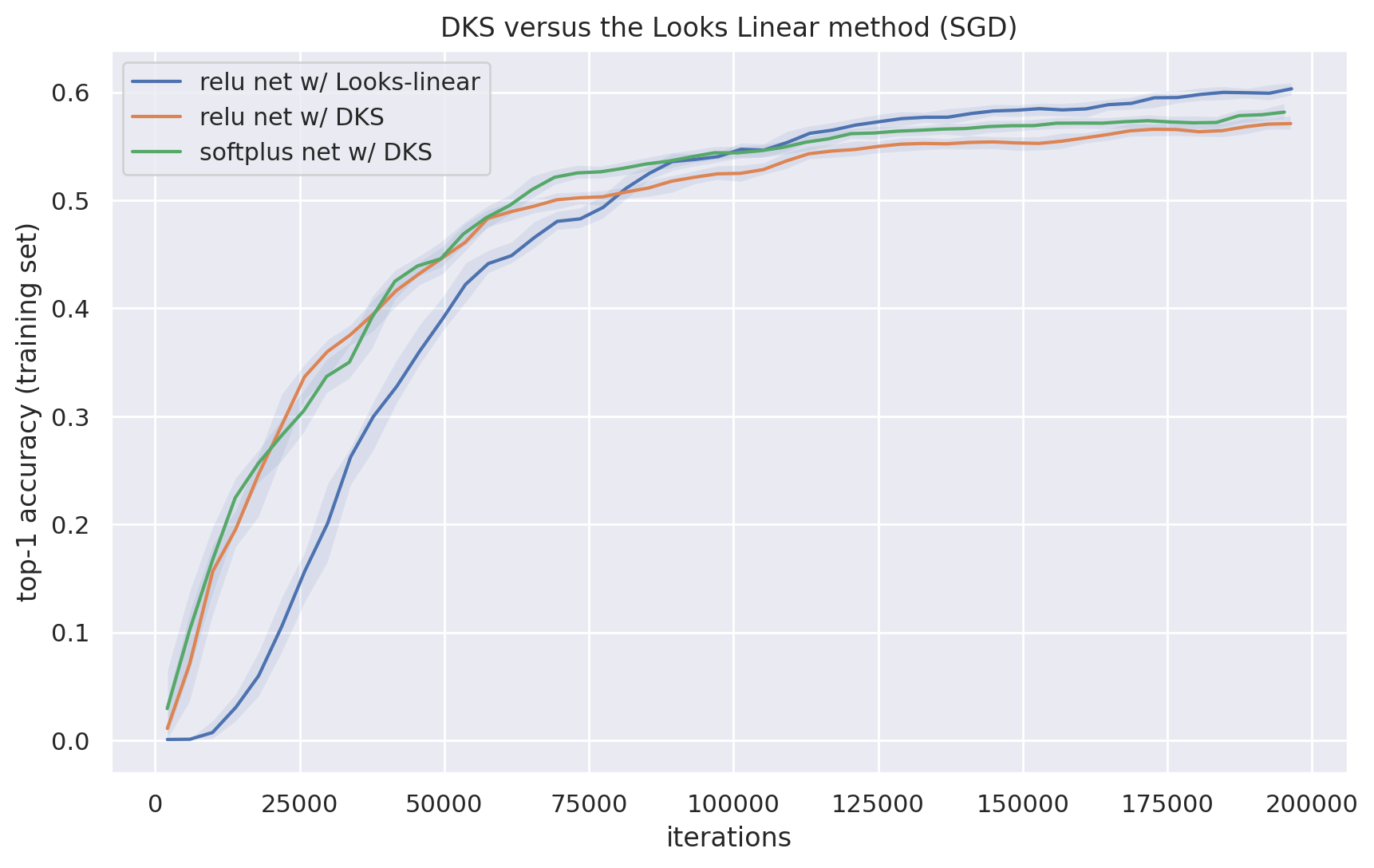}}

For SGD both methods seem to perform similarly, and notably better than both the fan-in/Glorot initializations, and also the LSUV/WLI methods.

We also conducted experiments with CIFAR-10, which yielded similar results. These are given below without commentary.

\

\resizebox{0.85\columnwidth}{!}{\includegraphics{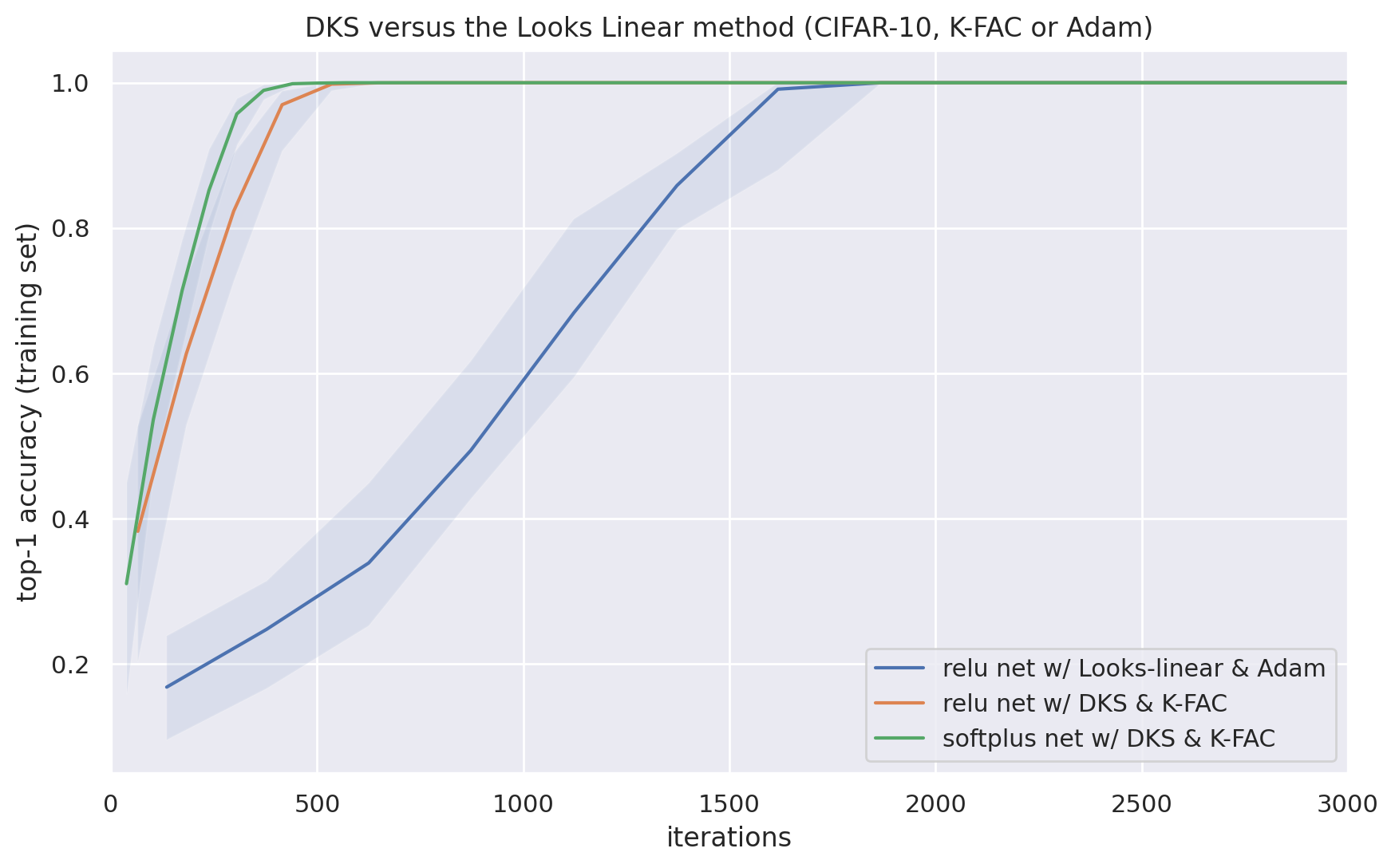}}

\resizebox{0.85\columnwidth}{!}{\includegraphics{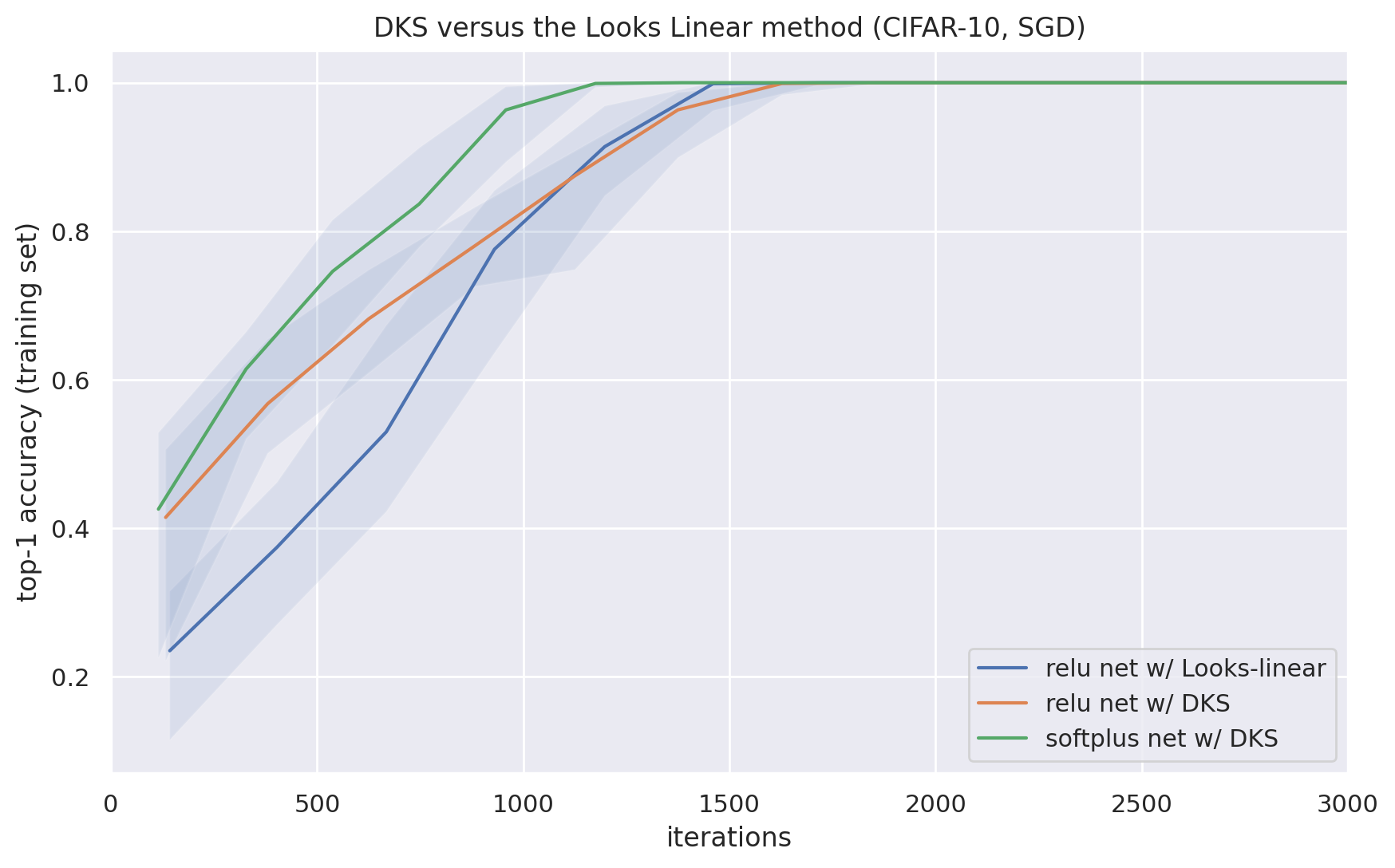}}

\subsubsection{Edge of Chaos (EOC) method}

The Edge of Chaos (EOC) method (described in detail in Section \ref{sec:EOC-method}) is the closest approach to ours in the existing literature, and the one which directly inspired it. The version in \citet{xiao2018dynamical}, which we will use here, involves two ingredients: choosing variances for the weight and bias distributions so that $\mathcal{C}' (1) = 1$ for each local C map $\mathcal{C}$, and using the Delta Orthogonal initialization for the weights (which is rescaled to achieve the target variance).

A clean comparison to EOC is somewhat difficult, as it is not fully specified. In particular, for most activation functions there are infinitely many combinations of the two variances which achieve $\mathcal{C}' (1) = 1$. And for the RELU activation function, the condition $\mathcal{C}' (1) = 1$ holds for any weight variance (given zero bias variance), so that the method reduces to an Orthogonal Delta initialization. \citet{xiao2018dynamical} focused their experiments on tanh networks, and following their advice we will take the variance of the weights and biases to be $1.01 / k$ and $1.654355 \cdot 10^{- 7}$ (respectively) for tanh nets, where $k$ is the input channel dimension for the given layer. We will also consider RELU networks, with a weight variance of $2 / k$ and a bias variance of 0.

\

\resizebox{0.85\columnwidth}{!}{\includegraphics{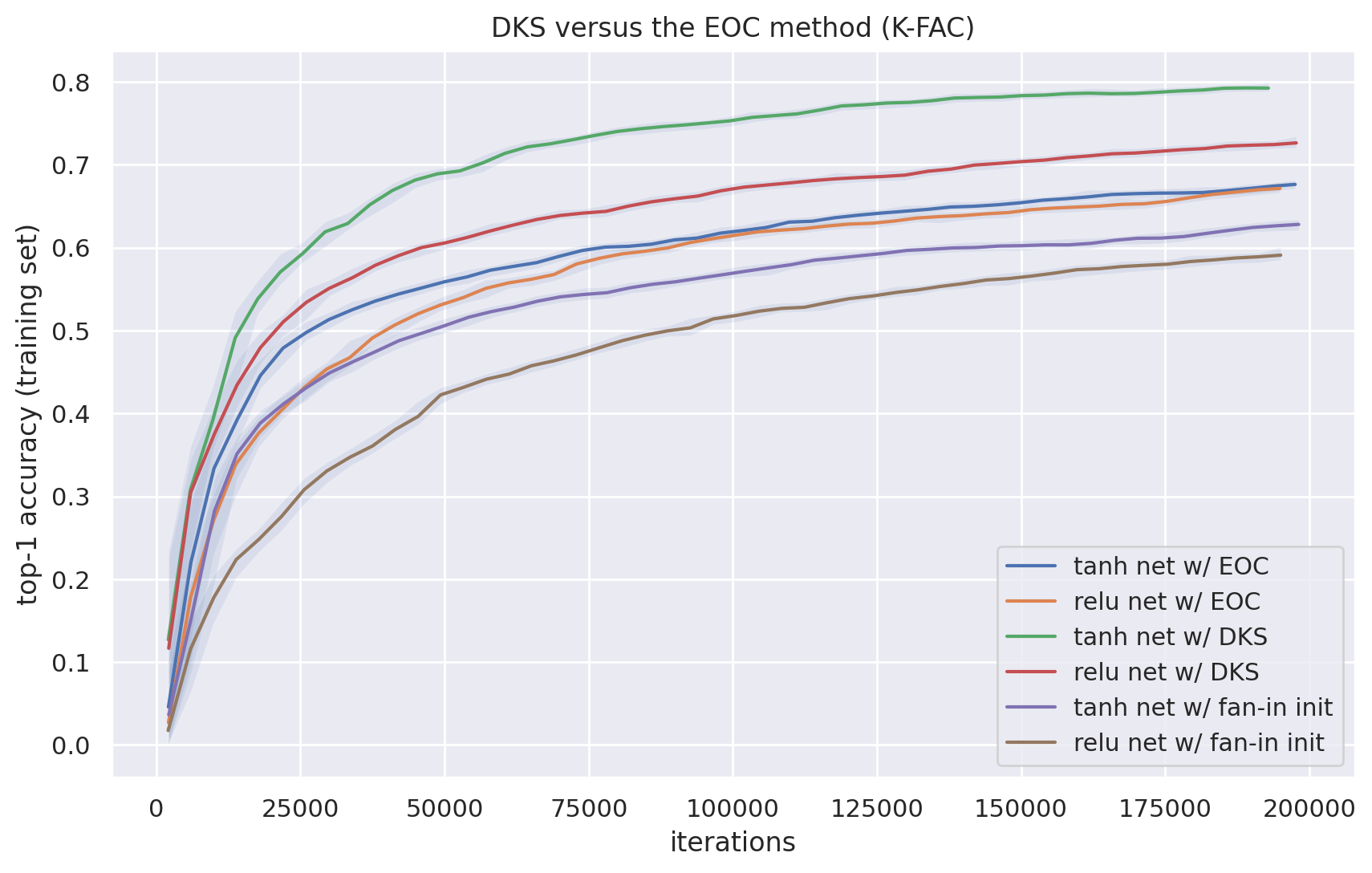}}

\resizebox{0.85\columnwidth}{!}{\includegraphics{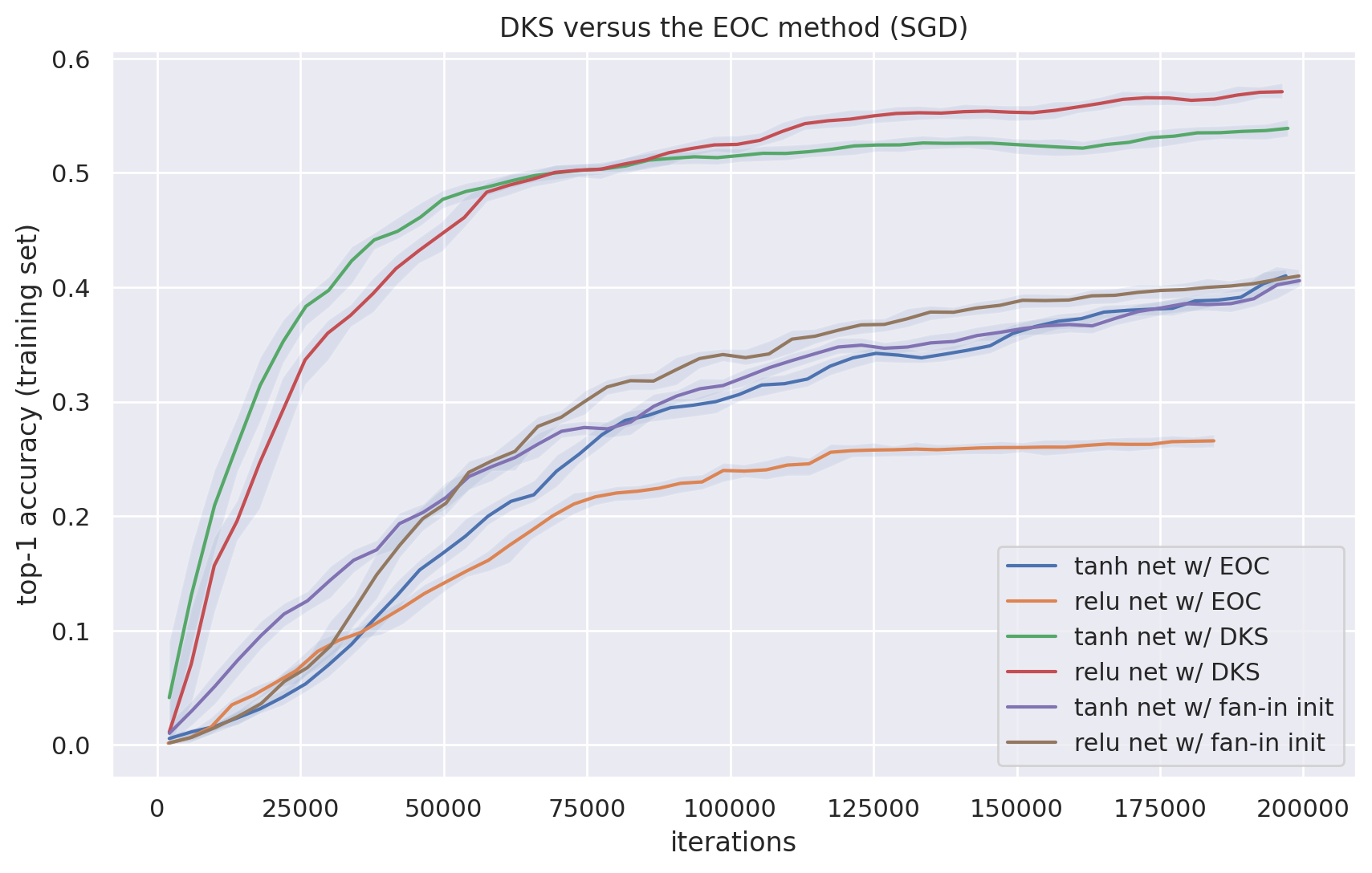}}

\resizebox{0.85\columnwidth}{!}{\includegraphics{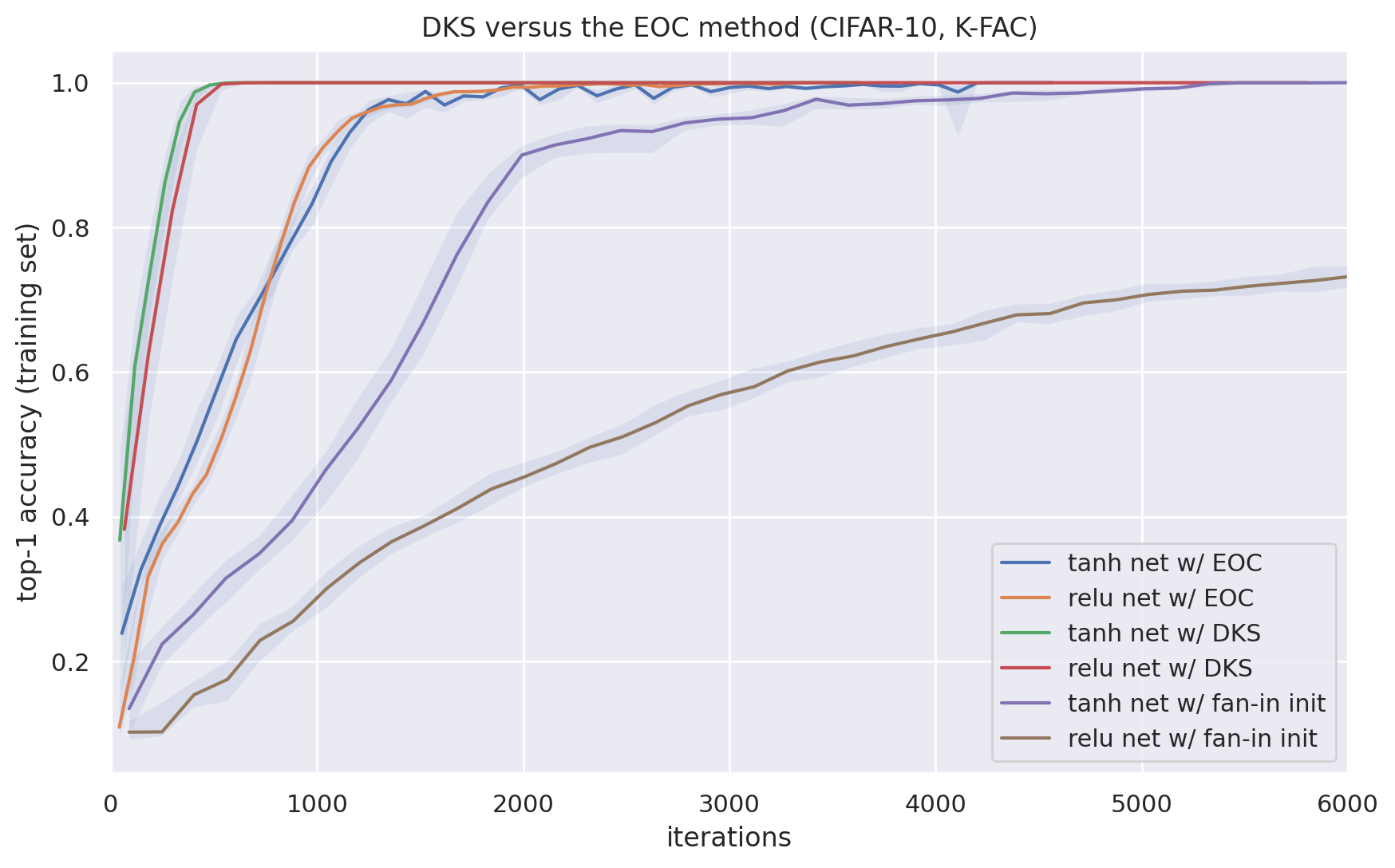}}

\resizebox{0.85\columnwidth}{!}{\includegraphics{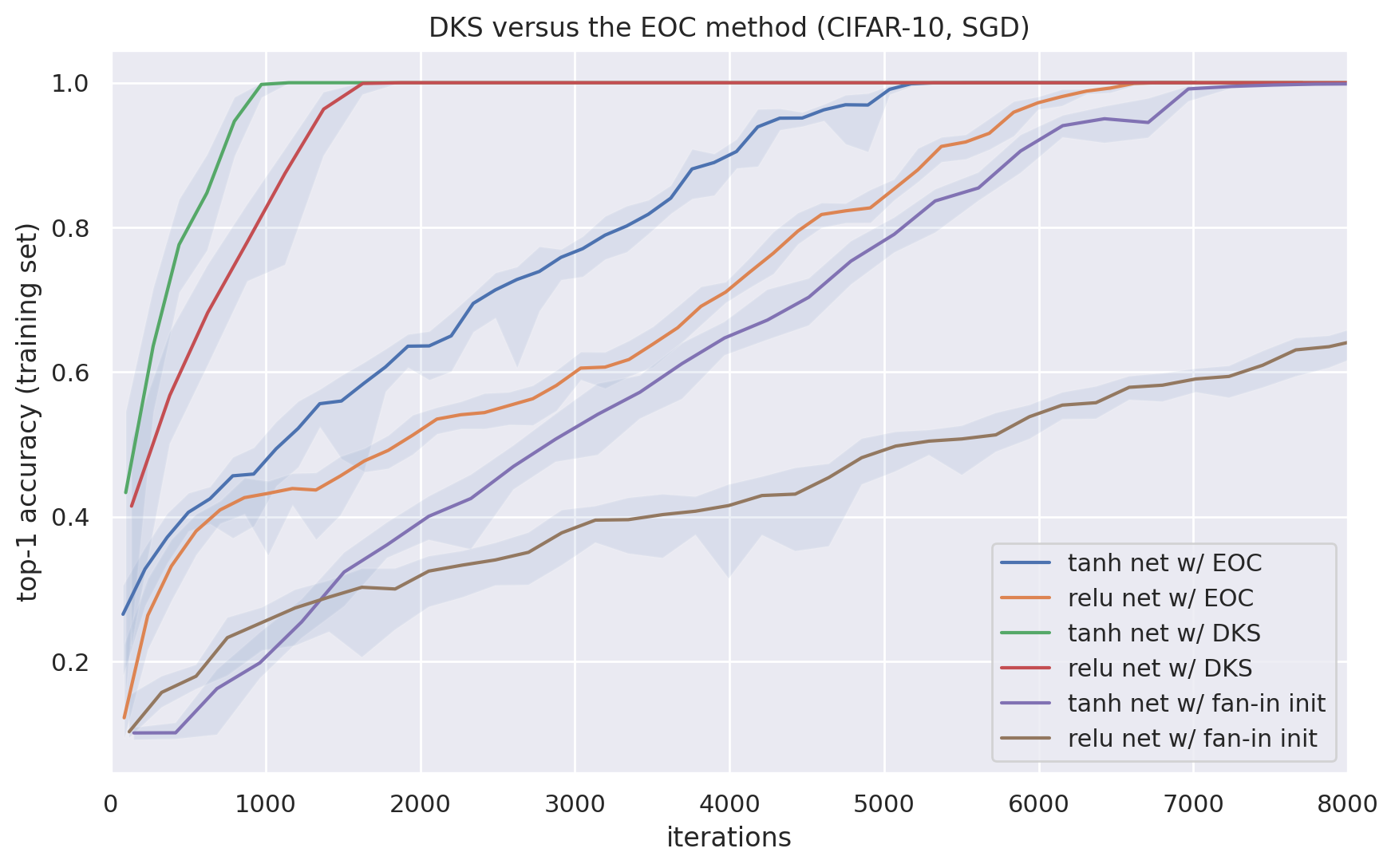}}

From these results we see that DKS significantly outperforms EOC in terms of optimization speed (for both Imagenet and CIFAR-10), which in turn outperforms the simple Fan-in initialization method.

\subsubsection{Fix-up}

Fix-up, which we briefly discuss in Section \ref{sec:resnet-related-discussion}, is a recent method for constructing and initializing networks with residual connections which is designed to eliminate the need for normalization layers. It involves initializing the weights of the final convolutional layer in each residual block to zero (so that the residual blocks behave like identity functions at initialization), using a special formula for the variance of the weights distribution, as well as introducing learnable scalar multiplication and bias operations throughout the network.

We were not able to get K-FAC to work well with Fix-up. This might have been due to a bad interaction with K-FAC and the extra parameters introduced by Fix-up (as K-FAC is designed specifically for the standard neural network parameters). Another possible explanation is that, like networks created with the Looks Linear method, the larger steps taken by K-FAC cause Fix-up networks to transition too quickly to extreme nonlinear behavior (after being essentially linear at initialization).

As Fix-up requires the use a skip connections, for the sake of fairness we compared it to BN-free networks constructed with DKS that also used skip connections. And because we couldn't get K-FAC to work well with Fix-up, we instead used Adam with Fix-up in our first comparison. (While this may seem unfair, we note that for networks with skip connections, K-FAC and SGD perform similarly, as shown in Subsection \ref{sec:experiment-DKS-skips}.)

\

\resizebox{0.85\columnwidth}{!}{\includegraphics{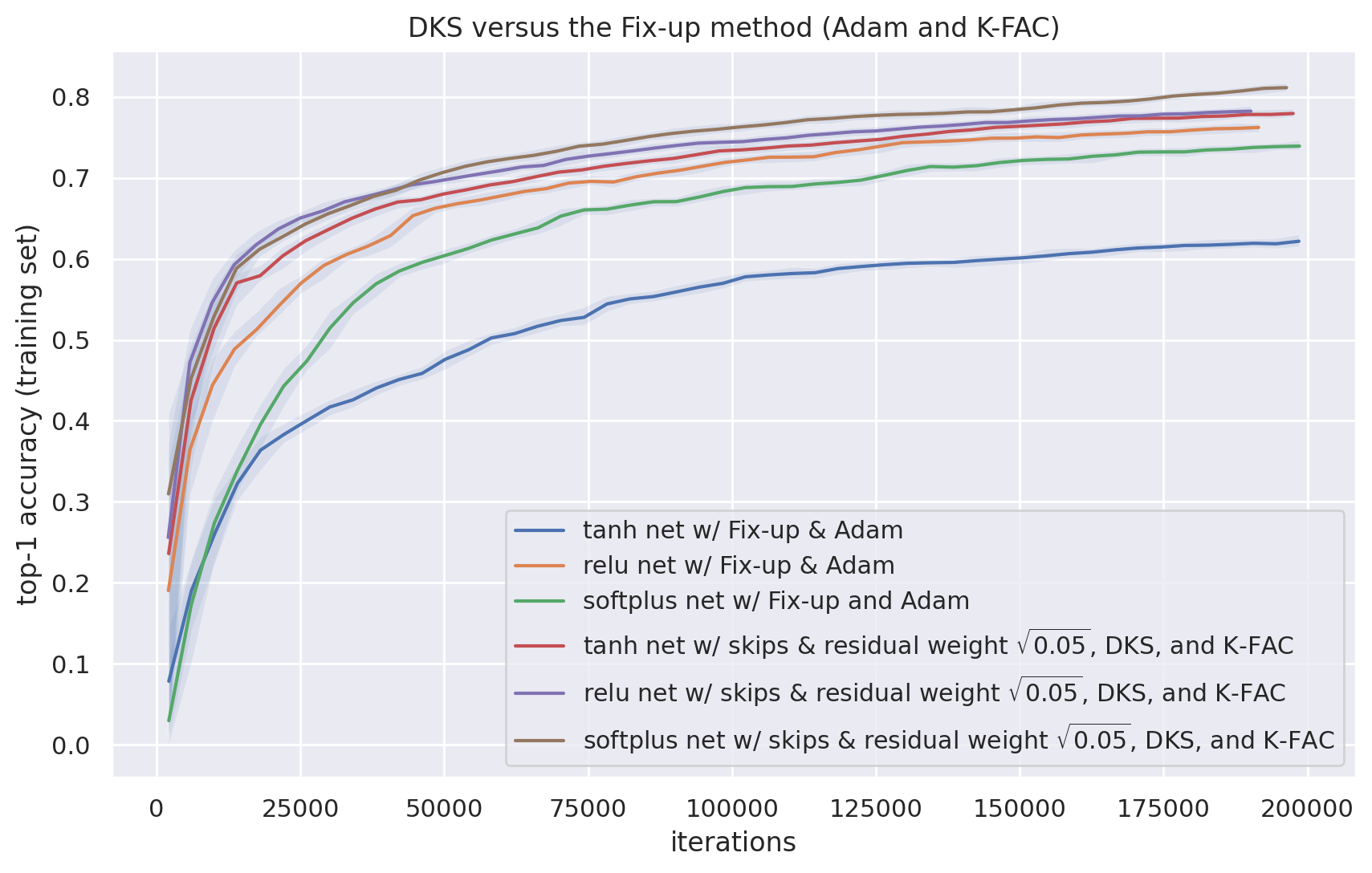}}

\resizebox{0.85\columnwidth}{!}{\includegraphics{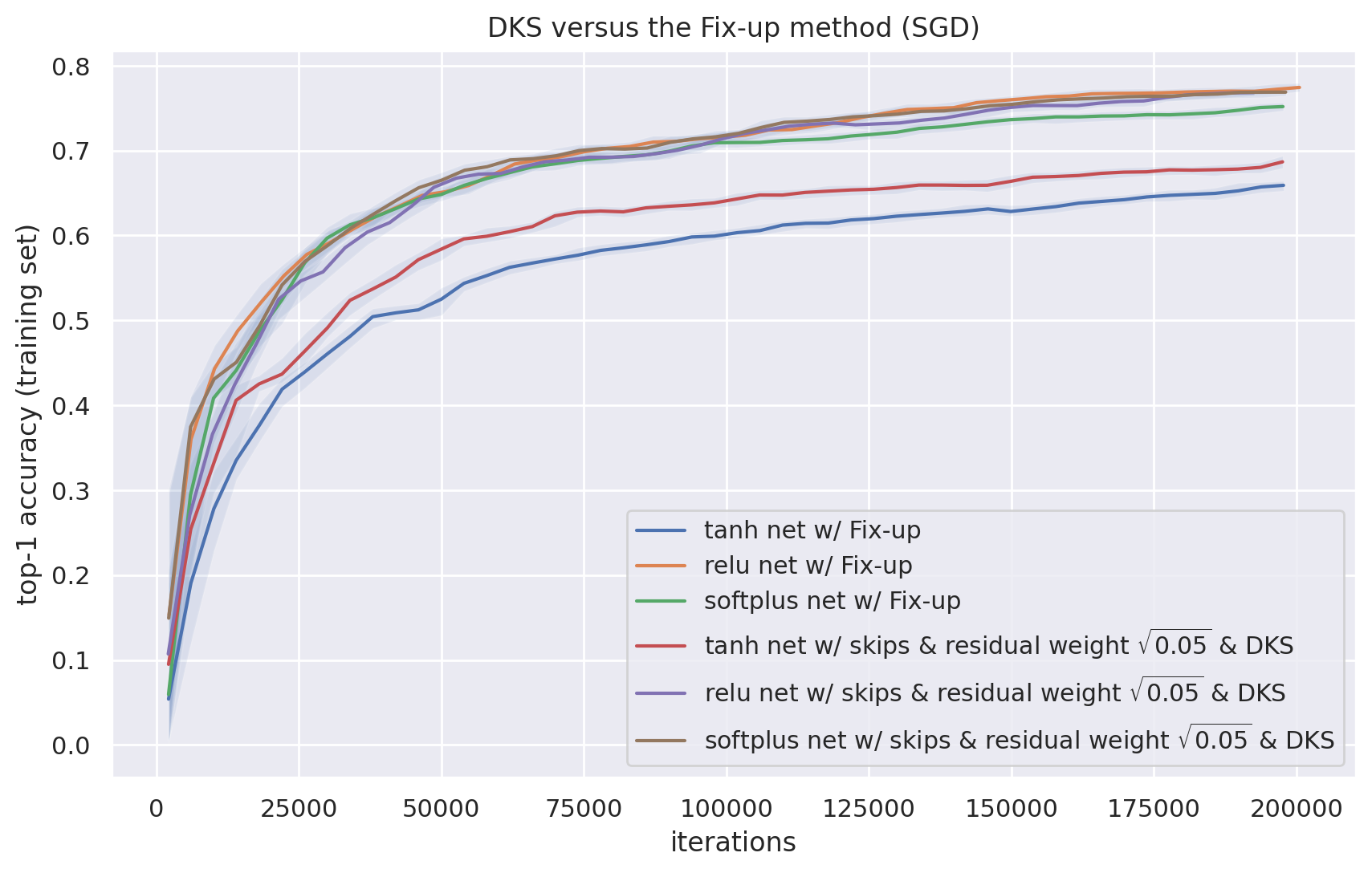}}

From these results we see that Fix-up performs similarly to DKS for RELU activation functions, but falls behind for tanh and softplus.

\subsection{Meta-parameter studies}

The influence of various training ``meta-parameters'' on the optimization and generalization performance of DKS networks is considered in Appendix \ref{app:meta-param-experiments}. These meta-parameters include the weight on the residual branch when using skip connections, DKS's $\zeta$ parameter, and the choice of optimizer. Our conclusions from these studies are summarized as follows:
\begin{itemizedot}
  \item When using DKS with skip connections, a weight of $\sqrt{0.05}$ on the residual branch works the best overall among several other sensible options, although this is likely to be contingent on details of the architecture (such as depth).
  
  \item In terms of optimization performance, $\zeta = 1.5$ typically works better than values that are much larger, or much closer to 1, although the difference isn't very big. In terms of generalization performance, somewhat smaller values (such as 1.1) may work slightly better.
  
  \item For networks without skip connections, K-FAC is the best optimizer in terms of speed, followed closely by Shampoo. Following that are Adam and then SGD, which both perform significantly worse than Shampoo in this setting. For networks with skip connections, the gap between K-FAC and SGD narrows substantially.
\end{itemizedot}

\subsection{Ablations and modifications of DKS}

Various ablations and modifications of DKS are considered in Appendix \ref{app:ablation-and-mod-experiments}. The overall conclusion of these studies is that each component of DKS, except perhaps for PLN (assuming reasonably well scaled input data), is required to achieve the highest optimization speed. When considering test error the conclusions are similar but somewhat muted, with the single exception that using weighted mean-pooling layers with K-FAC seems to improve test set performance while degrading training set performance.

%\stopcontents[mytoc]

\section{Conclusions}\label{sec:conclusions}

In this work we developed Deep Kernel Shaping (DKS), a method for making neural networks easier to train via model class preserving transformations. We showed how our method controls the shape of the network's initialization-time kernel, by way of our generalized Q/C map analysis, in order to prevent certain common pathologies associated with slow optimization and poor generalization. In our experiments we showed that DKS allows deep networks without skip connections or normalization layers to be trained at similar speeds to ResNets on Imagenet, assuming the use of K-FAC or Shampoo. To the best of our knowledge this is a unprecedented result. We also applied our generalized Q/C map analysis to explain the effectiveness of previously proposed methods for training deep networks, such as skip connections, normalization layers, and popular initialization schemes.

By demystifying trainability in deep networks, and disentangling it from model design, we hope that DKS will enable deep networks to reach new heights of performance, flexibility, and ease of use. There is even the potential that DKS may unlock a new class of neural models untrainable with standard tools like normalization layers and skip connections, possibly when used in combination with strong optimizers like K-FAC or Shampoo. 

Finally, because of their sensitivity to the strength of the optimizer, deep skip connection-free networks constructed with DKS have the potential to serve as new benchmark problem for neural network optimizers. This should be a welcome development to the area, as ResNets are often still used for benchmarking optimizers, despite the fact that it is known to be impossible to significantly outperform well-tuned SGD when training them at small/medium batch sizes \citep{zhang2019algorithmic}.

\section{Limitations and future directions}\label{sec:limitations-and-future-directions}

We end by discussing some limitations of DKS, along with possible ways to overcome them in future work.

\begin{itemize}

\item DKS currently doesn't support layers with multiplicative units, such as the self-attention layers in Transformers \citep{vaswani2017attention}. This is because we don't have a kernel approximation for such layers that would yield one-dimensional Q/C maps (or something similar). A possible way around this would be to generalize C maps to higher dimensional inputs, and develop new theory along the lines of Section \ref{sec:Cmap-analysis} to control their shape. Another possibility would be to find some weight initialization for self-attention layers which would give rise to one dimensional maps.

\item While DKS supports pooling layers in practice (based on our experiments), our theoretical treatment of these layers in Section \ref{sec:pooling-layers} is rudimentary and incomplete. Moreover, mean-pooling layers aren't really supported at all within our framework, since they make it impossible to achieve uniform q values. And while \emph{weighted} mean pooling layers can serve as a reasonable replacement (as discussed in Section \ref{sec:weighted-mean-pools}), their Q/C map interpretation, and the quality of their kernel function approximations, are both somewhat dubious. One possible way to improve this situation would be to develop better replacements for pooling layers that are compatible with DKS. Another would be to extend our analysis to handle non-uniform q values, possibly as part of a generalized higher-dimensional version of Q/C map analysis. The fact that pooling layers seem to work reasonably well with DKS in practice hints that this should be possible.

\item To match the training speed of standard ResNets on skip connection-free networks using DKS we were required to train with K-FAC or Shampoo. From a practical perspective this is somewhat unsatisfying, as those methods are significantly more complex than SGD, and introduce additional computational overheads (although these can be largely mitigated through various strategies such as those proposed in \citet{martens2015optimizing}, \citet{ba2017distributed}, and \citet{anil2020scalable}). An interesting direction for future work would be to try to achieve rapid training of skip connection-free networks with a much simpler optimizer like SGD or Adam (e.g.~by modifying DKS somehow), or to explain the importance of stronger optimizers for training such networks. Recent work arguing for the optimality of approximate natural gradient methods like K-FAC in in the NTK setting \citep{zhang2019fast, karakida2020understanding} may be a good starting point for the latter direction.

\item In our experiments with DKS we consistently observed increased overfitting compared to standard ResNets, resulting in top-1 test set accuracy on ImageNet that was lower by a few percent. This echoes similar observations made in related works such as \citet{zhang2019fixup}, and could by caused by a number of things, including the loss of noise from BN layers, or a subtle change in the inductive bias of the model. Addressing this remains an important direction for future work.

\item As discussed in Section \ref{sec:positively-homogeneous-problem}, while RELU activation functions can be used with DKS, one can only enforce three of the four Q/C map conditions. And while RELU networks with DKS perform well in most settings in our experiments despite this limitation, they perform poorly with skip connection-free networks trained using K-FAC (relative to other activation functions). Fortunately, since we have consistently strong performance for other activation functions, including RELU-like ones such as softplus, this arguably isn't a serious issue. Indeed, the main reason for using RELUs over other activation functions is that they are an important ingredient in the standard recipe for achieving fast and stable training of very deep networks, for which DKS is an alternative.

\item Recurrent neural networks (RNNs) are currently not supported by DKS. This is due to their sharing of parameters across time steps, which invalidates the kernel approximations that underlie our analysis. However, it is conceivable that a more advanced theory could be used to extend DKS to RNNs, and preliminary experiments we conducted with DKS on RNNs gave positive results, suggesting that it might already work well in practice.

\item Q/C map analysis is formally justified using kernel function approximations for neural networks at initialization time. The accuracy of these approximations is predicted by bounds such as those reviewed in Section \ref{sec:how-accurate-approx}. Currently, the best known bounds seem to be overly pessimistic, and in order to guarantee reasonable approximation error, require that the width (or channel dimension) grow \emph{exponentially} with the network's depth. We conjecture that much stronger bounds exist, although they might require the introduction of additional hypotheses, such as that the networks are constructed using DKS (or something similar).

\end{itemize}

\acks{We would like to thank Alex Botev, Alex Graves, Georg Ostrovski, Guodong Zhang, Ilja Kuzborskij, Koray Kavukcuoglu, Neil Rabinowitz, Soham De, Yann Dauphin, and Yee Whye Teh for their guidance, helpful discussions, and feedback on early drafts. We would also like to thank the entire team at DeepMind for supporting this project.}

\newpage

\part{Appendix}
\label{part:appendix}

\appendix

\section{Approximating average unit values}\label{app:average-unit-approx}

In addition to approximating the contents of the PKF $\Sigma_{f (Z), f (Z')}$, which can be thought of as entry-averages (across channels) of element-wise products between pairs of vectors in $f$'s output feature map, we may sometimes be interested in approximating the entry-averages of such vectors themselves. For a given $k$-dimensional vector $y$ with associated q value $q$ in $f$'s output, we have the intuitive approximation
\begin{equation}
  \frac{1}{k}  \mathbbm{1}^{\top} y \: \approx \: \mathbb{E}_{x \sim \mathcal{N} (0, 1)} \left[ \phi \left( \sqrt{q} x \right) \right], \label{eqn:average-unit-approx}
\end{equation}
where $\mathbbm{1}$ denotes a vector of 1's.  As we will show below, this is an accurate approximation with high probability in the same sense that the APKFs are. 

% Given a combined layer $f$ with activation function $\phi$, and $y$ a $k$-dimensional vector in $f$'s output feature map with associated q value $q$, we will establish the following claim:
% \[ \frac{1}{k}  \mathbbm{1}^{\top} y \: \approx \: \mathbb{E}_{x \sim \mathcal{N} (0, 1)} \left[ \phi \left( \sqrt{q} x \right) \right] , \]
% where $\mathbbm{1}$ denotes a vector of 1's (of whatever size required by the context).

To begin, we define a modified version $g$ of $f$ with activation function $\psi (x) = \phi (x) + \mathbbm{1}$. With this definition we have $g (Z) = f (Z) + O$, where $O = \mathbbm{1}  \mathbbm{1}^{\top}$ is a matrix of 1's. Observing that
\[ g (Z)^{\top} g (Z) = (f (Z) + O)^{\top}  (f (Z) + O) = f (Z)^{\top} f (Z) + O^{\top} f (Z) + f (Z)^{\top} O + O^{\top} O \]
it follows that
\[ \Sigma_{g (Z), g (Z)} = \Sigma_{f (Z), f (Z)} + S + S^{\top} + \Sigma_{O, O} , \]
where
\[ A = \frac{1}{k}  \left[\begin{array}{cc}
     O^{\top} f (Z) & O^{\top} f (Z)\\
     O^{\top} f (Z) & O^{\top} f (Z)
   \end{array}\right] . \]
Meanwhile, under the APKF Condition we have that
\begin{eqnarray*}
  \Sigma_{g (Z), g (Z)} & = & \kappa_g (Z, Z') \\
  & \approx & \widetilde{\kappa_g} (\Sigma_{Z, Z'}) \\
  & = & \mathbb{E}_{u \sim \mathcal{N} (0, \Sigma_{Z, Z})} [\psi (u) \psi (u)^{\top}]\\
  & = & \mathbb{E}_{u \sim \mathcal{N} (0, \Sigma_{Z, Z})} [\phi (u) \phi (u)^{\top}] +\mathbb{E}_{u \sim \mathcal{N} (0, \Sigma_{Z, Z})} [\phi (u)  \mathbbm{1}^{\top}] +\mathbb{E}_{u \sim \mathcal{N} (0, \Sigma_{Z, Z})} [\mathbbm{1} \phi (u)^{\top}] \\
  & & + \; \mathbb{E}_{u \sim \mathcal{N} (0, \Sigma_{Z, Z})} [\mathbbm{1}  \mathbbm{1}^{\top}]\\
  & = & \widetilde{\kappa_f} (\Sigma_{Z, Z'}) + v \mathbbm{1}^{\top} + \mathbbm{1} v^{\top} + \mathbbm{1}  \mathbbm{1}^{\top}\\
  & \approx & \Sigma_{f (Z), f (Z)} + v \mathbbm{1}^{\top} + \mathbbm{1} v^{\top} + \mathbbm{1}  \mathbbm{1}^{\top},
\end{eqnarray*}
where $v =\mathbb{E}_{u \sim \mathcal{N} (0, \Sigma_{Z, Z})} [\phi (u)]$. Equating both expressions for $\Sigma_{g (Z), g (Z)}$, and using the fact that $\Sigma_{O, O} = \mathbbm{1}  \mathbbm{1}^{\top}$, we have
\[ v \mathbbm{1}^{\top} + \mathbbm{1} v^{\top} \approx A + A^{\top} . \]
Comparing diagonal entries of both sides implies
\[ v \approx \frac{1}{k}  (\mathbbm{1}^{\top} f (Z)) . \]
The RHS here is a vector (over locations) consisting of the vector-averages we are interested in estimating (i.e.~$\frac{1}{k}  \mathbbm{1}^{\top} y$). For any given one of them with associated q value $q$ we thus have the claimed approximation $\mathbb{E}_{x \sim \mathcal{N} (0, 1)} \left[ \phi \left( \sqrt{q} x \right) \right]$.

\section{Mathematical details for Section \ref{sec:local-C-map-derivatives}}\label{app:Gamma-derivative}

In this section we will derive the formula
\[ \Gamma'_{\phi} (c, q_1, q_2) = \sqrt{q_1 q_2} \Gamma_{\phi'} (c, q_1, q_2), \]
where
\[ \Gamma_{\phi} (c, q_1, q_2) \equiv \mathbb{E} \left[ \phi \left( \sqrt{q_1} x \right) \phi \left( \sqrt{q_2}  \left( cx + \sqrt{1 - c^2} y \right) \right) \right] \text{\quad for\quad} x, y \sim \mathcal{N} (0, 1), \]
and the derivative of $\Gamma_{\phi}$ is taken with respect to $c$.

Taking the derivative inside of the expectation we have
\begin{eqnarray*}
  \Gamma'_{\phi} (c, q_1, q_2) & = & \mathbb{E} \left[ \phi \left( \sqrt{q_1} x \right) \phi' \left( \sqrt{q_2}  \left( cx + \sqrt{1 - c^2} y \right) \right)  \sqrt{q_2}  \left( x - \frac{cy}{\sqrt{1 - c^2}} \right) \right]\\
  & = & \sqrt{q_2} \mathbb{E} \left[ \phi \left( \sqrt{q_1} x \right) \phi' \left( \sqrt{q_2}  \left( cx + \sqrt{1 - c^2} y \right) \right) x \right]\\
  &  & - \frac{\sqrt{q_2} c}{\sqrt{1 - c^2}} \mathbb{E} \left[ \phi \left( \sqrt{q_1} x \right) \phi' \left( \sqrt{q_2}  \left( cx + \sqrt{1 - c^2} y \right) \right) y \right] .
\end{eqnarray*}
By Stein's Lemma $\mathbb{E}_{u \sim \mathcal{N} (0, 1)} [f (u) u] =\mathbb{E}_{u \sim N (0, 1)} [f' (u)]$, so that
\begin{eqnarray*}
  \mathbb{E} \left[ \phi \left( \sqrt{q_1} x \right) \phi' \left( \sqrt{q_2}  \left( cx + \sqrt{1 - c^2} y \right) \right) x \right] & = & \sqrt{q_1} \mathbb{E} \left[ \phi' \left( \sqrt{q_1} x \right) \phi' \left( \sqrt{q_2}  \left( cx + \sqrt{1 - c^2} y \right) \right) \right]\\
  &  & - \hspace{1em} \sqrt{q_2} c\mathbb{E} \left[ \phi \left( \sqrt{q_1} x \right) \phi'' \left( \sqrt{q_2}  \left( cx + \sqrt{1 - c^2} y \right) \right) \right]
\end{eqnarray*}
and
\begin{eqnarray*}
  \mathbb{E} \left[ \phi \left( \sqrt{q_1} x \right) \phi' \left( \sqrt{q_2}  \left( cx + \sqrt{1 - c^2} y \right) \right) y \right] & = & \sqrt{q_2}  \sqrt{1 - c^2}\\
  &  & \cdot \hspace{1em} \mathbb{E} \left[ \phi \left( \sqrt{q_1} x \right) \phi'' \left( \sqrt{q_2}  \left( cx + \sqrt{1 - c^2} y \right) \right) \right] .
\end{eqnarray*}
Plugging these expressions into the above equation for $\Gamma'_{\phi} (c, q_1, q_2)$, and observing the cancellation of the $\sqrt{q_2} c\mathbb{E} \left[ \phi \left( \sqrt{q_1} x \right) \phi'' \left( \sqrt{q_2}  \left( cx + \sqrt{1 - c^2} y \right) \right) \right]$ term, we arrive at
\[ \Gamma'_{\phi} (c, q_1, q_2) = \sqrt{q_1 q_2} \mathbb{E} \left[ \phi' \left( \sqrt{q_1} x \right) \phi' \left( \sqrt{q_2}  \left( cx + \sqrt{1 - c^2} y \right) \right) \right] = \sqrt{q_1 q_2} \Gamma_{\phi'} (c, q_1, q_2) . \]

%\section{Mathematical details for Section \ref{sec:Cmaps_trainability}}
\section{A detailed analysis of C map convergence in deep networks}
\label{app:math-for-Cmaps_trainability}

\subsection{General purpose lemmas}

Unless specified otherwise, $f^n(x)$ will denote $\underbrace{f \circ \ldots \circ f}_{n}(x)$, i.e.~$f$ composed with itself $n$ times.

\begin{lemma}
\label{lem::fundamental_dynamical_systems}
Let $f : \RR \rightarrow \RR$ be given by $f(x) = \alpha x + \beta$, $\alpha \neq 0$. Then the explicit formula holds
\begin{equation}
f^n(x) = \alpha^n \left(x - \frac{\beta}{1-\alpha}\right) + \frac{\beta}{1-\alpha}.
\end{equation}
\end{lemma}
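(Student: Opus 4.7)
My plan is to prove the formula by reducing the iteration to a purely multiplicative one via a change of coordinates centered at the fixed point of $f$. First I would note that the stated formula implicitly requires $\alpha \neq 1$ (otherwise $\beta/(1-\alpha)$ is undefined); the case $\alpha = 1$ can be handled separately by the trivial induction giving $f^n(x) = x + n\beta$, which I would mention briefly as a side remark. For the main case $\alpha \neq 1$, I would set $x^{\star} := \beta/(1-\alpha)$ and observe that $x^{\star}$ is the unique fixed point of $f$, since $f(x^{\star}) = \alpha \cdot \beta/(1-\alpha) + \beta = \beta(\alpha + 1 - \alpha)/(1-\alpha) = x^{\star}$.

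Next I would verify the key identity $f(x) - x^{\star} = \alpha(x - x^{\star})$ by direct computation:
\begin{equation*}
f(x) - x^{\star} = \alpha x + \beta - \frac{\beta}{1-\alpha} = \alpha x - \frac{\alpha\beta}{1-\alpha} = \alpha\Bigl(x - \frac{\beta}{1-\alpha}\Bigr) = \alpha(x - x^{\star}).
\end{equation*}
This reduces the study of the iterates of $f$ to iterates of the purely linear map $y \mapsto \alpha y$ in the shifted coordinate $y = x - x^{\star}$.

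From here a straightforward induction on $n$ gives $f^n(x) - x^{\star} = \alpha^n (x - x^{\star})$: the base case $n=0$ is trivial (since $f^0$ is the identity by convention), and the inductive step uses $f^{n+1}(x) - x^{\star} = f(f^n(x)) - x^{\star} = \alpha(f^n(x) - x^{\star}) = \alpha \cdot \alpha^n(x - x^{\star}) = \alpha^{n+1}(x - x^{\star})$. Rearranging yields exactly the claimed formula $f^n(x) = \alpha^n(x - \beta/(1-\alpha)) + \beta/(1-\alpha)$. There is no real obstacle here; this is a one-line computation after the fixed-point substitution, and the whole argument is essentially mechanical. The only subtle point is ensuring the reader notices the implicit $\alpha \neq 1$ assumption in the formula, which I would flag explicitly.
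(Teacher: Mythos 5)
Your proof is correct and takes essentially the same route as the paper's: the paper conjugates $f$ by the translation $\vp(x) = x + \beta/(1-\alpha)$ to reduce it to the purely linear map $g(x) = \alpha x$, which is exactly your change of coordinates $y = x - x^{\star}$ centered at the fixed point, followed by the same multiplicative iteration. Your explicit remark that the formula implicitly requires $\alpha \neq 1$ (the paper's statement only assumes $\alpha \neq 0$) is a correct and worthwhile observation.
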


\begin{proof}
Let $\vp(x) = x + \frac{\beta}{1-\alpha}$. Let us consider $g = \vp^{-1} \circ f \circ \vp$. Then
\[
g(x) = \left[ f\left( x + \frac{\beta}{1-\alpha} \right) \right] - \frac{\beta}{1-\alpha} = \alpha x + \frac{\alpha \beta}{1-\alpha} + \beta - \frac{\beta}{1-\alpha} = 
\alpha x + \beta - \frac{\beta (1 - \alpha)}{1-\alpha} = \alpha x.
\]
Then $g^n(x) = \alpha^n x$. Moreover, $f^n = \vp \circ g^n \circ \vp^{-1}$, and thus 
\[
f^n(x) = \alpha^n \left(x - \frac{\beta}{1-\alpha}\right) + \frac{\beta}{1-\alpha}.
\]
\end{proof}

\begin{lemma}
\label{lem::iterated_inequality}
Let $x_1 < x_2$ and let the functions $f_1: [x_1, x_2] \rightarrow \RR$, $f_2: [x_1, x_2] \rightarrow \RR$ satisfy $f_1(x) \ge f_2(x)$ for every $x \in [x_1, x_2]$, and let $f_2$ be an increasing function. Then for every $n=1,2, \ldots$ such that $f^{n-1}(x) \in [x_1,x_2]$ there holds $f_1^n(x) \ge f_2^n(x)$
\end{lemma}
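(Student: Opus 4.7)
The plan is a straightforward induction on $n$, exploiting the two hypotheses in exactly the spots where they are designed to act: the pointwise bound $f_1 \geq f_2$ will handle the outer application of the iterate, and the monotonicity of $f_2$ will let us propagate the inductive hypothesis through its inner application. The domain condition on the iterates (that $f_1^{k-1}(x), f_2^{k-1}(x) \in [x_1, x_2]$ for all relevant $k \leq n$) is what makes every composition well-defined, and will be used silently throughout.

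For the base case $n=1$, the desired inequality $f_1(x) \geq f_2(x)$ is exactly the pointwise hypothesis, evaluated at the given $x \in [x_1,x_2]$. For the inductive step, I would assume $f_1^n(x) \geq f_2^n(x)$, with both iterates lying in $[x_1,x_2]$, and write
\[
f_1^{n+1}(x) \;=\; f_1\bigl(f_1^n(x)\bigr) \;\geq\; f_2\bigl(f_1^n(x)\bigr) \;\geq\; f_2\bigl(f_2^n(x)\bigr) \;=\; f_2^{n+1}(x).
\]
The first inequality is the pointwise hypothesis applied at the point $f_1^n(x) \in [x_1,x_2]$, and the second is the monotonicity of $f_2$ applied to the inductive hypothesis $f_1^n(x) \geq f_2^n(x)$ (both points lying in the domain of $f_2$). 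This closes the induction.

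There isn't really a hard step here: the only subtlety is bookkeeping to ensure each iterate lies in $[x_1,x_2]$ so that the next composition is defined, which is precisely what the hypothesis in the lemma statement guarantees. I would include a brief remark to that effect so the reader sees that the two uses of the hypotheses (pointwise bound for the outer $f_1$, monotonicity for the inner $f_2$) are not interchangeable: the inequality would \emph{not} follow if one merely assumed $f_1$ monotone instead of $f_2$, since then $f_1(f_1^n(x)) \geq f_1(f_2^n(x))$ gives the wrong direction to combine with the pointwise bound.
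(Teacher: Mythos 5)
Your proposal is correct and coincides with the paper's own argument: the same induction on $n$, with the base case given by the pointwise hypothesis, and the inductive step via the chain $f_1^{k+1}(x) = f_1(f_1^k(x)) \ge f_2(f_1^k(x)) \ge f_2(f_2^k(x))$, using the pointwise bound for the outer application and the monotonicity of $f_2$ for the inner one. Your added remark on the non-interchangeability of the two hypotheses and the domain bookkeeping is a sensible clarification but does not change the substance.
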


\begin{proof}
We use induction. The inequality holds for $n=1$ by assumption. Assume now that $f_1^k(x) \ge f_2^k(x)$ for some $k$ and that $f_1^k(x) \in [x_1, x_2]$. Then $f_1^{k+1}(x) = f_1(f_1^k(x)) \ge f_2(f_1^k(x)) \ge f_2(f_2^k(x))$, where the last inequality follows from the fact that $f_2$ is an increasing function.
\end{proof}

\begin{lemma}
\label{lem::first_smaller_than_1}
Assume that an increasing, strictly convex map $f$ satisfies $f(1) = 1$ and $f(x_0) = x_0$ for some $x_0 < 1$. Then $0 < \frac{f(b)-x_0}{b-x_0} < 1$ for every $b$ in $(x_0, 1)$.
\end{lemma}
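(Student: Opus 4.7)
The plan is to exploit the two fixed points $x_0$ and $1$ together with the strict convexity of $f$ on $[x_0,1]$.

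First I would handle the positivity $\frac{f(b)-x_0}{b-x_0} > 0$. Since $b \in (x_0,1)$ we have $b - x_0 > 0$, so only the numerator needs attention. Because $f$ is increasing and $f(x_0) = x_0$, we get $f(b) > f(x_0) = x_0$, hence $f(b) - x_0 > 0$. This part is essentially free.

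Next I would handle the upper bound $\frac{f(b)-x_0}{b-x_0} < 1$, which is equivalent to showing $f(b) < b$ for every $b \in (x_0,1)$. This is the content of the lemma and follows directly from strict convexity: the chord from $(x_0, f(x_0)) = (x_0,x_0)$ to $(1, f(1)) = (1,1)$ is the line $y = x$, and by strict convexity the graph of $f$ lies strictly below this chord on the open interval $(x_0,1)$. Therefore $f(b) < b$, which after subtracting $x_0$ from both sides and dividing by the positive quantity $b-x_0$ yields the desired strict inequality.

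There is no real obstacle here; the only thing to be careful about is invoking strict convexity in its correct form, namely that for a strictly convex function $f$ and any two points $a < c$ in its domain, $f(\lambda a + (1-\lambda)c) < \lambda f(a) + (1-\lambda) f(c)$ for $\lambda \in (0,1)$, which applied with $a = x_0$, $c = 1$, and $\lambda$ chosen so that $\lambda a + (1-\lambda)c = b$ immediately gives $f(b) < \lambda x_0 + (1-\lambda) \cdot 1 = b$. The whole proof should fit in a few lines.
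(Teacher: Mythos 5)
Your proof is correct and follows essentially the same route as the paper: positivity from monotonicity together with the fixed point $f(x_0)=x_0$, and the upper bound by writing $b$ as a strict convex combination of the two fixed points $x_0$ and $1$ so that strict convexity yields $f(b) < b$. No gaps; this matches the paper's argument line for line.
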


\begin{proof}
The function $f$ is increasing, so $x_0 = f(x_0) < f(b)$, and it is strictly convex, so $f\left(t x_1 + (1-t) x_2\right) < t f(x_1) + (1-t) f(x_2)$ for all $x_1 < x_2$ and $0 < t < 1$. Using $x_1 = x_0$, $x_2 = 1$ and $t = \frac{1-b}{1-x_0}$ we obtain $f(b) = f\left( x_0 \cdot \frac{1-b}{1-x_0} + 1 \cdot \frac{b-x_0}{1-x_0} \right) <  \frac{1-b}{1-x_0} f\left( x_0 \right)  + \frac{b-x_0}{1-x0} f\left( 1 \right)  = \frac{1-b}{1-x_0} x_0  + \frac{b-x_0}{1-x_0} 1 = b$.
\end{proof}

\subsection{Definitions and preliminaries}

We will call a C map \textbf{\textit{nontrivial}} if and only if it is neither a constant function nor the identity function.

As we are assuming uniform q values, we have by Section \ref{sec:uniform-q-consequences} that C maps are positive definite functions, from which various properties immediately follow. The following proposition lists the ones we will use in this section:
\begin{proposition}
\label{prop::base_properties_cmap}
A C map $\CC$ satisfies the following properties:
\begin{itemize}
    \item[  i)] $\CC(1) = 1$ and $\CC(0) \ge 0$,
    \item[ ii)] for $c\ge 0$ is an increasing, convex function,
    \item[iii)] if $\CC$ is not a constant function it is strictly increasing for $c\ge 0$,
    \item[ iv)] if $\CC$ is nontrivial, it is strictly increasing and strictly convex for $c\ge 0$.
\end{itemize}
\end{proposition}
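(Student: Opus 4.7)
The plan is to derive all four properties from the single structural fact established in Section \ref{sec:uniform-q-consequences}: under uniform q values, a C map $\CC$ is a positive definite function, hence admits a power series expansion
\[
\CC(c) \;=\; \sum_{i=0}^{\infty} b_i c^i, \qquad b_i \ge 0 \text{ for all } i.
\]
Once this representation is in hand, each of (i)--(iv) reduces to elementary observations about the coefficients $b_i$.

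For (i), the equality $\CC(1)=1$ was proved explicitly in Section \ref{sec:uniform-q-consequences}, and $\CC(0)=b_0 \ge 0$ follows immediately from non-negativity of the coefficients (equivalently, from Equation \ref{eqn:C-at-0} at $i=0$). For (ii), I would termwise differentiate and note that
\[
\CC'(c) = \sum_{i \ge 1} i b_i c^{i-1}, \qquad \CC''(c) = \sum_{i \ge 2} i(i-1) b_i c^{i-2},
\]
both of which are sums of non-negative terms when $c \ge 0$; alternatively, one can simply invoke the closure properties listed in Section \ref{sec:pd-functions}. For (iii), non-constancy of $\CC$ forces at least one $b_{i^{\star}} > 0$ with $i^{\star} \ge 1$ (else $\CC \equiv b_0$), and then for any $0 \le c_1 < c_2$,
\[
\CC(c_2) - \CC(c_1) \;=\; \sum_{i \ge 1} b_i (c_2^{\,i} - c_1^{\,i}) \;\ge\; b_{i^{\star}} (c_2^{\,i^{\star}} - c_1^{\,i^{\star}}) \;>\; 0,
\]
yielding strict monotonicity on $[0,\infty)$.

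For (iv), the core task is to show nontriviality forces some $b_{i^{\star}} > 0$ with $i^{\star} \ge 2$. I would argue this by contrapositive: if $b_i = 0$ for every $i \ge 2$, then $\CC(c) = b_0 + b_1 c$ with $b_0 + b_1 = 1$ from $\CC(1)=1$. The case $b_0 = 0$ gives $\CC = \mathrm{id}$, and the case $b_1 = 0$ gives $\CC$ constant; each contradicts nontriviality as defined just above the proposition. Given such an $i^{\star}$, termwise differentiation gives
\[
\CC''(c) \;\ge\; i^{\star}(i^{\star}-1)\, b_{i^{\star}}\, c^{\,i^{\star}-2} \;>\; 0 \qquad \text{for all } c > 0,
\]
which combined with strict monotonicity from (iii) delivers strict convexity on $c \ge 0$.

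The main obstacle is the subtle affine case in (iv): in principle the constraints $b_0, b_1 \ge 0$, $b_0 + b_1 = 1$ admit intermediate values $b_0, b_1 \in (0,1)$, producing a map $\CC(c) = b_0 + b_1 c$ that is nontrivial by the stated definition yet only \emph{weakly} convex. My plan is to justify the exclusion of this case via the definition of nontriviality adopted here --- namely, that such a linear non-identity map should be absorbed into the ``trivial'' category (as it corresponds to an essentially affine activation and is not the target of the subsequent convergence analysis) --- and to flag this interpretive point explicitly. With that exclusion in place, the argument above is routine and avoids any need for the general-purpose lemmas introduced earlier in this appendix.
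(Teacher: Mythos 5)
Your proof is correct and takes essentially the same route as the paper, which in fact supplies no explicit proof at all: the sentence preceding the proposition simply asserts that these properties ``immediately follow'' from the positive definiteness of C maps established in Section \ref{sec:uniform-q-consequences}, and your coefficient-level arguments for (i)--(iii) are precisely the elided details (your termwise differentiation is legitimate since $\sum_i b_i = \CC(1) = 1$ with $b_i \geq 0$ gives radius of convergence at least $1$).

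Your handling of (iv) deserves emphasis, because the obstacle you flag is not merely an interpretive quibble --- it is a genuine counterexample to the proposition as literally stated. Under the paper's definition of nontrivial (``neither a constant function nor the identity function''), the map $\CC(c) = b_0 + b_1 c$ with $b_0, b_1 \in (0,1)$ and $b_0 + b_1 = 1$ is nontrivial, and it is a realizable C map: the paper itself exhibits one in Section \ref{sec:avoid-too-linear}, where the shifted activation $\phi(x) = (x+1)/\sqrt{2}$ yields $C_f(c) \approx (c+1)/2$. Such a map is strictly increasing but only weakly convex, so (iv) fails for it; the intended reading of ``nontrivial'' must exclude affine maps, exactly as you propose. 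Your repair is also consistent with how the proposition is used downstream: Corollary \ref{cor::linear_estimate_for_positive_x} holds for affine maps trivially (with equality), and an affine C map with $\CC(1)=1$ has $1$ as its unique fixed point, so the case analysis of Corollary \ref{cor::fix_points_of_f} is unaffected. (The same implicit exclusion is needed elsewhere in this appendix --- for instance, the ``moreover'' clause of Lemma \ref{lem::fundamental_bound_neg_x} asserting that equality forces $\CC$ to be odd fails for affine maps with $\CC(0) > 0$, since equality there holds identically while $\CC$ is not odd.) One small cosmetic point: in your final step, strict convexity follows from $\CC''(c) > 0$ on $(0,1]$ alone; the strict monotonicity from (iii) is not needed for that part, only for the ``strictly increasing'' half of (iv).
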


% \begin{proof}
% To prove i) we observe that here holds $\CC(0) = a_0$, and by definition $a_0 \ge 0$.
% To show ii) we note, that there holds $\CC'(c) = \sum\limits_{n=1}^{\infty} n a_n c^{n-1}$ and $\CC''(c) = \sum\limits_{n=2}^{\infty} n(n-1) a_n c^{n-2}$. Due to the assumption that $a_n \ge 0$ for $n=0,1,\ldots$ both expressions are nonnegative for $c \ge 0$. If $\CC$ is nonconstant, there exists $n>0$ such that $a_n>0$, and thus $\CC'(c)>0$ for $c>0$, proving iii). Similarly, if $\CC$ is non,trivial, there exists $n>1$ such that $a_n>0$, and thus $\CC'(c)>0$ and $\CC''(c)>0$ for $c>0$, proving iv) and concluding the proof.
% \end{proof}

\begin{corollary}
\label{cor::linear_estimate_for_positive_x}
If $\CC$ is a nontrivial $c$-map, then for all $c \in [0,1]$ there holds $\CC(c) \ge  \CC'(1) c + 1-\CC'(1)$.
\end{corollary}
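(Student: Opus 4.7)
The plan is to recognize the right-hand side as the tangent line to $\CC$ at $c=1$ and invoke convexity. Specifically, evaluating $\CC'(1) c + 1 - \CC'(1)$ at $c=1$ gives $1 = \CC(1)$, so the linear function $\ell(c) \equiv \CC'(1)(c-1) + \CC(1)$ agrees with $\CC$ at $c=1$ and has slope $\CC'(1)$ there. Thus the inequality $\CC(c) \ge \ell(c)$ is exactly the statement that $\CC$ lies above its tangent line at $c=1$.

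First I would cite Proposition~\ref{prop::base_properties_cmap} (iv): since $\CC$ is nontrivial, it is (strictly) convex on $[0,1]$. By Section~\ref{sec:uniform-q-consequences}, $\CC$ is analytic as a positive definite function, so $\CC'(1)$ exists and is well-defined (as a one-sided derivative at the endpoint, and in fact as an ordinary derivative since $\CC$ extends analytically). Then I would apply the standard first-order characterization of convexity: a convex differentiable function lies above each of its tangent lines on the interval of convexity. Applied at the point $c=1$, this gives $\CC(c) \ge \CC(1) + \CC'(1)(c-1) = \CC'(1) c + 1 - \CC'(1)$ for every $c \in [0,1]$.

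There is no real obstacle here; the only minor subtlety is making sure $\CC'(1)$ is meaningful. This follows from the positive definite function representation $\CC(c) = \sum_{i=0}^\infty b_i c^i$ with $b_i \ge 0$ (Section~\ref{sec:uniform-q-consequences}), which converges on $[-1,1]$ with $\sum b_i = \CC(1) = 1$, so the derivative series $\sum i b_i c^{i-1}$ converges at $c=1$ by the same argument used throughout the paper (noting $\CC'(1)$ can be $+\infty$ in principle, but the inequality is trivial in that case since $c-1 \le 0$ makes the RHS $-\infty$). Thus the result is immediate from the tangent-line property of convex functions.
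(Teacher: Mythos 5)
Your proof is correct and takes essentially the same route as the paper's: both rest on the convexity of $\CC$ on $[0,1]$ from Proposition~\ref{prop::base_properties_cmap}~iv), and the inequality is exactly the statement that $\CC$ lies above its tangent line at $c=1$. The only difference is presentational — the paper re-derives this instance of the tangent-line property by contradiction via the Mean Value Theorem (a point below the line would force $\CC'(\vardbtilde{c}) > \CC'(1)$ at some interior $\vardbtilde{c}$, contradicting the monotonicity of $\CC'$ that convexity provides), whereas you invoke the standard first-order characterization of convexity directly, with a harmless extra remark about $\CC'(1)$ being well-defined.
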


\begin{proof}
Suppose that for some $\tilde{c} \in [0,1]$ there holds $\CC(\tilde{c}) < \CC'(1) c + 1-\CC'(1)$. Then, by the Mean Value Theorem there exists $\vardbtilde{c} \in (\tilde{c}, 1)$ such that
\[ 
\CC'(\vardbtilde{c}) = \frac{1 - \CC(\tilde{c})}{1-\tilde{c}} > \frac{1 - \left( \CC'(1) \tilde{c} + 1-\CC'(1) \right)}{1-\tilde{c}} = \frac{\CC'(1) -  \CC'(1) \tilde{c}}{1-\tilde{c}}  = \CC'(1) , 
\]
which contradicts the strict convexity of $\CC$ on $[0,1]$ (Proposition~\ref{prop::base_properties_cmap}, iv)).
\end{proof}

\begin{lemma}
\label{lem::fundamental_bound_neg_x}
Suppose $\CC$ is a nontrivial C map. Then we have
\begin{equation}
\label{eq::base_estimate_neg_x}
\CC(-c) \ge 2 \CC(0) - \CC(c)     
\end{equation}
for every $c \in (0,1]$. Moreover, if this inequality becomes an equality for any $c \in (0,1]$, then $\CC$ is an odd function.
\end{lemma}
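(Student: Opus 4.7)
The plan is to leverage the fact, established in Section \ref{sec:uniform-q-consequences}, that C maps are positive definite functions. This gives us a convergent power series representation
\[
\CC(c) = \sum_{i=0}^{\infty} b_i c^i, \qquad b_i \geq 0,
\]
on $[-1,1]$, and the entire lemma will fall out of splitting this series into its even and odd parts.

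First I will sum $\CC(c)$ and $\CC(-c)$: the odd-degree terms cancel while the even-degree terms double, producing
\[
\CC(c) + \CC(-c) \;=\; 2\sum_{j=0}^{\infty} b_{2j}\, c^{2j} \;=\; 2\CC(0) + 2\sum_{j \geq 1} b_{2j}\, c^{2j}.
\]
Because each $b_{2j} \geq 0$ and $c^{2j} \geq 0$ for every $c \in [-1,1]$, the trailing sum is non-negative, so $\CC(c) + \CC(-c) \geq 2\CC(0)$, which is exactly the inequality \eqref{eq::base_estimate_neg_x} after rearrangement. This part is essentially one line once positive definiteness is invoked, so I do not expect any obstacle here; it is really just bookkeeping on a power series.

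For the equality clause, I would argue as follows: if \eqref{eq::base_estimate_neg_x} holds with equality at some $c_0 \in (0,1]$, then the non-negative series $\sum_{j \geq 1} b_{2j}\, c_0^{2j}$ must vanish. Since $c_0 > 0$ and every $b_{2j}$ is non-negative, each individual coefficient $b_{2j}$ for $j \geq 1$ must equal zero. Hence the entire power series reduces to
\[
\CC(c) = b_0 + \sum_{j \geq 0} b_{2j+1}\, c^{2j+1},
\]
so that $\CC(c) - \CC(0)$ is odd. Under the working hypothesis of the surrounding analysis (where we are ultimately interested in C maps with $\CC(0)=0$, i.e.\ the setting in which the lemma will be applied), this means $\CC$ itself is odd, as claimed.

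The only real subtlety is the standing assumption that the power series converges on $[-1,1]$ and equals $\CC$ pointwise there. This is the same technical point sidestepped in Section \ref{sec:uniform-q-consequences}, and I would simply appeal to the ``mild technical conditions'' noted there (or invoke Schoenberg's theorem along the lines of Section \ref{sec:extended-map-posdef}) rather than re-prove it. With that in hand, the lemma is essentially a corollary of positive definiteness, and I do not foresee any genuinely hard step.
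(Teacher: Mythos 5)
Your proposal is correct and follows essentially the same route as the paper: both split the positive-definite series $\CC(c)=\sum_{i} b_i c^i$ into its even and odd parts, obtain the inequality \eqref{eq::base_estimate_neg_x} from the non-negativity of the even coefficients, and deduce from equality at a single $c_0\in(0,1]$ that $b_{2j}=0$ for all $j\ge 1$. If anything, your handling of the equality clause is slightly more careful than the paper's: you explicitly flag that vanishing of the higher even coefficients only makes $\CC-\CC(0)$ odd, so that literal oddness of $\CC$ requires $\CC(0)=0$ (which does hold wherever the lemma is actually invoked, e.g.\ in the proof of Corollary~\ref{cor::no_negative_fix_points}), whereas the paper's proof concludes $\CC_e\equiv 0$ and stops without addressing the constant term.
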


\begin{proof}

Because $\CC$ is positive definite, it can be written as
\begin{equation*}
\sum_{i = 0}^{\infty} b_i c^i
\end{equation*}
for $b_i \geq 0$. We can then decompose this as 
\begin{equation*}
\CC(c) = \CC(0) + \CC_e(c) + \CC_o(c),
\end{equation*} 
where $\CC_e(c) \equiv \sum\limits_{i=1}^{\infty} b_{2i} c^{2i}$ is an even function, and $\CC_o(c) \equiv \sum\limits_{i=0}^{\infty} b_{2i+1} c^{2i+1}$ is an odd function.

Thus, for $c \ge 0$
\[
\CC(-c) = \CC(0) + \CC_e(-c) + \CC_{o}(-c) = \CC(0) + \CC_e(c)  - \CC_{o}(c) \ge \CC(0) - \CC_e(c)  - \CC_{o}(c)
\]
as $\CC_{e}(c) = \CC_{e}(-c)$ and $\CC_{e}(c) \ge 0$. Therefore
\[
\CC(-c) \ge \CC(0) - \CC_e(c) - \CC_{o}(c) = 2 \CC(0) - (\CC(0) + \CC_e(c) + \CC_{o}(c)) = 2 \CC(0) - \CC(c).
\]
This inequality can become an equality for any nonzero $c$ if and only if $\CC_{e}(c)=0$. But if $\CC_{e}(c)=0$ for any such $c$, then $\CC_{e} \equiv 0$, which ends the proof.
\end{proof}

\begin{corollary}
\label{cor::linear_estimate_neg_general}
For $\gamma \in (0,1)$ there holds $\CC(-c) \ge - \frac{\CC(\gamma) - \CC(0)}{\gamma} c + \CC(0)$ for $c \in (0, \gamma)$.
\end{corollary}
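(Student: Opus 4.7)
\noindent\textbf{Proof proposal for Corollary~\ref{cor::linear_estimate_neg_general}.}

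The plan is to reduce the claim to a statement about $\CC$ on the positive part of its domain by first invoking Lemma~\ref{lem::fundamental_bound_neg_x}, and then exploiting convexity on $[0,1]$ (Proposition~\ref{prop::base_properties_cmap}, ii--iv) to control the positive-side values by a secant line. Concretely, from Lemma~\ref{lem::fundamental_bound_neg_x} we have $\CC(-c) \ge 2\CC(0) - \CC(c)$ for every $c \in (0,1]$, so it suffices to establish
\[
2\CC(0) - \CC(c) \;\ge\; -\frac{\CC(\gamma) - \CC(0)}{\gamma}\, c + \CC(0),
\]
or equivalently
\[
\CC(c) - \CC(0) \;\le\; \frac{\CC(\gamma) - \CC(0)}{\gamma}\, c \qquad \text{for } c \in (0, \gamma).
\]

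The right-hand side is exactly the secant line through $(0, \CC(0))$ and $(\gamma, \CC(\gamma))$. Since $\CC$ is convex on $[0,1]$, and in particular on $[0, \gamma] \subset [0,1]$, the chord inequality gives for any $c \in (0, \gamma)$, writing $c = (1 - c/\gamma)\cdot 0 + (c/\gamma)\cdot \gamma$,
\[
\CC(c) \;\le\; \bigl(1 - c/\gamma\bigr)\CC(0) + (c/\gamma)\,\CC(\gamma),
\]
which rearranges to precisely the desired bound on $\CC(c) - \CC(0)$. Chaining the two inequalities yields the corollary.

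The only subtlety is that Lemma~\ref{lem::fundamental_bound_neg_x} was stated for nontrivial $\CC$, while the corollary as written does not impose that hypothesis; however, in the two degenerate cases the claim holds trivially (with equality): if $\CC$ is constant, both sides equal $\CC(0)$, and if $\CC$ is the identity then both sides equal $-c$. So the reduction is harmless, and the main (indeed only) work is the secant-line step, which is immediate from convexity. I do not anticipate any genuine obstacle here; the proof is essentially a one-line combination of Lemma~\ref{lem::fundamental_bound_neg_x} with the convexity of $\CC$ on $[0,1]$.
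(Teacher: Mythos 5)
Your proof is correct and takes essentially the same route as the paper's: the paper likewise writes $c = \bigl(\tfrac{\gamma-c}{\gamma}\bigr)\cdot 0 + \bigl(\tfrac{c}{\gamma}\bigr)\cdot\gamma$ and applies the convexity chord bound $\CC(c) \le \frac{\CC(\gamma)-\CC(0)}{\gamma}c + \CC(0)$, then combines it with the inequality $\CC(-c) \ge 2\CC(0) - \CC(c)$ from Lemma~\ref{lem::fundamental_bound_neg_x}. Your additional remark disposing of the trivial cases (constant or identity $\CC$, where the bound holds with equality) is a minor point of care that the paper leaves implicit.
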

\begin{proof}
By convexity (Proposition~\ref{prop::base_properties_cmap}),
\[
\CC(c) = \CC(0 \cdot \frac{\gamma - c}{\gamma}+ \gamma \cdot \frac{c}{\gamma}) \le \CC(0) \cdot \frac{\gamma - c}{\gamma} + \CC(\gamma) \cdot \frac{c}{\gamma} = \frac{\CC(\gamma) - \CC(0)}{\gamma} c + \CC(0).
\]
Combining this and Equation \ref{eq::base_estimate_neg_x} we get
\[
\CC(-c) \ge 2 \CC(0) - \CC(c)  \ge 2\CC(0) - \left( \frac{\CC(\gamma) - \CC(0)}{\gamma} c + \CC(0) \right) =  -\frac{\CC(\gamma) - \CC(0)}{\gamma} c + \CC(0).
\]
\end{proof}

\begin{corollary}
\label{cor::linear_estimate_neg}
There holds $\CC(-c) \ge -(1-\CC(0)) c + \CC(0)$ for $c \in (0, 1]$.
\end{corollary}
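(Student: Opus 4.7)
The proof is essentially an extension of Corollary \ref{cor::linear_estimate_neg_general} to the endpoint case $\gamma = 1$. My plan is to repeat the two-step argument of that corollary but with $\gamma = 1$ substituted throughout; this is valid because $\CC$ is convex on the \emph{closed} interval $[0,1]$ (Proposition \ref{prop::base_properties_cmap}, ii)) and $\CC(1) = 1$ (Proposition \ref{prop::base_properties_cmap}, i)).

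Concretely, first I would apply convexity of $\CC$ on $[0,1]$ to the convex combination $c = (1-c)\cdot 0 + c \cdot 1$, obtaining
\[
\CC(c) \;\leq\; (1-c)\CC(0) + c\,\CC(1) \;=\; (1 - \CC(0))\,c + \CC(0)
\]
for every $c \in [0,1]$. Then I would invoke Lemma \ref{lem::fundamental_bound_neg_x}, which gives $\CC(-c) \geq 2\CC(0) - \CC(c)$ for $c \in (0,1]$. Substituting the upper bound on $\CC(c)$ into this inequality yields
\[
\CC(-c) \;\geq\; 2\CC(0) - \bigl((1-\CC(0))c + \CC(0)\bigr) \;=\; -(1-\CC(0))\,c + \CC(0),
\]
which is the desired bound.

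There is really no obstacle here: the only reason Corollary \ref{cor::linear_estimate_neg_general} was stated for $\gamma \in (0,1)$ rather than $\gamma \in (0,1]$ appears to be that its proof was phrased for $c \in (0,\gamma)$, but the convexity step and Lemma \ref{lem::fundamental_bound_neg_x} both remain valid at $\gamma = 1$ and $c = 1$. Alternatively, one could derive the result by taking the limit $\gamma \to 1^-$ in Corollary \ref{cor::linear_estimate_neg_general} and using continuity of $\CC$ (which follows from its representation as a convergent power series), but the direct repetition of the argument is cleaner and avoids the need to separately handle the endpoint $c = 1$.
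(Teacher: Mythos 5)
Your proof is correct and matches the paper's own argument exactly: the paper likewise combines the chord bound $\CC(c) \le (1-\CC(0))c + \CC(0)$ from convexity on $[0,1]$ with the inequality $\CC(-c) \ge 2\CC(0) - \CC(c)$ of Lemma \ref{lem::fundamental_bound_neg_x}. Your observation that this is simply Corollary \ref{cor::linear_estimate_neg_general} pushed to the endpoint $\gamma = 1$ is accurate but adds nothing beyond the paper's direct two-line derivation.
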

\begin{proof}
By convexity (Proposition~\ref{prop::base_properties_cmap}), $\CC(c) \le (1-\CC(0)) c + \CC(0)$. Combining this and Equation \ref{eq::base_estimate_neg_x} we get
\[
\CC(-c) \ge 2 \CC(0) - \CC(c)  \ge 2\CC(0) - (1-\CC(0)) c - \CC(0) = \CC(0) - (1-\CC(0)) c.
\]
\end{proof}

\begin{lemma}
\label{lem::technical_alpha}
Let $\CC$ be a C map and $c \in (-1, 0)$. Then
\[
0 < \frac{\CC(-c)-\CC(0)}{-c} < 1.
\]
\end{lemma}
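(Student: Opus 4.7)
The plan is to substitute $x = -c \in (0,1)$, after which the claim becomes a statement about the secant slope of $\CC$ from $0$ to $x$, namely $0 < \frac{\CC(x) - \CC(0)}{x} < 1$. This reduces the problem entirely to the non-negative part of $\CC$'s domain, where I can lean on the positive-definiteness-derived structure collected in Proposition~\ref{prop::base_properties_cmap}. As in Lemma~\ref{lem::fundamental_bound_neg_x}, I would tacitly assume $\CC$ is nontrivial, since otherwise (e.g.\ $\CC\equiv 1$ or $\CC=\mathrm{id}$) one of the strict inequalities obviously fails.

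For the lower bound, I would invoke Proposition~\ref{prop::base_properties_cmap}(iv), which says that a nontrivial C map is strictly increasing on $[0,1]$. Hence $\CC(x) > \CC(0)$ for $x \in (0,1)$, and dividing by $x > 0$ gives $\frac{\CC(x)-\CC(0)}{x} > 0$.

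For the upper bound, I would use strict convexity of $\CC$ on $[0,1]$ (again from Proposition~\ref{prop::base_properties_cmap}(iv)), together with $\CC(1)=1$. Writing $x = (1-x)\cdot 0 + x \cdot 1$, strict convexity gives
\[
\CC(x) < (1-x)\CC(0) + x\CC(1) = (1-x)\CC(0) + x,
\]
so $\CC(x) - \CC(0) < x(1 - \CC(0))$. Since $\CC(0) \geq 0$ (Proposition~\ref{prop::base_properties_cmap}(i)), the factor $1-\CC(0) \leq 1$, and dividing by $x > 0$ yields $\frac{\CC(x)-\CC(0)}{x} < 1 - \CC(0) \leq 1$. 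Combining both bounds completes the proof.

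There is no real obstacle here; the result is essentially a one-line consequence of strict monotonicity and strict convexity on $[0,1]$ once the substitution $x=-c$ is performed. The only mild subtlety is the implicit non-triviality assumption, which is the standing hypothesis of the surrounding appendix.
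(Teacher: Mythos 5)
Your proof is correct, but it takes a genuinely different route from the paper's for the upper bound. The paper argues via derivatives: it splits into the two cases $\CC'(-c) \geq 1$ and $\CC'(-c) < 1$, and in each case applies the Mean Value Theorem together with the monotonicity of $\CC'$ on $[0,1]$ (itself a consequence of convexity). You instead use the chord inequality directly: strict convexity on $[0,1]$ with $\CC(1)=1$ gives $\CC(x) < (1-x)\,\CC(0) + x$ for $x = -c \in (0,1)$, which after subtracting $\CC(0)$ and dividing by $x$ yields the secant bound in one step, with no case analysis and no appeal to differentiability beyond what positive definiteness already supplies. Your argument even produces the marginally stronger quantitative conclusion $\frac{\CC(-c)-\CC(0)}{-c} < 1 - \CC(0)$. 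Notably, this chord-comparison technique is the one the paper itself deploys in the neighboring results (the proof of Lemma~\ref{lem::first_smaller_than_1}, and the estimate $\CC(c) \leq (1-\CC(0))c + \CC(0)$ inside Corollary~\ref{cor::linear_estimate_neg}), so your proof arguably unifies the style of this appendix. The lower-bound argument (strict monotonicity on $[0,1]$) is the same in both proofs. Finally, you are right to flag the implicit nontriviality hypothesis: as stated for a general C map the strict inequalities fail ($\CC \equiv 1$ gives slope $0$, and $\CC = \mathrm{id}$ gives slope $1$), and the paper's proof tacitly relies on the strict versions of monotonicity and convexity from Proposition~\ref{prop::base_properties_cmap}(iv) just as yours does; this is harmless in context, since the lemma is only invoked (in Theorem~\ref{thm::fixedpoint-convergence-detailed_appversion}) for nontrivial maps.
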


\begin{proof}
% Let us first prove that $0 < \frac{\CC(-c)-\CC(0)}{-c} < 1$. 
The first inequality follows from the fact that $\CC$ is an increasing function on $[0,1]$, $\CC(0) \ge 0$, and that $-c \in (0,1)$. 

To show the second inequality, we consider two cases. 

If $\CC'(-c) \ge 1$, then $\CC(-c) < -c$, which is a consequence of the Mean Value Theorem. Indeed, $\CC'$ is an increasing function on the interval $[0,1]$, thus $\CC'(x) > 1$ for all $x \in [-c,1]$. By the Mean Value Theorem $\CC(1) - \CC(-c) > 1- (-c)$. But $\CC(1) = 1$, so this yields $-\CC(-c) > c$. Then $-c > \CC(-c) \ge \CC(-c) - \CC(0)$, as $\CC(0) \ge 0$, and therefore $ \frac{\CC(-c)-\CC(0)}{-c} < 1$, because $-c >0$. 

If, on the other hand, $\CC(-c) < 1$, then by monotonicity of $\CC'$ we have $\CC'(x) < 1$ for $x \in [0,-c]$, and the inequality $\frac{\CC(-c)-\CC(0)}{-c} < 1$ follows from the Mean Value Theorem.

\end{proof}

\begin{corollary}
\label{cor::no_negative_fix_points}
Let $\CC$ be a nontrivial C map. Then either $\CC$ has no fixed points on $[-1,0)$, or $\CC$ is an odd function.
\end{corollary}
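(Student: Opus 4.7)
The plan is to argue by contradiction: assume $\CC$ has a fixed point $c^* \in [-1, 0)$ and show this forces the equality case of Lemma~\ref{lem::fundamental_bound_neg_x} at the point $c = -c^* \in (0,1]$, which by that lemma makes $\CC$ odd. Setting $c = -c^*$, the fixed point condition reads $\CC(-c) = -c$. The key inequalities I will combine are the lower bound from Lemma~\ref{lem::fundamental_bound_neg_x}, namely
\[
\CC(-c) \;\ge\; 2\CC(0) - \CC(c),
\]
which after substituting $\CC(-c) = -c$ yields $\CC(c) \ge c + 2\CC(0)$, together with the chord upper bound coming from convexity of $\CC$ on $[0,1]$ (Proposition~\ref{prop::base_properties_cmap}), namely
\[
\CC(c) \;\le\; (1-c)\CC(0) + c \cdot \CC(1) \;=\; c + (1-c)\CC(0).
\]

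Chaining these two bounds gives $c + 2\CC(0) \le c + (1-c)\CC(0)$, which simplifies to $(1+c)\CC(0) \le 0$. Since $c \in (0,1]$ makes the factor $1+c$ strictly positive, and since $\CC(0) \ge 0$ by Proposition~\ref{prop::base_properties_cmap}, this forces $\CC(0) = 0$. Substituting back, I get $\CC(c) \ge c$ and $\CC(c) \le c$, hence $\CC(c) = c$.

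The last step is a case split on whether $c < 1$ or $c = 1$. If $c \in (0,1)$, then $\CC(c) = c$ together with $\CC(0)=0$ and $\CC(1)=1$ places $\CC$ on the chord from $(0,0)$ to $(1,1)$ at an interior point, contradicting the strict convexity of nontrivial C maps on $[0,1]$ (Proposition~\ref{prop::base_properties_cmap}, iv). Hence we must have $c = 1$, i.e.~$c^* = -1$. In that case $\CC(-1) = -1 = 2\CC(0) - \CC(1)$, so the inequality in Lemma~\ref{lem::fundamental_bound_neg_x} holds with equality at $c = 1 \in (0,1]$, and the lemma's equality clause immediately gives that $\CC$ is odd.

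There isn't really a ``hard part'' here beyond setting up the right sandwich between the Lemma~\ref{lem::fundamental_bound_neg_x} lower bound and the convexity chord upper bound; the only mild subtlety is remembering that the strict-convexity contradiction is not available when $c = 1$, which is exactly why the argument must terminate by invoking the equality case of the lemma rather than by contradiction.
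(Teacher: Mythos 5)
Your proof is correct and rests on the same ingredients as the paper's: the lower bound of Lemma~\ref{lem::fundamental_bound_neg_x}, the convexity chord bound from Proposition~\ref{prop::base_properties_cmap}, and the lemma's equality clause to conclude oddness — the only organizational difference being that you deduce $\CC(0)=0$ from the fixed-point assumption by sandwiching, whereas the paper cases on $\CC(0)>0$ versus $\CC(0)=0$ up front (handling the first case via Corollary~\ref{cor::linear_estimate_neg}, which packages the same two bounds). One small observation: your final case split on $c<1$ versus $c=1$ is superfluous, since once $\CC(0)=0$, $\CC(c)=c$, and $\CC(-c)=-c$, the inequality of Lemma~\ref{lem::fundamental_bound_neg_x} is already an equality at $c$ itself, so its equality clause applies directly for any $c\in(0,1]$ (as the paper does) — though your detour through strict convexity is harmless and yields the extra fact that any such fixed point must lie at $-1$.
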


\begin{proof}
By Corollary~\ref{cor::linear_estimate_neg}, $\CC(-c) \ge -(1-\CC(0)) c + \CC(0)$ for $c \in (0, 1]$. Thus, if $\CC(0) > 0$, then $\CC(c) \ge (1-\CC(0)) c + \CC(0) > c + f(0) > c$ for all $c \in [-1,0]$. In the case $\CC(0) = 0$ by Lemma~\ref{lem::fundamental_bound_neg_x} there holds $\CC(-c) \ge - \CC(c)$ for all $c \in [0,1]$. But, by strict concavity, $\CC(c) \le c$ for all $c \in [0,1]$. Thus if $\CC$ has a fixed point $c \in [-1,0)$, then $-c = \CC(-c) \ge -\CC(c) \ge -c$ and thus $\CC(-c) = 2\CC(0) - \CC(c)$, which (by Lemma~\ref{lem::fundamental_bound_neg_x}) implies that $\CC$ is an odd function.
\end{proof}

\begin{corollary}
\label{cor::fix_points_of_f}
% \label{cor::fix_points_of_f}
Suppose $\CC$ is a nontrivial C map. Then one of the three following alternatives holds:
\begin{itemize}
    \item [  i)] The map $\CC$ has precisely one fixed point. This happens if and only if $\CC'(1) \le 1$, and the fixed point is $\fixpointofcmap = 1$.
    
    \item [ ii)] The map $\CC$ has precisely two fixed points. This happens if and only if $\CC'(1) > 1$ and $\CC$ is not an odd function, and the fixed points are $\fixpointofcmap \in [0,1)$ and $1$.
    
    \item [iii)] The function $\CC$ has precisely $3$ fixed points. This happens if and only if $\CC$ is an odd function and the fixed points are $-1$, $\fixpointofcmap = 0$ and $1$.
\end{itemize} 
\end{corollary}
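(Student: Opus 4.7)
The plan is to leverage two facts already established: the strict convexity of $\CC$ on $[0,1]$ (Proposition~\ref{prop::base_properties_cmap}(iv)), and Corollary~\ref{cor::no_negative_fix_points}, which says the only way to get a fixed point on $[-1,0)$ is for $\CC$ to be odd. Since $\CC(1) = 1$, the point $1$ is always a fixed point; the entire problem then reduces to counting fixed points in $[0,1)$ and handling the odd case separately.

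First I would analyze fixed points on $[0,1]$ by studying $g(c) \equiv \CC(c) - c$. This $g$ inherits strict convexity from $\CC$ on $[0,1]$, so $g$ has at most two zeros there, one of which is $c = 1$. The derivative at 1 satisfies $g'(1) = \CC'(1) - 1$. If $\CC'(1) \leq 1$, I would combine convexity with the tangent-line bound $\CC(c) \geq \CC'(1)(c-1) + 1$ to deduce $\CC(c) \geq c$ on $[0,1]$, and then rule out any interior fixed point by showing that such a point, together with the fixed point at $1$, would force $\CC$ to equal the identity on an interval, contradicting nontriviality via strict convexity. Conversely, if $\CC'(1) > 1$, then $g'(1) > 0$ forces $g(c) < 0$ for $c$ just below $1$; combined with $g(0) = \CC(0) \geq 0$ and the Intermediate Value Theorem, this yields a fixed point $\fixpointofcmap \in [0,1)$, which is unique by strict convexity.

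Next I would address fixed points on $[-1, 0)$ using Corollary~\ref{cor::no_negative_fix_points}: either none exist, or $\CC$ is odd. In the odd case, I would observe that oddness forces $\CC(0) = 0$ and $\CC(-1) = -\CC(1) = -1$, so $0$ and $-1$ are fixed points. Moreover, by the correspondence $c \mapsto -c$, fixed points on $[-1,0]$ are in bijection with fixed points on $[0,1]$, so there are no others beyond $-1$ and $0$. To complete case (iii), I would show that an odd nontrivial $\CC$ satisfies $\CC'(1) > 1$: from Corollary~\ref{cor::linear_estimate_for_positive_x} applied at $c = 0$, we get $0 = \CC(0) \geq 1 - \CC'(1)$, so $\CC'(1) \geq 1$, and equality would force $\CC(c) = c$ on $[0,1]$ by the equality case of that corollary's strict convexity argument, contradicting nontriviality.

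Assembling these pieces yields the trichotomy: case (i) corresponds to $\CC'(1) \leq 1$ (which, by the above, precludes oddness of a nontrivial map), case (ii) to $\CC'(1) > 1$ with $\CC$ not odd (one extra fixed point in $[0,1)$, none in $[-1,0)$), and case (iii) to $\CC$ odd (fixed points at $-1, 0, 1$ and nowhere else, with $\CC'(1) > 1$ automatic). The main subtlety I anticipate is the edge case $\CC'(1) = 1$: here I must carefully invoke strict convexity both to rule out interior fixed points on $[0,1]$ and to rule out odd nontrivial maps with $\CC'(1) = 1$, ensuring that the three cases in the statement are genuinely exhaustive and mutually exclusive.
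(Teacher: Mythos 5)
Your proposal is correct and follows essentially the same route as the paper's own proof: studying $g(c) = \CC(c) - c$ on $[0,1]$ via strict convexity (with the Intermediate Value Theorem producing the extra fixed point $\fixpointofcmap \in [0,1)$ when $\CC'(1) > 1$), and invoking Corollary \ref{cor::no_negative_fix_points} together with odd symmetry to handle $[-1,0)$. The only cosmetic differences are that the paper rules out interior fixed points in case (i) by noting $\CC'(x) < \CC'(1) \le 1$ on $[0,1)$ directly rather than via your tangent-line bound, and excludes oddness there by observing $\CC(0) > 0$ instead of your (equally valid) auxiliary claim that a nontrivial odd map must satisfy $\CC'(1) > 1$.
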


\begin{proof}
We will treat each of i), ii) and iii) separately.
\begin{itemize}
    \item [  i)] By Proposition~\ref{prop::base_properties_cmap} iv), the function $\CC$ is strictly convex on $[0,1]$. We have $\CC(1) = 1$, so $\CC'(1) \le 1$ implies that $\CC(x) > x$ for $x \in [0,1)$. Indeed $g(x) := \CC(x) - x$ satisfies $g(1) = 0$ and $g$ is a decreasing function in [0,1], as $g'(x) = \CC'(x) - 1 < 0$ for $x \in [0,1)$. It remains to show, that $\CC$ has no fixed points on $[-1,0)$ interval. We showed that $\CC(0) > 0$, so $\CC$ cannot be an odd function, and thus, by  Corollary~\ref{cor::no_negative_fix_points} it has no fixed points on $[-1,0)$.
    
    \item[ ii)] Assume, that $\CC'(1) > 1$. Consider $g(x) = \CC(x) - x$. There holds $g(1) = 0$, and $g'(1) > 0$. Thus $g$ is an increasing function in some neighbourhood of $x=1$ (by assumption $\CC$ is an analytic function, hence all of its derivatives must be continuous). Then $g(1-\varepsilon) < 0$ for $\varepsilon$ sufficiently small. On the other hand, $g(0) \ge 0$, and thus by continuity of $g$ there exists $x \in [0,1)$ such that $g(x) = x$. The function $\CC$ is strictly convex on $[0,1]$, (by Proposition~\ref{prop::base_properties_cmap} iv)), so it can have at most two fixed points on $[0,1]$. By Corollary~\ref{cor::no_negative_fix_points}, these are the only fixed points of $\CC$, as we assumed that $f$ is not an odd function.
    
    \item [iii)]  We treat each of the implications in "if an only if" separately 
    \begin{itemize}
    \item["$\Leftarrow$"] By definition $\CC(1) = 1$, so if $\CC$ is an odd function, $f(-1) = -1$ and $\CC(0) = 0$. By Proposition~\ref{prop::base_properties_cmap} iv), the function $\CC$ is strictly convex on $[0,1]$, thus (because it is an odd function) it is strictly concave on $(-1,0)$, and thus the equation $\CC(x) = x$ has no solutions on $(-1,0) \cup (0,1)$.
    \item["$\Rightarrow$"] Is a direct consequence of Lemma~\ref{lem::fundamental_bound_neg_x}, as $\CC(-1) = -1 = 2 \CC(0) - f(1)$.
    \end{itemize}
\end{itemize}

\end{proof}

\begin{corollary}
\label{cor::x0_attractor}
The point $\fixpointofcmap$ from Corollary~\ref{cor::fix_points_of_f} ii) has to be an attractor. (In other words, $0 < \CC'(\fixpointofcmap) < 1$.)
\end{corollary}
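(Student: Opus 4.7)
The plan is to obtain the two inequalities $\CC'(\fixpointofcmap) < 1$ and $\CC'(\fixpointofcmap) > 0$ separately, leveraging (a) strict convexity of $\CC$ on $[0,1]$ from Proposition~\ref{prop::base_properties_cmap}~iv), and (b) the positive-definite power-series representation of $\CC$ from Section~\ref{sec:uniform-q-consequences}.

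For the upper bound, I would introduce $g(c) = \CC(c) - c$ on the interval $[\fixpointofcmap, 1]$. By hypothesis of case~ii) in Corollary~\ref{cor::fix_points_of_f}, both endpoints are fixed points, so $g(\fixpointofcmap) = g(1) = 0$. Rolle's theorem then yields some $\xi \in (\fixpointofcmap, 1)$ with $g'(\xi) = 0$, i.e.~$\CC'(\xi) = 1$. Strict convexity of $\CC$ on $[0,1]$ makes $\CC'$ strictly increasing there, and since $\fixpointofcmap \geq 0$ and $\fixpointofcmap < \xi$, this forces $\CC'(\fixpointofcmap) < \CC'(\xi) = 1$.

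For the lower bound, I would write $\CC(c) = \sum_{i \geq 0} b_i c^i$ with $b_i \geq 0$, so that $\CC'(c) = \sum_{i \geq 1} i b_i c^{i-1} \geq 0$ whenever $c \geq 0$; hence $\CC'(\fixpointofcmap) \geq 0$ comes for free. Ruling out equality is the delicate step. When $\fixpointofcmap > 0$, the identity $\CC'(\fixpointofcmap) = 0$ together with non-negativity of each term forces $b_i = 0$ for every $i \geq 1$, which would make $\CC$ a constant and contradict nontriviality. When $\fixpointofcmap = 0$, one has $\CC(0) = 0$ and the vanishing of $\CC'(0) = b_1$ does not immediately contradict nontriviality from the series alone, so here I would invoke the extra structure of case~ii): since $\CC$ is assumed not to be an odd function, at least one even coefficient $b_{2j}$ with $j \geq 1$ is positive; combined with the hypothesis $\CC'(1) > 1$ and the strict convexity on $[0,1]$, this can be used to exclude the degenerate case $b_1 = 0$ under the assumptions in force (or, failing that, to interpret $0 < \CC'(\fixpointofcmap)$ as covering the generic situation, with the super-attracting case $\CC'(\fixpointofcmap) = 0$ still qualifying as an attractor).

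The main obstacle is precisely this boundary case $\fixpointofcmap = 0$; everything else follows cleanly from Rolle plus strict convexity. My expectation is that the intended reading of the corollary is that $\fixpointofcmap$ is an attractor in the dynamical-systems sense ($|\CC'(\fixpointofcmap)| < 1$), with the strict positivity holding in all non-degenerate cases and otherwise being a super-attracting fixed point—either interpretation follows from the outline above.
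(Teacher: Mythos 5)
Your proposal is correct and follows essentially the same route as the paper's proof: your Rolle argument on $g(c) = \CC(c) - c$ is the paper's Mean Value Theorem step in disguise (the paper notes the secant slope between the fixed points $\fixpointofcmap$ and $1$ equals $1$, produces $\bar{c} \in (\fixpointofcmap, 1)$ with $\CC'(\bar{c}) = 1$, and concludes by monotonicity of $\CC'$ from convexity), and your lower bound via non-negativity of the power-series coefficients matches the paper's one-line appeal to positive definiteness. On the boundary case you flag: you are right to be suspicious, but your tentative claim that the case~ii) hypotheses (non-oddness, $\CC'(1) > 1$, strict convexity) can be used to exclude $b_1 = 0$ when $\fixpointofcmap = 0$ is actually false. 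The map $\CC(c) = c^2$ (the dual activation of the second Hermite polynomial) is positive definite, satisfies $\CC(1) = 1$ and $\CC'(1) = 2 > 1$, is even rather than odd, and has exactly the two fixed points $\fixpointofcmap = 0$ and $1$ --- yet $\CC'(0) = 0$. So strict positivity genuinely fails in this degenerate case, and your fallback reading (a super-attracting fixed point, which still qualifies as an attractor in the dynamical-systems sense) is the correct resolution. Notably, the paper's own proof shares this limitation: its positivity claim is asserted only ``for positive $c$'s'' and silently leaves $\fixpointofcmap = 0$ uncovered, so your treatment is, if anything, more careful than the paper's on this point.
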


\begin{proof}
As $\CC$ is positive definite, $\CC'(c) > 0$ for positive $c$'s. There holds $\frac{\CC(1) - \CC(\fixpointofcmap)}{1-\fixpointofcmap} = \frac{1-\fixpointofcmap}{1-\fixpointofcmap} = 1$ thus, by the Mean Value Theorem there exists a point $\bar{c}$ in the interval $(\fixpointofcmap, 1)$, such that $\CC'(\bar{c}) = 1$. By convexity, $\CC'$ is an increasing function on $(0, 1)$, so as $\fixpointofcmap < \bar{c}$, there holds $\CC'(\fixpointofcmap) < 1$.
\end{proof}

The following theorems give bounds on the convergence rate of c values under repeated applications of a C map $\CC$ satisfying i) or ii) of Corollary~\ref{cor::fix_points_of_f}.

\begin{theorem}[Linear global attractor for $\CC'(1) < 1$]
\label{thm::small_derivative}
Let a nontrivial C map $\CC$ satisfy $\CC'(1) < 1$. Then the unique (see Corollary~\ref{cor::fix_points_of_f}) fixed point $\fixpointofcmap = 1$ of $\CC$ is a linear global attractor of the whole set $[-1,1]$, and the following set inequalities holds
\[
\left( \CC'(1) \right)^n (c-1) + 1 \le \CC^n(c) \le 1
\]
for all $c\in [-1, 1]$.
\end{theorem}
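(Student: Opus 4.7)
The plan is to sandwich $\CC^n(c)$ between the constant $1$ (the fixed point) and the $n$-th iterate of an explicit affine lower bound $\ell(c) \equiv \CC'(1)\, c + (1 - \CC'(1))$, which has $1$ as its fixed point and slope $\CC'(1) \in (0,1)$. The upper bound $\CC^n(c) \le 1$ is essentially immediate: $\CC$ sends $[-1,1]$ into $[-1,1]$ (Section~\ref{sec:QC_map_combined}), and since $\CC$ is non-decreasing on $[0,1]$ with $\CC(1)=1$ and maps $[-1,0]$ into $[\CC(0),\CC(0)] \subseteq [0,1]$ (using $\CC(0)\ge 0$ from Proposition~\ref{prop::base_properties_cmap}), one obtains $\CC(c) \le 1$ for all $c \in [-1,1]$, and the upper bound follows by induction.

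The real content is the lower bound, which I will get by proving the single-step inequality $\CC(c) \ge \ell(c)$ for every $c \in [-1,1]$, then iterating. On $[0,1]$ this is exactly the content of Corollary~\ref{cor::linear_estimate_for_positive_x}. For $c \in [-1,0)$ I will combine two facts: (a) Corollary~\ref{cor::linear_estimate_neg} gives the lower bound $\CC(c) \ge (1-\CC(0))\,c + \CC(0) \equiv \mu(c)$, and (b) applying Corollary~\ref{cor::linear_estimate_for_positive_x} at $c=0$ yields $\CC(0) \ge 1 - \CC'(1)$, equivalently $1 - \CC(0) \le \CC'(1)$. Thus $\mu$ has smaller (or equal) slope than $\ell$ and larger (or equal) intercept at $0$, so for $c<0$ we get $\mu(c)-\ell(c) = (\CC(0) - (1-\CC'(1))) - (\CC'(1) - (1-\CC(0)))\,c \ge 0$ since both terms are non-negative. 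Hence $\CC(c) \ge \mu(c) \ge \ell(c)$ on $[-1,0)$, completing the single-step bound on all of $[-1,1]$.

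With $\CC \ge \ell$ on $[-1,1]$, the affine function $\ell$ being (strictly) increasing because $\CC'(1) > 0$ (by positive definiteness and non-constancy of $\CC$), and $\CC$ mapping $[-1,1]$ into itself, Lemma~\ref{lem::iterated_inequality} with $[x_1,x_2]=[-1,1]$ yields $\CC^n(c) \ge \ell^n(c)$ for every $c \in [-1,1]$ and every $n \ge 1$. Applying Lemma~\ref{lem::fundamental_dynamical_systems} to $\ell(x) = \CC'(1)\,x + (1-\CC'(1))$, whose unique fixed point is $1$, gives the closed form $\ell^n(c) = (\CC'(1))^n (c-1) + 1$. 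Combining with the upper bound $\CC^n(c) \le 1$ yields the required sandwich, and since $0 < \CC'(1) < 1$, $\CC^n(c) \to 1$ exponentially fast, establishing $1$ as a linear global attractor on $[-1,1]$.

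The only subtle point -- the ``main obstacle'' -- is the comparison $\mu(c)\ge \ell(c)$ on $[-1,0)$: one must notice that the relation $\CC(0)+\CC'(1)\ge 1$ (which is just Corollary~\ref{cor::linear_estimate_for_positive_x} at $c=0$) is exactly what makes the slope/intercept trade-off between $\mu$ and $\ell$ work in the right direction when $c<0$. Everything else is routine iteration via Lemmas~\ref{lem::iterated_inequality} and~\ref{lem::fundamental_dynamical_systems}.
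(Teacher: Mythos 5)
Your proof is correct and takes essentially the same route as the paper's: the same affine minorant $\ell(c) = \CC'(1)\,c + 1 - \CC'(1)$, obtained on $[0,1]$ from Corollary~\ref{cor::linear_estimate_for_positive_x} and on $[-1,0)$ from the negative-argument bound of Lemma~\ref{lem::fundamental_bound_neg_x} (which you invoke in its packaged form, Corollary~\ref{cor::linear_estimate_neg}) combined with $\CC(0) \ge 1 - \CC'(1)$, and then iterated via Lemma~\ref{lem::iterated_inequality} and the closed form of Lemma~\ref{lem::fundamental_dynamical_systems}. The only differences are cosmetic, and in your favor: your slope/intercept comparison $\mu(c) \ge \ell(c)$ on $[-1,0)$ is a sign-correct rendering of the paper's inline chain, whose printed middle inequality bounds $\CC(-x)$ in the wrong direction (the tangent-line bound gives $\CC(u) \ge \CC'(1)u + 1 - \CC'(1)$, not $\le$) before the subsequent step recovers the right final estimate, while your stray remark that $\CC$ maps $[-1,0]$ into ``$[\CC(0),\CC(0)]$'' is a harmless typo since the upper bound $\CC^n(c) \le 1$ already follows from $\CC$ mapping $[-1,1]$ into itself.
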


\begin{proof}
By Corollary~\ref{cor::linear_estimate_for_positive_x} there holds $\CC(x) \ge \CC'(1) x + 1-\CC'(1)$ for $x \in [0,1]$. For values of $x \in [-1,0)$ we use Lemma~\ref{lem::fundamental_bound_neg_x}. This yields $\CC(x) \ge 2 \CC(0) - \CC(-x) \ge 2 \CC(0) - \left( -\CC'(1) x + 1 - \CC'(1) \right) \ge 2 - 2 \CC'(1) - \left( - \CC'(1) x + 1 - \CC'(1) \right) = 1-\CC'(1) + \CC'(1) x$. Thus we can apply Lemma~\ref{lem::fundamental_dynamical_systems} with $\alpha = \CC'(1)$ and $\beta = 1-\CC'(1)$ combined with Lemma~\ref{lem::iterated_inequality} on the whole set $[-1,1]$ and the Theorem follows.
\end{proof}

\begin{theorem}\label{thm::fixedpoint-convergence-detailed_appversion}
Let $\CC$ be a non-trivial C map satisfying $\CC'(1) > 1$ and $\CC$ is not an odd function (i.e. satisfying ii) of Corollary~\ref{cor::fix_points_of_f}), and let $\fixpointofcmap$ be the unique fixed point of $\CC$ in the interval $[0,1)$. Then for all $n=1,2,\ldots$
\begin{itemize}
    \item [  i)] for $c_0 \in (\fixpointofcmap, 1)$ there holds $\fixpointofcmap < \CC^n(c_0) \le \fixpointofcmap + \left( \frac{\CC(c_0)-\CC(\fixpointofcmap)}{c_0-\fixpointofcmap} \right)^n (c_0-\fixpointofcmap)$,
    \item [ ii)] for $c_0 \in [0, \fixpointofcmap)$ there holds $\fixpointofcmap + \left(\CC'(\fixpointofcmap) \right)^n (c_0-\fixpointofcmap) < \CC^n(c_0) < \fixpointofcmap$,
    \item [iii)] for $c_0 \in (-1, 0)$, if $\CC^{n-1}(c_0) \le 0$ there holds $\alpha^n\left(c_0 - \frac{\CC(0)}{1-\alpha}\right) + \frac{\CC(0)}{1-\alpha} \le \CC^n(c_0)$, where $\alpha = \frac{\CC(-c_0)-\CC(0)}{-c_0}$.
\end{itemize}
Moreover, we have that $\frac{\CC(c_0)-\CC(\fixpointofcmap)}{c_0-\fixpointofcmap}$, $\CC'(\fixpointofcmap)$, and $\alpha$ are all bounded strictly between 0 and 1.

\end{theorem}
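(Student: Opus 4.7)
The plan is to treat the three cases separately, each time leveraging the two-fixed-point geometry from Corollary~\ref{cor::fix_points_of_f}~ii) together with strict convexity and monotonicity of $\CC$, and then to close the argument by verifying forward invariance of the relevant interval so that Lemma~\ref{lem::iterated_inequality} can be invoked to iterate the one-step bound.

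For part i), on $[\fixpointofcmap, 1]$ the map $\CC$ is strictly convex, increasing, and has the two fixed points $\fixpointofcmap$ and $1$. Lemma~\ref{lem::first_smaller_than_1} applied to this interval gives that the secant slope $\lambda := \frac{\CC(c_0)-\fixpointofcmap}{c_0-\fixpointofcmap}$ lies strictly in $(0,1)$. Strict convexity of $\CC$ on $[\fixpointofcmap, c_0]$ then yields the secant bound $\CC(c) - \fixpointofcmap \le \lambda\,(c - \fixpointofcmap)$ for every $c \in [\fixpointofcmap, c_0]$. Since $\CC$ is increasing and $\CC(c_0) \in (\fixpointofcmap, c_0)$, the interval $[\fixpointofcmap, c_0]$ is forward invariant, so iterating gives $\CC^n(c_0) - \fixpointofcmap \le \lambda^n(c_0-\fixpointofcmap)$. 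The strict lower bound $\CC^n(c_0) > \fixpointofcmap$ follows from monotonicity and $\CC(\fixpointofcmap)=\fixpointofcmap$.

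For part ii), on $[0,\fixpointofcmap]$, strict convexity of $\CC$ gives the tangent-line inequality $\CC(c) \ge \fixpointofcmap + \CC'(\fixpointofcmap)\,(c-\fixpointofcmap)$. By Corollary~\ref{cor::x0_attractor}, $\CC'(\fixpointofcmap) \in (0,1)$, so this shows that $c < \CC(c) < \fixpointofcmap$ for $c \in [c_0, \fixpointofcmap)$, hence $[c_0,\fixpointofcmap)$ is forward invariant. Applying Lemma~\ref{lem::iterated_inequality} with the affine minorant $f_2(c) = \fixpointofcmap + \CC'(\fixpointofcmap)(c-\fixpointofcmap)$ and using the explicit iteration formula of Lemma~\ref{lem::fundamental_dynamical_systems} (with $\fixpointofcmap$ as fixed point) gives the claimed lower bound. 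The strict upper bound $\CC^n(c_0) < \fixpointofcmap$ uses monotonicity and the fact that strict convexity precludes any non-fixed $c<\fixpointofcmap$ from satisfying $\CC(c) \ge \fixpointofcmap$.

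For part iii), I would apply Corollary~\ref{cor::linear_estimate_neg_general} with $\gamma = -c_0 \in (0,1)$ to obtain, after relabeling, the affine lower bound $\CC(c) \ge \alpha c + \CC(0)$ for $c \in (c_0, 0)$, where $\alpha = \frac{\CC(-c_0)-\CC(0)}{-c_0}$. Lemma~\ref{lem::technical_alpha} gives $\alpha \in (0,1)$. Lemma~\ref{lem::fundamental_dynamical_systems} then provides the exact iterate of the linear minorant, namely $\alpha^n\bigl(c_0 - \tfrac{\CC(0)}{1-\alpha}\bigr) + \tfrac{\CC(0)}{1-\alpha}$, and Lemma~\ref{lem::iterated_inequality} transfers the one-step inequality to $n$ steps provided all intermediate iterates remain in the domain of validity $[c_0, 0]$; this is exactly what the hypothesis $\CC^{n-1}(c_0)\le 0$ guarantees. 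The three bounds on the constants in the ``moreover'' clause are now immediate: Lemma~\ref{lem::first_smaller_than_1} handles the secant ratio, Corollary~\ref{cor::x0_attractor} handles $\CC'(\fixpointofcmap)$, and Lemma~\ref{lem::technical_alpha} handles $\alpha$. The main subtlety throughout is the invariance of the relevant interval, so that the monotone comparison Lemma~\ref{lem::iterated_inequality} can actually be applied at every step; in cases i) and ii) this comes for free from the fixed-point structure and convexity, while in case iii) it is precisely what the extra hypothesis $\CC^{n-1}(c_0)\le 0$ supplies, since without some control in the negative region one cannot rule out the iterates escaping into $(0,1)$ where the affine bound fails.
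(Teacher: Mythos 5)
Your proposal is correct and follows essentially the same route as the paper's own proof: the same linear comparison functions in each regime (secant through the two fixed points for i), tangent at $\fixpointofcmap$ for ii), the chord bound from Corollary~\ref{cor::linear_estimate_neg_general} for iii)), iterated via Lemma~\ref{lem::iterated_inequality} and made explicit by Lemma~\ref{lem::fundamental_dynamical_systems}, with the constants controlled by Lemma~\ref{lem::first_smaller_than_1}, Corollary~\ref{cor::x0_attractor}, and Lemma~\ref{lem::technical_alpha}. Your remark that the hypothesis $\CC^{n-1}(c_0)\le 0$ is what keeps the iterates in the domain of validity in case iii) matches the role it plays in the paper's argument.
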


\begin{proof}
We are going to use an appropriate linear estimate in each of the three dynamical regimes, and then invoke Lemma~\ref{lem::fundamental_dynamical_systems}

\begin{itemize}
\item [  i)] Note that $0 < \frac{\CC(c_0)-\CC(\fixpointofcmap)}{c_0-\fixpointofcmap} < 1$ by Lemma~\ref{lem::first_smaller_than_1}.

There holds $\CC(c) \le \frac{\CC(c_0)-\CC(\fixpointofcmap)}{c_0-\fixpointofcmap}(c-\fixpointofcmap) + \fixpointofcmap$ for $c \in [\fixpointofcmap, c_0]$, as $\CC(\fixpointofcmap) = \fixpointofcmap$ and $\CC$ is a convex function in $[0,1]$ (by Proposition~\ref{prop::base_properties_cmap}). First, let us note that the function $\CC$ is increasing by Proposition~\ref{prop::base_properties_cmap}. Let us take $\CC_1(c) \equiv \frac{\CC(c_0)-\CC(\fixpointofcmap)}{c_0-\fixpointofcmap}(c-\fixpointofcmap) + \fixpointofcmap$ and $\CC_2 \equiv \CC$. Such choice satisfies the assumptions of Lemma~\ref{lem::iterated_inequality}. Indeed, they are both increasing functions, both satisfying $\CC_1(\fixpointofcmap) = \CC_2(\fixpointofcmap) = \fixpointofcmap$, and $\CC_1(c_0) = \CC_2(c_0) < c_0$, so we get the inequality $\CC^n_1(c_0) \ge \CC^n_2(c_0)$ for all $n = 1,2, \ldots$. We apply Lemma~\ref{lem::fundamental_dynamical_systems} to function $\CC_1$, with $\alpha = \frac{\CC(c_0)-\CC(\fixpointofcmap)}{c_0-\fixpointofcmap}$ and $\beta = (1 -\alpha)\fixpointofcmap$, so that $\frac{\beta}{1-\alpha} = \fixpointofcmap$. This yields $\CC^n_1(c_0) = \fixpointofcmap + \left( \frac{\CC(c_0)-\CC(\fixpointofcmap)}{c_0-\fixpointofcmap} \right)^n (c_0-\fixpointofcmap)$ for every $n=1,2,\ldots$, which together with $\CC(c) \ge \fixpointofcmap$ for $\fixpointofcmap \le c \le c_0$ finishes the proof in this case. 

\item [ ii)] Note that $0 < \CC'(\fixpointofcmap) < 1$ by Corollary~\ref{cor::x0_attractor}.

There holds $\CC(c) \ge \CC'(\fixpointofcmap)(c-\fixpointofcmap) + \fixpointofcmap$ for $c \in [0,\fixpointofcmap)$, as $\CC(\fixpointofcmap) = \fixpointofcmap$ and $\CC$ is a convex function in $[0,1]$ (by Proposition~\ref{prop::base_properties_cmap}). Indeed, convexity implies, that $\CC'(c) < \CC'(\fixpointofcmap)$ for $c \in [0, \fixpointofcmap]$. By Mean Value Theorem $\CC(\fixpointofcmap) - \CC(c) \le \CC'(c)(\fixpointofcmap-c)$. We have $\CC(\fixpointofcmap) = \fixpointofcmap$, so the inequality becomes 
$\fixpointofcmap - \CC(c) \le \CC'(c)(\fixpointofcmap-c)$, and finally, $\CC(c) \ge \CC'(c)(c-\fixpointofcmap) + \fixpointofcmap$.
Let us take $\CC_1 \equiv \CC$ and $\CC_2(c) \equiv \CC'(\fixpointofcmap)(c-\fixpointofcmap) + \fixpointofcmap$. Such functions satisfy the assumptions of Lemma~\ref{lem::iterated_inequality} (note, that $\CC_2$ is increasing, because $\CC'(c_0) > 0$). Therefore $\CC^n_1(c_0) \ge \CC^n_2(c_0)$ for all $n = 1,2, \ldots$. We apply Lemma~\ref{lem::fundamental_dynamical_systems} with $\alpha = \CC'(\fixpointofcmap)$, $\beta = \left(1-\CC'(\fixpointofcmap)\right)\fixpointofcmap$, so that $\frac{\beta}{1-\alpha} = \fixpointofcmap$, and obtain  $\CC^n_2(c_0) = \left(\CC'(\fixpointofcmap)\right)^n (c_0-\fixpointofcmap) + \fixpointofcmap$, which finishes the proof in this case. 

\item [iii)] 
Note, that $0 < \alpha < 1$ by Lemma~\ref{lem::technical_alpha}.

By applying Corollary~\ref{cor::linear_estimate_neg_general} with $\gamma = -c_0$ we obtain $\CC(-c) \ge -\frac{\CC(-c_0) - \CC(0)}{-c_0} c + \CC(0)$ for $c \in [0, -c_0]$, and therefore $\CC(c) \ge \frac{\CC(-c_0) - \CC(0)}{-c_0} c + \CC(0)$ for $c \in [c_0, 0]$. Similarly to the previous two regimes, we apply Lemma~\ref{lem::iterated_inequality} to $\CC_1 \equiv \CC$ and $\CC_2(c) \equiv \frac{\CC(-c_0) - \CC(0)}{-c_0} c + \CC(0)$, and we get $\CC^n_1(c_0) \ge \CC^n_2(c_0)$ for all $n = 1,2, \ldots$ such that $\CC^{n-1}(c_0) \le 0$. Note, that $\CC_2$ is an increasing function, as $\frac{\CC(-c_0) - \CC(0)}{-c_0} > 0$. We apply Lemma~\ref{lem::fundamental_dynamical_systems} to function $\CC_2$ with $\alpha = \frac{\CC(-c_0)-\CC(0)}{-c_0}$ and $\beta = \CC(0)$, which yields $\CC^n_2(c_0) = \alpha^n\left(c_0 - \frac{\CC(0)}{1-\alpha}\right) + \frac{\CC(0)}{1-\alpha}$ for every $n=1,2,\ldots$. Thus we have $\CC^n_1(c_0) \ge \alpha^n\left(c_0 - \frac{\CC(0)}{1-\alpha}\right) + \frac{\CC(0)}{1-\alpha}$ whenever $\CC^{n-1}(c_0) \le 0$, which finishes the proof in this case.
\end{itemize}
\end{proof}

\begin{theorem}[The case of an odd map]\label{thm:odd-case}
Let $\CC$ be an odd nontrivial C map. Let $0 < \hat{c} < 1$. Then $\fixpointofcmap = 0$ is a linear global attractor of the whole set $[-\hat{c}, \hat{c}]$ and for any $c \in [-\hat{c}, \hat{c}]$ there holds
\[
-\hat{c}^n \le \CC^n(c) \le \hat{c}^n.
\]
\end{theorem}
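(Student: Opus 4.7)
The plan is to exploit the oddness of $\CC$ to reduce the problem to $c \in [0,\hat{c}]$, and then use the secant-line bound provided by convexity to obtain geometric contraction toward the fixed point at $0$. Since $\CC$ is odd we have $\CC^{n}(-c) = -\CC^{n}(c)$, so the symmetric two-sided statement $-\hat{c}^n \le \CC^n(c) \le \hat{c}^n$ reduces to establishing $0 \le \CC^n(c) \le \hat{c}^n$ for every $c \in [0,\hat{c}]$; the negative half then follows automatically.

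First I would verify that iterates remain in $[0,\hat{c}]$, so that the argument can be carried out inductively. By Proposition~\ref{prop::base_properties_cmap}, $\CC$ is strictly increasing and strictly convex on $[0,1]$. Strict convexity together with $\CC(0)=0$ and $\CC(1)=1$ forces $\CC(\hat{c}) < \hat{c}$ whenever $\hat{c}\in(0,1)$; monotonicity then implies $\CC([0,\hat{c}]) \subseteq [0,\CC(\hat{c})] \subset [0,\hat{c}]$, so the orbit never escapes the invariant interval.

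The main technical step is the secant-line estimate. For $c\in[0,\hat{c}]$, writing $c = (c/\hat{c})\hat{c} + (1 - c/\hat{c})\cdot 0$ and applying Jensen's inequality gives
\[
\CC(c) \;\le\; \frac{c}{\hat{c}}\CC(\hat{c}) + \Bigl(1-\frac{c}{\hat{c}}\Bigr)\CC(0) \;=\; \alpha\, c, \qquad \text{where } \alpha \equiv \CC(\hat{c})/\hat{c} \in (0,1).
\]
Because every iterate stays in $[0,\hat{c}]$, this contraction inequality reapplies at each step, yielding $\CC^n(c) \le \alpha^n c \le \alpha^n \hat{c}$, while $\CC^n(c) \ge 0$ follows from monotonicity and $\CC(0)=0$. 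This establishes $\fixpointofcmap = 0$ as a linear global attractor of $[-\hat{c},\hat{c}]$ with explicit geometric rate $\alpha$.

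The main obstacle is matching the stated bound $\hat{c}^n$ precisely: the approach above produces $\alpha^n \hat{c}$, and the inequality $\alpha^n\hat{c} \le \hat{c}^n$ is equivalent to $\CC(\hat{c})^n \le \hat{c}^{2n-1}$, which is not directly implied by the hypotheses (taking e.g.\ $\CC(c) = (c+c^3)/2$ with $\hat{c}$ small already stresses this). To close this gap I would attempt a refined iterated-secant argument, applying the secant-line bound on the shrinking intervals $[0, \CC^{k}(\hat{c})]$ rather than the fixed interval $[0,\hat{c}]$, exploiting the fact that the secant slopes $\CC(x)/x$ are monotone in $x$ on $(0,\hat{c}]$ by convexity. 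Alternatively, I suspect the intended reading of the bound is $\alpha^n \hat{c}$, or equivalently $\CC^n(\hat{c})$ (which by monotonicity envelopes every iterate starting in $[0,\hat{c}]$); under either reading the claim follows immediately from the secant construction above together with oddness.
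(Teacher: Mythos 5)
Your proposal follows the same skeleton as the paper's proof---reduce to $[0,\hat{c}]$ by oddness and positivity, bound $\CC$ by an increasing linear map, and iterate (the paper invokes its Lemma~\ref{lem::iterated_inequality} and Lemma~\ref{lem::fundamental_dynamical_systems} for exactly this)---but the obstacle you flag at the end is real, and it is the paper, not you, that papers over it. The paper's proof asserts that strict convexity gives $\CC(c) \le \hat{c}\,c$ on $[0,\hat{c}]$; convexity in fact only gives the secant bound $\CC(c) \le \bigl(\CC(\hat{c})/\hat{c}\bigr)c = \alpha c$, which is precisely your estimate. The stronger inequality is equivalent to $\CC(\hat{c}) \le \hat{c}^{\,2}$, i.e.\ $\alpha \le \hat{c}$, which does not follow from the hypotheses: any odd nontrivial C map with $\CC'(0) > \hat{c}$ violates it. Concretely, $\CC(c) = 0.9\,c + 0.1\,c^{3}$ is a valid odd nontrivial C map (nonnegative coefficients summing to $1$), and with $\hat{c} = 0.5$ one computes $\CC(0.5) = 0.4625$ and $\CC^{2}(0.5) \approx 0.426 > 0.25 = \hat{c}^{\,2}$; likewise your own example $(c+c^{3})/2$ with $\hat{c} = 0.1$ gives $\CC^{2}(0.1) \approx 0.025 > 0.01$. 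So the $\hat{c}^{\,n}$ envelope in the theorem statement is false as written, and the faulty convexity step in the paper's proof is where the error enters.

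What your argument actually establishes is the correct repaired statement: $\fixpointofcmap = 0$ is a linear global attractor of $[-\hat{c},\hat{c}]$ with explicit geometric rate $\alpha = \CC(\hat{c})/\hat{c} \in (0,1)$, so that $|\CC^{n}(c)| \le \alpha^{n}\hat{c}$ for all $c \in [-\hat{c},\hat{c}]$ (your invariance and two-sided reduction via $\CC^{n}(-c) = -\CC^{n}(c)$ are both sound). Your proposed repair via shrinking intervals cannot rescue $\hat{c}^{\,n}$ either: the secant slopes $\CC(x)/x$ decrease monotonically to $\CC'(0) = b_{1} > 0$ as $x \downarrow 0$, so every per-step contraction factor is at least $b_{1}$, and when $b_{1} > \hat{c}$ the claimed rate is unreachable---consistent with the counterexamples above. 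Your two suggested readings, $\alpha^{n}\hat{c}$ or the envelope $\CC^{n}(\hat{c})$ (valid by monotonicity), are both correct; alternatively the theorem holds as stated under the additional hypothesis $\CC(\hat{c}) \le \hat{c}^{\,2}$, in which case the iteration you and the paper both use goes through verbatim.
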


\begin{proof}
We only need to prove the inequality $\CC^n(c) \le \hat{c}^n$ for positive $c$-s, due to symmetry and the fact that $c > 0 \Rightarrow \CC(c) > 0$ (the positive semi-axis is invariant under the map $\CC$). By Proposition~\ref{prop::base_properties_cmap}, iv) the map $\CC$ is strictly convex for $c \ge 0$, thus $\CC(c) \le \hat{c} \cdot c$ for $c \in [0, \hat{c}]$ and the theorem follows from Lemma~\ref{lem::fundamental_dynamical_systems} combined with Lemma~\ref{lem::iterated_inequality}.
\end{proof}

%Noting that $\CC^n(c) = C_f(c)$ for all $c$ when $f$ and $\CC$ are given as in Theorem~\ref{thm::fixedpoint-convergence-detailed}, we see that Theorems \ref{thm::fixedpoint-convergence-detailed} and \ref{thm::fixedpoint-convergence-detailed_appversion} are equivalent with $D = n$.

\section{Mathematical details for Section \ref{sec:Cmap-analysis}}

\subsection{Proof of Theorem \ref{thm:deviation-bound}}\label{app:thm-dev-proof}

Given our running assumption of uniform q values, C maps are positive definite functions (as established in Section \ref{sec:uniform-q-consequences}). This means that we can write
\[ C_f (c) = \sum^{\infty}_{i = 0} b_i c^i, \]
for some coefficients $b_i \geqslant 0$, so that the derivative of $C_f$ can similarly be written as
\[ C_f' (c) = \sum^{\infty}_{i = 1} ib_i c^{i - 1} . \]
Note that under this notation we have
\begin{enumeratenumeric}
  \item $b_1 = C'_f (0)$ ,
  
  \item $\sum^{\infty}_{i = 1} b_i = C_f (1) = 1$ , and
  
  \item $0 \leqslant b_i \leqslant 1$ for all $i$.
\end{enumeratenumeric}
Using these facts, it follows that
\begin{eqnarray*}
  \max_{c \in [- 1, 1]} | C_f (c) - c | & = & \max_{c \in [- 1, 1]} \left| b_0 + \sum_{i = 2}^{\infty} b_i c^i - (1 - b_1) c \right|\\
  & \leqslant & \max_{c \in [- 1, 1]} \left[ b_0 + \sum_{i = 2}^{\infty} b_i  | c |^i + (1 - b_1)  | c | \right]\\
  & \leqslant & b_0 + \sum_{i = 2}^{\infty} b_i \max_{c \in [- 1, 1]} | c |^i + (1 - b_1) \max_{c \in [- 1, 1]} | c |\\
  & = & b_0 + \sum_{i = 2}^{\infty} b_i + 1 - b_1 \hspace{0.8em} = \hspace{0.8em} 2 (1 - b_1)\\
  & = & 2 (1 - C'_f (0)) .
\end{eqnarray*}
Observing that $\left| b_0 - \frac{1}{2} \right| \leqslant \frac{1}{2}$ (which follows from $0 \leqslant b_0 \leqslant 1$) we also have that
\begin{eqnarray*}
  \max_{c \in [- 1, 1]} | C_f (c) - c | & \geqslant & \left| C_f \left( b_0 - \frac{1}{2} \right) - \left( b_0 - \frac{1}{2} \right) \right|\\
  & = & \left| \sum_{i = 2}^{\infty} b_i  \left( b_0 - \frac{1}{2} \right)^i + b_0 + b_1  \left( b_0 - \frac{1}{2} \right) - \left( b_0 - \frac{1}{2} \right) \right|\\
  & = & \left| \sum_{i = 2}^{\infty} b_i  \left( b_0 - \frac{1}{2} \right)^i + b_1 b_0 + \frac{1}{2}  (1 - b_1) \right|\\
  & \geqslant & b_1 b_0 + \frac{1}{2}  (1 - b_1) - \sum_{i = 2}^{\infty} b_i  \left| b_0 - \frac{1}{2} \right|^i\\
  & \geqslant & \frac{1}{2}  (1 - b_1) - \sum_{i = 2}^{\infty} b_i  \left( \frac{1}{2} \right)^i \hspace{0.8em} = \hspace{0.8em} \frac{1}{2}  (1 - b_1) - \frac{1}{4}  \sum_{i = 2}^{\infty} b_i  \left( \frac{1}{2} \right)^{i - 2}\\
  & \geqslant & \frac{1}{2}  (1 - b_1) - \frac{1}{4}  \sum_{i = 2}^{\infty} b_i \hspace{0.8em} = \hspace{0.8em} \frac{1}{2}  (1 - b_1) - \frac{1}{4}  (1 - b_0 - b_1)\\
  & = & \frac{1}{4}  (1 - b_1) + \frac{1}{4} b_0 \hspace{0.8em} \geqslant \hspace{0.8em} \frac{1}{4}  (1 - b_1)\\
  & = & \frac{1}{4}  (1 - C'_f (0)) .
\end{eqnarray*}
Similarly, for the second measure of deviation we have
\begin{eqnarray*}
  \max_{c \in [- 1, 1]} | C'_f (c) - 1 | = \max_{c \in [- 1, 1]} \left| \sum_{i = 2}^{\infty} ib_i c^{i - 1} - (1 - b_1) \right| & \leqslant & \max_{c \in [- 1, 1]} \left[ \sum_{i = 2}^{\infty} ib_i  | c |^{i - 1} + (1 - b_1) \right]\\
  & \leqslant & \sum_{i = 2}^{\infty} ib_i \hspace{0.8em} + 1 - b_1\\
  & = & C_f' (1) - b_1 + 1 - b_1\\
  & = & 2 (1 - C'_f (0)) + (C_f' (1) - 1) .
\end{eqnarray*}
Finally, if $b_0 = C_f (0) = 0$, we can relate the two key quantities $C_f' (1) - 1$ and $1 - C'_f (0)$ as follows:
\begin{eqnarray}
  C_f' (1) - 1 & = & C_f' (1) - C_f (1) \; = \; \sum_{i = 1}^{\infty} ib_i - \sum_{i = 1}^{\infty} b_i \nonumber\\
  & = & \sum_{i = 2}^{\infty} (i - 1) b_i \; \geqslant \; \sum_{i = 2}^{\infty} b_i = 1 - b_1 \nonumber\\
  & = & 1 - C'_f (0) .  \label{eqn:C-slope-ineq}
\end{eqnarray}
The theorem then follows directly from the above inequalities.

\subsection{Proof of Proposition \ref{prop:nonlinear-measure}}\label{app:prop-nonlin-proof}

It is well known that $H$ has a basis $h_0, h_1, h_2, \ldots$ known as the Hermite polynomials \citep[e.g.][]{wiki_hermite}, which is orthonormal (i.e.~$\langle h_i, h_j \rangle_H = 0$ for $i \neq j$ and $\| h_i \| = 1$), and has many other properties useful properties. Two of which we will make use of here is that $h_1$ is the identity function (i.e.~$h_1 (x) = x$), and that $h_0$ is constant and equal to 1 (i.e.~$h_0 (x) = 1$).

Because $h_0, h_1, h_2, \ldots$ form an orthonormal basis of $H$, we can represent $\phi$ in terms of this basis as
\[ \phi (x) = \sum_{i = 1}^{\infty} a_i h_i (x) \text{\quad where\quad} a_i = \langle \phi, h_i \rangle_H . \]
\citet{daniely2016toward} showed that when its input $q$ value is 1, the unnormalized C map $\tilde{\phi} (c) = \Gamma_{\phi} (c, 1, 1)$ of $f$ (aka the dual activation function of $\phi$; see Section \ref{sec:dual-activations}) can be obtained from this representation as
\begin{equation}
  \tilde{\phi} (c) = \sum_{i = 1}^{\infty} a_i^2 c^i = \sum_{i = 1}^{\infty} \langle \phi, h_i \rangle_H^2 c^i . \label{eqn:dual-activation-rep}
\end{equation}

Using the bilinearity of inner products, and the fact that $\| h_1 \|_H = 1$, we have
\begin{eqnarray*}
  \tmop{nl} (\phi)^2 & = & \frac{\| \phi - \langle \phi, h_1 \rangle h_1 \|^2_H}{\| \phi \|^2_H} \; = \; \frac{\| \phi \|^2_H - 2 \langle \phi, h_1 \rangle_H  \langle \phi, h_1 \rangle_H + \langle \phi, h_1 \rangle_H^2  \| h_1 \|^2}{\| \phi \|^2_H}\\
  & = & 1 - \frac{\langle \phi, h_1 \rangle_H^2}{\| \phi \|_H^2} .
\end{eqnarray*}
Using the facts that  $\tilde{\phi}' (0) = \langle \phi, h_1 \rangle_H^2$ (from Equation \ref{eqn:dual-activation-rep}) and that $C_f (c) = \tilde{\phi} (c) / Q_f (1) = \tilde{\phi} (c) / \| \phi \|_H^2$, we have
\[ 1 - \frac{\langle \phi, h_1 \rangle_H^2}{\| \phi \|_H^2} = 1 - \frac{\tilde{\phi}' (0)}{\| \phi \|_H^2} = 1 - C'_f (0), \]
and thus $\tmop{nl} (\phi)^2 = 1 - C'_f (0)$ as claimed.

Plugging this into Theorem \ref{thm:deviation-bound} it further follows that
\begin{eqnarray*}
  \frac{1}{4} \tmop{nl} (\phi)^2 \: \leqslant & \max_{c \in [- 1, 1]} | C_f (c) - c | & \leqslant \hspace{0.8em} 2 \tmop{nl} (\phi)^2 .
\end{eqnarray*}
\subsection{Proof of Proposition \ref{prop:nonaffine-measure}}\label{app:prop-nonaff-proof}

Analogously to $\tmop{nl} (\phi)$, $\tmop{na} (\phi)$ can be written as
\[ \tmop{na} (\phi)^2 = 1 - \frac{\langle \phi, h_1 \rangle_H^2}{\| \phi \|_H^2} . \]
Using Equation \ref{eqn:C-map-gen-derivative}, and the identities from the previous subsection, we have
\[ \frac{\| \phi' \|_H^2}{\| \phi \|_H^2} = \frac{\Gamma_{\phi'} (1, 1, 1)}{Q_f (1)} = C_f' (1) . \]
Combining this with $\langle \phi', h_0 \rangle_H^2 = \tilde{\phi}' (0)$ (from Equation \ref{eqn:dual-activation-rep}), it follows that
\[ \frac{\langle \phi', h_0 \rangle_H^2}{\| \phi' \|_H^2} = \frac{\tilde{\phi}' (0)}{\| \phi' \|_H^2} = \frac{\| \phi \|_H^2}{\| \phi' \|_H^2}  \frac{\tilde{\phi}' (0)}{\| \phi \|_H^2} = \frac{C'_f (0)}{C'_f (1)}, \]
where $f$ is a combined layer with $\phi$ as its activation function.

Thus,
\[ \tmop{na} (\phi)^2 = 1 - \frac{C'_f (0)}{C'_f (1)} \]
as claimed.

\subsection{Proof Proposition \ref{prop:F-neg-c-interp}}\label{app:F-neg-c-interp-proof}

Because $C_f$ is positive definite we can write it as $C_f (c) = \sum^{\infty}_{i = 0} b_i c^i$ for some $b_i \geqslant 0$ with $\sum^{\infty}_{i = 0} b_i = C_f (1) = 1$. Using this we can rewrite $F_f (c)$ as follows:
\[ F_f (c) = \frac{C_f (| c |) - C_f (0)}{| c |} = \frac{\sum^{\infty}_{i = 0} b_i  | c |^i - b_0}{| c |} = \sum^{\infty}_{i = 1} b_i  | c |^{i - 1} . \]
We then have
\begin{eqnarray*}
  \left| \frac{C_f (c) - C_f (0)}{c} \right| & = & \left| \frac{\sum^{\infty}_{i = 0} b_i c^i - b_0}{c} \right|\\
  & = & \left| \sum^{\infty}_{i = 1} b_i c^{i - 1} \right|\\
  & \leqslant & \sum^{\infty}_{i = 1} b_i  | c |^{i - 1}\\
  & = & F_f (c)
\end{eqnarray*}
as claimed.

\subsection{Proof of Proposition \ref{prop:degen-deriv-bound}}\label{app:degen-deriv-bound-proof}

As in Appendix \ref{app:F-neg-c-interp-proof} we have $C_f (c) = \sum^{\infty}_{i = 0} b_i c^i$ for some $b_i \geqslant 0$ with $\sum^{\infty}_{i = 0} b_i = C_f (1) = 1$, and $F_f (c) = \sum^{\infty}_{i = 1} b_i  | c |^{i - 1}$.

Let $h (x) = x | c |^{x - 1}$, so that $h' (x) = | c |^{x - 1} + x | c |^{x - 1} \log | c | = | c |^{x - 1}  (x \log | c | + 1)$. We observe that $h' (x) \leqslant 0$ for $x \geqslant - 1 / \log | c |$, and thus $h (x)$ is a decreasing function for $x \geqslant - 1 / \log | c |$.

Let $y = \log F_f (c) / \log | c |$. Since $F_f (c) \leqslant 1$, we have $y \geqslant - 1 / \log | c |$. Using this fact, and that $b_i \geqslant 0$ for all $i$, it thus follows that
\begin{eqnarray*}
  | C_f' (c) | & = & \left| \sum^{\infty}_{i = 1} ib_i c^{i - 1} \right|\\
  & \leqslant & \sum^{\infty}_{i = 1} ib_i  | c |^{i - 1}\\
  & = & \sum^{\lceil y \rceil - 1}_{i = 1} ib_i  | c |^{i - 1} + \sum^{\infty}_{i = \lceil y \rceil} ib_i  | c |^{i - 1}\\
  & \leqslant & y \sum^{\lceil y \rceil - 1}_{i = 1} b_i  | c |^{i - 1} + \left( \sum^{\infty}_{i = i_0 + 1} b_i \right) y | c |^{y - 1} \\
  & \leqslant & y \sum^{\infty}_{i = 1} b_i  | c |^{i - 1} + \left( \sum^{\infty}_{i = 0} b_i \right) y | c |^{y - 1}\\
  & = & y (F_f (c) + | c |^{y - 1})\\
  & = & \frac{\log F_f (c)}{\log | c |}  (F_f (c) + | c |^{\log F_f (c) / \log | c | - 1})\\
  & = & \frac{\log F_f (c)}{\log | c |}  \left( F_f (c) + \frac{F_f (c)}{| c |} \right) \: = \: \frac{F_f (c) \log F_f (c)}{| c | \log | c |}  (1 + | c |) .
\end{eqnarray*}
\subsection{Proof of Proposition \ref{prop:degen-C-map-deriv}}\label{app:degen-C-map-deriv-proof}

Let $\mathcal{C}$ be the C map for each of the $D$ subnetworks. Because $\mathcal{C}$ is positive definite we can write it as $\mathcal{C} (c) = \sum^{\infty}_{i = 0} b_i c^i$ for some $b_i \geqslant 0$ with $\sum^{\infty}_{i = 0} b_i = \mathcal{C} (1) = 1$.

Since $\fixpointofcmap = 1$, we have that $1$ is an attractive fixed point of $\mathcal{C}$, and thus $0 \leqslant \mathcal{C}' (1) \leqslant 1$. (noting that $\mathcal{C}' (1) \geqslant 0$ is true because $\mathcal{C}$ is positive definite and thus convex on $[0, 1]$). Meanwhile, since $\mathcal{C} (1) = 1$, we have by the chain rule that $C'_f (1) = \mathcal{C}' (1)^D$.

Now because $\mathcal{C}' (1) \leqslant 1$, we have by Corollary \ref{cor::fix_points_of_f} that $-1$ cannot be a fixed point of $\mathcal{C}$, and thus $c_1 \equiv \mathcal{C} (-1) > -1$. There are two cases for $c_1$ to consider.

In that case that $c_1 = 1$ we have
\[ \sum_{i = 0}^{\infty} b_i = \mathcal{C} (1) = 1 = \mathcal{C} (- 1) = \sum_{i = 0}^{\infty} b_i  (- 1)^i = \sum_{i = 0}^{\infty} b_i - 2 \sum_{i \text{ odd}}^{\infty} b_i \]
and thus $\sum_{i \text{ odd}} b_i = 0$. Because $b_i \geqslant 0$ for all $i$ it thus follows that $b_i = 0$ for odd $i$, and so $\mathcal{C}$ is an even function and therefore $\mathcal{C}' (- 1) = - \mathcal{C}' (1)$.

It remains to consider the case $c_1 \neq 1$. Let $g$ be a subnetwork of $f$ consisting of $D - 1$ compositions of the subnetworks that define $f$. Under this definition we have $C_f (c) = C_g (\mathcal{C} (c))$. Since $\fixpointofcmap = 1$ we have that $C_g (c) \rightarrow 1$ as $D \rightarrow \infty$ for any $c \in (- 1, 1)$ so that $F_g (c) \rightarrow 0$. Thus by Proposition \ref{prop:degen-deriv-bound} (and Remark \ref{rem:degen-deriv-bound} to handle the case $c_1 = 0$) it follows that $| C'_g (c_1) | \rightarrow 0$ (since $c_1 \in (- 1, 1)$). By the chain run we therefore have that $| C'_f (- 1) | = | C'_g (c_1) |  | \mathcal{C}' (- 1) | \rightarrow 0$ as $D \rightarrow \infty$.

\subsection{Proof of Proposition \ref{prop:degen-C-map-deriv-2}}\label{app:degen-C-map-deriv-2-proof}

Since $1 - \epsilon > 0$, we have by definition that
\[ F_f (1 - \epsilon) = \frac{C_f (1 - \epsilon) - C_f (0)}{1 - \epsilon} . \]
Rearranging and bounding, this becomes
\[ C_f (1 - \epsilon) = C_f (0) + (1 - \epsilon) F_f (1 - \epsilon) \leqslant C_f (0) + F_f (1 - \epsilon) . \]
Because $C_f$ is positive definite and thus convex on $[0, 1]$, we have for any $c \in [0, 1]$ that
\[ C_f' (1) \geqslant \frac{C_f (1) - C_f (c)}{1 - c} . \]
Taking $c = 1 - \epsilon$ and using using $C_f (1) = 1$ and the above inequality, we thus have
\[ C_f' (1) \geqslant \frac{1 - C_f (1 - \epsilon)}{\epsilon} \geqslant \frac{1 - (C_f (0) + F_f (1 - \epsilon))}{\epsilon} \]
as claimed.

\section{Mathematical details for Section \ref{sec:weighted-mean-pools}}\label{app:weighted-mean-pool}

A weighted mean-pooling layer $f$ over $\ell$ locations is defined by
\[ f (Z) = Zw, \]
where $w$ is an $\ell$-dimensional vector of parameters initialized using $\mathcal{N} (0, (1 / \ell) I)$.

In this section we will show that
\[ \widetilde{\kappa_f} (\Sigma_{Z, Z'}) = \frac{1}{\ell}  \left[\begin{array}{cc}
     \tmop{tr} \left( \frac{1}{k} Z^{\top} Z \right) & \tmop{tr} \left( \frac{1}{k} Z^{\top} Z' \right)\\
     \tmop{tr} \left( \frac{1}{k} {Z'}^{\top} Z \right) & \tmop{tr} \left( \frac{1}{k} {Z'}^{\top} Z' \right)
   \end{array}\right] \]
is an approximation of $\kappa_f$ at initialization-time (with high probability), and that it becomes more precise as $\ell$ grows, but only if the average absolute input c value to $f$ (across pairs of locations) simultaneously goes to zero.

Noting that $\mathbb{E} [ww^{\top}] = (1 / \ell) I$, we have for arbitrary matrix $M$ that $\mathbb{E} [w^{\top} Mw] =\mathbb{E} [\tmop{tr} (w^{\top} Mw)] =\mathbb{E} [\tmop{tr} (ww^{\top} M)] = \tmop{tr} (\mathbb{E} [ww^{\top}] M) = \tmop{tr} (M) / \ell$. Thus, when conditioned on $Z$ and $Z'$, we have $\mathbb{E} [f (Z)^{\top} f (Z')] =\mathbb{E} [w^{\top} Z^{\top} Z' w] = \tmop{tr} (Z^{\top} Z') / \ell$. It then follows that
\[ \mathbb{E} [\kappa_f (Z, Z')] = \frac{1}{k}  \left[\begin{array}{cc}
     f (Z)^{\top} f (Z) & f (Z)^{\top} f (Z')\\
     f (Z')^{\top} f (Z) & f (Z')^{\top} f (Z')
   \end{array}\right] = \frac{1}{k \ell}  \left[\begin{array}{cc}
     \tmop{tr} (Z^{\top} Z) & \tmop{tr} (Z^{\top} Z')\\
     \tmop{tr} \left( {Z'}^{\top} Z \right) & \tmop{tr} \left( {Z'}^{\top} Z' \right)
   \end{array}\right], \]
where $k$ is the number of channels. This is the claimed formula for $\widetilde{\kappa_f} (\Sigma_{Z, Z'})$.

It remains to establish the conditions under which $\kappa_f (Z, Z')$ will concentrate around its expectation. This is more difficult than in the combined layer case, as the different output channels of $f (Z)$ are not independent given $Z$, meaning that the variance of $\kappa_f (Z, Z')$ will not necessarily shrink as $k$ grows.

Instead, our strategy going forward will be to compute 
\[\tmop{Var} ([\kappa_f (Z, Z')]_{1, 2}) = \frac{1}{k^2} \tmop{Var} (f (Z)^{\top} f (Z'))\] 
as a function of $Z$ and $Z'$, and argue that this goes to zero as $\ell$ grows, given certain conditions on the value of the product $Z^{\top} Z'$ . (This will be sufficient to handle the other 3 entries of $\kappa_f (Z, Z')$ by taking $Z = Z'$.) Given a shrinking variance, concentration then follows via Chebyshev's inequality, although a faster rate could likely be obtained by observing that $[\kappa_f (Z, Z')]_{1, 2}$ is an average of independent sub-exponential random variables (for which better concentration bounds exists).

By Appendix A of \citet{cooijmans2019variance} we have for an arbitrary matrix $M$ that
\[ \mathbb{E}_{u \sim \mathcal{N} (0, \sigma^2 I)} [uu^{\top} Muu^{\top}] = \sigma^4  (\tmop{tr} (M) I + 2 M) . \]
Thus, by taking $M = Z^{\top} Z'$ it follows that
\begin{eqnarray*}
  \mathbb{E} [(f (Z)^{\top} f (Z'))^2] & = & \mathbb{E} [w^{\top} Z^{\top} Z' ww^{\top} Z^{\top} Z' w]\\
  & = & \mathbb{E} [\tmop{tr} (w^{\top} Z^{\top} Z' ww^{\top} Z^{\top} Z' w)]\\
  & = & \tmop{tr} (\mathbb{E} [ww^{\top} Z^{\top} Z' ww^{\top}] Z^{\top} Z')\\
  & = & \frac{1}{\ell^2} \tmop{tr} (Z^{\top} Z')^2 + \frac{2}{\ell^2} \tmop{tr} ((Z^{\top} Z')^2) .
\end{eqnarray*}
Given this, and the above fact that $\mathbb{E} [f (Z)^{\top} f (Z')] = \tmop{tr} (Z^{\top} Z')$, we have
\[ \tmop{Var} (f (Z)^{\top} f (Z')) =\mathbb{E} [(f (Z)^{\top} f (Z'))^2] -\mathbb{E} [(f (Z)^{\top} f (Z'))]^2 = \frac{2}{\ell^2} \tmop{tr} ((Z^{\top} Z')^2), \]
and so we conclude that
\[ \tmop{Var} ([\kappa_f (Z, Z')]_{1, 2}) = \frac{2}{k^2 \ell^2} \tmop{tr} ((Z^{\top} Z')^2) . \]
Note that the entries of $Z^{\top} Z'$ will grow in proportion to $k$, so that $\tmop{tr} ((Z^{\top} Z')^2)$ will grow in proportion to $k^2$, and thus the variance {\tmem{won't}} shrink as $k$ grows. Moreover, by taking $Z$ and $Z'$ to both be matrices of 1's (so that their associated q values are 1 for each location), we have that $\tmop{tr} ((Z^{\top} Z')^2) = k^2 \ell^2$, and so the variance won't always shrink as $\ell$ grows either. Nonetheless, we can identify situations where it will shrink with $\ell$, as we will explain next.

Let $q_{\max}$ be the maximum dimension-normalized squared norm of the columns of $Z$ and $Z'$, or in other words, the maximal input q value to $f$. By the Cauchy-Schwarz inequality, the absolute values of the entries of $Z^{\top} Z'$ are upper bounded by $q_{\max}$. If $Z^{\top} Z'$ is diagonal it is easy to show that $\tmop{tr} ((Z^{\top} Z')^2) \leqslant k^2 \ell q^2_{\max}$, and so the variance will indeed shrink at a rate of $1 / \ell$. More generally, if $c_{\tmop{avg}}$ is the average absolute cosine similarity between columns of $Z$ and $Z'$ (i.e.~the average absolute input c value between $Z$ and $Z'$ across all pairs of locations), then the average absolute value of the off-diagonal entries of $Z^{\top} Z'$ will be bounded by $q_{\max} c_{\tmop{avg}}$. In this case it follows that $\tmop{tr} ((Z^{\top} Z')^2) \leqslant k^2 \ell q^2_{\max}  (1 + \ell c^2_{\tmop{avg}})$, and so the variance will shrink as $\ell$ grows, provided that $c_{\tmop{avg}}$ simultaneously goes to zero.

\section{Mathematical details for Section \ref{sec:when-solutions-exist}}

\subsection{Proof of Proposition \ref{prop:alpha-exist}}\label{app:alpha-exist-proof}

By Equations \ref{eqn:C-map-gen-derivative} and \ref{eqn:special-Q1-formula} we have
\begin{eqnarray*}
  C_f' (1) & = & \frac{1}{Q_f (1)} \mathbb{E}_{x \sim \mathcal{N} (0, 1)} [ \hat{\phi}' (x)^2]\\
  & = & \frac{\mathbb{E}_{x \sim \mathcal{N} (0, 1)} [ (\alpha \gamma \phi' (\alpha x + \beta))^2]}{\gamma^2 \tmop{Var}_{x \sim \mathcal{N} (0, 1)}  [\phi (\alpha x + \beta) ]}\\
  & = & \frac{\mathbb{E}_{x \sim \mathcal{N} (0, 1)} [\phi' (\alpha x + \beta)^2]}{\frac{1}{\alpha^2} \tmop{Var}_{x \sim \mathcal{N} (0, 1)}  [\phi (\alpha x + \beta) ]} .
\end{eqnarray*}
Taking $\alpha \rightarrow 0$ in the numerator clearly gives $\phi' (\beta)^2$. To handle the denominator, we will make use of the ``delta method'' from statistics (which is derived using a Taylor series argument), which says that if $\phi' (\alpha x + \beta)$ is a continuous with respect to $x$, $\phi' (\beta) \neq 0$, and $\frac{1}{\alpha}  (\alpha x - 0) = x$ is distributed as $\mathcal{N} (0, 1)$, then $\frac{1}{\alpha}  (\phi (\alpha x + \beta) - \phi (\beta))$ converges in distribution to $\mathcal{N} (0, \phi' (\beta)^2)$ as $\alpha \rightarrow 0$.

It thus follows that
\[ \phi' (\beta)^2 = \lim_{\alpha \rightarrow 0} \tmop{Var}_{x \sim \mathcal{N} (0, 1)}  \left[ \frac{1}{\alpha}  (\phi (\alpha x + \beta) - \phi (\beta)) \right] = \lim_{\alpha \rightarrow 0} \frac{1}{\alpha^2} \tmop{Var}_{x \sim \mathcal{N} (0, 1)}  [\phi (\alpha x + \beta) ], \]
and so we conclude that $\lim_{\alpha \rightarrow 0} \: C_f' (1) = 1$.

\section{Mathematical details for Section \ref{sec:max-pooling}}\label{app:max-pool}

In this section we will argue that a max pooling layer $f$ approximately preserves its input q values, under the condition the vectors within a given patch of its input feature map always fall into two tight clusters.

To simplify the argument, we will assume that these clusters have zero variance (or in other words that there are only two distinct input vectors in each patch), which means that our conclusions will only hold \tmtextit{approximately}. In addition, we will assume uniform input q values to $f$ (which is guaranteed by DKS), and that $f$ is directly preceded by a convolutional layer $g$ initialized with a Gaussian Delta or Gaussian fan-in scheme.

To begin, we observe that $f \left( \sqrt{q} Z \right) = \sqrt{q} f (Z)$ for all $q \geqslant 0$, and thus we may assume without loss of generality that the input q values are 1. We may also assume without loss of generality that the number of locations in each patch is two, since extra vectors that are duplicates of the first two will not affect the maximum.

Consider a single patch in $f$'s input. For each channel $i$, we denote by $x^{(i)} = \left[\begin{array}{cc}
  x_1^{(i)} & x_2^{(i)}
\end{array}\right]$ the two inputs in said patch for that channel. When conditioned on the input to $g$, we have that the $x^{(i)}$ are iid Gaussian vectors with mean zero and variance matrix $\Sigma = \left[\begin{array}{cc}
  1 & c\\
  c & 1
\end{array}\right]$ for some $c \in [- 1, 1]$. This follows from the fact the $x^{(i)}$'s are linear combinations of iid mean zero Gaussian random variables (the weights), and that the $g$'s output q values are 1 by assumption.

The dimension-normalized squared norm of $f$'s output vector associated with this patch (which is what the corresponding output q value approximates) is given by
\[ s_k = \frac{1}{k}  \sum_{i = 1}^k y_i^2, \]
where $y_i = \max \{ x_1^{(i)}, x_2^{(i)} \}$, and where $k$ is the number of channels (which is the same both for $f$'s input and output). So, to argue that $f$'s output q value is 1, it suffices to show that the mean of $s_k$ is indeed 1, and that it concentrates around this mean as $k$ grows.

To compute the mean we will make use of the following result from \citet{nadarajah2008exact}:

\begin{lemma}[Adapted from \citet{nadarajah2008exact}]
  Suppose $u_1$ and $u_2$ are Gaussian random variables wth means $\mu_1$ and $\mu_2$, variances $\sigma_1^2$ and $\sigma_2^2$, and correlation coefficient $\rho$. Then we have
  \[ \mathbb{E} [\max \{ u_1, u_2 \}^2] = (\sigma_1^2 + \mu_1^2) \Phi \left( \frac{\mu_1 - \mu_2}{\theta} \right) + (\sigma_2^2 + \mu_2^2) \Phi \left( \frac{\mu_2 - \mu_1}{\theta} \right) + (\mu_1 + \mu_2) \theta \phi \left( \frac{\mu_1 - \mu_2}{\theta} \right), \]
  where $\phi$ and $\Phi$ are the pdf and cdf of the standard normal distribution, and where
  \[ \theta = \sqrt{\sigma_1^2 + \sigma_2^2 - 2 \rho \sigma_1 \sigma_2} . \]
\end{lemma}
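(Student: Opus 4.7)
The plan is to decompose $\max\{u_1,u_2\}^2$ using indicator functions, apply a change of variables to isolate the event $\{u_1 > u_2\}$, and then use conditional Gaussian moments combined with a small catalogue of truncated Gaussian integrals. Specifically, I would begin from the identity
\[
\max\{u_1,u_2\}^2 \;=\; u_1^2 \, \mathbf{1}[u_1 \geq u_2] \,+\, u_2^2\, \mathbf{1}[u_1 < u_2],
\]
and set $v = u_1 - u_2$, noting that $v \sim \mathcal{N}(\mu_1-\mu_2, \theta^2)$ with $\theta^2 = \sigma_1^2 + \sigma_2^2 - 2\rho\sigma_1\sigma_2$, so that the two events become $\{v \geq 0\}$ and $\{v < 0\}$ respectively.

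Next, I would exploit joint Gaussianity of $(u_1,v)$ to write the conditional distribution of $u_1$ given $v$. A direct covariance computation yields $\mathrm{Cov}(u_1,v) = \sigma_1^2 - \rho\sigma_1\sigma_2$, and hence
\[
\mathbb{E}[u_1\mid v] \;=\; \mu_1 + \frac{\sigma_1^2 - \rho\sigma_1\sigma_2}{\theta^2}\bigl(v-(\mu_1-\mu_2)\bigr),
\qquad
\mathrm{Var}(u_1\mid v) \;=\; \sigma_1^2 - \frac{(\sigma_1^2 - \rho\sigma_1\sigma_2)^2}{\theta^2}.
\]
Using $\mathbb{E}[u_1^2\mid v] = \mathbb{E}[u_1\mid v]^2 + \mathrm{Var}(u_1\mid v)$ and the tower property, $\mathbb{E}[u_1^2 \mathbf{1}[v\geq 0]]$ reduces to a polynomial in $v$ integrated against the density of $v$ truncated to $[0,\infty)$. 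The required moments $\mathbb{E}[v^k \mathbf{1}[v\geq 0]]$ for $k=0,1,2$ can be expressed in closed form via $\Phi((\mu_1-\mu_2)/\theta)$ and $\phi((\mu_1-\mu_2)/\theta)$ using standard Mills-ratio-type formulas for the normal distribution. An analogous calculation gives $\mathbb{E}[u_2^2 \mathbf{1}[v<0]]$ by swapping the roles of the two variables (or equivalently by using $\Phi((\mu_2-\mu_1)/\theta) = 1 - \Phi((\mu_1-\mu_2)/\theta)$ and $\phi((\mu_2-\mu_1)/\theta) = \phi((\mu_1-\mu_2)/\theta)$).

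The final step is to sum the two contributions and simplify. I expect the main obstacle to be purely algebraic: the intermediate expressions contain a large number of cross-terms mixing $\sigma_1^2,\sigma_2^2,\rho\sigma_1\sigma_2,\mu_i$ and $\theta^2$, and most of them must cancel or combine to produce the clean form $(\sigma_1^2+\mu_1^2)\Phi(\cdot) + (\sigma_2^2+\mu_2^2)\Phi(-\cdot) + (\mu_1+\mu_2)\theta\phi(\cdot)$. The cancellation is driven by two facts worth isolating early to organise the bookkeeping: (i) the regression coefficient $(\sigma_1^2-\rho\sigma_1\sigma_2)/\theta^2$ together with its counterpart $(\sigma_2^2-\rho\sigma_1\sigma_2)/\theta^2$ sums to $1$, and (ii) $\Phi((\mu_1-\mu_2)/\theta) + \Phi((\mu_2-\mu_1)/\theta) = 1$, so constant-in-$v$ contributions recombine into weighted sums of the two $\Phi$ terms. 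With those identities kept in mind, the coefficient of each $\Phi(\pm(\mu_1-\mu_2)/\theta)$ collapses to $\sigma_i^2 + \mu_i^2$, and the $\phi$-terms generated by the first-moment integrals combine linearly into the single $(\mu_1+\mu_2)\theta\phi(\cdot)$ term.
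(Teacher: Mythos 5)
Your proposal is correct, and I verified that the algebra closes exactly as you predict: writing $b_i = (\sigma_i^2 - \rho\sigma_1\sigma_2)/\theta^2$ so that $b_1 + b_2 = 1$, and $a = (\mu_1-\mu_2)/\theta$, the conditional-moment computation gives $\mathbb{E}[u_1^2 \mathbf{1}[v \geq 0]] = (\mu_1^2+\sigma_1^2)\Phi(a) + (2\mu_1 b_1\theta - a b_1^2\theta^2)\phi(a)$ and the mirror term for $u_2$, and the $\phi$-coefficient collapses via $2\mu_1 b_1 + 2\mu_2 b_2 + (\mu_1-\mu_2)(b_2-b_1) = (\mu_1+\mu_2)(b_1+b_2) = \mu_1+\mu_2$, recovering the stated formula. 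Note, however, that the paper itself contains no proof of this lemma at all: it is imported (``adapted'') from \citet{nadarajah2008exact}, whose derivation goes through first computing the explicit density of $\max\{u_1,u_2\}$ (a sum of two Gaussian-times-$\Phi$, skew-normal-type terms) and then integrating that density against $x^2$. Your route is genuinely different and arguably more economical: by conditioning on $v = u_1 - u_2$ and reducing everything to truncated standard-Gaussian moments $\mathbb{E}[Z^k\mathbf{1}[Z \geq -a]]$ for $k = 0,1,2$, you never need the density of the maximum, at the cost of the cross-term bookkeeping you correctly anticipate. One small point to make explicit in a final write-up: your opening decomposition treats the tie event $\{u_1 = u_2\}$ as negligible, which requires $\theta > 0$; the degenerate case $\theta = 0$ (relevant to the paper's application of the lemma at correlation $c = 1$, where $u_1 = u_2$ almost surely) must be handled separately, e.g.\ by direct evaluation or by continuity of both sides in $\rho$.
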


In our case we have $\mu_1 = \mu_2 = 0$, $\sigma_1^2 = \sigma_2^2 = 1$, and $\rho = c$. Substituting these into the above expression yields
\[ \mathbb{E} [y_i^2] = 2 \Phi (0) = 1. \]
As $s_k$ is the average of the $y_i^2$'s, it thus follows that $\mathbb{E} [s_k] = 1$. It remains to show that $s_k$ concentrates around its mean as $k$ grows.

Observe that $y_i^4 = \max \{ x_1^{(i)}, x_2^{(i)} \}^4 \leqslant (x_1^{(i)})^4 + (x_2^{(i)})^4$. We thus have
\[ \tmop{Var} (y_i^2) \leqslant \mathbb{E} [y_i^4] \leqslant \mathbb{E} [(x_1^{(i)})^4] +\mathbb{E} [(x_2^{(i)})^4] = 3 + 3 = 6, \]
where we have used the fact that $\mathbb{E}_{u \sim \mathcal{N} (0, 1)} [u^4] = 3$.

Because the $y_i$'s are independent we have $\tmop{Var} (s_k) = (1 / k^2) 6 k = 6 / k$, and so the variance shrinks as $k$ grows. Concentration of $s_k$ around its mean then follows by Chebyshev's inequality. (Note that one could possibly obtain a faster rate than Chebyshev's inequality gives by arguing that the $y_i^2$'s are sub-exponential random variables, and then applying concentration bounds for averages of such variables.)

\section{Mathematical details for Section \ref{sec:NTK-analysis}}

\subsection{Estimating per-layer NTK matrices for deep nets with degenerate C maps}\label{app:est-per-layer-NTK-mat}

In this section we will estimate the per-layer NTK matrices $K_i$ for each of the different cases described in Section \ref{sec:NTK-degen-C-map}. For the sake of simplicity we will argue in a semi-rigorous manner, employing fuzzy notions like ``very large'', ``small'', ``(not) approximately equal'', ``not too close'', ``reasonably smooth'', etc. Note that in infinite depth limit these fuzzy notions all become precise, and in particular, ``approximately equal'' becomes ``equal'', ``reasonably smooth'' becomes ``smooth'', etc. Note that all of our conclusions from this analysis have been verified by our numerical studies.

Let $c_0 \equiv x^{\top} x' / d_0$ for some pair of inputs $x$ and $x'$ taken from the training set. Note that by assumption we have $c_0 \not\approx \pm 1$.

Following the notation of Section \ref{sec:NTK-via-C-map}, for the $i$-th combined layer $f_i$ of the network $f$ we will denote by $g_i$ the subnetwork that maps $f$'s input to the input of $f_i$, and by $h_i$ the subnetwork that maps the input of $f_i$ to $f$'s final output.

Suppose that $f_i$ is early in the network, so that $i$ is small and $D - i$ is large. This means that $g_i$ is a shallow subnetwork of $f$, and so $C_{g_i}$ is well-behaved, while $h_i$ is a deep subnetwork, and so $C_{h_i}$ is degenerate. Because $C_{g_i}$ is well-behaved it is reasonably smooth (e.g.~by Theorem \ref{thm:deviation-bound}), and so $C_{g_i} (c_0) \not\approx \pm 1$ (since $c_0 \not\approx \pm 1$). Then since $C_{h_i}$ is degenerate, this implies by Proposition \ref{prop:degen-deriv-bound} that $C'_{h_i} (C_{g_i} (c_0)) \approx 0$. Meanwhile, we trivially have that $C'_{h_i} (C_{g_i} (1)) = C'_{h_i} (1)$ (since $C_{g_i} (1) = 1$). Thus, $\Theta_i (x, x') = C_{g_i} (c_0) C'_{h_i} (C_{g_i} (c_0)) \approx 0$ by Equation \ref{eqn:NTK-elegant}, and also $\Theta_i (x, x) = C_{g_i} (1) C'_{h_i} (C_{g_i} (1)) = 1 \cdot C'_{h_i} (1) = \mathcal{C}' (1)^{D - i}$. Since $x$ and $x'$ are general distinct inputs from the training set we thus have that $K_i \approx \mathcal{C}' (1)^{D - i} I$. And because $D - i$ is large this will be very small when $\mathcal{C}' (1) < 1$, very large when $\mathcal{C}' (1) > 1$, and equal to the identity matrix when $\mathcal{C}' (1) = 1$.

Now suppose $f_i$ is a layer later in the network, so that $i$ is large and $D - i$ is small. This means that $g_i$ is a deep subnetwork of $f$, and so $C_{g_i}$ is degenerate, while $h_i$ is a shallow subnetwork, and so $C_{h_i}$ is well-behaved. Because $C_{g_i}$ is degenerate we have $C_{g_i} (c_0) \approx \fixpointofcmap$ (as $c_0 \not\approx \pm 1$ by assumption). Since $C_{h_i}$ is well-behaved it is reasonably smooth, and thus $C_{h_i}' (C_{g_i} (c_0)) \approx C_{h_i}' (\fixpointofcmap)$. By Equation \ref{eqn:NTK-elegant} we therefore have that $\Theta_i (x, x') = C_{g_i} (c_0) C'_{h_i} (C_{g_i} (c_0)) \approx \fixpointofcmap C_{h_i}' (\fixpointofcmap)$, and $\Theta_i (x, x) = \mathcal{C}' (1)^{D - i}$ as in the previous case. Since $x$ and $x'$ are distinct general inputs from the training set we thus have that $K_i \approx \mathcal{C}' (1)^{D - i} I + \fixpointofcmap C_{h_i}' (\fixpointofcmap)  (E - I) = \mathcal{C}' (1)^{D - i} I + \fixpointofcmap  \mathcal{C}' (\fixpointofcmap)^{D - i}  (E - I)$, where $E$ denotes the matrix of 1's and we have used the fact that $\fixpointofcmap$ is a fixed point of $\mathcal{C}$ to get that $C_{h_i}' (\fixpointofcmap) = \mathcal{C}' (\fixpointofcmap)^{D - i}$. If $\fixpointofcmap = 1$, then the estimate for $K_i$ simplifies to $K_i \approx \mathcal{C}' (1)^{D - i} E$.

Finally, suppose $f_i$ is a layer in the middle of the network, so that both $i$ and $D - i$ are large. This means that both $g_i$ and $h_i$ are deep subnetworks of $f$, and so $C_{g_i}$ and $C_{h_i}$ are degenerate. Because $C_{g_i}$ is degenerate we have $C_{g_i} (c_0) \approx \fixpointofcmap$ (as $c_0 \not\approx \pm 1$ by assumption). Thus, by Equation \ref{eqn:NTK-elegant} we have that $\Theta_i (x, x') = C_{g_i} (c_0) C'_{h_i} (C_{g_i} (c_0)) \approx \fixpointofcmap C_{h_i}' (C_{g_i} (c_0))$, and $\Theta_i (x, x) = \mathcal{C}' (1)^{D - i}$ as in the previous case. There are three scenarios to consider.

If $0 \leqslant \fixpointofcmap < 1$ and $\mathcal{C}' (1) > 1$, then since $C_{g_i} (c_0) \approx \fixpointofcmap \not\approx 1$ by assumption and $C_{h_i}' (c) \approx 0$ for all $c \not\approx \pm 1$ by Proposition \ref{prop:degen-deriv-bound}, it follows that $C_{h_i}' (C_{g_i} (c_0)) \approx 0$. So in this case we have $K_i \approx \mathcal{C}' (1)^{D - i} I$, which will be very large since $\mathcal{C}' (1) > 1$ and $D - i$ is large.

If $\fixpointofcmap = 1$ and $\mathcal{C}' (1) < 1$, then since as $D - i$ is large and $C_{h_i}'$ is non-negative and non-decreasing (by Section \ref{sec:pd-functions}) we have $0 \leqslant C_{h_i}' (C_{g_i} (c_0)) \leqslant C_{h_i}' (1) = \mathcal{C}' (1)^{D - i} \approx 0$, and thus $C_{h_i}' (C_{g_i} (c_0)) \approx 0$. So in this scenario we have $K_i \approx \mathcal{C}' (1)^{D - i} I \approx 0$ since $\mathcal{C}' (1) < 1$ and $D - i$ is large.

If $\fixpointofcmap = 1$ and $\mathcal{C}' (1) = 1$, then we have $C_{g_i} (c_0) \approx \fixpointofcmap = 1$, but this doesn't help us estimate $C_{h_i}' (C_{g_i} (c_0))$, since the value of \ $C_{h_i}' (c)$ may be {\tmem{highly}} sensitive to the distance of $c$ from 1 (because $C_{h_i}$ is degenerate and can thus have extreme behavior near $c = 1$). Since we have that $C_{h_i}' (1) = \mathcal{C}' (1)^{D - i} = 1$, and that $C_{h_i}'$ is non-negative and non-decreasing on $[0, 1]$ (which contains $C_{g_i} (c_0)$), we do at least know that the entries of $K_i$ are bounded between 0 and 1, and that the diagonal entries are 1. From numerical studies we conducted of the C map of deep RELU networks (which by Section \ref{sec:relu-network-degen} have $\fixpointofcmap = 1$ and $\mathcal{C}' (1) = 1$) we observe that $K_i \approx I + \alpha_i  (E - I)$, where $0 = \alpha_1 \leqslant \alpha_2 \leqslant \cdots \leqslant \alpha_D = 1$ are constants, and we conjecture that this holds in general. \james{Can someone improve this? I feel like it should be possible to derive an estimate in the limit, but it's not obvious. In the deep RELU net example it seems like the different values of $C_{g_i} (c)$ for $c < 0.999$ are closer to each other than they are to the fixed point 1. I don't really understand how that happens, but perhaps the very slow convergence to the fixed point that you get when $\mathcal{C}' (1) = 1$ results in this effect.}

\subsection{Estimating the overall NTK matrix}\label{app:est-overall-NTK-mat}

Having computed an estimate of the per-layer NTK matrix $K_i$ in each case for $\fixpointofcmap$ and $i$, it remains to estimate the full NTK matrix for each $\fixpointofcmap$.

When $\fixpointofcmap = 1$ and $\mathcal{C}' (1) < 1$ we have $K_i \approx 0$ for early layers and middle layers, and $K_i \approx \mathcal{C}' (1)^{D - i} E$ for later layers. If $L$ is the number of later layers we thus have
\[ K = \sum_{i = 1}^D K_i \approx \sum_{i = D - L + 1}^D \mathcal{C}' (1)^{D - i} E = \sum_{i = 0}^{L - 1} \mathcal{C}' (1)^i E = \frac{1 - \mathcal{C}' (1)^L}{1 - \mathcal{C}' (1)} E \approx \frac{1}{1 - \mathcal{C}' (1)} E. \]

When $0 \leqslant \fixpointofcmap < 1$ and $\mathcal{C}' (1) > 1$ we have $K_i \approx \mathcal{C}' (1)^{D - i} I$ for early and middle layers, and $K_i \approx \mathcal{C}' (1)^{D - i} I + \fixpointofcmap  \mathcal{C}' (\fixpointofcmap)^{D - i}  (E - I)$ for later layers. Moreover, since $\fixpointofcmap$ is an attractive fixed point of $\mathcal{C}$ we have $\mathcal{C}' (\fixpointofcmap) < 1$. If $L$ is the number of later layers we thus have
\begin{eqnarray*}
  K & = & \sum_{i = 1}^D K_i\\
  & \approx & \sum_{i = 1}^D \mathcal{C}' (1)^{D - i} I + \sum_{i = D - L + 1}^D \fixpointofcmap  \mathcal{C}' (\fixpointofcmap)^{D - i}  (E - I)\\
  & = & \sum_{i = 0}^{D - 1} \mathcal{C}' (1)^i I + \sum_{i = 0}^{L - 1} \fixpointofcmap  \mathcal{C}' (\fixpointofcmap)^i  (E - I)\\
  & = & \frac{1 - \mathcal{C}' (1)^D}{1 - \mathcal{C}' (1)} I + \fixpointofcmap  \frac{1 - \mathcal{C}' (\fixpointofcmap)^L}{1 - \mathcal{C}' (\fixpointofcmap)}  (E - I)\\
  & \approx & \frac{1 - \mathcal{C}' (1)^D}{1 - \mathcal{C}' (1)} I + \frac{\fixpointofcmap}{1 - \mathcal{C}' (\fixpointofcmap)}  (E - I) .
\end{eqnarray*}

Finally, when $\fixpointofcmap = 1$ and $\mathcal{C}' (1) < 1$, we have $K_i \approx I$ for early layers, $K_i \approx I + \alpha_i  (E - I)$ for some $0 \leqslant \alpha_i < 1$ for middle layers (which is only a conjectured formula), and $K_i \approx E$ for later layers. Or in general, we have $K_i \approx I + \alpha_i  (E - I)$ for some $0 \leqslant \alpha_i \leqslant 1$ for all layers. This gives
\[ K = \sum_{i = 1}^D K_i \approx \sum_{i = 1}^D I + \alpha_i  (E - I) = D (I + \bar{\alpha}  (E - I)), \]
where $\bar{\alpha} = \frac{1}{D}  \sum_{i = 1}^D \alpha_i$ (so that $0 \leqslant \bar{\alpha} \leqslant 1$). From our empirical studies of the C map of deep RELU networks we observe that $\bar{\alpha} = 1 / 4$, and for some other example C maps we observe $\bar{\alpha} = 1 / 3$ (the latter of which is consistent with the estimate given in \citet{xiao2020disentangling}).

\subsection{Proof of Theorem \ref{thm:NTK-our-approach}}\label{app:NTK-our-approach-proof}

Let $g_i$ and $h_i$ for $i = 1, 2, \ldots, D$ be defined as in Section \ref{sec:NTK-via-C-map} for the network in question, and let $x$ and $x'$ be two input data vectors.

Under DKS we have $C_{g_i} (0) = C_{h_i} (0) = 0$, $C'_{g_i} (1) \leqslant \zeta^{(i - 1) / (D - 1)}$, and $C'_{h_i} (1) \leqslant \zeta^{(D - i) / (D - 1)}$. By Theorem \ref{thm:deviation-bound} this implies that
\[ | C_{g_i} (c_0) - c_0 | \leqslant 2 (\zeta^{(i - 1) / (D - 1)} - 1) \]
and
\[ | C_{h_i}' (C_{g_i} (c_0)) - 1 | \leqslant 3 (\zeta^{(D - i) / (D - 1)} - 1), \]
where $c_0 = x^{\top} x' / d_0$.

As $q_D = 1$ under DKS, it then follows from Equation \ref{eqn:NTK-elegant} that
\begin{eqnarray*}
  \left| \Theta_i (x, x') - \frac{1}{d_0} x^{\top} x' \right| & = & | C_{g_i} (c_0) C'_{h_i} (C_{g_i} (c_0)) - c_0 |\\
  & \leqslant & | C'_{h_i} (C_{g_i} (c_0)) - 1 |  (| c_0 | + | C_{g_i} (c_0) - c_0 |) + | C_{g_i} (c_0) - c_0 |\\
  & \leqslant & 3 (\zeta^{(D - i) / (D - 1)} - 1)  (1 + 2 (\zeta^{(i - 1) / (D - 1)} - 1)) + 2 (\zeta^{(i - 1) / (D - 1)} - 1)\\
  & \leqslant & 6 (\zeta^{(D - i) / (D - 1)} \zeta^{(i - 1) / (D - 1)} - 1) + 3 (\zeta^{(D - i) / (D - 1)} - 1)\\
  &  & + \hspace{1em} 2 (\zeta^{(i - 1) / (D - 1)} - 1)\\
  & = & 6 (\zeta - 1) + 3 (\zeta^{(D - i) / (D - 1)} - 1) + 2 (\zeta^{(i - 1) / (D - 1)} - 1)\\
  & \leqslant & 11 (\zeta - 1),
\end{eqnarray*}
where we have used the general facts that $| ab - c | \leqslant | b - 1 |  (| c | + | a - c |) + | a - c |$ on the second line, and $| a - 1 |  | b - 1 | \leqslant | ab - 1 |$ for $a, b \geqslant 1$ on the third line.

\section{Path-weight analysis and its relationship to approximate kernel analysis}\label{app:path-weight-analysis}

Path-weight analysis is a method for analyzing deep fully-connected RELU networks developed in a series works \citep{balduzzi2015kickback, balduzzi2016deep, balduzzi2017shattered}. It is capable of approximating some of the same quantities computed by Q/C maps. Unlike those methods, it is not based on the ``propagation'' of anything through the layers of the network, but instead exploits the special structure of RELU networks to decompose their computation in terms of a collection of simple ``paths'' that are easier to analyze. In this section we will give a quick derivation of path-weight analysis, identify a possible issue with it, and then discuss how its predictions relate to those made by Q/C maps.

For simplicity, we will assume that the sub-network $f$ has only 1 output unit, and biases of zero. We will start by defining a \tmtextbf{\tmtextit{path}} as a sequence of (scalar-valued) units chosen from each layer of the network (which includes the input layer). (So for example, we might choose the 4-th unit from the input layer, the 2-nd unit from the next layer, etc). An \tmtextbf{\tmtextit{active path}} $p$ is one where every RELU unit in $p$ is \tmtextbf{\tmtextit{active}} in the sense of having a non-negative input value (i.e.~so that the RELU function is in its ``linear region''). We then define a \tmtextbf{\tmtextit{path-weight}} $W_p$ for a path $p$ as the product of the weights that connect the units of $p$ in $f$'s graph representation. Given these definitions, its not hard to see that $f (x) = \sum_p W_p x_p$, where $x_p$ is the entry of $x$ corresponding to the (single) input unit in $p$, and the sum is taken over all active paths in $f$.

Observe that the total number of paths $P$ is simply the product of the input dimensions for each fully-connected layer. Thus, if the weights are chosen according to a standard Gaussian fan-in initialization (so that they are iid with mean zero and variance $\sigma^2 / m$, where $m$ is the layer's input dimension) it follows that
\[ \mathbb{E} [W_p W_q] = \left\{ \begin{array}{ll}
     \sigma^{2 D} / P & \text{if } p = q\\
     0 & \text{otherwise}
   \end{array} \right., \]
where $D$ is the number of weights along $p$. Using the fact that the expected number of active paths is the same starting from any input unit (due to symmetry), we thus have that
\begin{equation}
  \mathbb{E} [f (x) f (x')] =\mathbb{E} \left[ \left( \sum_p W_p x_p \right)  \left( \sum_q W_q x'_q \right) \right] = \sigma^{2 D} \mathbb{E} \left[ \sum_{p \text{ co-active}} x_p x'_p \right] = \sigma^{2 D}  \frac{\mathbb{E} [A_{x, x'}]}{P}  \frac{1}{k} x^{\top} x', \label{eqn:path-analysis-kernel}
\end{equation}
where $A_{x, x'}$ is the number of paths that are active for both $x$ and $x'$ (which are called \tmtextbf{\tmtextit{co-active paths}}), and $k$ is the dimension of $x$. So, if one can estimate the expected fraction of paths which are co-active (i.e.~$\mathbb{E} [A_{x, x'}] / P$), then one can effectively estimate per-unit expected squared values (or products) under the weight distribution, which are precisely the quantities estimated by signal propagation (and thus Q/C maps, as discussed in Section \ref{sec:var/sig-prop-apk-relationship}).

In order to estimate expected numbers of active and co-active paths, \citet{balduzzi2017shattered} make the key approximating assumption that for any setting of the weights, exactly half of the RELU units in each nonlinear layer are active for a typical input, and exactly one quarter of these units are active (or co-active) for a pair of typical {\tmem{distinct}} inputs. Given this assumption, and that the number of nonlinear layers is $R$, the fraction of all possible paths ending at $y$ which are active for any $x$ is thus $(1 / 2)^R$, and the fraction which are co-active for any distinct $x$ and $x'$ is $(1 / 4)^R$.

While this assumption is clearly not true in general, one can possibly argue that the fraction of the units that are active or co-active in a given layer will concentrate around its expectation as the layer widths go to infinity. However, while the expected fraction is indeed 1/2 for active units (e.g.~because the input distribution to each unit is symmetric about 0), it won't in general be $1 / 4$ for co-active units. (For example, given two nearly identical inputs $x$ and $x'$, the fraction of co-active units will be much closer to $1 / 2$, especially in early layers.) Despite this potentially significant issue, \citet{balduzzi2017shattered} find that this assumption is a reasonable approximation in practice for certain well-behaved example networks with realistic inputs.

When $f$ is a standard feed-forward RELU networks with R nonlinear layers with $\sigma^2 = 2$ (which is \poscite{he2015delving} recommendation for RELU networks) and $D = R$, Equation \ref{eqn:path-analysis-kernel} thus gives $\mathbb{E} [f (x)^2] \approx \| x \|^2 / k$ and $\mathbb{E} [f (x) f (x')] \approx 1 / 2^R  (x^{\top} x') / k$ for $x \neq x'$. The former of these predictions agrees with Q/C maps in the sense that $f$ is equivalent to a network $g$ with $\sigma^2 = 1$ and normalized RELUs (given by multiplication of a RELU by $\sqrt{2}$), for which we have $Q_g (q) = q$. The latter prediction does not however agree, as we have by Section \ref{sec:relu-network-degen} that $C_g (c)$ rapidly approaches 1 instead of 0 for all $c$ as the depth grows. (This mismatch is likely due to the approximating assumption discussed above.)

As $f (x) = \sum_p W_p x_p$, the {\tmem{derivative}} of $f (x)$ with respect to some input unit is just the sum over path-weights for all active paths starting at that unit. Using this fact, it follows that $\mathbb{E} [f' (x)^{\top} f' (x')] = \sigma^{2 D} \mathbb{E} [A_{x, x'}] / P$ (using a derivation similar to Equation \ref{eqn:path-analysis-kernel}). For $f$ given as above, \citet{balduzzi2017shattered} use this identity to show that
\[ \mathbb{E} [f' (x)^{\top} f' (x')] \approx \left\{ \begin{array}{ll}
     1 & \text{if } x = x'\\
     1 / 2^R & \text{otherwise}
   \end{array} \right. \]
for standard feed-forward RELU networks with R nonlinear layers. They then argue that this decorrelation (or ``shattering'') of the input gradients will make the network difficult or impossible to successfully train. %(And they also estimate input gradient correlations for ResNet-like architectures, showing that they shrink with depth at a much slower rate.)

As we saw in Section \ref{sec:NTK-analysis}, Q/C map analysis can be used to compute the NTK matrix $K$ (and per-layer NTK matrices $K_i$ for $i = 1, 2, \ldots, D$), which is a matrix of estimates of the inner products between parameter gradients for different training inputs (assuming a squared error loss). For deep normalized RELU networks such as $g$ we are in the collapsing case with $\fixpointofcmap = 1$ and $\mathcal{C}' (1) = 1$ (by Section \ref{sec:relu-network-degen}), and so assuming that $D$ is large we have by Section \ref{sec:NTK-degen-C-map} that $K_i$ is approximately the identity for early layers, and approximately equal to a matrix of ones for later layers. This predicts that substantial optimization will only occur in early layers, and that this is unlikely to yield any significant generalization. Note that this prediction is somewhat different than the one made by \citet{balduzzi2017shattered} insofar as the training loss will in fact be minimized given sufficiently wide layers.

The per-layer NTK matrix for the first linear layer, denoted $K_1$, is closely related to the matrix of input gradients estimated above under path-weight analysis. Reassuringly, the predictions agree in the sense that they are both (close to) an identity matrix.

\section{Analyzing (nearly) standard ResNets using Q/C maps}\label{app:resnet-map-analysis}

As discussed in Section \ref{sec:resnet-related-discussion}, ResNets  %(which are described in detail in Section \ref{sec:standard-ResNet})
represent a very popular solution to the same problem that DKS is aimed at solving: how to construct a very deep network that can be trained with a gradient-based optimizer. It thus worth understanding how the effectiveness of ResNets can be explained within the framework of Q/C maps.

Since our analysis framework doesn't handle BN layers, we can't apply it directly to standard ResNets. As a compromise, we will instead analyze a ResNet which is modified to use Layer Normalization (LN) layers in their place, as these are handled within our framework, and perform a somewhat similar function. (See Section \ref{sec:layer-norm} for a discussion of normalization layers.) Note that Transformer models \citep{vaswani2017attention}, which also employ a residual structure, already use LN layers in place of BN layers, although BN layers remain the more popular option for convolutional residual networks.

In order to apply our Q/C map analysis to convolutional layers, we will assume the use of a Delta initialization (as opposed to a Gaussian fan-in initialization). And to simulate a weight variance of 2 as used in \poscite{he2015delving} initialization scheme we will use ``normalized RELU'' activations, which are obtained from standard RELU activation functions by multiplication of their input (or output) by $\sqrt{2}$.

The Q map for a normalized RELU layer is the identity function, which can be straightforwardly derived from Equation \ref{eqn:Q-map}. So, since each (normalized) RELU nonlinear layer is immediately preceded by an LN layer, we have that its output q value will always be 1 (as the output q value of a LN layer is always 1). It thus follows that the output q value of a residual branch will always be 1 (since affine layers preserve q values). Meanwhile, for non-transition blocks (i.e.~those with equal input and output channel dimensions), shortcut branches compute the identity function, so that their Q map is the identity. And for transition blocks, shortcut branches consist of an LN layer, a RELU layer, and an affine layer, so that their output q value is always 1. By Equation \ref{eqn:sum-q-formula}, the output q value of residual block is the sum of the q values for the two branches, which will therefore be 1 plus the block's input q value for non-transition blocks (mirroring \poscite{de2020batch} variance propagation analysis), and $1 + 1 = 2$ for transition blocks. From these we observations it follows that q values will grow as a sequence $1, 2, \ldots$ with each successive block, until a transition block is encountered, at which point the q value is reset to 2.

For the standard values 50, 101, and 152 of the ResNet-V2 ``depth'' parameter $D$, there is a sequence of $(D - 2) / 3$ residual blocks, 4 of which are transition blocks (which includes the first block). Thus, the sequence of input q values for the blocks are $1, 2, \ldots, q_1, q_1 + 1, 2, \ldots, q_2, q_2 + 1, 2, \ldots, q_3, q_3 + 1, 2, \ldots, q_4$, for some integers $q_i$ (where the input q values to the transition blocks are $q_1 + 1$, $q_2 + 1$, $q_3 + 1$, and $q_4 + 1$). For $D = 50$ we have $(q_1, q_2, q_3, q_4) = (3, 4, 6, 3)$, for $D = 101$ we have $(q_1, q_2, q_3, q_4) = (3, 4, 23, 3)$, and for $D = 152$ we have $(q_1, q_2, q_3, q_4) = (3, 8, 36, 3)$.

Let $\mathcal{C}$ denote the local C map for a RELU nonlinear layer, which is given in Equation \ref{eqn:C-map-RELU}, and notably doesn't depend on the input q value (which will always be the case for positively homogeneous activation functions). Further, let $B_q$ denote the C map for non-transition blocks with input q value $q$, and $T$ denote the C map for transition blocks (which doesn't depend on the input q value). From the above analysis we have
\[ C_f = \mathcal{C} \circ E_4 \circ E_3 \circ E_2 \circ E_1, \]
where
\[ E_i = B_{q_i} \circ B_{q_i - 1} \circ \cdots \circ B_3 \circ B_2 \circ T \quad \text{for} \quad i \in \{ 1, 2, 3, 4 \} . \]
As discussed in Section \ref{sec:layer-norm}, LN layers always output q values of 1. And because their inputs always comes directly from an affine layer in this network (or a sum over these), we have that their C maps are the identity function. Thus, by Equation \ref{eqn:sum-c-formula} it follows that
\begin{equation}
  B_q (c) = \frac{\mathcal{C} (\mathcal{C} (\mathcal{C} (c))) + qc}{1 + q} \quad \text{and} \quad T (c) = \frac{\mathcal{C} (\mathcal{C} (\mathcal{C} (c))) + \mathcal{C} (c)}{2} . \label{eqn:standard-Ai-and-B}
\end{equation}
Given these identities (and Equation \ref{eqn:C-map-RELU}), we can compute and plot $C_f$ for each possible $D$:
\begin{center}
\resizebox{4.5in}{3in}{\includegraphics{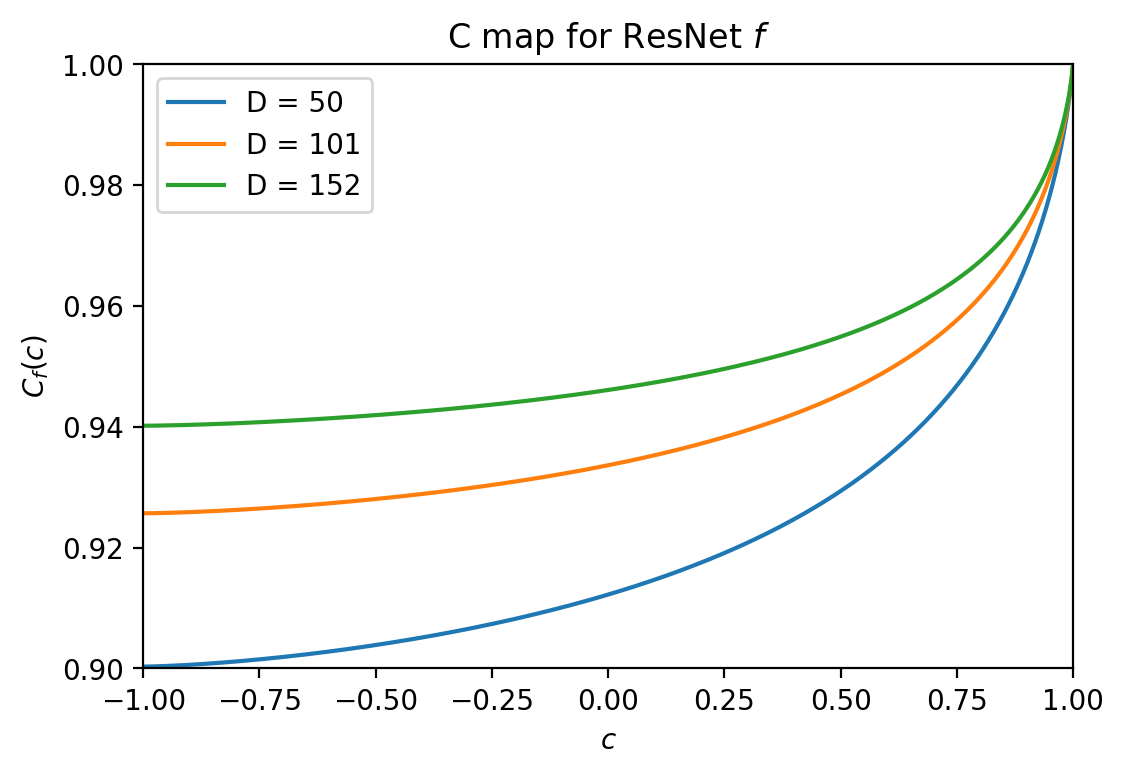}}
\end{center}
While there is some compression of the range of c values as depth increases in ResNets, it is {\tmem{much}} milder than what we see for standard deep RELU networks (e.g.~in the plots of Section \ref{sec:relu-network-degen}). For example, $[- 1, 1]$ is mapped to $[0.94, 1]$ for a ResNet with $D = 152$, versus $[0.996, 1]$ for a standard RELU network of depth 100. This observation is compatible the idea that ResNets are closer to linear/identity functions (which have identity C maps) than standard deep networks, at least at initialization time. %, and that this makes them easier to train.

It is also worth considering the C map behavior of a ResNet where the q values {\tmem{don't}} grow throughout the network. For example, instead of having simple sums at the end of each residual block, we could use normalized sums, with a weight of $1 / \sqrt{2}$ on both branches. Or, we could add an LN layer to the shortcut branch of every block. In either case, we have q values of 1 throughout the network, which leads to the following redefinition:
\[ B_q (c) = \frac{\mathcal{C} (\mathcal{C} (\mathcal{C} (c))) + c}{2} . \]
Intuitively, this definition places more weight onto the nonlinear contribution than we had previously.

Plotting $C_f$ in this scenario for different $D$'s yields the following:
\begin{center}
\resizebox{4.5in}{3in}{\includegraphics{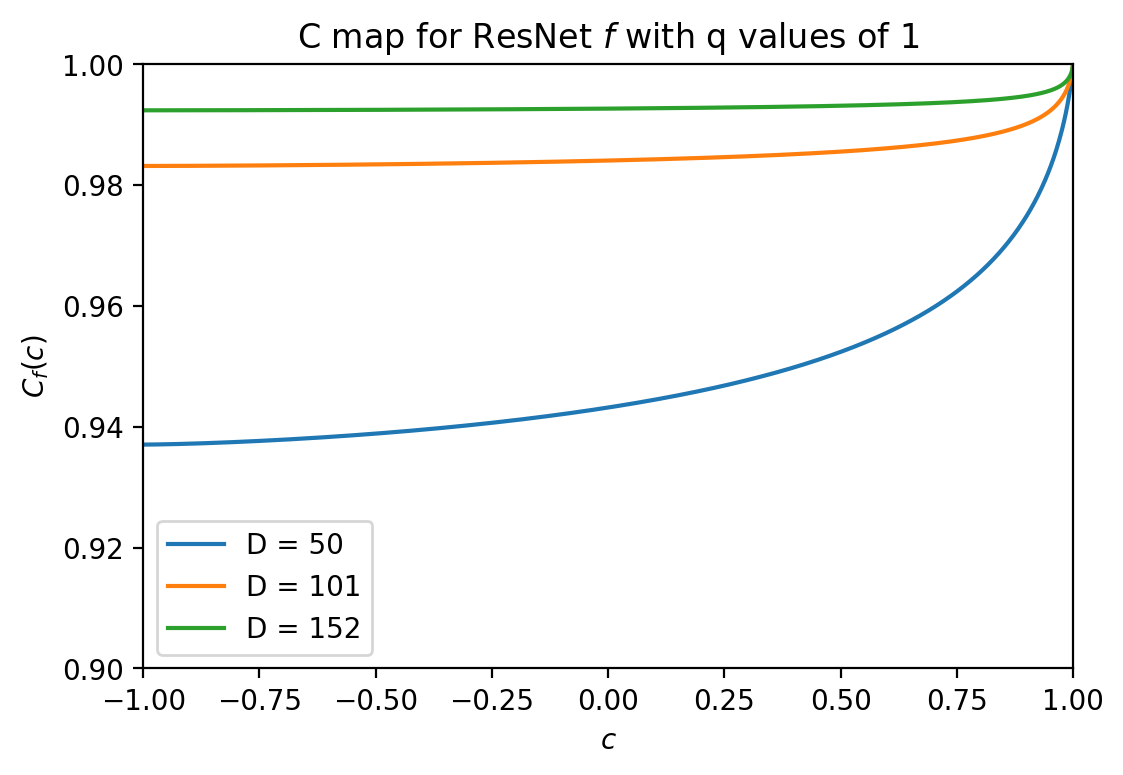}}
\end{center}
While less extreme than standard (non-residual) RELU networks, we still see significantly more compression than before, thus reinforcing the importance of growing q values for the trainability of ResNets.

It's worth noting one can achieve the same effect in a network with constant q values equal to 1 (such as the ones constructed with DKS) by careful choice of weights on the sums at the end of each residual block. For example, one can recover the original form of $B_q$ (seen in Equation \ref{eqn:standard-Ai-and-B}) by using a weight of $w = 1 / \sqrt{q + 1}$ for the residual branch of the corresponding block (and a weight of $\sqrt{1 - w^2}$ on the shortcut branch). Doing this for all blocks exactly recovers the C map of a standard ResNet, as neither $T$ nor $\mathcal{C}$ depend on the q values.

\section{Empirical evidence for the relationship between Q map derivatives and kernel approximation error} \label{app:Q-map-error}

\james{move this to later in the appendix?}

In this section we will provide empirical evidence for the relationship between Q map derivatives and kernel approximation error that we posited in Section \ref{sec:error-and-Qmaps}. To do this, we will examine the effect of changes to the local Q map conditions used in DKS on the accuracy of the predictions made by Q maps for an example network.

In particular, we will consider the skip-free BN-free modified ResNet used in our main experiments (from Section \ref{sec:main-experiments}) with the softplus activation function, and a depth parameter of 50. The local Q maps of this network's combined layers are equal to the same function $\mathcal{Q}$, and we will consider the effect of using values of $0.95$, $1.0$, and $1.01$ for $\mathcal{Q}'(1)$ in DKS's local map conditions. As we still have $\mathcal{Q}(1) = 1$ with this change, and the network's input q values are still 1 (due to our use of PLN), we thus have constant q values of 1 for all layers. We can empirically estimate the accuracy of this prediction by measuring how much $\|v\|^2 / \dim(v)$ deviates from 1 for location vectors $v$ from the network's feature maps (computed at initialization time). 

The following plots show these ``empirical q values", averaged across locations and 192 training examples, versus the layer index for which they are computed. Vertical lines indicate the standard deviation.

\begin{center}
\resizebox{4.5in}{3in}{\includegraphics{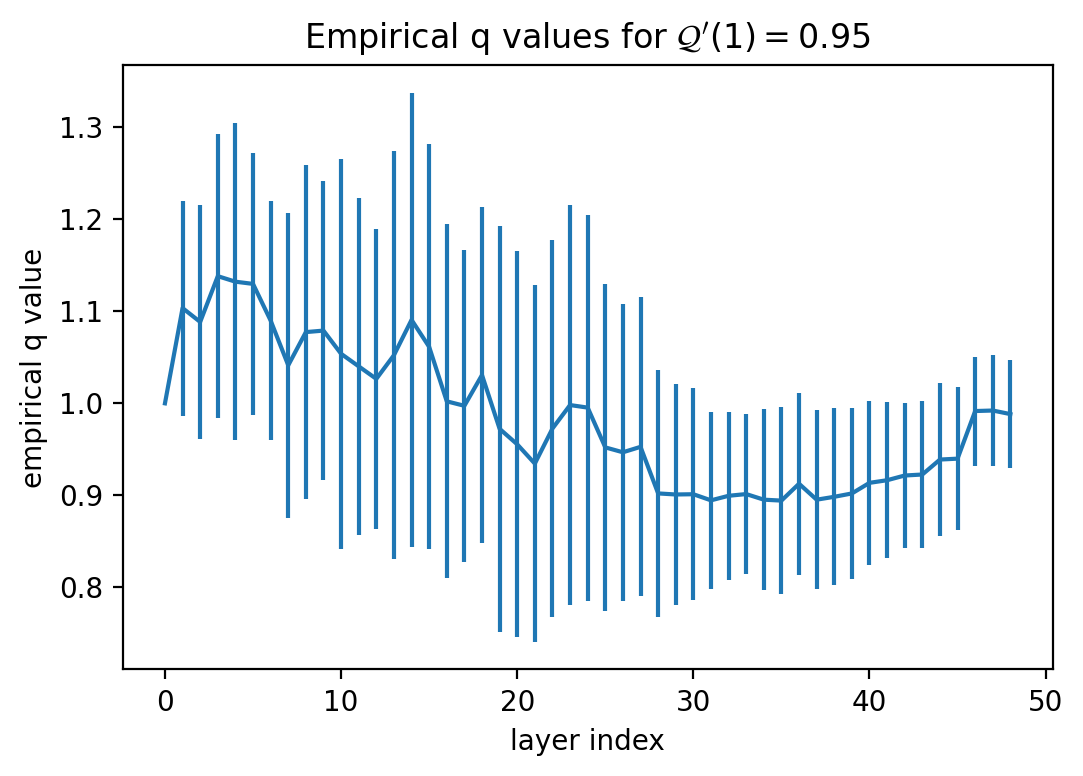}}
\end{center}

\begin{center}
\resizebox{4.5in}{3in}{\includegraphics{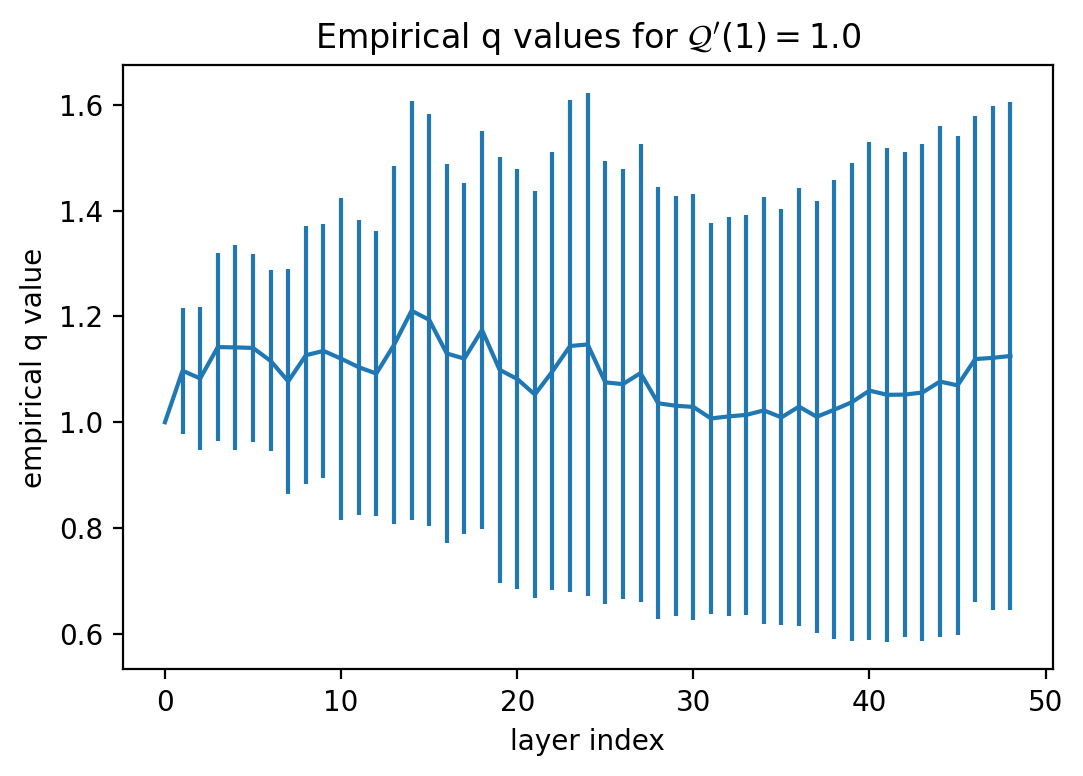}}
\end{center}

From these first two plots we can see that average empirical q value remains close to 1 for both the $\mathcal{Q}'(1) = 0.95$ and $\mathcal{Q}'(1) = 1.0$ cases. Although in the latter case we see higher variance, especially for deeper layers. In the next plot, we see that the empirical q values rapidly diverge from 1 when $\mathcal{Q}'(1) = 1.01$, thus confirming the intuitions given in Section \ref{sec:error-and-Qmaps}.

\begin{center}
\resizebox{4.5in}{3in}{\includegraphics{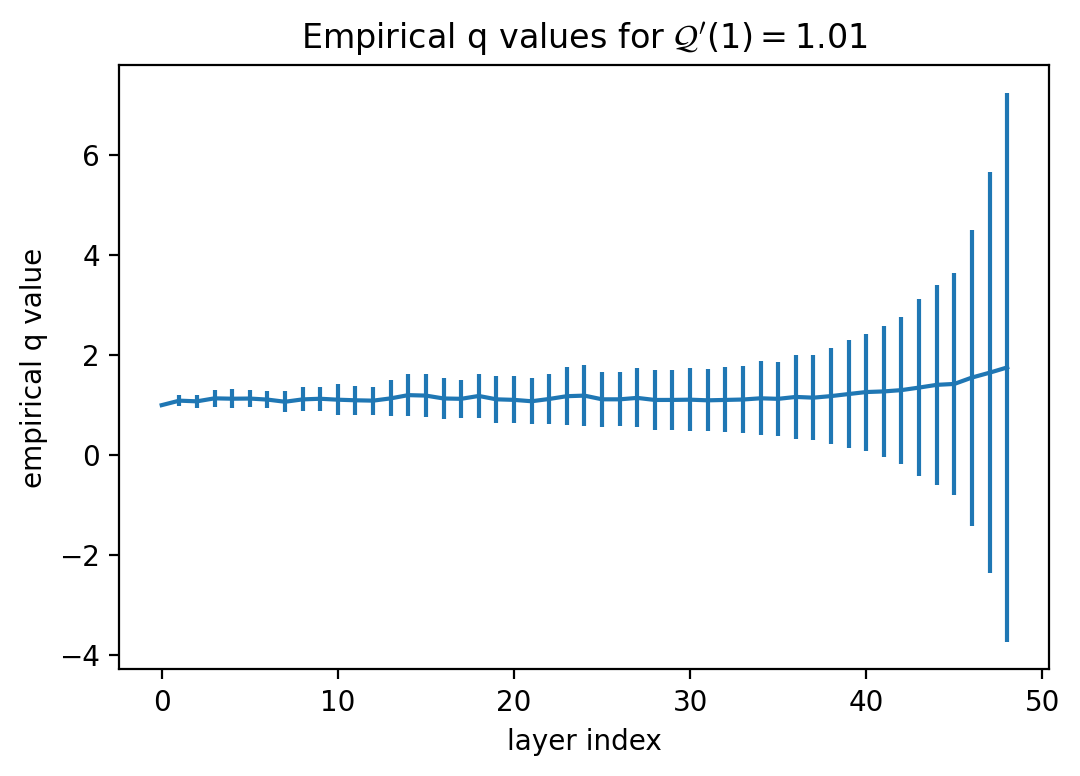}}
\end{center}

\section{Example learning rate schedules from FIRE PBT}\label{app:PBT-LR-example}

In this section we present the learning rate schedules that were found by FIRE PBT for our main Imagenet experiments from Section \ref{sec:main-experiments}.

\resizebox{0.85\columnwidth}{!}{\includegraphics{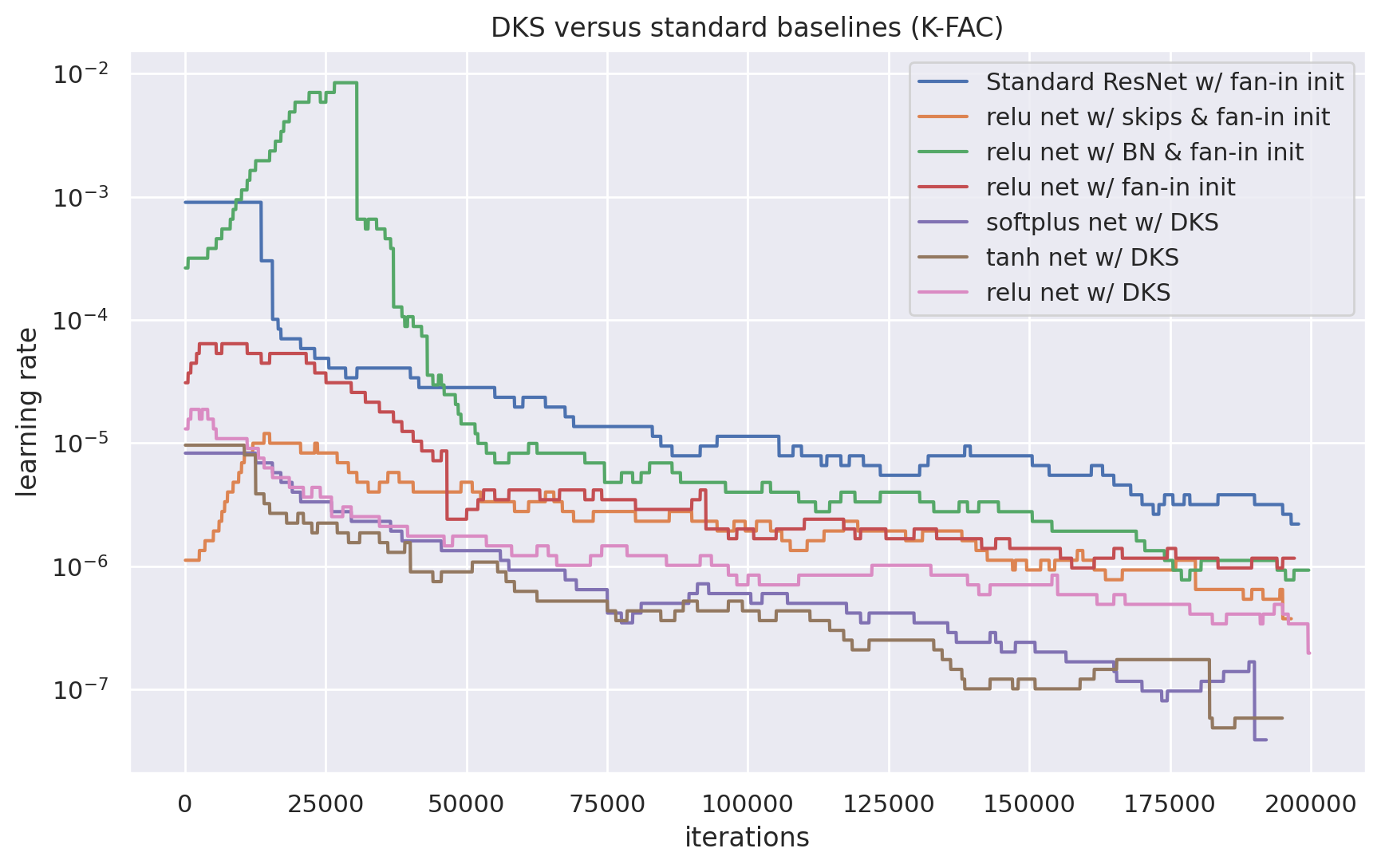}}

\resizebox{0.85\columnwidth}{!}{\includegraphics{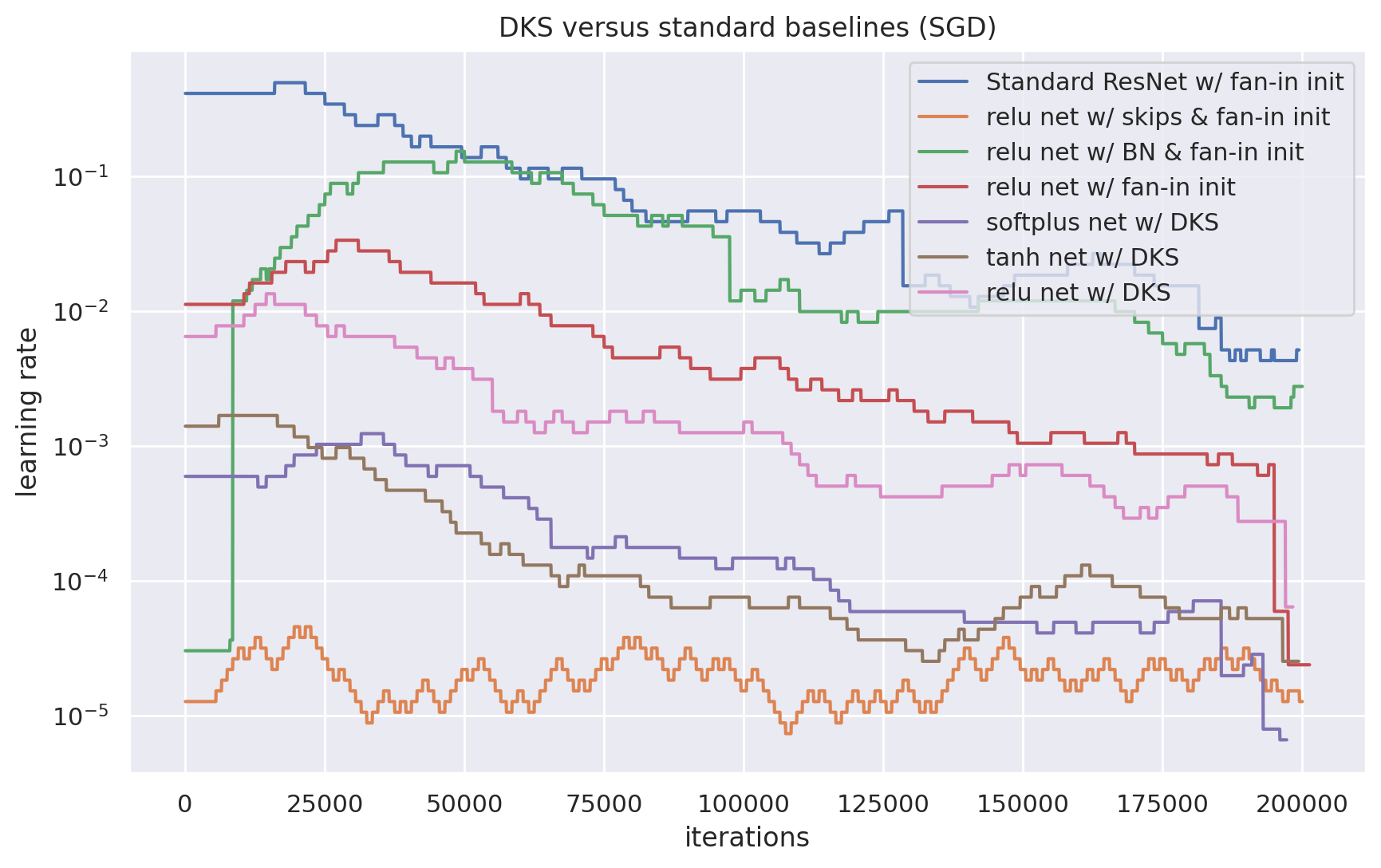}}

\section{Meta-parameter studies}\label{app:meta-param-experiments}

In this section we will experimentally study the effect of various training ``meta-parameters'' on the optimization and generalization performance of networks constructed with DKS. These will include the weight on the residual branch when using skip connections, DKS's global slope bound parameter $\zeta$, and the choice of optimizer.

Except when otherwise indicated, all experiments will use the same default settings (as stated in Section \ref{sec:default-experiment-settings}) as our main set of experiments.

\subsection{Sweeping residual weights}\label{app:residual-weight-sweep}

In this subsection we compare different values for the weights of the residual branches BN-free networks with skip connections constructed using DKS. To satisfy the condition that the branch sums at the end of each residual block are normalized, we set their weights to $w$ and $\sqrt{1 - w^2}$ for the residual and shortcut branches (respectively).

In addition to running experiments using the same weights for all blocks, we also tried setting the weights individually according to the recipe at the end of Appendix \ref{app:resnet-map-analysis}, so as to recover the C map of an (almost) standard ResNet. (Note that this requires a generalized version of the maximal slope functions given for our modified ResNets in Section \ref{sec:maximal-slope-mod-resnet}, but is otherwise a straightforward change.)

\

\resizebox{0.85\columnwidth}{!}{\includegraphics{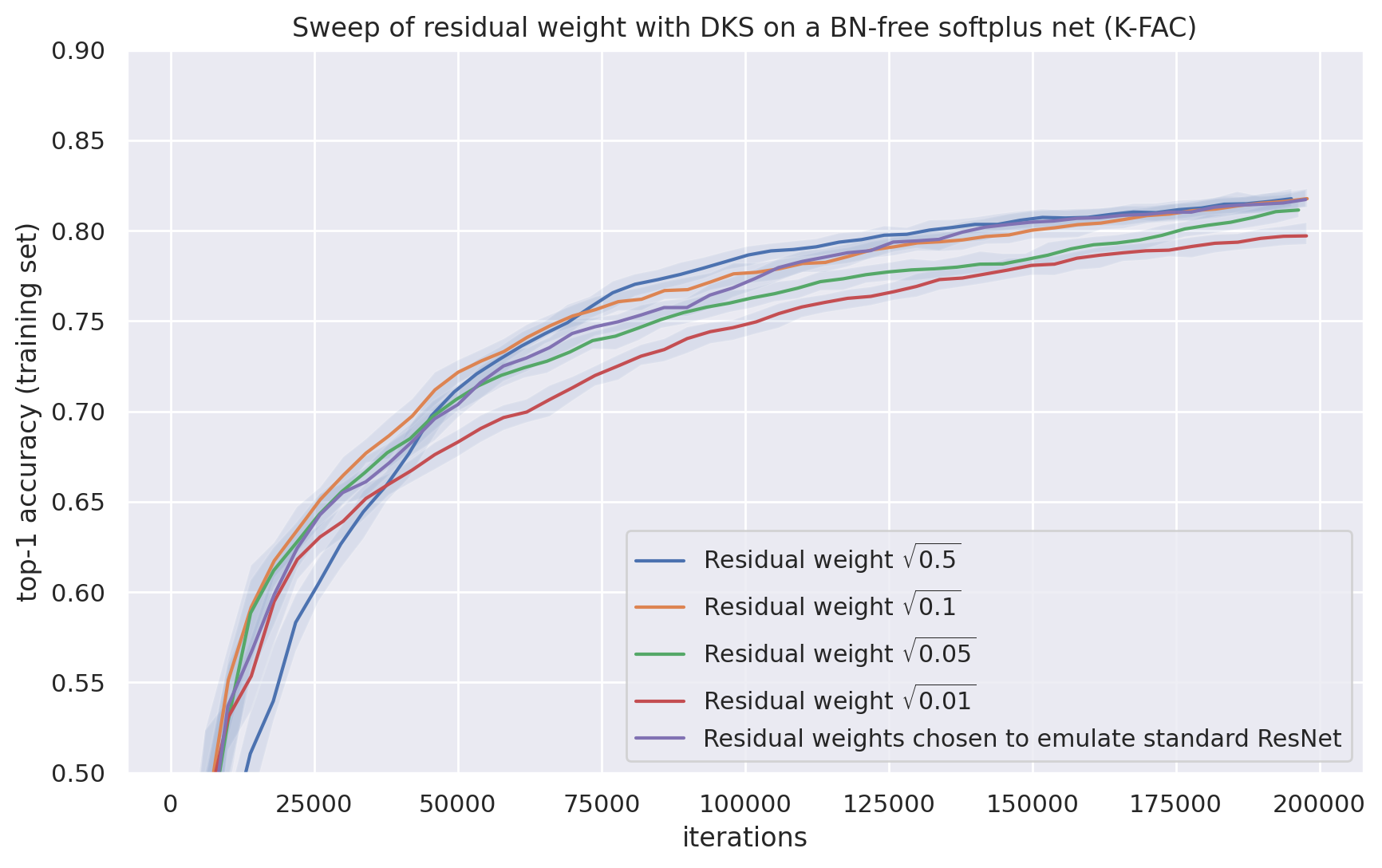}}

\resizebox{0.85\columnwidth}{!}{\includegraphics{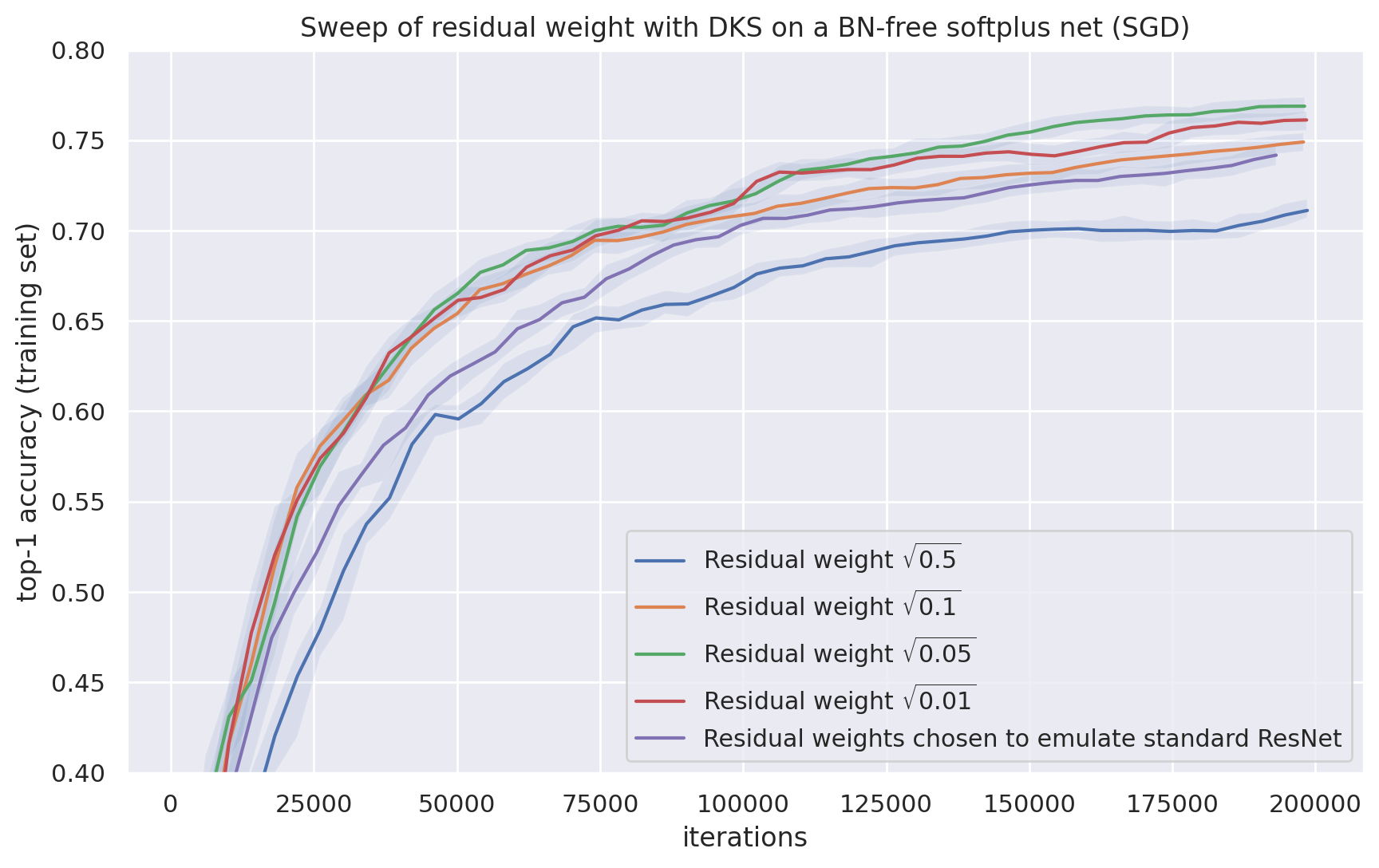}}

From these results we see that the value $\sqrt{0.05}$ seems to work best when using SGD. When using K-FAC, the difference in optimization speed between the three largest options is much smaller, and so we will use $\sqrt{0.05}$ as the default value for all optimizers.

Note that while $\sqrt{0.05}$ is (arguably) the best amoung the values we tried for this network, there is no reason to think that this value will be the best choice for other residual architectures (or the same architecture for a different depth parameter).

\subsection{DKS with different optimizers}

In this subsection we compare the optimization performance of different optimizers on skip-free BN-free networks constructed with DKS.

\resizebox{0.85\columnwidth}{!}{\includegraphics{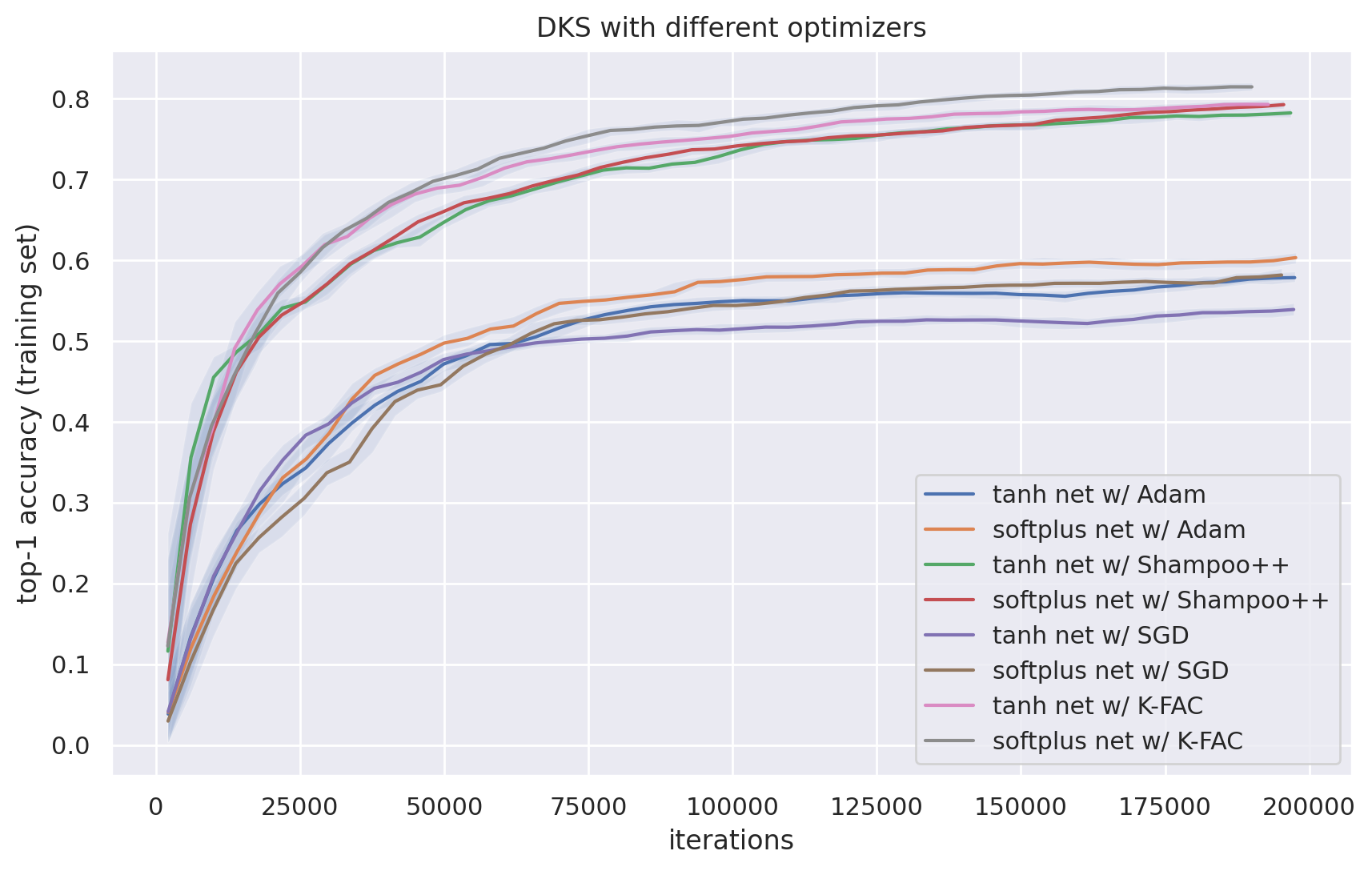}}

These results show that K-FAC and Shampoo have a large advantage over Adam and SGD in this setting. Moreover, Adam has a small advantage over SGD, and K-FAC has a small advantage over Shampoo.

\resizebox{0.85\columnwidth}{!}{\includegraphics{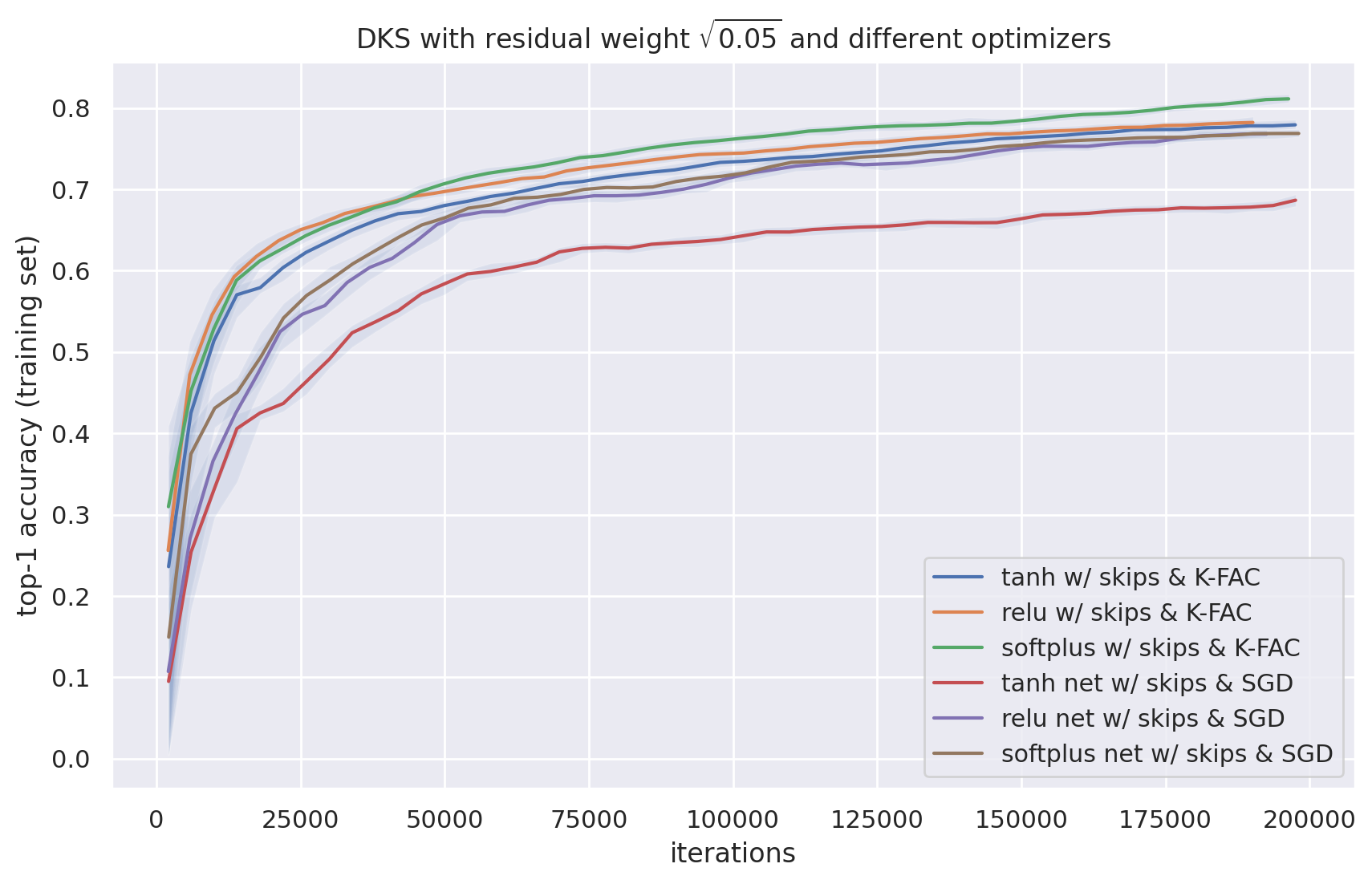}}

The picture looks different for networks with skip connections, and K-FAC and SGD yield fairly similar optimization speeds for two out of the three activation functions we tried.

\subsection{Sweeping $\zeta$ values}\label{app:sweep-zeta-1}

In this subsection we study the influence of the global slope bound $\zeta$ on the optimization speed of skip-free BN-free tanh networks constructed with DKS. In particular, we compare the default choice of $\zeta = 1.5$ to various ``extreme'' values, which are either very large (corresponding to highly nonlinear network behavior), or are very close to 1 (corresponding to very linear behavior).

\resizebox{0.85\columnwidth}{!}{\includegraphics{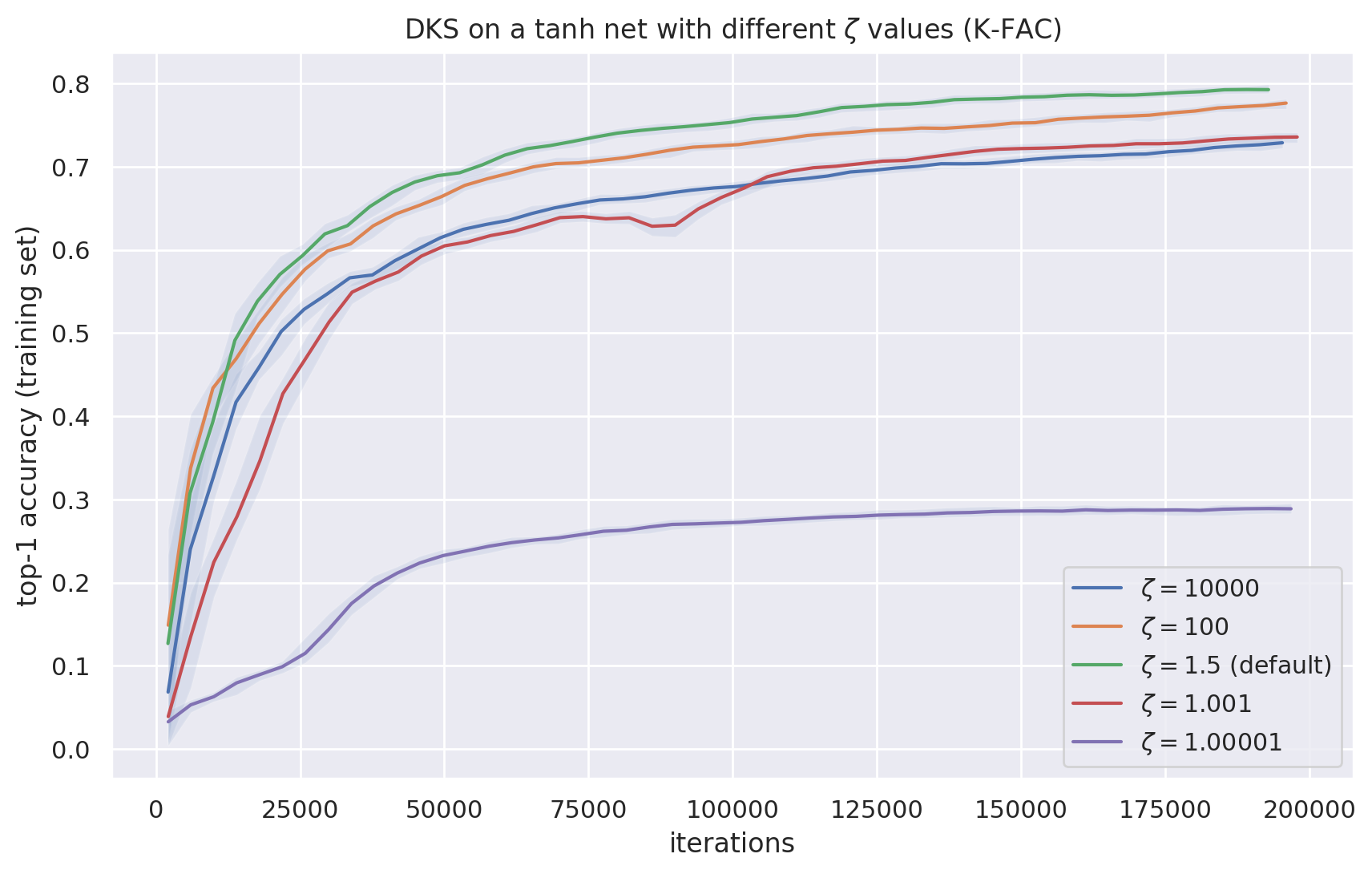}}

\resizebox{0.85\columnwidth}{!}{\includegraphics{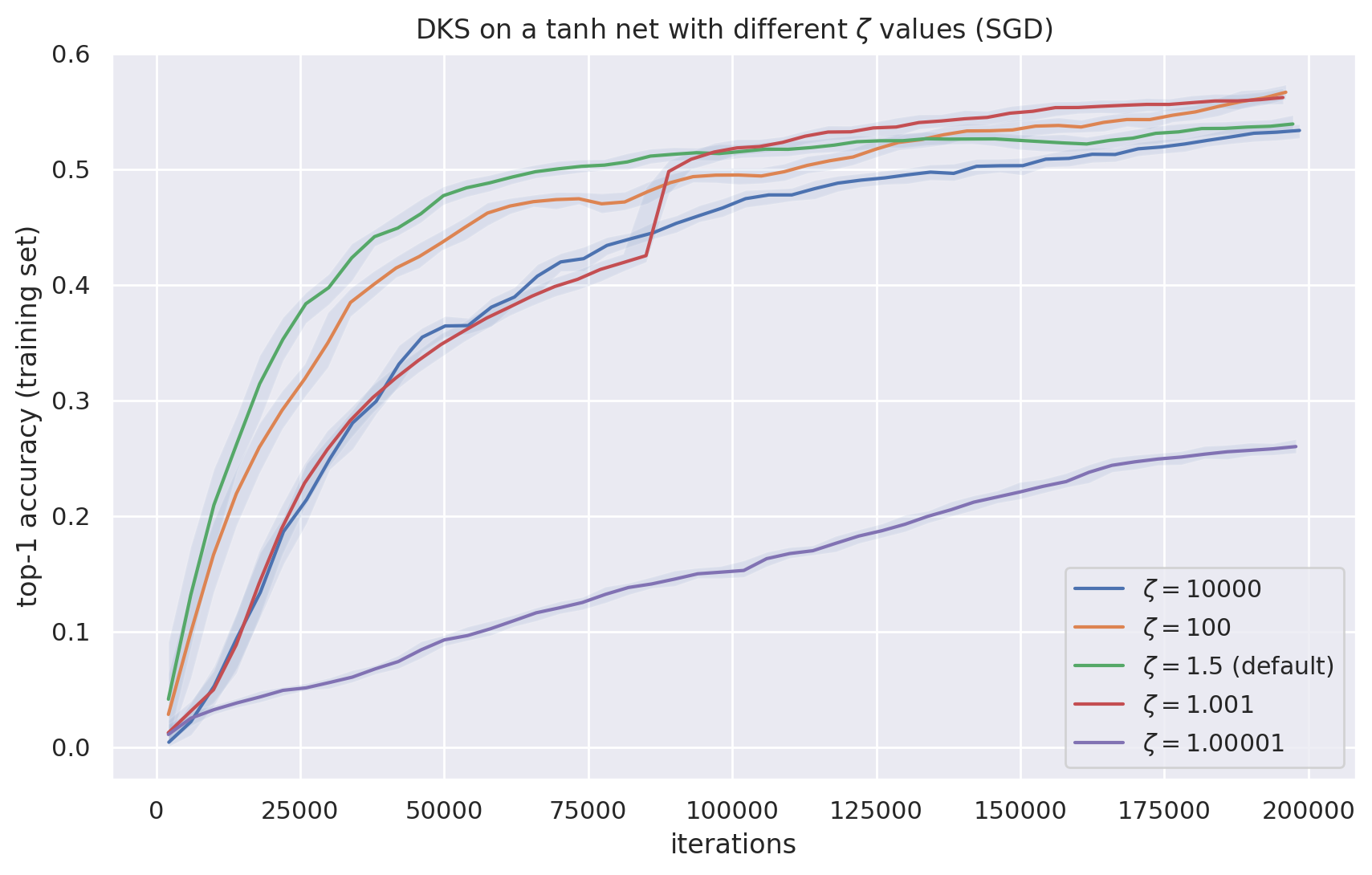}}

From these results we can see that $\zeta = 1.5$ gives the fastest optimization performance for K-FAC and the fastest short-term optimization performance for SGD. $\zeta = 1.00001$, which corresponds to a very linear network, gives very slow optimization performance, perhaps for the reasons discussed in Section \ref{sec:too-linear}. Somewhat surprisingly, the choice $\zeta = 10000$ yields quite respectable (although still suboptimal) performance.

\subsection{The influence of $\zeta$ on generalization}\label{app:sweep-zeta-2}

In this subsection we study the influence of the global slope bound $\zeta$ on the generalization performance for networks trained with K-FAC.

\

\resizebox{0.85\columnwidth}{!}{\includegraphics{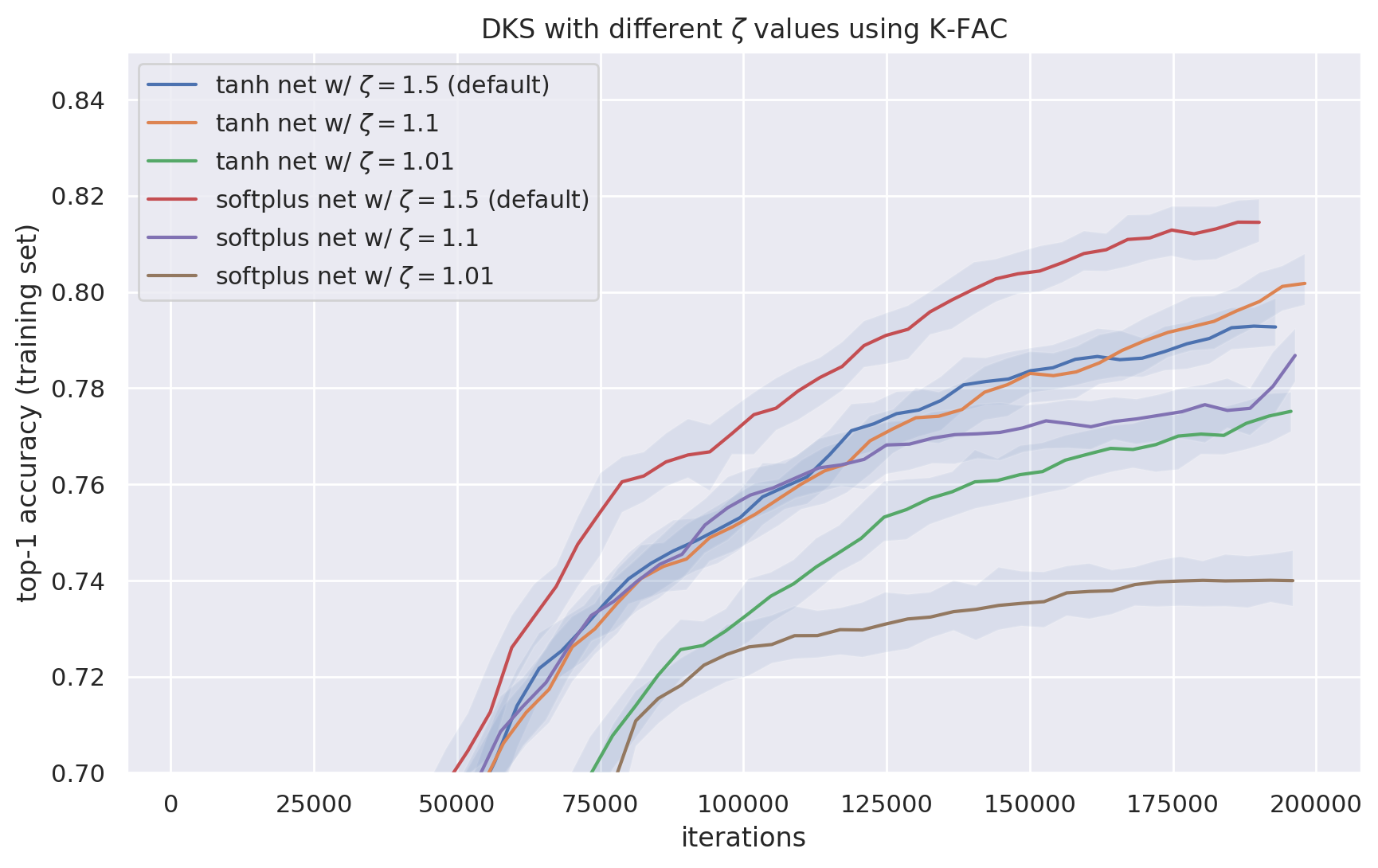}}

\resizebox{0.85\columnwidth}{!}{\includegraphics{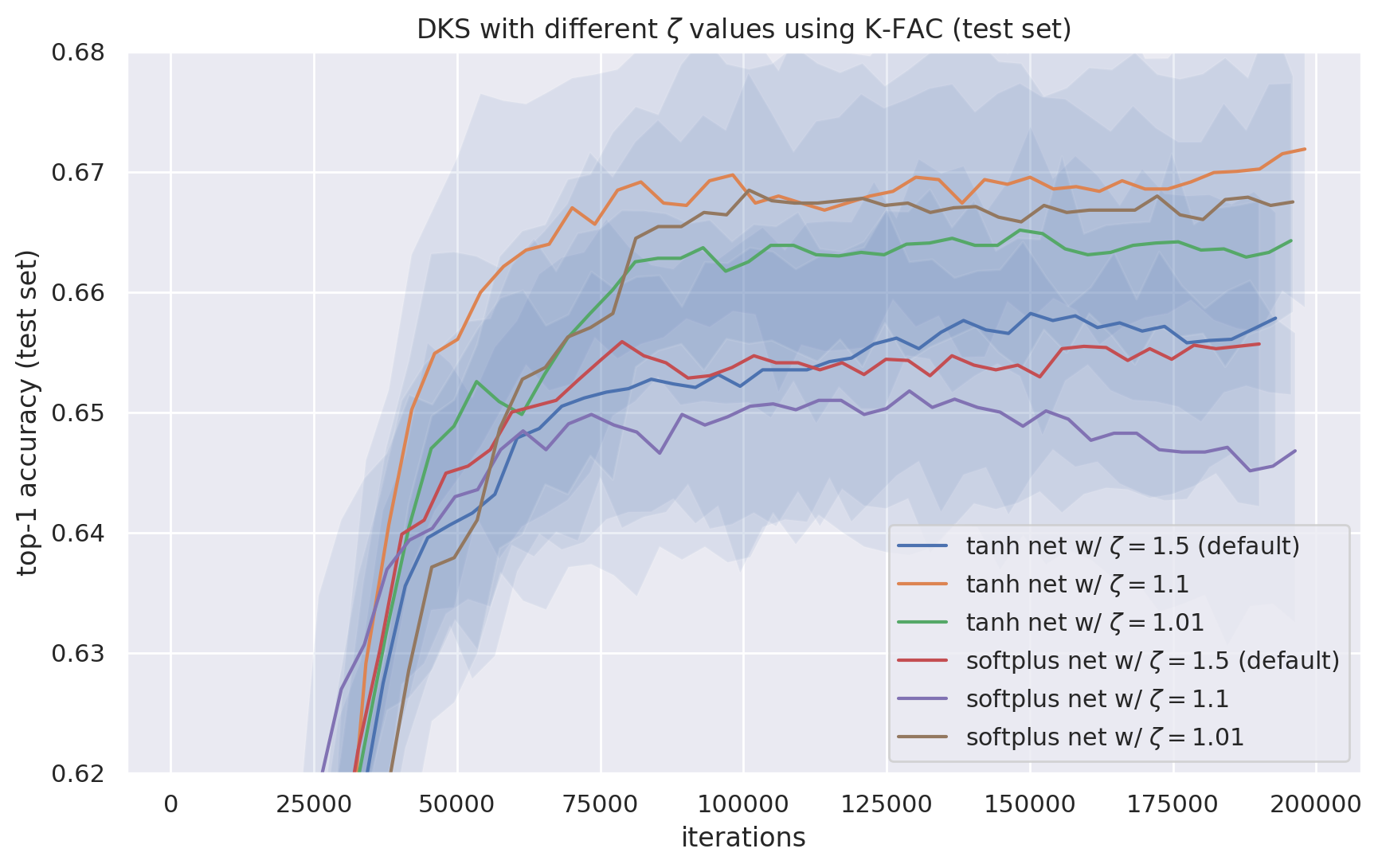}}

From these results we can see that while the value $\zeta = 1.5$ tends to give faster optimization, slightly lower values are associated with improved generalization.

\section{Experiments with ablations and modifications of DKS}\label{app:ablation-and-mod-experiments}

In this section we consider various ablations and modifications of DKS. The overall conclusion of these studies is that each component of DKS, except perhaps for PLN (assuming reasonably well scaled input data), is required to achieve the highest optimization speed. When considering test error the conclusions are similar but somewhat muted, with the single exception that using weighted mean-pooling layers with K-FAC seems to improve test set performance while degrading training set performance.

For plots that contain solid and dotted lines of the same color, solid lines will correspond to the default unmodified version of DKS, while dotted lines will correspond to the ablated/modified version. Except when otherwise indicated, all experiments will use the same default settings (as stated in Section \ref{sec:default-experiment-settings}) as our main set of experiments. We will omit results for test error, except in those cases where it gives qualitatively different results from the training error.

\subsection{Alternative initializations for weights / filter banks}

In this subsection we consider replacing the Orthogonal Delta initialization used in DKS with various alternative weight initialization schemes. Note that while the use of Delta weight initializations is required by the Q/C map theory that underlies DKS, in practice one can still try DKS with any other weight initialization scheme.

\

\resizebox{0.85\columnwidth}{!}{\includegraphics{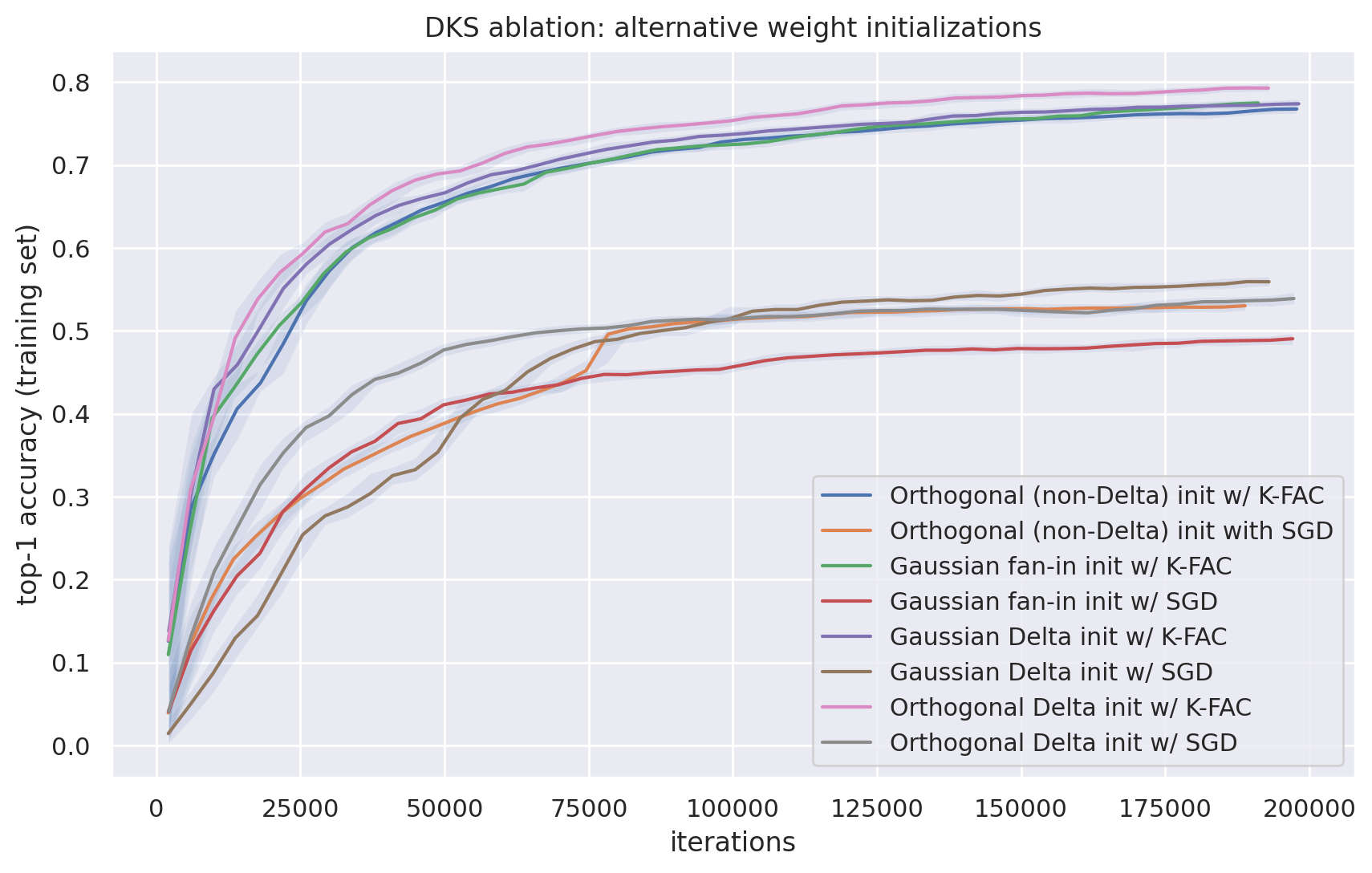}}

From these results we can see that while the Orthogonal Delta initialization gives the best results for both K-FAC and SGD, the results for the Orthogonal (non-Delta) and Gaussian Delta initializations are very close.

One might be tempted to conclude from these findings that the weight initialization is relatively unimportant for skip-free BN-free networks in general. However, as we can see from the following plot, there is a much larger gap in performance between the different options when we don't use DKS's activation function transformations:

\resizebox{0.85\columnwidth}{!}{\includegraphics{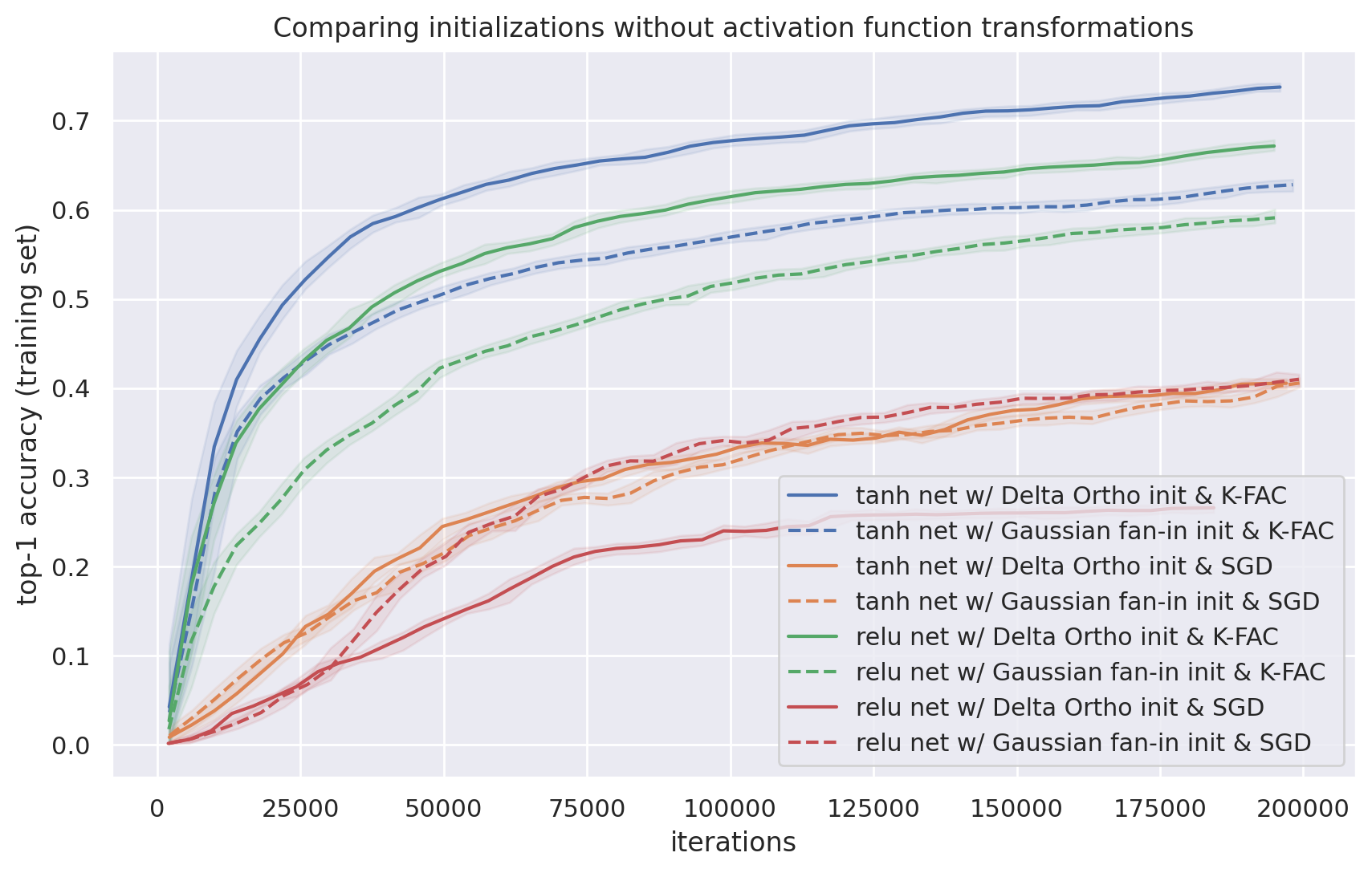}}

\subsection{Only enforcing the $Q_f (1) = 1$ condition}\label{sec:only_Q1=1_cond}

In this subsection we consider modifying DKS to only enforce the condition that $Q_f (1) = 1$ for all subnetworks $f$ (which is equivalent to doing the same for all nonlinear layers $f$). Note that this conditions is roughly analogous to what normalization layers, the ``He initialization method'' for RELUs, and the LSUV/WLI initializations are trying to achieve. 

To achieve this condition we use only the output scale parameter (denoted $\gamma$ in Section \ref{sec:activation-transform}) in the transformed activation functions.

\resizebox{0.85\columnwidth}{!}{\includegraphics{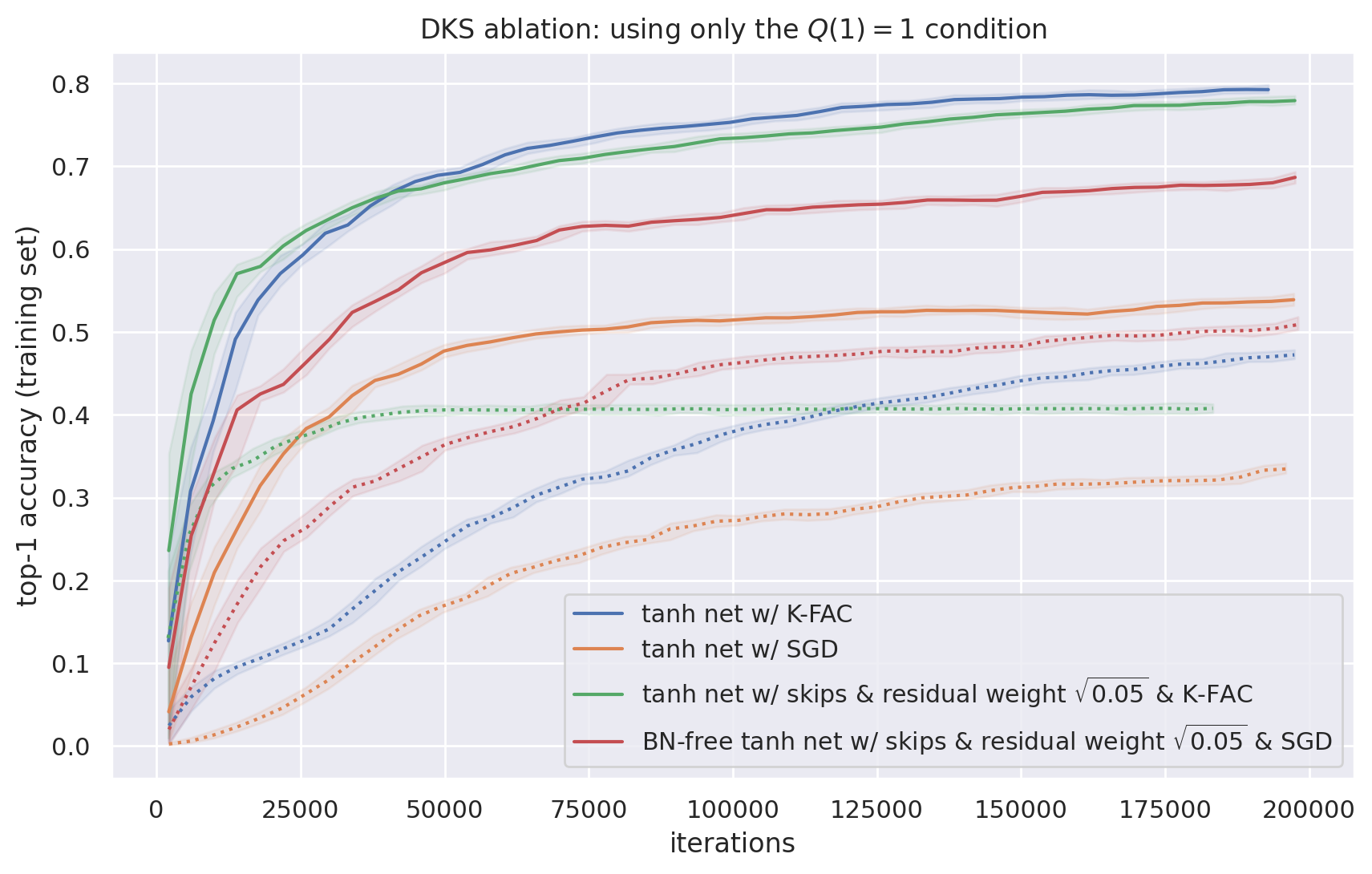}}

From these results we see that the condition $Q_f (1) = 1$ is clearly not enough by itself to achieve fast optimization in BN-free networks, with or without skip connections. (And as we will see in Section \ref{app:removing-act-transforms-experiment}, enforcing this condition {\tmem{by itself}} may actually do more harm than good.)

\subsection{Removing the condition $Q_f' (1) = 1$}

In this subsection we consider removing the condition $Q_f' (1) = 1$ from the set of four conditions that we enforce in DKS. The remaining three conditions are achieved by setting three of the four activation function parameters (defined in Section \ref{sec:activation-transform}), with the input shift parameter $\beta$ being left out (which is equivalent to taking $\beta = 0$).

\resizebox{0.85\columnwidth}{!}{\includegraphics{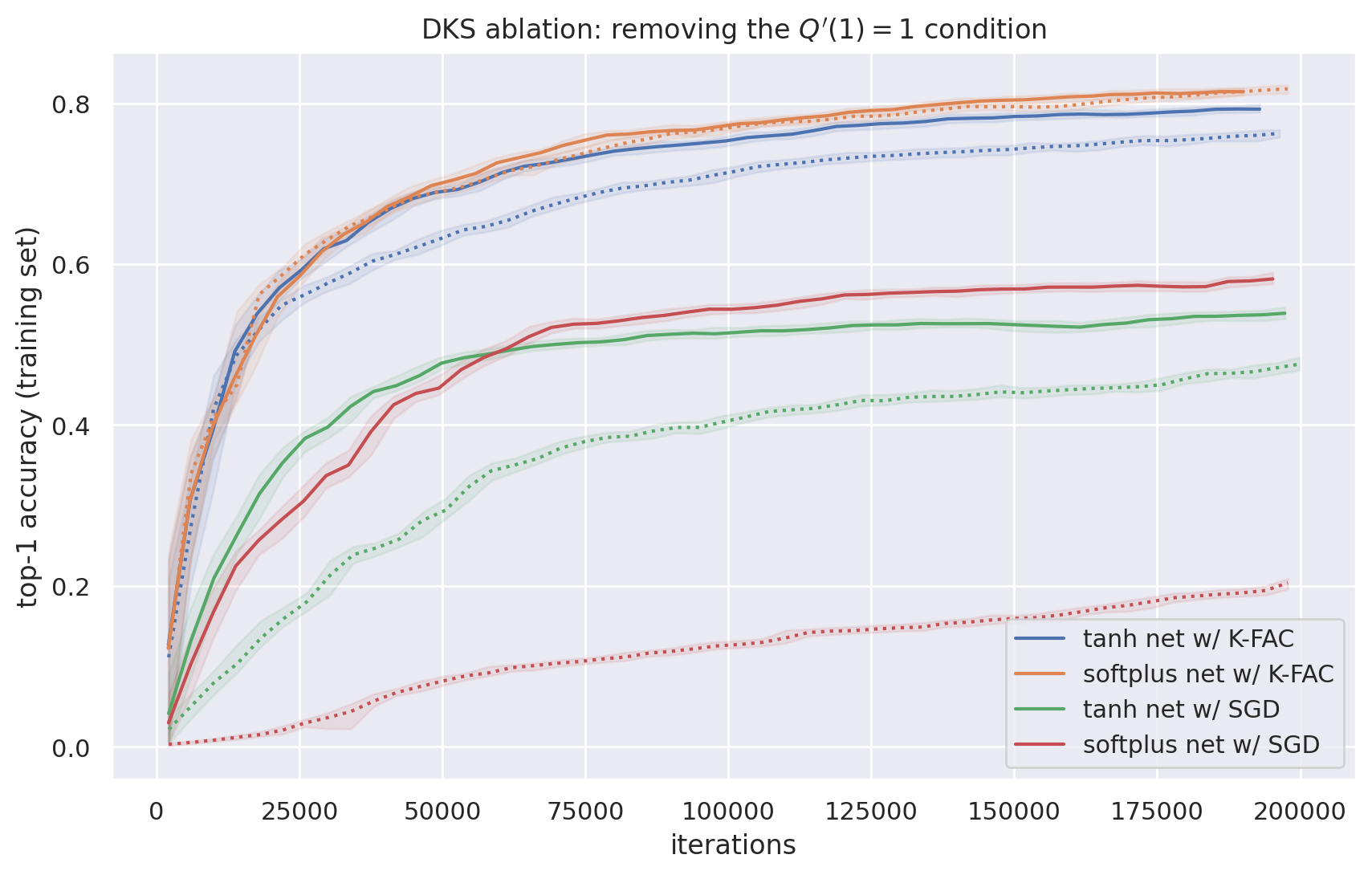}}

From these results we see that this condition appears to be important in most training scenarios, but not all of them.

\subsection{Minimizing $Q_f' (1)$ instead of enforcing $Q_f' (1) = 1$}\label{app:min-qslope-experiment}

In this subsection we consider the effect of minimizing $Q_f' (1)$ for each subnetwork $f$ in DKS instead of enforcing the condition $Q_f' (1) = 1$. This is accomplished by minimizing $Q_f' (1)$ for each nonlinear layer $f$ in the network. This modification is motivated by the observation that minimizing $Q_f' (1)$ should, according to the reasoning of Section \ref{sec:qslope-condition}, minimize the total kernel approximation error.

\resizebox{0.85\columnwidth}{!}{\includegraphics{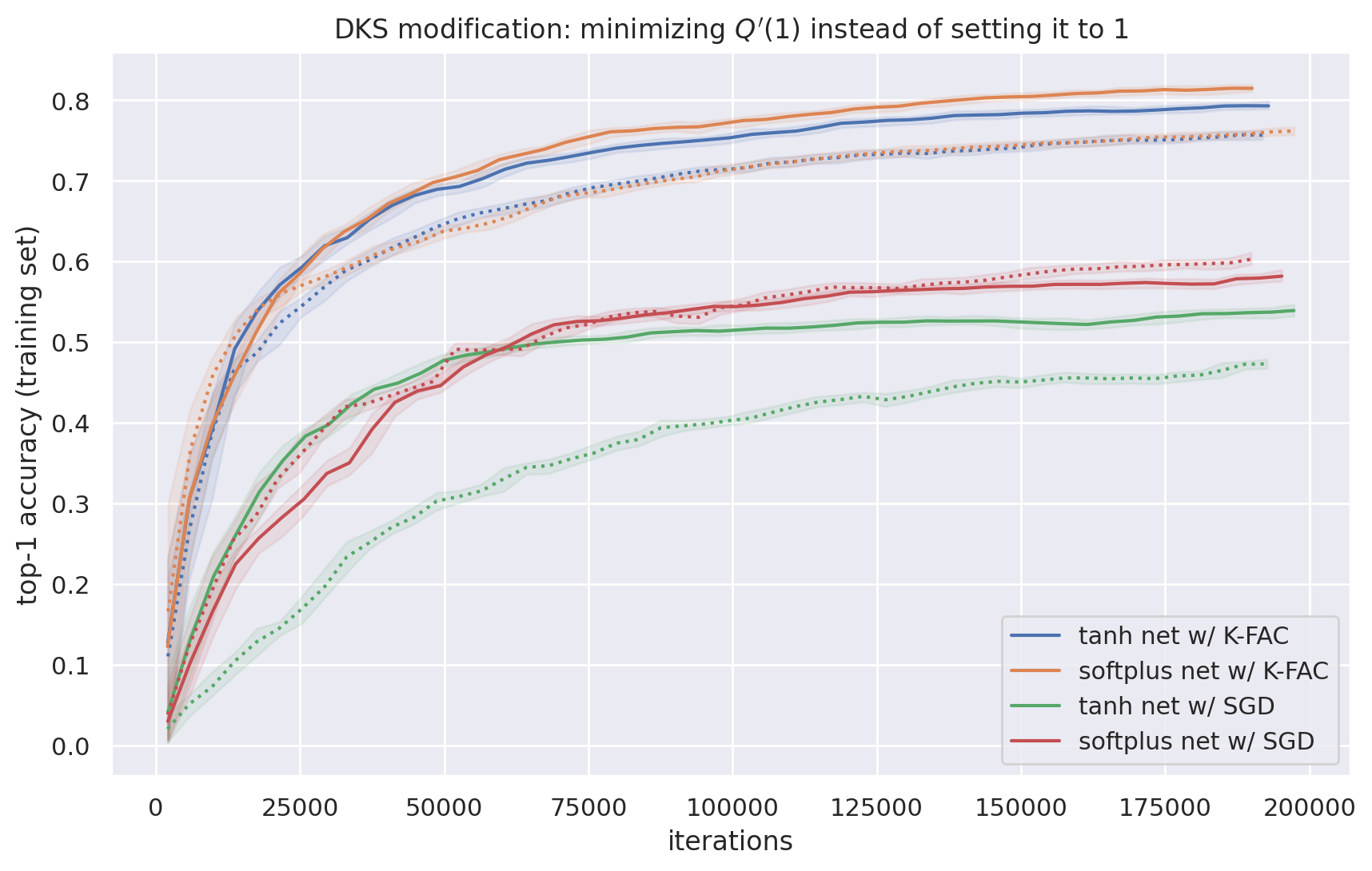}}

From these results we can see that minimizing $Q_f' (1)$ works overall worse than simply setting $Q_f' (1) = 1$. The reasons for this remain unclear.

\subsection{Removing activation function transformations completely}\label{app:removing-act-transforms-experiment}

In this subsection we consider completely removing the activation function transformations from DKS. What remains is the Delta Orthogonal initialization for the weights (and a zero initialization of the biases), normalized sums between residual and shortcut branches, and the use of PLN.

\resizebox{0.85\columnwidth}{!}{\includegraphics{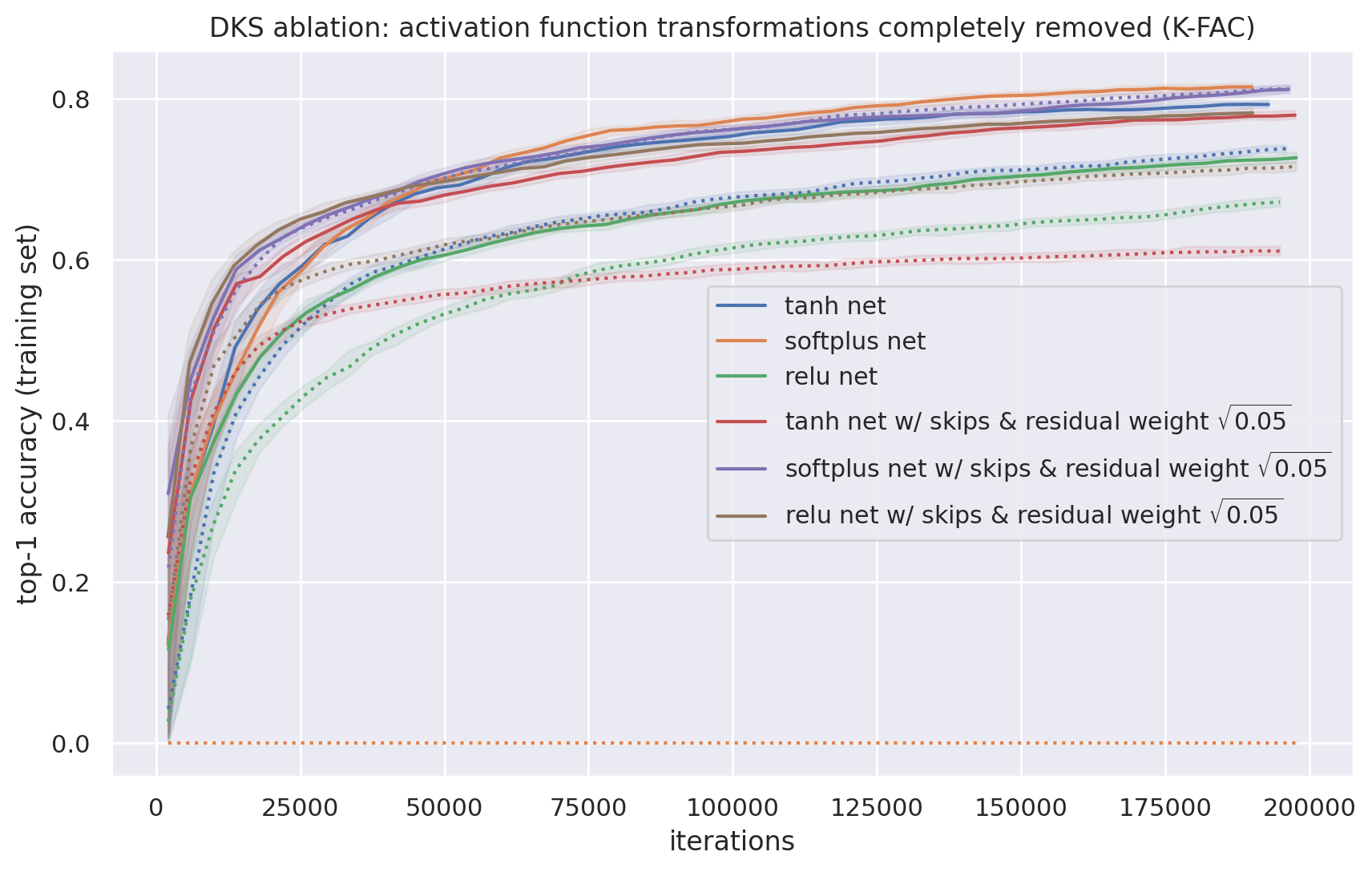}}

\resizebox{0.85\columnwidth}{!}{\includegraphics{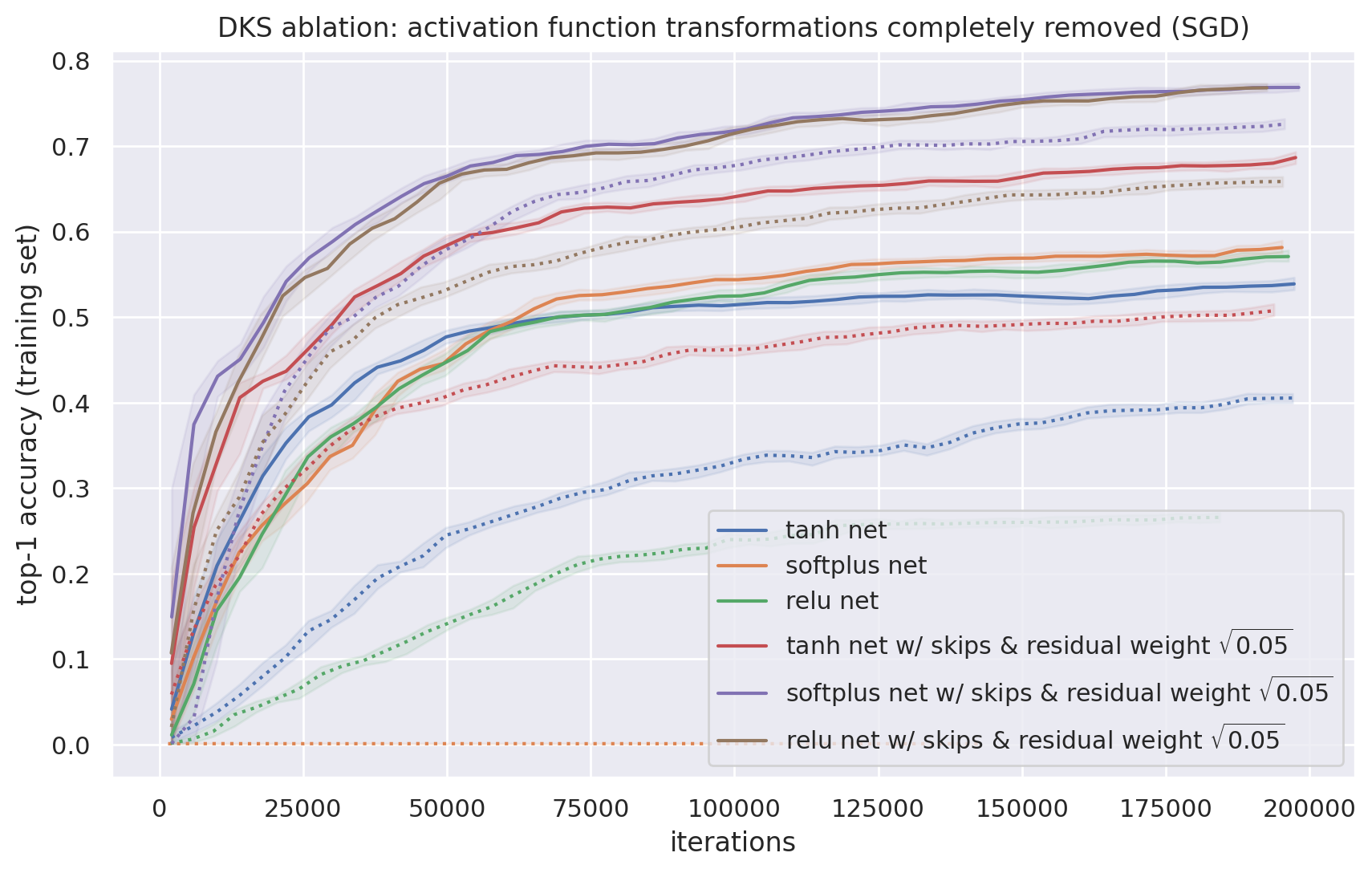}}

From these results we see that activation function transformations are important in all the scenarios we tested, except when training softplus networks with skip connections using K-FAC. Moreover, they seem to be especially important when training with SGD.

Comparing the results here for tanh networks to those given in Subsection \ref{sec:only_Q1=1_cond}, we can see that training speed becomes {\tmem{worse}} if we enforce the condition $Q_f (1) = 1$ by itself (versus enforcing no conditions at all). While potentially counterintuitive, this isn't actually surprising. Indeed, there is no reason to think that the local C map for an untransformed tanh layer will be more favorable given an input q value of 1 compared to some other value. (In this case, the ``other value" is the fixed point of a tanh layer's local Q map.) Note that for the fully transformed tanh layers generated by DKS this consideration is moot, since any fixed input q value is essentially equivalent to all other choices due to the flexibility afforded by the activation function's input scale parameter (denoted $\alpha$ in Section \ref{sec:activation-transform}).

We also have results for skip-free BN-free networks trained on CIFAR-10, which tell a similar story. These are given below:

\resizebox{0.85\columnwidth}{!}{\includegraphics{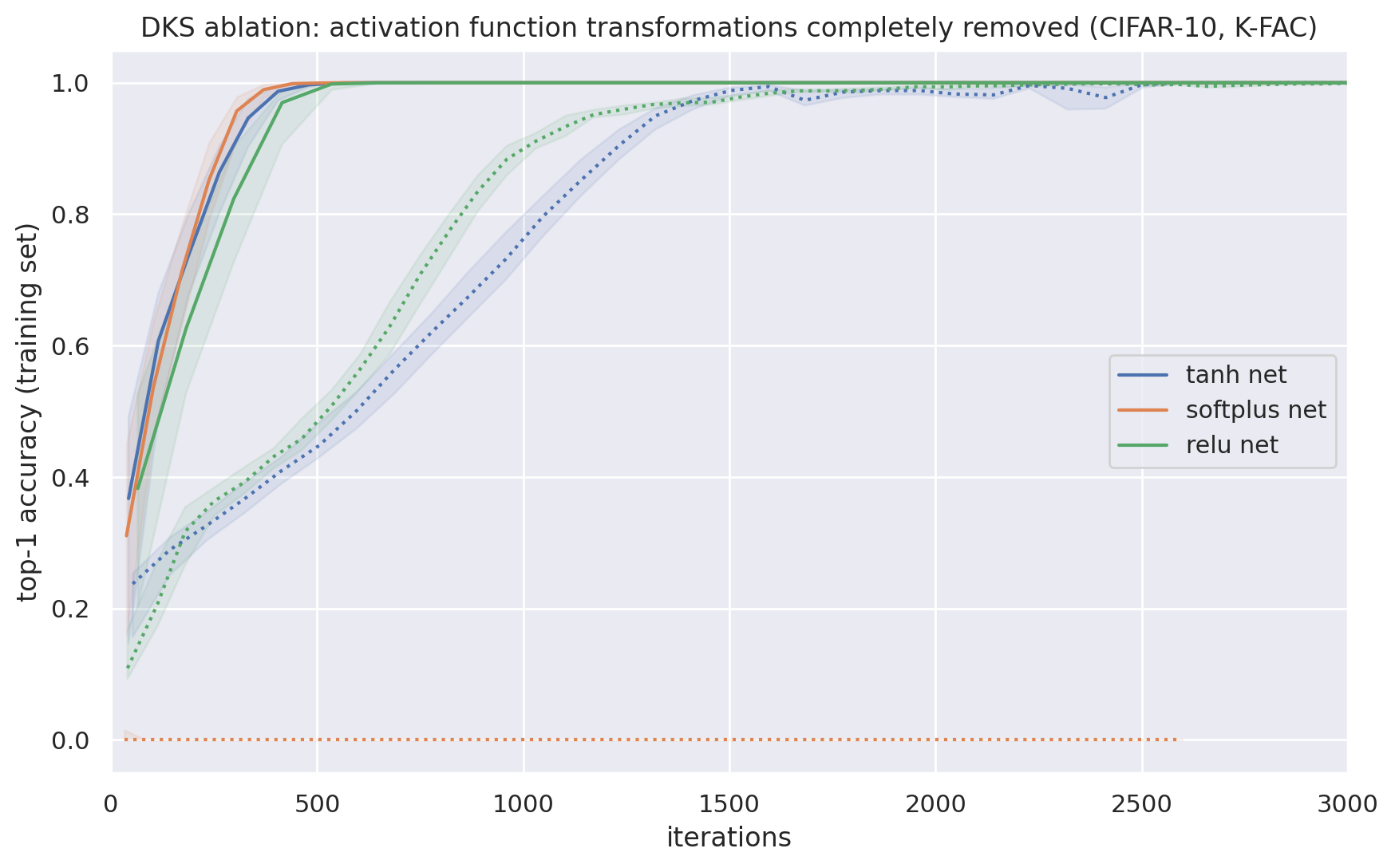}}

\resizebox{0.85\columnwidth}{!}{\includegraphics{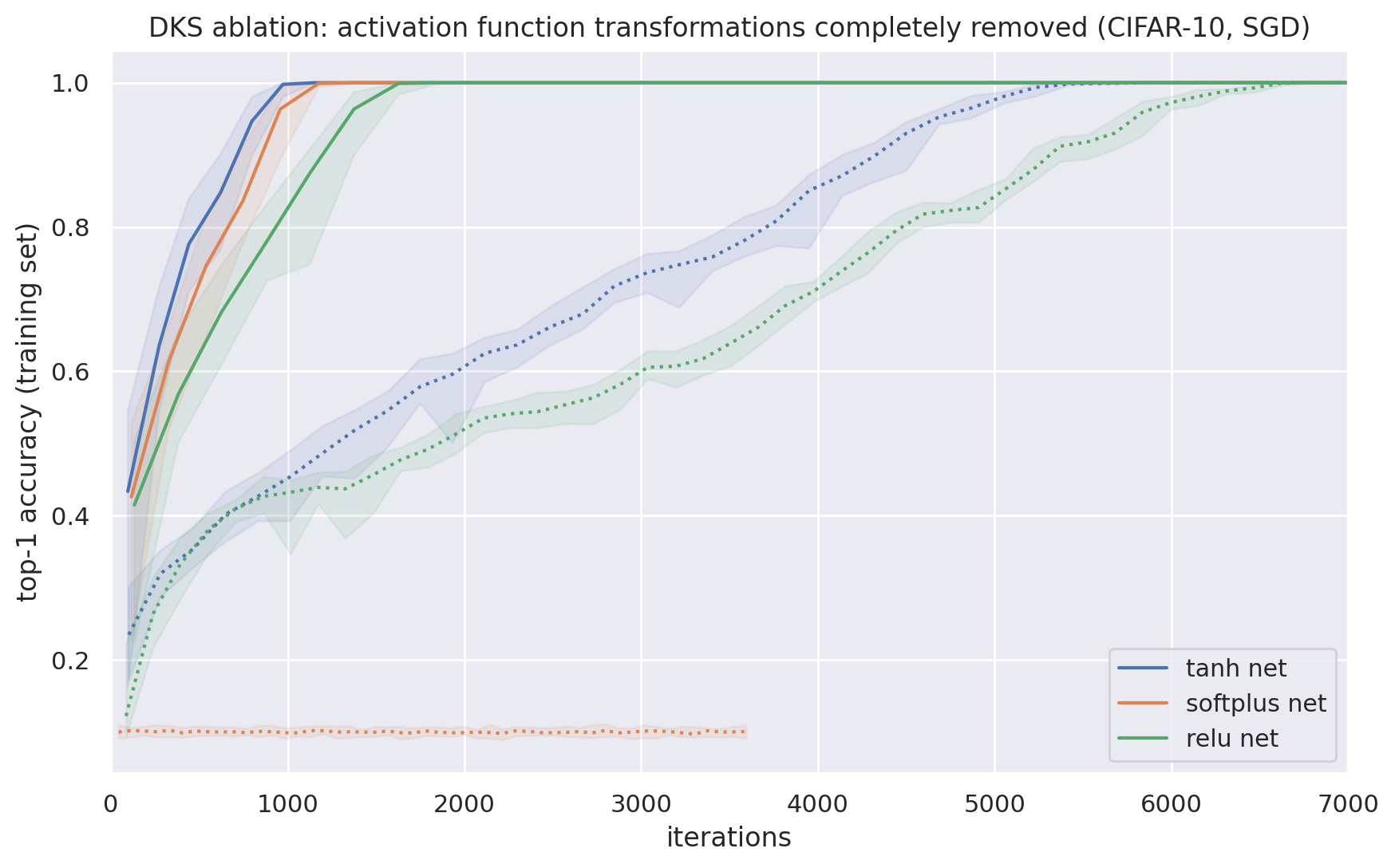}}

\subsection{Removing Per-Location Normalization (PLN)}\label{app:ablation-PLN}

In this subsection we consider using DKS without the Per-Location Normalization (PLN) data pre-processing step described in Section \ref{sec:PLN}.

\resizebox{0.85\columnwidth}{!}{\includegraphics{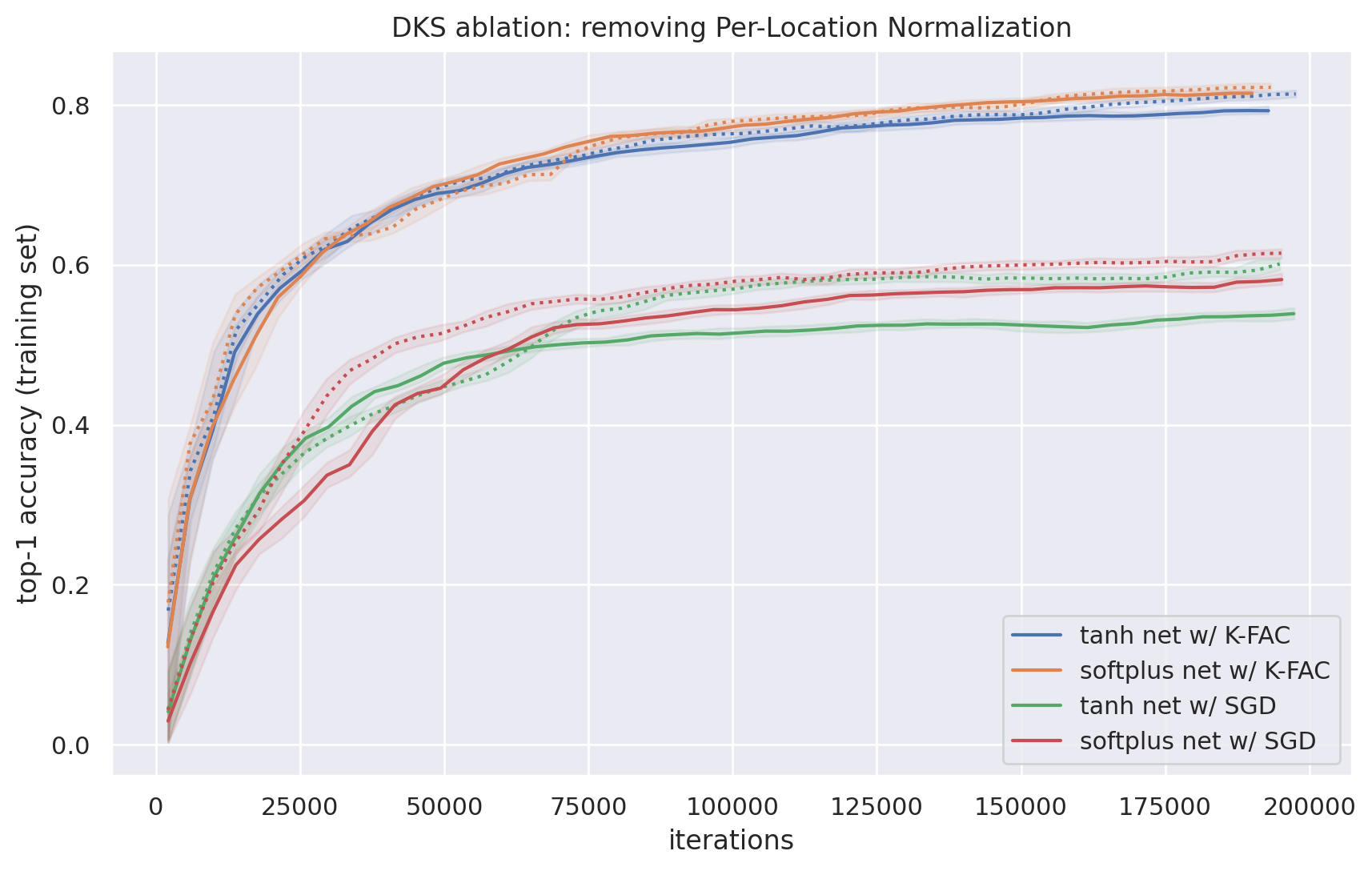}}

From these results we can see that, somewhat surprisingly, the use of PLN actually harms optimization performance, especially for SGD. We speculate about possible reasons for this in Section \ref{sec:PLN}.

Whatever the reasons, we know that they are contingent on the default properties of the input training data. We can demonstrate this by rerunning the same experiment without PLN for input data that is scaled by a factor of 100 ({\tmem{after}} the usual Imagenet pre-processing and augmentation).

\resizebox{0.85\columnwidth}{!}{\includegraphics{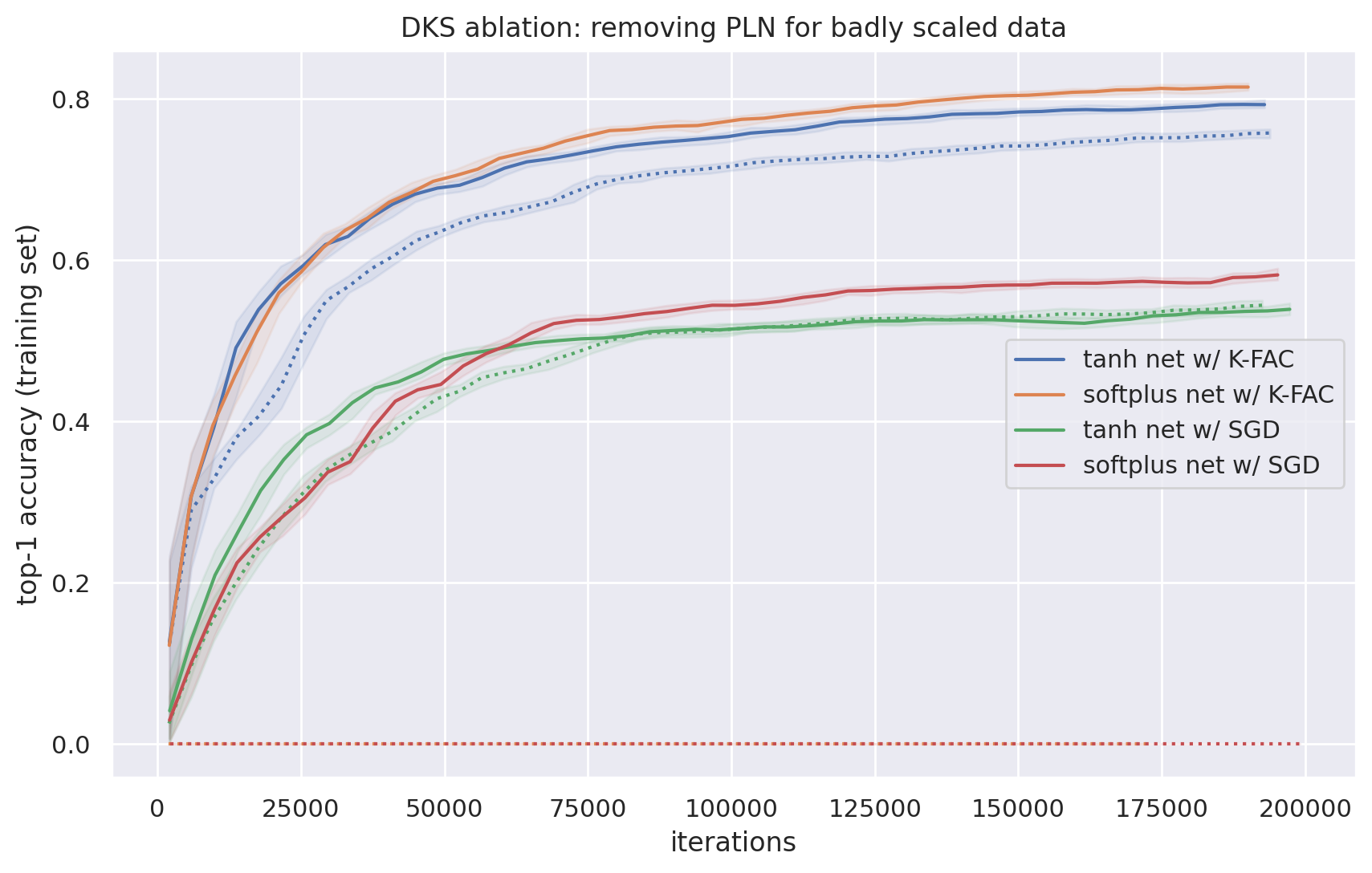}}

From these results we can see that if the original input data is badly scaled, using PLN will have a positive effect on optimization performance.

\subsection{Using equivalent parameters instead of activation transformations}\label{app:equiv-params-experiments}

In this subsection we consider the effect of achieving the four conditions of DKS via ``equivalent parameters'' (as defined in Section \ref{sec:method-as-pure-init}), instead of explicit transformations to the activation functions. This change brings DKS much closer to being a pure initialization approach, but introduces a reparameterization which can have implications for optimization.

\

\resizebox{0.85\columnwidth}{!}{\includegraphics{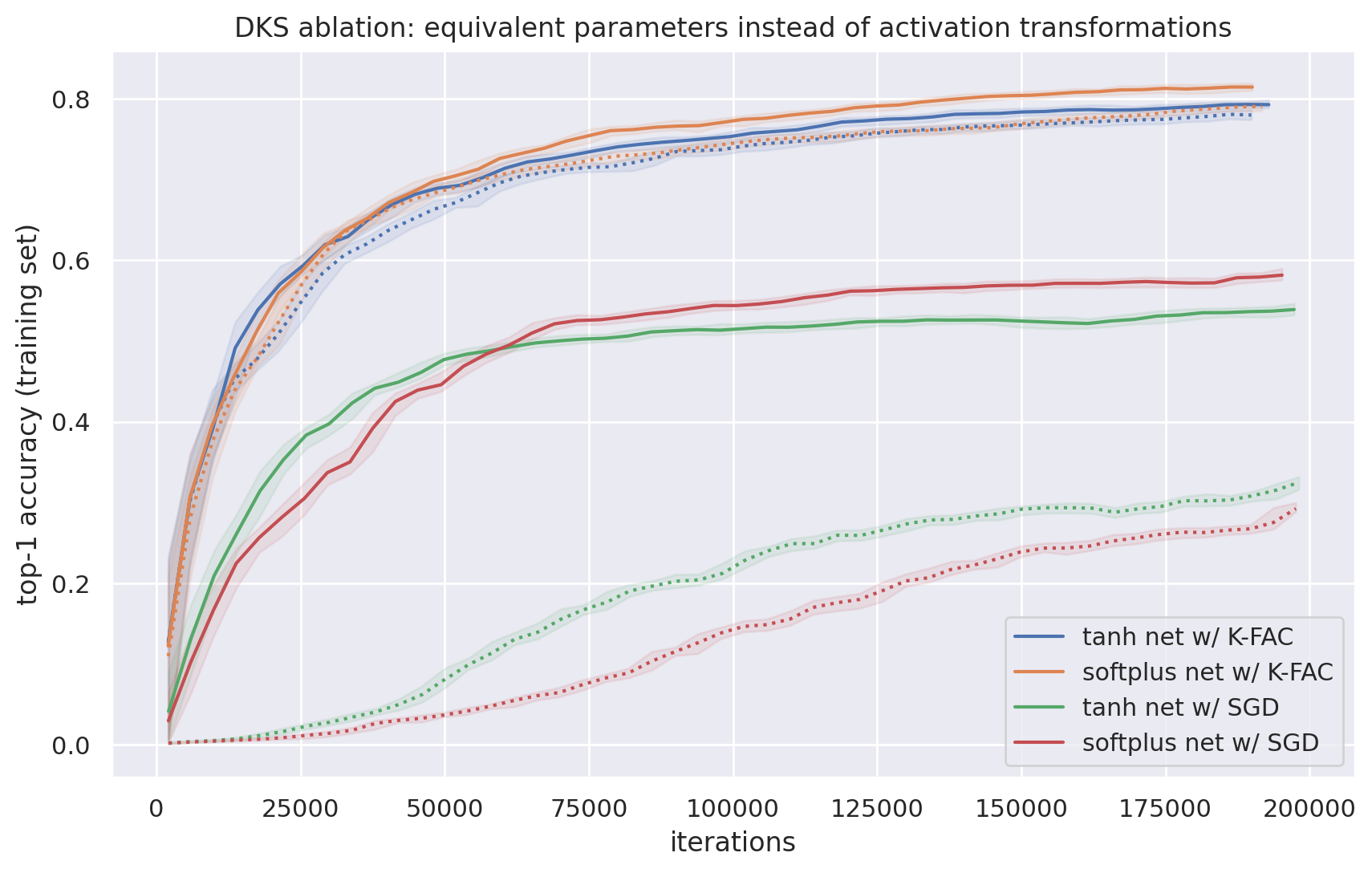}}

From these results we can see that optimization performance with K-FAC is mostly unaffected by this change, while with SGD it becomes {\tmem{much}} worse. This difference in behavior between the two optimizers isn't surprising, since as discussed in Section \ref{sec:method-as-pure-init}, K-FAC is essentially invariant to the kind of reparameterization being performed here, while SGD is not.

\subsection{Replacing max-pooling layers with convolutions}

As discussed in Section \ref{sec:max-pooling}, max-pooling layers are not fully compatible with the Q/C map theory underlying DKS, and so we must be cautious when applying DKS to networks containing them. The networks used in our main experiments contain a max-pooling layer (near the beginning), and in this subsection we will justify this decision by considering the effect of replacing that layer with a standard convolutional layer of same kernel size, stride, etc.

\resizebox{0.85\columnwidth}{!}{\includegraphics{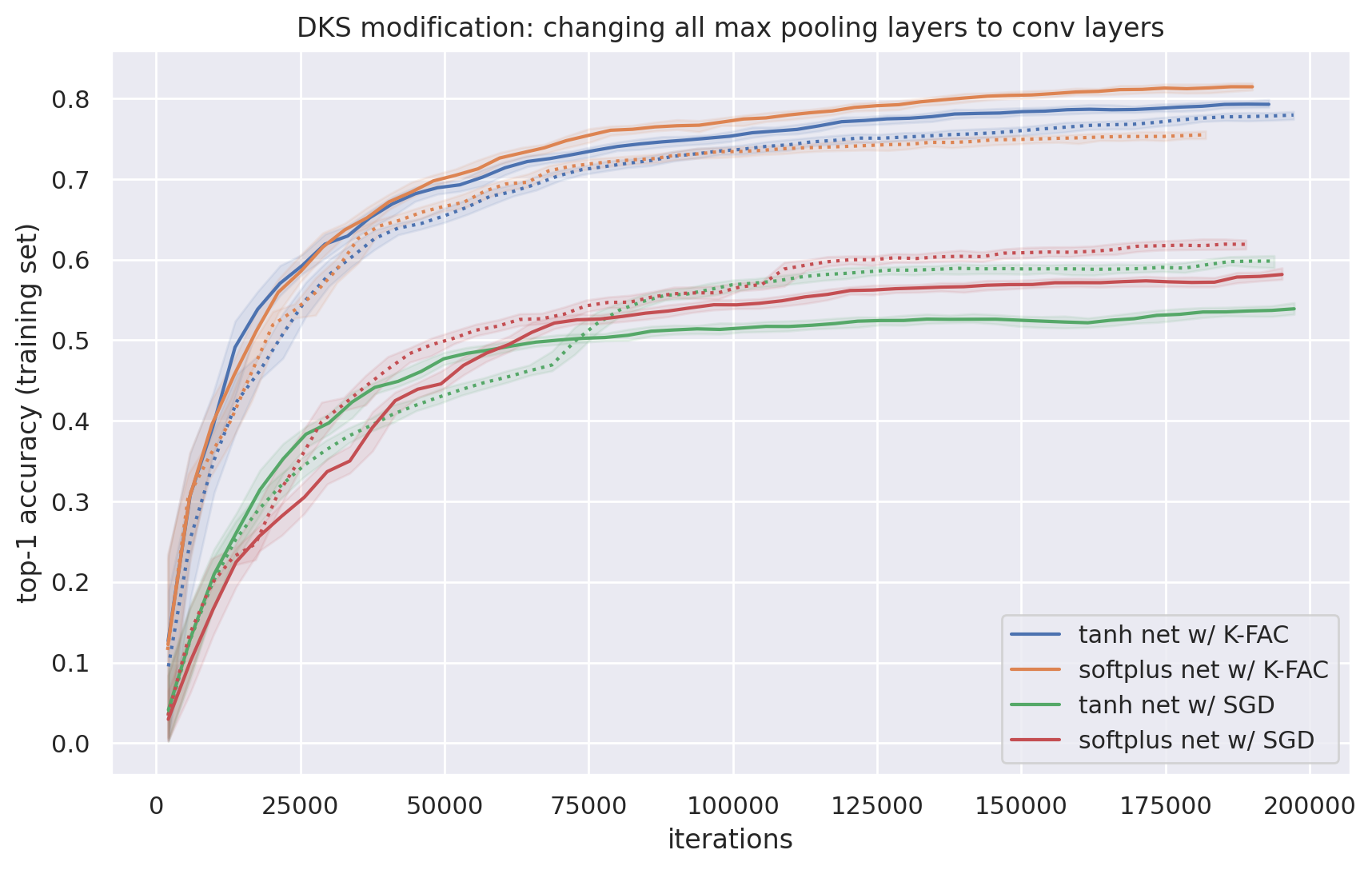}}

From these results we see roughly similar optimization performance with and without this replacement, with K-FAC becoming slightly slower, and SGD becoming slightly faster.

\subsection{Replacing mean pooling layers with weighted mean-pooling layers}\label{app:weighted-mean-pool-experiment}

In this subsection we consider the effect using a weighted mean-pooling layer, as defined in Section \ref{sec:weighted-mean-pools}, in place of the standard mean-pooling layer normally present near the end of our modified ResNet architecture. As discussed in Section \ref{sec:weighted-mean-pools}, standard mean-pooling layers are not compatible with the theory underlying DKS, while weighted mean-pooling layers are, at least to some extent.

\resizebox{0.85\columnwidth}{!}{\includegraphics{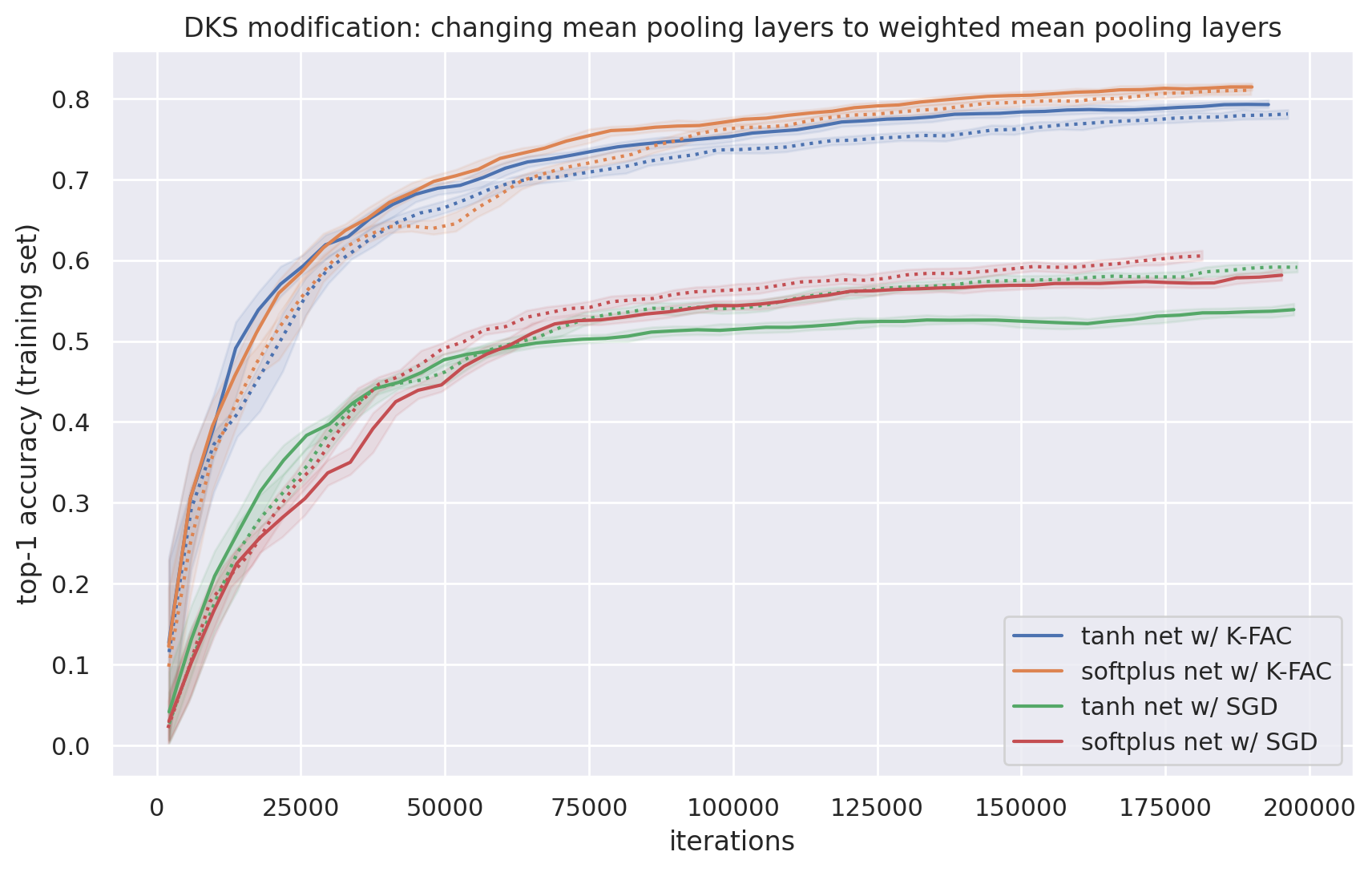}}

\resizebox{0.85\columnwidth}{!}{\includegraphics{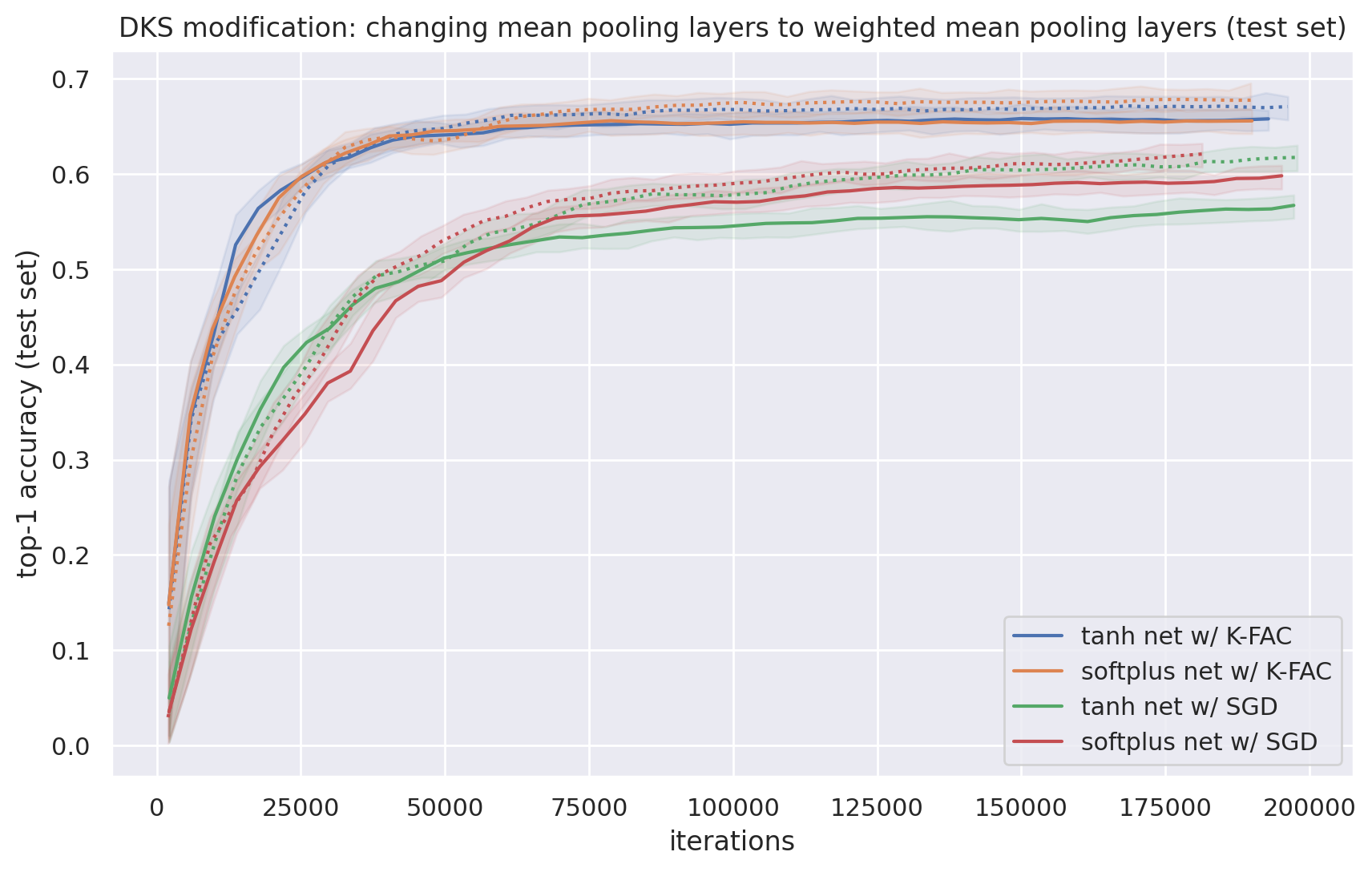}}

From these results we see that optimization performance gets slightly worse with K-FAC, and slightly better with SGD, and that generalization improves slightly as well for both optimizers. Thus, even though we didn't use weighted mean-pooling layers in our main set of experiments, they are probably worth trying when using DKS.

% \subsection{Simplified maximal slope function}

% In this subsection we consider using a simplified maximal slope function $\mu$ for networks with skip connections consisting of the maximal slope function for the corresponding network without skip connections. In other words, we replace the function given in Equation \ref{eqn:mod-resnet-slope-poly} by $\mu(\psi) = \psi^{D-1}$.

% \resizebox{0.85\columnwidth}{!}{\includegraphics{}}

% \resizebox{0.85\columnwidth}{!}{\includegraphics{}}

% From these results we see that optimization performance gets slightly worse with the simplified maximal slope function when using SGD, and remains about the same with K-FAC. Notably, for this particular architecture we have that simplifying the maximal slope function is equivalent to going from $\zeta = 1.5$ to $\zeta \approx 1.04$.

\bibliography{bibliography}

\end{document}